\documentclass[twoside,letterpaper]{article}
\makeatletter
\providecommand{\Notice@String}{}
\makeatother

\usepackage[preprint]{aistats2027}
\usepackage{etoc}

\AtBeginDocument{\setlength{\footskip}{12pt}}
\usepackage[T1]{fontenc}
\usepackage[utf8]{inputenc}
\usepackage{lmodern,microtype}
\usepackage{amsmath,amssymb,amsthm,mathtools}
\usepackage{pgfplots}
\usepackage{graphicx,booktabs,tabularx,longtable,array}
\usepackage[round,authoryear]{natbib}

\usepackage{xurl}
\usepackage{xcolor,fancyvrb,fvextra}
\usepackage{adjustbox}
\usepackage[colorlinks=true,allcolors=blue]{hyperref}

\theoremstyle{plain}
\newtheorem{theorem}{Theorem}[section]
\newtheorem{lemma}[theorem]{Lemma}
\newtheorem{proposition}[theorem]{Proposition}
\newtheorem{corollary}[theorem]{Corollary}

\theoremstyle{definition}
\newtheorem{assumption}{Assumption}

\theoremstyle{remark}

\allowdisplaybreaks[1]
\newcommand{\bodytitle}{Strong Gaussian Approximation of Constant Learning Rate SGD: Bias and Trajectory Inference}

\hypersetup{
  pdftitle={Strong Gaussian Approximation of Constant Learning Rate SGD: Bias and Trajectory Inference},
  pdfauthor={Joseph Keslin and Wei Biao Wu}
}

\runningtitle{Strong Gaussian Approximation of Constant Learning Rate SGD}
\runningauthor{Joseph Keslin and Wei Biao Wu}

\begin{document}

\twocolumn[
\aistatstitle{\bodytitle}

\aistatsauthor{Joseph Keslin \qquad Wei Biao Wu}

\aistatsaddress{
Department of Statistics\\
University of Chicago
}
]

\etocdepthtag.toc{main}
\raggedbottom

\begin{abstract}
This paper develops strong Gaussian approximations (KMT type) for constant learning rate stochastic gradient descent (SGD). We work with heavy tailed noise and assume that the number of iterates grows and the learning rate shrinks simultaneously. Our work constructs simple Gaussian processes on the same space and bounds, in probability, the maximum difference between the SGD errors and their respective limits. While there exist central limit theorems for constant rate SGD centered at the optimum, these only provide individual iterate inference. By contrast, our quenched KMT approximations allow for finite sample, simultaneous, whole-trajectory-level inference and imply stochastic process convergence (Functional Central Limit Theorems (FCLTs)). For accumulated errors centered at the true optimum, we first obtain a KMT-type approximation bound when the learning rate scales as $\gamma_n=\frac{c}{\sqrt n}+o(n^{-1/2})$ for a constant $c \in \mathbb{R}^+$ and $n$ iterations. The result reveals a trade off between controlling the bias of SGD and the initialization effect, with no learning rate scaling generally making both contributions vanish in the accumulated error limit. To eliminate these contributions, we consider a Burn in scheme and Richardson-Romberg extrapolation and prove respective KMT and FCLT theorems. We also prove a finite-horizon KMT-type approximation and FCLT for the rescaled iterate trajectory to a stationary Ornstein--Uhlenbeck (OU) process. We consider applications in risk control and simultaneous inference and perform simulations to illustrate our results.
\end{abstract}

\section{Introduction}
Constant learning rate stochastic gradient descent (SGD) is widely used in  online learning, online optimization and for learning with large datasets \citep{BottouCurtisNocedal2018}.  In particular, consider the optimization problem with minimizer $ \theta^* = \text{argmin}_{\theta\in\mathbb R^d}G(\theta)$, where $G(\theta)=\mathbb E[g(\theta,X)]$ is smooth and strongly convex. For  i.i.d. data $X_1,X_2,\ldots$ and learning rate $\gamma$, constant learning rate stochastic gradient descent (SGD) iterates are defined as
\begin{equation}\label{eq:sgd-main}
\theta_k(\gamma)=\theta_{k-1}(\gamma)-\gamma\nabla g(\theta_{k-1}(\gamma),X_k)
\end{equation}
with initialization $\theta_0$. A smaller learning rate reduces stationary bias but takes longer to converge and slows decay of the initialization effect \citep{DieuleveutDurmusBach}. The learning rate remains fixed throughout each run, but the learning rate decreases with longer runs, so that $n\to\infty$ and $\gamma_n\to0$ jointly. We allow for arbitrary initializations. While existing results like \cite{NEURIPS2025_7aae9e3e} establish Gaussian approximation in this setting, these only hold for individual iterate level control. To obtain SGD boundary crossing probabilities, or accumulated loss hitting loss probabilities or simultaneous confidence intervals along the trajectory we require a stronger Gaussian KMT type approximation \citep{KMT1975,GotzeZaitsev2010}. These applications are especially important for instance in finance and online learning where hitting probabilities can represent bankruptcy.

We provide inference for these settings by constructing Gaussian
processes on the same probability space and bounding their maximum
difference from the corresponding SGD processes. These KMT-type
strong approximations allow us to study accumulated and trajectory
SGD errors through well studied properties of Gaussian processes.
We also construct consistent estimators of the local curvature
matrix $A$ and gradient-noise covariance $S$, allowing the distribution of the Gaussian process to be estimated from data.

For example, we study an SGD based portfolio allocation problem
and, under suitable assumptions, obtain bounds on the probability
of bankruptcy over a specified horizon. We also study the
probability that an online prediction error ever exceeds a
specified tolerance. A third application we explore is developing simultaneous confidence intervals to enable valid inference for the true optimizer using an averaged iterate picked after observing the SGD trajectory.

We study KMT and FCLT results \citep{KMT1975,billingsley1999convergence,whitt2002stochastic} for two different types of SGD processes. The first consists of accumulated errors, $\sum_{k=1}^m(\theta_k-\theta^*)$, as $m$ increases. Dividing these sums by $m$ yields the Averaged Stochastic Gradient (ASGD) algorithm \citep{PolyakJuditsky1992}. The second consists of the individual iterates themselves after a suitable Burn in period. The individual errors, divided by $\sqrt\gamma$, are approximated by an Ornstein--Uhlenbeck process \citep{whitt2002stochastic}.

Nonlinear SGD generally incurs a bias (\cite{highdimsgd,DieuleveutDurmusBach}). This causes the accumulated errors centered at the true optimum in the root-$n$ limit, to have contributions of both sizes $\sqrt n\gamma$ and $1/(\sqrt n\gamma)$. Hence, generally no learning-rate choice makes both disappear. At the scaling $\sqrt n\gamma_n\to c\in(0,\infty)$, both remain present in the limit.

To circumvent this issue we investigate two ways to obtain limits without these contributions. Burn in discards the early iterates, allowing a smaller learning rate to control stochastic bias while the Burn in leads to exponential decay of the initialization term. Richardson--Romberg extrapolation combines two runs at learning rates $\gamma$ and $2\gamma$ to cancel the leading stochastic bias \citep{DieuleveutDurmusBach,Sheshukova2025}. We prove maximal Gaussian KMT-type approximations and FCLTs for these schemes. We finally conduct simulations to demonstrate the effect of bias, initialization and convergence in the setting of online logistic regression.

\subsection{Related work}\label{sec:related}
Our main contribution is a KMT type approximation of the entire SGD path centered at the true optimizer, with simultaneous scaling of learning rate and iterations. The closest result is \citet{NEURIPS2025_7aae9e3e}, who establish quenched CLTs for individual constant step size SGD iterates. We extend this analysis to the trajectory level maximum approximation error for both accumulated errors and rescaled individual iterates.

For accumulated errors, \citet{LiLouRichterWu2026} establish finite moment strong approximations and functional central limit theorems at fixed learning rates, including for Richardson Romberg combinations, centered at the iterates' stationary means. Our results address centering at the true optimizer. This presents additional difficulty due to the bias of SGD and the scaling relationship between the learning rate and number of iterations. Many interesting inference problems concern the optimizer, and concentration around a stationary mean does not by itself guarantee accuracy relative to the true optimizer. Our results provide guarantees directly for the true optimizer instead.

In the SGD literature, \citet{LiTaiE2017} develop stochastic differential equation approximations, and \citet{Flamand2026} establish a trajectory FCLT in a convex setting that also permits nonsmooth objectives. However, neither \citet{LiTaiE2017} nor \citet{Flamand2026} obtain a strong approximation type result. In addition, \citet{Flamand2026} do not work with constant step size while \citet{LiTaiE2017} do not focus on the heavy tailed case. 
\citet{Fontaine2021} obtain $L^2$ error bounds between SGD and a diffusion process. Converting these bounds into a probability bound for the maximal error by a union bound introduces a factor of the number of iterates $N$. By contrast, our result directly controls the maximal error in probability, providing guarantees over the entire trajectory. Moreover, their root mean square bound need not vanish after division by $\sqrt{\gamma}$, whereas our trajectory maximal error tends to zero in probability after division by $\sqrt{\gamma}$ and burn in. Their Remark (c) discusses functional extensions using Doob's inequality, but does not state a corresponding maximal error bound. Their analysis assumes uniformly Lipschitz sample gradients, a stronger regularity condition than our moment bound in Assumption~\ref{ass:moments}. Finally, their comparison diffusion need not be Gaussian and they do not study strong approximations for accumulated error like we do.

\cite{seo2026sharpstationarygaussianapproximation} establishes a Gaussian approximation for the distribution of a stationary constant step size SGD iterate. However, this describes individual iterates and does not control the maximum difference between the SGD trajectory and a Gaussian process like our KMT result does. Our KMT type bound enables inference about the entire trajectory that their result doesn't, like boundary crossing probabilities and simultaneous confidence intervals. 

Bias reduction through Richardson--Romberg extrapolation has been developed for constant step size SGD by \citet{DieuleveutDurmusBach}. \citet{Sheshukova2025} give nonasymptotic moment bounds for extrapolated averaged SGD. However, these works do not consider trajectory wise KMT type control. 

For statistical inference, \citet{ChenLeeTongZhang2020} develop plug-in and covariance estimators for averaged SGD, and \citet{ZhuChenWu2023} provide an online covariance estimator. \citet{XieZhang2022} establish FCLTs for constant step stochastic approximation. Again however, none of these represent KMT type results. Moreover, our estimators of local curvature and gradient noise covariance allow us to estimate the Gaussian limit processes from data. 

To our best knowledge this is the first paper to obtain maximal Gaussian approximations (KMT type results) for constant rate SGD centered at the true optimum.

\section{Assumptions and notation}\label{sec:assumptions}
Write $\Delta_k=\theta_k-\theta^*$, $\|V\|_p=(\mathbb E\|V\|_2^p)^{1/p}$ for a random vector, and use the Euclidean and operator norms for vectors and matrices. All expectations, probabilities, and projections for the recursion are conditional on initialization. Define $\rho_p=\|\Delta_0\|_2$ for a fixed starting point. Bounds for random initialization follow by tower law and taking expectations. Constants may depend on this point but are uniform over the stated learning-rate range. Weak convergence is in the Skorokhod $J_1$ topology.

\begin{assumption}\label{ass:convex}
$G$ is twice continuously differentiable and $\mu$-strongly convex. The observations $X_k$ are i.i.d., independent of initialization, and $\mathbb E[\nabla g(\theta,X)]=\nabla G(\theta)$.
\end{assumption}
\begin{assumption}\label{ass:moments}
$g(\theta,x)$ is continuously differentiable in $\theta$. When we make noise moment order assumptions for order $r\ge2$ we mean
\[
\begin{gathered}
M_r=\|\nabla g(\theta^*,X)\|_r<\infty,\\
\|\nabla g(\theta_1,X)-\nabla g(\theta_2,X)\|_r
\le L_r\|\theta_1-\theta_2\|_2.
\end{gathered}
\]
\end{assumption}
\begin{assumption}\label{ass:smooth}
For some $L,\kappa>0$,
$\|\nabla^2G(\theta)-\nabla^2G(\theta^*)\|_2\le L\|\theta-\theta^*\|_2$
when $\|\theta-\theta^*\|_2\le\kappa$.
\end{assumption}
Let $C_L=\max\{L,2L_2/\kappa\}$. The global gradient bound and the local Hessian bound then give $\|\nabla^2G(\theta)-A\|_2\le C_L\|\theta-\theta^*\|_2$ for all $\theta$.
For the bias and Richardson-Romberg extrapolation results, we also assume $G$ has a globally Lipschitz third derivative and our moment conditions hold at order 4. Enlarge $L_4$ to bound this derivative Lipschitz constant as well. For the third derivative tensors, we use the norm $\|T\|_2:=\sup_{H=H^\top,\,\|H\|_2\le1}\|T[H]\|_2$.
The third derivative assumption is not used for the trajectory KMT/FCLT.

\begin{assumption}[Only for Hessian-based covariance estimation]\label{ass:cov-new}
The sample Hessian $J(\theta,X)=\nabla^2g(\theta,X)$ exists,
$\mathbb E J(\theta^*,X)=A$, $\mathbb E\|J(\theta^*,X)\|_2<\infty$, and
\[
\mathbb E\|J(\theta,X)-J(\theta^*,X)\|_2
\le L_J\|\theta-\theta^*\|_2
\]
for every $\theta$.
\end{assumption}
Some applications satisfying these assumptions include online ridge regression, online \(\ell_2\) regularized logistic regression, and the portfolio optimization problem described in Section~\ref{portfolio example}, with bounded covariates for the regression models and the above moment conditions on the noise.

Define
\[
\begin{gathered}
A=\nabla^2G(\theta^*),\quad Z_k=\nabla g(\theta^*,X_k),\\
S=\mathbb E Z_kZ_k^\top,\quad A\Gamma+\Gamma A=S,
\end{gathered}
\]
and let $\lambda_*,\lambda^*$ be the smallest and largest eigenvalues of $A$ respectively. For every moment we assume of order r and step size $\eta$, let $\alpha_0$ be a fixed number satisfying
\begin{equation}\label{eq:steps-main}
\begin{gathered}
0<\alpha_0\eta<1,\quad
\alpha_0\le2\mu-(6r-5)L_r^2\eta,\\
\eta L_2\le\tfrac12,\quad \eta\le1.
\end{gathered}
\end{equation}
Set $\lambda=\min(\mu,\lambda_*,\alpha_0)$. These conditions on the step size apply for all SGD learning rates.
We use \emph{quenched} to refer to validity of our theorems for any fixed initialization. $B(t)$ refers to standard Brownian motion.

\section{Accumulated errors: bias and strong approximation}\label{sec:accumulated}
The moment and functional dependence bounds of \citet{NEURIPS2025_7aae9e3e} yield
\begin{align}
\|\Delta_j\|_r^2&\le\rho_r^2(1-\alpha_0\gamma)^j+
3(r-1)M_r^2\gamma/\alpha_0,\label{eq:mom-main}\\
\|\Delta_j-\Delta_j^{(t)}\|_r&\le C_{4,r}\gamma(1-\alpha_0\gamma)^{(j-t)/2},    t\le j,\label{eq:dep-main}
\end{align}
where $\Delta_j^{(t)}$ replaces $X_t$ by an independent copy, and
$C_{4,r}=2\sqrt2\{M_r^2+L_r^2[\rho_r^2+3(r-1)M_r^2\bar\gamma/\alpha_0]\}^{1/2}$
for a fixed upper bound $\bar\gamma$ on all step sizes.
Let $Q=I_d-\gamma A$. We consider a linearization of the SGD error as
\begin{equation}\label{eq:decomp-main}
\Delta_k=Q^k\Delta_0-L_k+I_{1,k}+I_{2,k},
\end{equation}
where $L_k=\gamma\sum_{j\leq k}Q^{k-j}Z_j$(the linearized iterates),
$I_{1,k}=\gamma\sum_{j\leq k}Q^{k-j}R_j$,
and
$I_{2,k}=\gamma\sum_{j\leq k}Q^{k-j}D_j$ for $R_j=A\Delta_{j-1}-\nabla G(\theta_{j-1})$, the nonlinear error, and $D_j=\nabla G(\theta_{j-1})-\nabla g(\theta_{j-1},X_j)+Z_j$, the stochastic error. 
Furthermore, let $b=-\tfrac12\nabla^3G(\theta^\star)[\Gamma]$ and $R_{\mathrm{det}}(x)=\nabla G(\theta^*+x)-Ax$, and let $x(s)$ be the solution to the ODE $\dot x(s)=-\nabla G(\theta^*+x(s))$, with $x(0)=\Delta_0$.
We further define a coupled Gaussian process with i.i.d. $Z_j^\diamond\sim N(0,S)$, $L_k^\diamond:=\gamma\sum_{j\leq k}Q^{k-j}Z^\diamond_j$ and
\begin{equation}\label{eq:bias-proxy-main}
\Delta_k^\diamond=Q^k\Delta_0-L_k^\diamond+
\gamma\sum_{j=1}^kQ^{k-j}\{\gamma b-R_{det}(x((j-1)\gamma))\}.
\end{equation}
This is a Gaussian process plus a deterministic initialization correction.

\begin{theorem}[Bias KMT]\label{thm:sum-main}
Suppose our moment conditions hold at some orders $p_1>2$, $p_2>2$, and $2p_3>4$. Assume $0<\gamma\le
\min\left\{
1,\frac{1}{2L_2},
\min_{r\in\{2,4,p_2,2p_3\}}
\frac{\mu}{(6r-5)L_r^2}
\right\}.$ For constants $C,C_i$ independent of $n$ and $\gamma$, there exists (see the appendix for explicit $y_0$) $y_0\le C(1+n\gamma^{3/2}+n\gamma^2)$ such that for every $x>0$,
\begin{align}
&\mathbb P\left\{\max_{m\le n}\left\|\sum_{k=1}^m(\Delta_k-\Delta_k^\diamond)\right\|_2>y_0+x\right\}\notag\\
&\quad\le C_1\frac n{x^{p_1}}+
C_2\frac{(\gamma^{-1}+n\gamma)^{p_2/2}}{x^{p_2}}\notag\\
& +C_3\frac{(\gamma^{-1/2}+\sqrt{n\gamma})^{p_3}}{x^{p_3}}.\label{eq:sum-main}
\end{align}
\end{theorem}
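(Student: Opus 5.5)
We subtract the proxy \eqref{eq:bias-proxy-main} from the decomposition \eqref{eq:decomp-main}. The $Q^k\Delta_0$ terms cancel, and
\[
\Delta_k-\Delta_k^\diamond=-(L_k-L_k^\diamond)+I_{2,k}+\gamma\sum_{j\le k}Q^{k-j}\bigl\{R_j-\gamma b+R_{\mathrm{det}}(x((j-1)\gamma))\bigr\}.
\]
Summing over $k\le m$ and exchanging the order of summation gives, for any sequence $V_j$,
\[
\sum_{k\le m}\gamma\sum_{j\le k}Q^{k-j}V_j=\sum_{j\le m}A^{-1}(I-Q^{m-j+1})V_j .
\]
So each piece becomes a weighted sum with uniformly bounded weights $A^{-1}(I-Q^{m-j+1})$. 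The three probability terms in \eqref{eq:sum-main} will come from three separate pieces, and the deterministic level $y_0$ from the mean parts.

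\textbf{Step 1: linear part (the $C_1 n/x^{p_1}$ term).} Write
\[
\sum_{j\le m}A^{-1}(I-Q^{m-j+1})(Z_j-Z_j^\diamond)=A^{-1}\sum_{j\le m}(Z_j-Z_j^\diamond)-A^{-1}\sum_{j\le m}Q^{m-j+1}(Z_j-Z_j^\diamond).
\]
\begin{itemize}
\item For the first sum, apply the multivariate KMT/Götze--Zaitsev coupling to the i.i.d.\ $Z_j$ with $p_1$ moments. This constructs $Z_j^\diamond$ on the same space with $\mathbb P(\max_m\|\sum_{j\le m}(Z_j-Z_j^\diamond)\|>x)\le Cn/x^{p_1}$.
\item For the second sum, use summation by parts. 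With partial sums $U_m$ of $Z_j-Z_j^\diamond$, it equals a sum of $(Q^{m-j+1}-Q^{m-j})U_j$-type terms. Its norm is at most $\sum_i\|Q^{i}-Q^{i-1}\|\max\|U\|\le C\max_j\|U_j\|$, since $\sum_i \gamma\lambda(1-\gamma\lambda)^{i}\le1$. It is therefore controlled by the same event.
\end{itemize}

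\textbf{Step 2: stochastic error $D_j$ (the $C_2$ term).} The $D_j$ form a martingale difference sequence with $\|D_j\|_{p_2}\le L_{p_2}\|\Delta_{j-1}\|_{p_2}$. By \eqref{eq:mom-main}, $\sum_j\|\Delta_{j-1}\|_{p_2}^2\le C(\gamma^{-1}\rho^2+n\gamma)$. Using the same summation-by-parts reduction, the maximum of the weighted sums is bounded by a constant times $\max_m\|\sum_{j\le m}D_j\|$. Doob's and Burkholder's inequalities then give an $L^{p_2}$ bound of order $(\gamma^{-1}+n\gamma)^{1/2}$, and Markov's inequality yields the $C_2$ term.

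\textbf{Step 3: nonlinear remainder $R_j$ (the $C_3$ term and $y_0$) --- the main obstacle.} Write $R_j=-R_{\mathrm{det}}(\Delta_{j-1})$ and Taylor expand to second order:
\[
R_{\mathrm{det}}(\Delta)=\tfrac12\nabla^3G(\theta^*)[\Delta\Delta^\top]+O(L_4\|\Delta\|^3).
\]
We separate three pieces.
\begin{enumerate}
\item The deterministic transient $\Delta_{j-1}\approx x((j-1)\gamma)$, which the proxy already absorbs through $R_{\mathrm{det}}(x)$.
\item The quadratic fluctuation $\tfrac12\nabla^3G[\Delta\Delta^\top-\mathbb E\Delta\Delta^\top]$.
\item The mean $\mathbb E\Delta\Delta^\top\approx\gamma\Gamma+(\text{transient})$. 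This matches $-\gamma b$ up to $O(\gamma^{3/2})$ per step, contributing $n\gamma^{3/2}$ to $y_0$.
\end{enumerate}
The cubic remainder has mean $O(\gamma^{3/2})$ per step, again giving $n\gamma^{3/2}$. The deviation between $\Delta_j$ and the ODE path, and the discretization error between the Euler recursion and $x(s)$, give the $O(1)$ and $n\gamma^2$ parts. The delicate point is centering the quadratic fluctuation sum $\sum_j(\|\Delta_j\|^2-\mathbb E)$-type terms. For this I would use the functional dependence bound \eqref{eq:dep-main} at order $2p_3$. The projection decomposition $\sum_t P_t(\cdot)$ with $\|P_t\Delta_j\Delta_j^\top\|_{p_3}\le C\|\Delta_j\|_{2p_3}\gamma(1-\alpha_0\gamma)^{(j-t)/2}$ expresses the sum as a martingale plus a controllable remainder. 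Burkholder's inequality then gives a maximal $L^{p_3}$ norm of order $\sum$ over $t$ of $\bigl(\gamma\cdot\gamma^{-1}\cdot\sqrt{\gamma+\rho^2(1-\alpha_0\gamma)^{t}}\bigr)^2$, i.e.\ of order $(\gamma^{-1/2}+\sqrt{n\gamma})$. The transient moments produce the $\gamma^{-1/2}$ part. Markov's inequality at order $p_3$ gives the $C_3$ term.

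The hardest bookkeeping is in this step. One must match the transient part exactly to $R_{\mathrm{det}}(x(\cdot))$ with a sum of errors bounded by $C(1+n\gamma^2)$. I would attempt this via a discrete Grönwall comparison of $\mathbb E\Delta_j$ against $x(j\gamma)$ using strong convexity.

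Finally, a union bound over the three events, with the level split as $y_0+x$ into thirds, gives \eqref{eq:sum-main}.
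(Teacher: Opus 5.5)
Your Steps 1 and 2 are the paper's argument. The linear part uses the G\"otze--Zaitsev coupling plus summation by parts, and $D_j$ uses Doob, then Rio/Burkholder, then Lemma~\ref{lem:max_convolution_bound}.

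\textbf{The gap is in Step 3, in how the initialization transient is handled.} You expand $R_{\mathrm{det}}(\Delta_{j-1})$ about $0$ and assert that the cubic remainder has mean $O(\gamma^{3/2})$ per step. That is true only after $\mathbb E\Delta_{j-1}$ has decayed. The theorem is quenched for an arbitrary fixed $\Delta_0$. During the transient, the bound $O(L_4\,\mathbb E\|\Delta_{j-1}\|_2^3)$ is of order $\|\Delta_0\|_2^3(1-\alpha_0\gamma)^{3(j-1)/2}$, which sums over $j$ to order $\|\Delta_0\|_2^3/\gamma$.

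Your piece (3) has the same problem. $\mathbb E\Delta_{j-1}\Delta_{j-1}^\top$ contains $\mathbb E\Delta_{j-1}(\mathbb E\Delta_{j-1})^\top$, which is $O(\|\Delta_0\|_2^2)$ in the transient, not $O(\gamma)$. At best it can be matched against the quadratic part $\tfrac12\nabla^3G(\theta^*)[xx^\top]$ of the proxy. The higher-order part of $R_{\mathrm{det}}(x((j-1)\gamma))$ is then unmatched and again sums to order $\gamma^{-1}$.

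So your bookkeeping gives $y_0\gtrsim\gamma^{-1}$, not $C(1+n\gamma^{3/2}+n\gamma^2)$. This is not a harmless constant. When $\sqrt n\gamma\to c$ one has $\gamma^{-1}\asymp\sqrt n$, which is exactly the size of the initialization term $\frac1c\int_0^\infty x(s)\,ds$ the proxy is built to reproduce. You do say the transient must be matched exactly to $R_{\mathrm{det}}(x(\cdot))$. But that cancellation has to hold at the fully nonlinear level, and an expansion about $\theta^*$ cannot isolate it.

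\textbf{How the paper avoids this.} It splits $R_j=\tilde R_j+\mathbb ER_j$ before any expansion. It then expands $\nabla G(\theta_{j-1})$ about $\theta^*+H_{j-1}$, with $H_{j-1}=\mathbb E\Delta_{j-1}$ and $S_{j-1}=\Delta_{j-1}-H_{j-1}$, which gives
\[
\mathbb ER_j=-R_{\mathrm{det}}(H_{j-1})-\tfrac12\,\mathbb E\,\nabla^3G(\tilde\theta_{j-1})[S_{j-1},S_{j-1}].
\]
The first term carries the whole nonlinear transient. It is compared with $R_{\mathrm{det}}(x((j-1)\gamma))$ using your Gr\"onwall step $\|H_{j-1}-x((j-1)\gamma)\|_2\le C_E\gamma$ and the bound $\|R_{\mathrm{det}}(x)-R_{\mathrm{det}}(y)\|_2\le\frac{C_L}{2}(\|x\|_2+\|y\|_2)\|x-y\|_2$, at a cost of only $O(1+n\gamma^{3/2})$. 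The second term is $O(\gamma)$ per step uniformly, transient included, because $\|S_{j-1}\|_4^2\le C_S^{1/2}\gamma$. Its leading part is matched to $\gamma b$ through $\mathbb ES_{j-1}S_{j-1}^\top\approx\mathbb EL_{j-1}L_{j-1}^\top$ and Lemma~\ref{linear covariance expansion}.

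Centering the whole $R_j$ also closes a second hole. You centre only the quadratic part and never control the random fluctuation of the cubic remainder; only its mean can go into a deterministic $y_0$. The paper applies the projection argument directly to $R_j$, using
\[
\|R_j-R_j^{(t)}\|_{p_3}\le2C_L\|\Delta_{j-1}\|_{2p_3}\|\Delta_{j-1}-\Delta_{j-1}^{(t)}\|_{2p_3}.
\]
This needs only the Lipschitz Hessian and moments of order $2p_3$. It also fixes the lag $\ell=j-t$, so that $t\mapsto\mathcal P_tR_{t+\ell}$ is a martingale difference sequence. Doob then controls the maximum over $m$ lag by lag, and Minkowski sums over $\ell$. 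Your ``martingale plus controllable remainder'' leaves this maximal control unspecified.
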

\begin{proof}[Proof outline]
The full constants, decomposition and proof are in Theorem \ref{thm:bias_kmt}. Using our linearization, we first perform a Fubini sum swap on all terms. Then we bound the accumulated $I_1$ and $I_2$ terms by a maximum of the partial sums. For the martingale stochastic error, Doob's inequality followed by Rio's inequality(\cite{Rio2009}) gives the middle term in \eqref{eq:sum-main}. Define $\mathcal P_tY=\mathbb E[Y\mid X_1,\ldots,X_t]
-\mathbb E[Y\mid X_1,\ldots,X_{t-1}]$. For the centered Taylor remainder, at each fixed lag $\ell$, the projections $\mathcal P_tR_{t+\ell}$ form a martingale difference sequence in $t$. Minkowski and Doob's inequalities then give
\[
\left\|\max_{m\le n}\left\|\sum_{j=1}^m\widetilde R_j\right\|_2\right\|_p
\le\frac p{p-1}\sum_{\ell=1}^{n-1}\left\|\sum_{t=1}^{n-\ell}\mathcal P_tR_{t+\ell}\right\|_p.
\]
Rio's inequality bounds the right side by
$p(p-1)^{-1/2}\sum_\ell(\sum_t\|\mathcal P_tR_{t+\ell}\|_p^2)^{1/2}$.
The aforementioned functional dependence bounds control the $L^p$ projection norm yielding the third term. The first term follows from the G\"otze--Zaitsev coupling of the i.i.d. noise partial sums. Markov's inequality converts these moment bounds into the three probability terms. The deterministic offset $y_0$ is obtained by bounding the bias and initialization remainders using a Taylor expansion about $\mathbb E\Delta_j$.
\end{proof}

\begin{theorem}\label{thm:fclts-main}
Under the same assumptions as Theorem \ref{thm:sum-main}, if $\sqrt n\gamma\to c\in(0,\infty)$, then, for every
$\epsilon\in(0,1)$, in $D([\epsilon,1],\mathbb R^d)$
\begin{equation}\label{eq:bias-limit-main}
\begin{aligned}
\frac1{\sqrt n}\sum_{k\le nt}\Delta_k
&\Rightarrow -A^{-1}S^{1/2}B(t)+ctA^{-1}b\\
& +\frac1c\int_0^\infty x(s)\,ds.
\end{aligned}
\end{equation}
where $B(t)$ is a standard Brownian motion.
\end{theorem}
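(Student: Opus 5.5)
The plan is to apply Theorem~\ref{thm:sum-main} and reduce the FCLT to the limit of the coupled process $\sum_{k\le m}\Delta_k^\diamond$. With $\gamma\asymp c/\sqrt n$ we have $y_0\le C(1+n\gamma^{3/2}+n\gamma^2)=O(n^{1/4})=o(\sqrt n)$. Take $x=\delta\sqrt n$ for fixed $\delta>0$. Then the three probability terms in \eqref{eq:sum-main} are of order $n^{1-p_1/2}$, $(\sqrt n+\sqrt n)^{p_2/2}/n^{p_2/2}=O(n^{-p_2/4})$ and $(n^{1/4})^{p_3}/n^{p_3/2}=O(n^{-p_3/4})$, since $\gamma^{-1}+n\gamma\asymp\sqrt n$. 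All three vanish, so
\[
\max_{m\le n}\Bigl\|\tfrac1{\sqrt n}\textstyle\sum_{k\le m}(\Delta_k-\Delta_k^\diamond)\Bigr\|_2\to0
\]
in probability. It therefore suffices to prove the stated weak convergence for $n^{-1/2}\sum_{k\le nt}\Delta_k^\diamond$, because a uniformly vanishing perturbation does not change $J_1$ limits.

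I then handle the four pieces of \eqref{eq:bias-proxy-main} after a Fubini swap, writing $\sum_{k=j}^m\gamma Q^{k-j}=A^{-1}(I-Q^{m-j+1})$.

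\emph{Gaussian part.} We have $\sum_{k\le m}L_k^\diamond=A^{-1}\sum_{j\le m}(I-Q^{m-j+1})Z_j^\diamond$. The main term $A^{-1}\sum_{j\le m}Z_j^\diamond$, scaled by $n^{-1/2}$, is exactly equal in law to $A^{-1}S^{1/2}B(m/n)$ by Donsker, or directly since the increments are Gaussian. The correction $A^{-1}\sum_{j\le m}Q^{m-j+1}Z_j^\diamond$ is a Gaussian vector for each $m$ with covariance $O(\gamma^{-1})=O(\sqrt n)$. Its maximum over $m\le n$ is therefore $O_P(n^{1/4}\sqrt{\log n})=o_P(\sqrt n)$, using a Gaussian maximal tail bound and a union bound.

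\emph{Bias part.} This equals $\gamma A^{-1}b\sum_{j\le m}(I-Q^{m-j+1})$, which is $\gamma m A^{-1}b+O(1)$ uniformly in $m$. Dividing by $\sqrt n$ with $m=\lfloor nt\rfloor$ gives $ctA^{-1}b$ uniformly in $t$.

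\emph{Initialization part.} We have $\sum_{k\le m}Q^k\Delta_0=\gamma^{-1}A^{-1}(Q-Q^{m+1})\Delta_0$. The remaining piece is $-\sum_j A^{-1}(I-Q^{m-j+1})\gamma R_{\det}(x((j-1)\gamma))$. Combining them and dividing by $\sqrt n\approx c/\gamma$, the terms $Q^{m+1}$ and $Q^{m-j+1}$ are negligible for $t\ge\epsilon$. They are $O((1-\gamma\lambda)^{\epsilon n})=O(e^{-\lambda\epsilon c\sqrt n})$ for the first, and the second is controlled by exponential decay of $x(s)$. This is the reason for restricting to $[\epsilon,1]$.

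What remains is $\frac1c A^{-1}\bigl[\Delta_0-\gamma\sum_{j\le m}R_{\det}(x((j-1)\gamma))\bigr]$. The sum is a Riemann sum of $\int_0^{m\gamma}R_{\det}(x(s))\,ds$ with $m\gamma\to\infty$. Since $\dot x=-Ax-R_{\det}(x)$, integrating gives
\[
A\int_0^\infty x\,ds=\Delta_0-\int_0^\infty R_{\det}(x)\,ds,
\]
using $x(\infty)=0$. The bracket therefore tends to $A\int_0^\infty x(s)\,ds$, giving the term $\frac1c\int_0^\infty x(s)\,ds$. Strong convexity gives $\|x(s)\|\le e^{-\mu s}\|\Delta_0\|$, and $R_{\det}$ is Lipschitz with $\|R_{\det}(x)\|\le C_L\|x\|^2$. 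From these, the Riemann error is $O(\gamma)$ and the tail beyond $\epsilon n\gamma$ is exponentially small.

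The main obstacle is making the deterministic part uniform in $t\in[\epsilon,1]$ with $o(1)$ error. This requires combining the Riemann-sum error, the $Q^{m-j+1}$ terms weighted by $R_{\det}(x((j-1)\gamma))$, and the exact cancellation above. To bound $\sum_j\|Q^{m-j+1}\|\gamma e^{-2\mu(j-1)\gamma}$, I would split at $j\le m/2$ and $j>m/2$. The first range carries the factor $(1-\gamma\lambda)^{m/2}$ and the second carries $e^{-\mu m\gamma}$; both are exponentially small when $m\ge\epsilon n$.
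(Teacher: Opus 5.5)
Your argument is correct, but it runs in the opposite logical direction from the paper. The paper proves this FCLT directly (Theorem~\ref{thm:fclt_init_bias}), in three steps. First, it uses Donsker plus a modulus-of-continuity argument for the convolution remainder $\sum_{j\le m}Q^{m-j+1}Z_j$ (Lemma~\ref{sum FCLT lin iterates}). Second, $L^2$ Doob and projection arguments make $\sum_k I_{2,k}$ and the centered $\sum_k\tilde I_{1,k}$ of order $o_P(\sqrt n)$ (Lemmas~\ref{Sum stochastic error to 0} and~\ref{Sum centered taylor error to 0}). Third, it analyzes $\sum_j\mathbb E R_j$ explicitly: a Taylor expansion about $H_{j-1}=\mathbb E\Delta_{j-1}$, a comparison of $\mathbb E S_{j-1}S_{j-1}^\top$ with $\mathbb E L_{j-1}L_{j-1}^\top$ (Lemma~\ref{linear covariance expansion}), and a comparison of $H_j$ with the gradient flow $x(j\gamma)$. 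The KMT bound (Theorem~\ref{thm:bias_kmt}) is assembled afterwards from those same non-asymptotic inequalities, so invoking it here is not circular.

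You instead use Theorem~\ref{thm:sum-main} as a black box. It packages every stochastic remainder and the entire bias analysis into $y_0=O(n^{1/4})$ plus three vanishing tail terms, leaving only the limit of an explicit Gaussian-plus-deterministic process to identify. This buys several things:
\begin{itemize}
\item a short proof in which the Gaussian part has exactly the Brownian law, with no Donsker or tightness argument, since a Gaussian union bound disposes of the convolution remainder;
\item the FCLT exhibited as a corollary of the strong approximation;
\item a split at $j=m/2$ for the $Q^{m-j+1}$-weighted $R_{\mathrm{det}}$ sum that is cleaner than the paper's estimate.
\end{itemize}
The deterministic limit (Riemann sum, tail, and the identity $A\int_0^\infty x=\Delta_0-\int_0^\infty R_{\mathrm{det}}(x)$) is handled as in the paper.

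The price is that you inherit the stronger hypotheses of the KMT. With $p_1=2$ the term $C_1n/(\delta\sqrt n)^2$ no longer vanishes, and moments of order $2p_3>4$ are required. The paper's direct route needs only second moments for the linear part and fourth moments for the nonlinear part. Since the statement assumes the hypotheses of Theorem~\ref{thm:sum-main}, this is harmless.

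One slip to fix: after the Fubini swap the $R_{\mathrm{det}}$ piece is $-A^{-1}\sum_{j\le m}(I-Q^{m-j+1})R_{\mathrm{det}}(x((j-1)\gamma))$, with no extra factor $\gamma$. That factor is absorbed in $\gamma\sum_{k=j}^mQ^{k-j}=A^{-1}(I-Q^{m-j+1})$. As you wrote it, the term would vanish after division by $\sqrt n$. Your final bracket $\frac1cA^{-1}[\Delta_0-\gamma\sum_jR_{\mathrm{det}}(x((j-1)\gamma))]$ is nonetheless the correct one.
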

The proof is given in theorem \ref{thm:fclt_init_bias}. Integrating the gradient-flow equation gives
$\int_0^\infty x(s)\,ds=A^{-1}\Delta_0-A^{-1}\int_0^\infty R_{det}(x(s))\,ds$.
\section{Burn in and Richardson--Romberg extrapolation}
\label{sec:burnin-rr-main}

We next consider two ways to obtain accumulated error limits centered at
the true optimizer without the bias and initialization contributions in equation
\eqref{eq:bias-limit-main}. Burn in discards early iterates to handle the initialization error and uses
a smaller learning rate to eliminate the accumulated stochastic bias. Richardson--Romberg extrapolation uses two runs of SGD to cancel the leading
stochastic bias term which allows for a larger
learning rate to eliminate the initialization error.

\subsection{Burn in}

We discard the first $n_0(\gamma)$ iterates and define
\[
S_m^{\mathrm{burn}}
=\sum_{k=1}^m\Delta_{n_0(\gamma)+k}
\]
and corresponding Gaussian process
\[
\Delta_k^{\diamond,\mathrm{burn}}
=Q^k\Delta_0-L_k^\diamond,
k\ge0.
\]

\begin{theorem}[Burn in KMT-type approximation]
\label{thm:burnin-kmt-main}
Suppose our moment conditions hold at orders $p>2$ and $2p'$, where
$p'\ge1$. Let $q=1-\lambda\gamma$ and define
\[
H_{n,\gamma}
=1+n\gamma+n_0(\gamma)q^{n_0(\gamma)-1}
+\frac{q^{n_0(\gamma)}}{\gamma}.
\]
Then for every
$x>0$,
\begin{align*}
&\mathbb P\left\{
\max_{m\le n}\left\|
S_m^{\mathrm{burn}}
-\sum_{k=1}^m\Delta_{n_0(\gamma)+k}^{\diamond,\mathrm{burn}}
\right\|_2>x\right\}\\
&\quad\le
C_1\frac{n+n_0(\gamma)}{x^p}
+C_2\frac{H_{n,\gamma}^{p/2}}{x^p}
+C_3\frac{H_{n,\gamma}^{p'}}{x^{p'}}.
\end{align*}
The constants are independent of $n$, $n_0(\gamma)$, and $\gamma$.
\end{theorem}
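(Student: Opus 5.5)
The plan is to reuse the linearization \eqref{eq:decomp-main} with the time index shifted by the burn-in. For $k\ge1$ write $N=n_0(\gamma)+k$. Since $\Delta_N^{\diamond,\mathrm{burn}}=Q^N\Delta_0-L_N^\diamond$, subtracting it from \eqref{eq:decomp-main} gives
\[
\Delta_N-\Delta_N^{\diamond,\mathrm{burn}}=-(L_N-L_N^\diamond)+I_{1,N}+I_{2,N}.
\]
The initialization term cancels exactly. I would therefore bound three maximal accumulated quantities,
\[
\max_{m\le n}\Bigl\|\sum_{k=1}^m(L_{n_0+k}-L^\diamond_{n_0+k})\Bigr\|_2,\qquad
\max_{m\le n}\Bigl\|\sum_{k=1}^m I_{i,n_0+k}\Bigr\|_2,\quad i=1,2,
\]
and then apply Markov's inequality to each. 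By analogy with Theorem \ref{thm:sum-main}, these should give the $C_1$, $C_2$ and $C_3$ terms respectively. No bias correction appears in the Gaussian proxy, so the centered nonlinear remainder must absorb the bias. This is where the term $H_{n,\gamma}$ (with the $n\gamma$ piece) comes from.

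\textbf{Step 1 (Gaussian coupling).} Apply a Fubini swap:
\[
\sum_{k=1}^m L_{n_0+k}=\gamma\sum_{j\le n_0+m}\Bigl(\sum_{k=\max(1,j-n_0)}^{m}Q^{n_0+k-j}\Bigr)Z_j.
\]
The inner matrix sum equals $\gamma^{-1}A^{-1}(Q^{a}-Q^{n_0+m-j+1})$ for the appropriate $a$, so the weights are bounded in operator norm by $2\|A^{-1}\|$, uniformly in $\gamma$. Couple the partial sums $\sum_{j\le t}Z_j$ with $\sum_{j\le t}Z_j^\diamond$ over $t\le n+n_0$ using G\"otze--Zaitsev, which gives moment bound $\|\max_t\|\cdot\|\|_p^p\le C(n+n_0)$. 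Then use summation by parts: the weights have total variation $O(1)$ in $j$ because $\sum_j\|Q^{j}-Q^{j+1}\|\le1$. This transfers the coupling error to the weighted sums uniformly in $m$ and yields $C_1(n+n_0)/x^p$.

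\textbf{Step 2 (martingale term $I_2$).} After the swap, $\sum_k I_{2,n_0+k}=\sum_j W_{j,m}D_j$ with the same bounded weights. The $D_j$ form a martingale difference sequence with $\|D_j\|_p\le 2L_p\|\Delta_{j-1}\|_p$. By \eqref{eq:mom-main}, $\sum_{j\le n_0+n}\|\Delta_{j-1}\|_p^2\lesssim \rho^2\sum_j(1-\alpha_0\gamma)^{j}+n\gamma+n_0\gamma$. Handling the two parts of the weight separately, Doob plus Rio gives a $p$-th moment of order $(\cdot)^{p/2}$. The issue is that the $n_0\gamma$ piece must be reduced to the form of $H_{n,\gamma}$. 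Here I would use the exact weights: for $j\le n_0$ the weight is $A^{-1}Q^{n_0-j+1}(I-Q^m)$, which is exponentially small. This gives $\sum_{j\le n_0}q^{2(n_0-j)}(\rho^2q^{j}+\gamma)\lesssim n_0q^{n_0-1}+1$ and $\sum_{j>n_0}(\rho^2 q^{j}+\gamma)\lesssim q^{n_0}/\gamma+n\gamma$. So the total is $\lesssim H_{n,\gamma}$, which is the $C_2$ term. (Replacing $\alpha_0$ by $\lambda\le\alpha_0,\lambda_*$ makes $q$ dominate both $1-\alpha_0\gamma$ and $\|Q\|$.)

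\textbf{Step 3 (nonlinear term $I_1$), the main obstacle.} We have $\|R_j\|\le C_L\|\Delta_{j-1}\|^2$, so with the same weights $\|\sum_j W_{j,m}R_j\|_{p'}\le C\sum_j w_j\|\Delta_{j-1}\|_{2p'}^2$. The weights are $w_j\lesssim q^{n_0-j}$ for $j\le n_0$ and $O(1)$ afterward. Bounding crudely by the maximum over $m$ via the triangle inequality, this sum is $\lesssim n_0q^{n_0-1}\rho^2+1+q^{n_0}/\gamma+n\gamma\lesssim H_{n,\gamma}$. Markov at order $p'$ then gives $C_3H^{p'}/x^{p'}$.

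The delicate points are the following. First, the maximum over $m$ can be taken inside because the bound is on $\sum_j w_j\|R_j\|$, which is monotone. Second, the cross term $n_0q^{n_0-1}$ comes from $\sum_{j\le n_0}q^{n_0-j}q^{j-1}$. Third, the constant must be uniform in $\gamma$; I would handle this via the step-size conditions \eqref{eq:steps-main} at $r=2p'$. If the crude bound loses a $\gamma$ factor anywhere, I would fall back on the projection argument from the proof outline of Theorem \ref{thm:sum-main}, using \eqref{eq:dep-main}. The stated form of $H$ suggests, however, that the direct triangle-inequality bound suffices.
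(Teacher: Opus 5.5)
Your proposal is correct and follows essentially the same route as the paper: the initialization cancels exactly, and the pre-burn-in memory of $I_2$ is controlled by Rio's inequality applied to $I_{2,n_0}$ (this is your factored weight $A^{-1}Q^{n_0-j+1}(I-Q^m)$), while the post-burn-in part is reduced by summation by parts to partial sums of $D_{n_0+j}$ and then bounded by Doob and Rio; $I_1$ is bounded crudely by Minkowski with $\|R_j\|_{p'}\le C_0\|\Delta_{j-1}\|_{2p'}^2$, and together these reproduce $H_{n,\gamma}$. The only difference is cosmetic and lies in the Gaussian term: the paper does not redo the summation by parts there, but writes $\sum_{k\le m}(L_{n_0+k}-L^\diamond_{n_0+k})$ as a difference of two unshifted accumulated sums and applies Theorem~\ref{kmt for the linear component} at horizon $n_0+n$.
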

See Theorem~\ref{KMT Burn in} for a complete proof.
The terms containing $q^{n_0(\gamma)}$ decay exponentially with the
Burn in length. The $n\gamma$ term corresponds to the stochastic bias that
Burn in alone does not remove.

\begin{theorem}[Burn in FCLT]\label{thm:burnin-fclt-main}
Let $n\to\infty$ and $\gamma\to0$ satisfy
\[
\sqrt n\gamma\to0,
 ,  n\gamma\ge\log n\to\infty,
\]
and choose an integer Burn in length so that
$$
\begin{gathered}
n_0(\gamma)\ge
\frac{2}{\lambda\gamma}|\log(\sqrt n\gamma)|,\\
n_0(\gamma)=O\!\left(
\frac{|\log(\sqrt n\gamma)|}{\lambda\gamma}\right).
\end{gathered}
$$
Then
\[
\frac{1}{\sqrt n}S_{\lfloor nt\rfloor}^{\mathrm{burn}}
\Rightarrow -A^{-1}S^{1/2}B(t).
\]
\end{theorem}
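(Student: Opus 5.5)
The plan is to combine the Burn in KMT bound of Theorem~\ref{thm:burnin-kmt-main} with a direct analysis of the Gaussian comparison process $\sum_{k\le m}\Delta^{\diamond,\mathrm{burn}}_{n_0+k}$. We decompose
\[
\frac{1}{\sqrt n}S^{\mathrm{burn}}_{\lfloor nt\rfloor}
=\frac{1}{\sqrt n}\Bigl(S^{\mathrm{burn}}_{\lfloor nt\rfloor}-\sum_{k\le nt}\Delta^{\diamond,\mathrm{burn}}_{n_0+k}\Bigr)
+\frac{1}{\sqrt n}\sum_{k\le nt}Q^{n_0+k}\Delta_0
-\frac{1}{\sqrt n}\sum_{k\le nt}L^\diamond_{n_0+k}.
\]
We then show that the first two terms vanish uniformly in $t\in[0,1]$ in probability. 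Finally, we show that the last term converges in $D([0,1],\mathbb R^d)$ to $-A^{-1}S^{1/2}B(t)$, and conclude by Slutsky's theorem in the $J_1$ topology.

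\textbf{Step 1 (coupling error).} We apply Theorem~\ref{thm:burnin-kmt-main} with $x=\epsilon\sqrt n$. The choice $n_0\ge \frac{2}{\lambda\gamma}|\log(\sqrt n\gamma)|$ gives $q^{n_0}\le e^{-\lambda\gamma n_0}\le (\sqrt n\gamma)^2=n\gamma^2$. Hence $q^{n_0}/\gamma\le n\gamma$, and $n_0q^{n_0-1}\lesssim n_0 n\gamma^2$. Since $n_0=O(|\log(\sqrt n\gamma)|/\gamma)$, the last quantity is $O(n\gamma|\log(\sqrt n\gamma)|)$. This is where care is needed: we require $H_{n,\gamma}/n\to0$. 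The dominant piece is $n\gamma\cdot(1+\log(1/(\sqrt n\gamma)))$, and dividing by $n$ gives $\gamma\log(1/(\sqrt n\gamma))$. This tends to $0$ because $\sqrt n\gamma\ge \log n/\sqrt n$, so that $\log(1/(\sqrt n\gamma))\le \tfrac12\log n$, and because $\gamma\log n\to0$, which follows from $\sqrt n\gamma\to0$. Therefore $H^{p/2}/n^{p/2}\to0$ and $H^{p'}/n^{p'/2}\to 0$, where the latter uses $H/\sqrt n\lesssim \sqrt n\gamma\log n\cdot$(small). That second fact needs checking and is the main obstacle: $\sqrt n\gamma\log n$ need not vanish. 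If it fails, I would instead use the full-strength moment order $2p'\ge 2$ to take $p'$ large, or better rebound $C_3$-type terms using the sharper form of the appendix theorem. Next, $(n+n_0)/n^{p/2}\to0$, since $p>2$ and $n_0=O(\log n/\gamma)=O(n)$ by $n\gamma\ge\log n$.

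\textbf{Step 2 (initialization).} We have $\|\sum_k Q^{n_0+k}\Delta_0\|\le \|\Delta_0\|q^{n_0}/(\lambda\gamma)\lesssim n\gamma$, and $n\gamma/\sqrt n=\sqrt n\gamma\to0$. \textbf{Step 3 (Gaussian part).} We swap sums via Fubini: $\sum_{k\le m}L^\diamond_{n_0+k}=\gamma\sum_{j\le n_0+m}\sum_{k}Q^{n_0+k-j}Z^\diamond_j$. For $j>n_0$ the inner sum equals $A^{-1}(I-Q^{n_0+m-j+1})Z_j^\diamond$, giving $A^{-1}\sum_{j=n_0+1}^{n_0+m}Z_j^\diamond$ minus a remainder $\gamma^{-1}\cdot$ a linear Gaussian process of the form $A^{-1}Q L^\diamond_{n_0+m}$. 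That remainder has sup over $m\le n$ of order $\sqrt{\gamma^{-1}\log n}$, using Gaussian maxima of $n$ variables of variance $O(\gamma^{-1})$. This is $o(\sqrt n)$ since $n\gamma\ge\log n$, modulo constants; if the boundary case is tight, we use $n\gamma/\log n\to\infty$ along the chosen sequence or a chaining bound. The terms with $j\le n_0$ contribute $A^{-1}(Q^{n_0-j+1}-Q^{n_0+m-j+1})Z_j^\diamond$ summed, which again is $O_P(\gamma^{-1/2})$ uniformly in $m$. Finally $n^{-1/2}A^{-1}\sum_{j=n_0+1}^{n_0+\lfloor nt\rfloor}Z^\diamond_j\overset{d}{=}A^{-1}S^{1/2}B(\lfloor nt\rfloor/n)$, which converges to $A^{-1}S^{1/2}B(t)$ by Donsker's theorem (indeed exactly in law, up to discretization). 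With the sign, this yields the claimed limit.
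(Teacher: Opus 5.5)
\textbf{Genuine gap in Step 3.} The bound on the restarted remainder fails as written. That remainder is $\gamma^{-1}A^{-1}Q\widetilde L^\diamond_m$ with $\widetilde L^\diamond_m=\gamma\sum_{i\le m}Q^{m-i}Z^\diamond_{n_0+i}$; it is not a multiple of $L^\diamond_{n_0+m}$. Its coordinates have variance up to $\|S\|_2/(\lambda_*\gamma)$, so the union bound over $m\le n$ only gives
\[
\mathbb P\Bigl\{\max_{m\le n}\|\gamma^{-1}\widetilde L^\diamond_m\|_2>\epsilon\sqrt n\Bigr\}\le 2dn\,e^{-c\epsilon^2 n\gamma}
\]
for some constant $c>0$. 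The hypothesis $n\gamma\ge\log n$ permits $n\gamma=\log n$. There the right side is $2d\,n^{1-c\epsilon^2}$, which does not vanish for small $\epsilon$. After dividing by $\sqrt n$ you therefore get $O_{\mathbb P}(\sqrt{\log n/(n\gamma)})=O_{\mathbb P}(1)$, not $o_{\mathbb P}(1)$. Assuming $n\gamma/\log n\to\infty$ changes the theorem, and you do not carry out the chaining alternative.

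The conclusion is true: the process decorrelates over $\asymp 1/\gamma$ steps, so the real order is $\sqrt{\log(n\gamma)/(n\gamma)}$. You need an argument that sees this. The clean fix is to apply Lemma~\ref{sum FCLT lin iterates} to the i.i.d.\ Gaussian innovations $Z^\diamond_{n_0+j}$. Its summation by parts against the partial-sum process controls lags $\le\delta n$ by the Donsker modulus of continuity, and lags $>\delta n$ by the factor $(1-\gamma\lambda_*)^{\lfloor\delta n\rfloor}\to0$. That only needs $n\gamma\to\infty$.

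\textbf{Step 1 is fine.} The obstacle you flag comes from the lossy bound $|\log(\sqrt n\gamma)|\le\frac12\log n$. Your fallback would not help anyway, since a larger $p'$ cannot make a non-vanishing ratio $H_{n,\gamma}/\sqrt n$ vanish. Directly, $q^{n_0}\le n\gamma^2$ gives $q^{n_0}/(\sqrt n\gamma)\le\sqrt n\gamma$ and $n_0q^{n_0-1}/\sqrt n\lesssim\sqrt n\gamma\,|\log(\sqrt n\gamma)|$. Hence $H_{n,\gamma}/\sqrt n\lesssim n^{-1/2}+\sqrt n\gamma\,(1+|\log(\sqrt n\gamma)|)\to0$, since $x|\log x|\to0$ as $x\downarrow0$. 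The paper uses exactly this for the burn-in memory of $I_1$.

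\textbf{Comparison with the paper.} With both repairs your route is valid, but it differs from the paper's, which never uses the coupling. The paper handles $Q^{n_0+k}\Delta_0$, $I_1$ (via $\mathbb E\|R_j\|_2\le C_0\mathbb E\|\Delta_{j-1}\|_2^2$) and $I_2$ (Doob in $L^2$) directly. It then splits $L_{n_0+k}=Q^kL_{n_0}+L^{\rm new}_k$ and applies Lemma~\ref{sum FCLT lin iterates} to $L^{\rm new}\overset{d}{=}L$, which is what your repaired Step 3 does for the Gaussian copy. So the KMT detour only replaces the $I_1,I_2$ estimates, and it costs moments. With $x=\epsilon\sqrt n$, the term $C_1(n+n_0)/x^p$ does not vanish when $p=2$, so you need order-$p$ moments for some $p>2$. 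The paper's argument uses only second moments.
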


The proof is given in Theorem~\ref{fclt-burn in}. Theorem \ref{thm:burnin-fclt-main}'s conditions are satisfied by $\gamma=n^{-a}$ with $1/2<a<1$ and
$n_0(\gamma)$ of order $\gamma^{-1}\log n$.

\subsection{Richardson--Romberg extrapolation}

Run SGD at learning rates $\gamma$ and $2\gamma$ with the same
initialization and observations, and define
\[
\hat\theta_k=2\theta_k(\gamma)-\theta_k(2\gamma),
 ,  \hat\Delta_k=\hat\theta_k-\theta^*.
\]
For $\eta\in\{\gamma,2\gamma\}$, let
\[
Q_\eta=I_d-\eta A,
 , 
L_k^\diamond(\eta)=\eta\sum_{j=1}^kQ_\eta^{k-j}Z_j^\diamond,
\]
using the same Gaussian vectors for both learning rates. Define
\[
\hat\Delta_k^\diamond
=\bigl(2Q_\gamma^k-Q_{2\gamma}^k\bigr)\Delta_0
-\bigl(2L_k^\diamond(\gamma)-L_k^\diamond(2\gamma)\bigr).
\]
Our step size conditions apply to both learning rates.

\begin{theorem}[Richardson--Romberg KMT-type approximation]
\label{thm:rr-kmt-main}
Suppose our moment conditions hold at orders $p_1>2$, $p_2>2$, and
$2p_3>4$. Then for constants $C_i,C$ and a deterministic offset(see the appendix for explicit $y_0$)
\[
0\le y_{0,\mathrm{RR}}
\le C\bigl(\gamma^{-1}+1+n\gamma^{3/2}+n\gamma^2\bigr)
\]
we have that for every $x>0$,
\begin{align*}
&\mathbb P\left\{
\max_{m\le n}\left\|
\sum_{k=1}^m(\hat\Delta_k-\hat\Delta_k^\diamond)
\right\|_2>y_{0,\mathrm{RR}}+x\right\}\\
&\quad\le C_1\frac{n}{x^{p_1}}
+C_2\frac{(\gamma^{-1}+n\gamma)^{p_2/2}}{x^{p_2}}\\
& +C_3
\frac{(\gamma^{-1/2}+\sqrt{n\gamma})^{p_3}}{x^{p_3}}.
\end{align*}
The constants are independent of $n$ and $\gamma$.
\end{theorem}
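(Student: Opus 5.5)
The plan is to reduce the statement to two applications of the machinery behind Theorem~\ref{thm:sum-main}, one at learning rate $\gamma$ and one at $2\gamma$. These are combined linearly, and the leading stochastic bias cancels.

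For each $\eta\in\{\gamma,2\gamma\}$, use the decomposition \eqref{eq:decomp-main},
\[
\Delta_k(\eta)=Q_\eta^k\Delta_0-L_k(\eta)+I_{1,k}(\eta)+I_{2,k}(\eta).
\]
This gives
\[
\hat\Delta_k-\hat\Delta_k^\diamond=-\bigl(2(L_k(\gamma)-L_k^\diamond(\gamma))-(L_k(2\gamma)-L_k^\diamond(2\gamma))\bigr)+2(I_{1,k}+I_{2,k})(\gamma)-(I_{1,k}+I_{2,k})(2\gamma).
\]
The initialization terms $Q_\eta^k\Delta_0$ cancel exactly against the definition of $\hat\Delta_k^\diamond$.

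The linear parts are handled as in Theorem~\ref{thm:sum-main}. After the Fubini swap,
\[
\sum_{k\le m}L_k(\eta)=A^{-1}\sum_{j\le m}(I-Q_\eta^{m-j+1})Z_j.
\]
Both learning rates use the same Gaussian vectors $Z_j^\diamond$, obtained from a single Götze--Zaitsev coupling of the partial sums of $Z_j$. Hence the coupling error for $\eta=\gamma$ and for $\eta=2\gamma$ is controlled by the same quantity $\max_m\|\sum_{j\le m}(Z_j-Z_j^\diamond)\|$, up to a summation-by-parts factor bounded uniformly in $\eta$. This yields the $C_1 n/x^{p_1}$ term. The martingale terms $\sum_k I_{2,k}(\eta)$ are bounded for each $\eta$ separately by Doob and Rio, exactly as before, giving the $C_2$ term. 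The centered parts $\widetilde R_j$ of the Taylor remainders inside $I_{1,k}(\eta)$ are bounded by the projection/functional-dependence argument with \eqref{eq:dep-main} at step $2\gamma$ (allowed, since the step size conditions hold for both rates), giving the $C_3$ term. A union bound over the two runs, with $x$ split as $x/3$ and $2x/3$, only changes constants.

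The deterministic offset comes from $\sum_{k\le m}\gamma\sum_{j\le k}Q_\eta^{k-j}\,\mathbb E R_j(\eta)$. Expand $\nabla G(\theta^*+\Delta)$ to third order around $\theta^*$, using the globally Lipschitz third derivative and the fourth moment conditions:
\[
\mathbb E R_j(\eta)=-\tfrac12\nabla^3G(\theta^*)\bigl[\mathbb E\Delta_{j-1}\Delta_{j-1}^\top\bigr]+O\bigl(\|\Delta_{j-1}\|_4^3\bigr)-\tfrac12\nabla^3G(\theta^*)\bigl[\cdot\bigr]_{\text{cross}}.
\]
The key input is that the stationary covariance satisfies $\mathbb E\Delta\Delta^\top=\eta\Gamma+O(\eta^2)$ plus an initialization part decaying like $(1-\alpha_0\eta)^{j}$. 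This gives $\mathbb E R_j(\eta)=\eta b+O(\eta^{3/2})+(\text{transient})$. The summed leading bias is then $\sum_{k\le m}\gamma\sum_{j\le k}Q_\eta^{k-j}\eta b\approx m\,\eta A^{-1}b$. In the combination $2(\cdot)_\gamma-(\cdot)_{2\gamma}$ this gives $2m\gamma-2m\gamma=0$, plus boundary corrections from $Q_\eta^{m-j+1}$ of order $\eta\cdot\eta^{-1}=O(1)$.

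The remainder contributes $O(n\gamma^{3/2}+n\gamma^2)$. The transient terms, where $\|\Delta_j\|$ is of order $\rho$ for roughly $\eta^{-1}$ steps, sum to $O(\gamma^{-1})$. Their exact limit does not cancel between the two rates, and we do not need it to, which is why $\gamma^{-1}$ appears in $y_{0,\mathrm{RR}}$ here instead of the ODE correction used in \eqref{eq:bias-proxy-main}.

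The main obstacle is this bias step: showing that the second-moment expansion $\mathbb E\Delta_j\Delta_j^\top=\eta\Gamma_{\text{trans},j}+O(\eta^{3/2})$ holds uniformly along the trajectory, with the same leading $\Gamma$ for both rates. The plan is to derive a recursion for $\mathbb E\Delta_j\Delta_j^\top$ from \eqref{eq:sgd-main}, linearize using \eqref{eq:mom-main} at order 4, and compare it with the Lyapunov recursion $V_{j}=Q_\eta V_{j-1}Q_\eta+\eta^2S$. The error terms are absorbed via the geometric factor $(1-\alpha_0\eta)$.
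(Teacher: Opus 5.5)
Your overall architecture matches the paper's proof of Theorem~\ref{KMT_Romberg}:
\begin{itemize}
\item the $(2Q_\gamma^k-Q_{2\gamma}^k)\Delta_0$ terms cancel exactly;
\item the shared Gaussians are fed through Theorem~\ref{kmt for the linear component} at both rates;
\item $I_2(\eta)$ is handled by Doob and Rio, and $\widetilde I_1(\eta)$ by the lag-by-lag projection argument;
\item the offset comes from $\sum_k\{2\mathbb E I_{1,k}(\gamma)-\mathbb E I_{1,k}(2\gamma)\}$ once the leading $\eta b$ terms cancel.
\end{itemize}

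You differ in how you justify $\mathbb E R_j(\eta)=\eta b+O(\eta^{3/2})+\text{transient}$.

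The paper Taylor-expands $\nabla G$ about the mean $\theta^*+H_{j-1}$. It must therefore control $R_{\mathrm{det}}(H_{j-1})$ by tracking $H_{j-1}$ against the gradient flow to within $C_E\eta$. It compares $\mathbb E[S_{j-1}S_{j-1}^\top]$ with $\mathbb E[L_{j-1}L_{j-1}^\top]$ through $S_{j-1}=\widetilde I_{1,j-1}+I_{2,j-1}-L_{j-1}$, recycling the bounds of Theorem~\ref{thm:fclt_init_bias}.

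You expand about $\theta^*$, so only the uncentered second moment enters. With $\xi_j=Z_j-D_j$, the exact recursion is $\mathbb E\Delta_j\Delta_j^\top=Q_\eta\mathbb E[\Delta_{j-1}\Delta_{j-1}^\top]Q_\eta+\eta\{Q_\eta\mathbb E[\Delta_{j-1}R_j^\top]+\mathbb E[R_j\Delta_{j-1}^\top]Q_\eta\}+\eta^2\mathbb E[R_jR_j^\top]+\eta^2\mathbb E[\xi_j\xi_j^\top]$. Its linear part has solution $Q_\eta^j\Delta_0\Delta_0^\top Q_\eta^j+\mathbb E[L_jL_j^\top]$, which Lemma~\ref{linear covariance expansion} gives explicitly.

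This route does work. The forcing errors are $O(\eta^{5/2})$ per step plus transients. After the $(1-\lambda_*\eta)^2$ contraction they contribute $O(n\eta^{3/2})$ in total, plus transients summing to $O(\eta^{-1})$. The offset $C(\gamma^{-1}+1+n\gamma^{3/2}+n\gamma^2)$ absorbs both.

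Your argument is self-contained and never needs $H_{j-1}=O(\eta)$ in stationarity or the ODE. That is enough here precisely because the RR offset tolerates $\gamma^{-1}$. The paper's route costs more but resolves the transient exactly. That is unavoidable for Theorem~\ref{thm:sum-main}, where the offset has no $\gamma^{-1}$ slack.

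Minor slips:
\begin{itemize}
\item Under the $L^r$-Lipschitz assumption, $\|\mathbb E[Z_jD_j^\top]\|$ is only $O(\sqrt\eta)$ in the stationary regime. So the stationary covariance error is $O(\eta^{3/2})$, not $O(\eta^2)$; this is still sufficient.
\item The expansion about $\theta^*$ has no separate ``cross'' term.
\item The outer prefactor in your summed bias should be $\eta$, not $\gamma$. Otherwise $2m\gamma-m\gamma\neq0$.
\end{itemize}
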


The full bound and proof are given in Theorem~\ref{KMT_Romberg}. 

\begin{theorem}[Richardson--Romberg FCLT]
\label{thm:rr-fclt-main}
Let $n\to\infty$ and $\gamma\to0$ satisfy
\[
\sqrt n\gamma\to\infty,
  n\gamma^3\to0.
\]
Then
\begin{equation}\label{eq:rr-limit-main}
\frac1{\sqrt n}\sum_{k=1}^{\lfloor nt\rfloor}\hat\Delta_k
\Rightarrow -A^{-1}S^{1/2}B(t)
\quad\text{in }D([0,1],\mathbb R^d).
\end{equation}
\end{theorem}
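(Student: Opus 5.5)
The proof rests on the Richardson--Romberg KMT bound of Theorem~\ref{thm:rr-kmt-main}. It splits the rescaled process into a coupled Gaussian part, an initialization correction, and a remainder. Write
\[
\frac1{\sqrt n}\sum_{k\le nt}\hat\Delta_k
=\frac1{\sqrt n}\sum_{k\le nt}\hat\Delta_k^\diamond
+\frac1{\sqrt n}\sum_{k\le nt}(\hat\Delta_k-\hat\Delta_k^\diamond).
\]
It suffices to show three things. The second sum tends to $0$ uniformly in $t$ in probability. The deterministic initialization part of $\hat\Delta_k^\diamond$ also vanishes after scaling. The linear Gaussian part converges weakly to $-A^{-1}S^{1/2}B(t)$. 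Slutsky's lemma in $D([0,1],\mathbb R^d)$ with the uniform metric, which dominates $J_1$, then gives \eqref{eq:rr-limit-main}.

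\textbf{Remainder.} Take $x=\epsilon\sqrt n$ in Theorem~\ref{thm:rr-kmt-main}.
\begin{itemize}
\item Offset: $y_{0,\mathrm{RR}}/\sqrt n\le C(\gamma^{-1}n^{-1/2}+n^{-1/2}+\sqrt n\gamma^{3/2}+\sqrt n\gamma^2)$. The first term vanishes because $\sqrt n\gamma\to\infty$. The third vanishes because $n\gamma^3\to0$, and the fourth is smaller still since $\gamma\to0$.
\item First probability term: $n/(\epsilon\sqrt n)^{p_1}=O(n^{1-p_1/2})\to0$ since $p_1>2$.
\item Second term: $(\gamma^{-1}+n\gamma)^{p_2/2}/n^{p_2/2}=(1/(n\gamma)+\gamma)^{p_2/2}\to0$, using $n\gamma\ge\sqrt n\cdot\sqrt n\gamma\to\infty$.
\item Third term: it is controlled by $(\gamma^{-1/2}n^{-1/2}+\sqrt\gamma)^{p_3}\to0$.
\end{itemize}
Hence $n^{-1/2}\max_{m\le n}\|\sum_{k\le m}(\hat\Delta_k-\hat\Delta_k^\diamond)\|_2\to0$ in probability.

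\textbf{Initialization term.} Since $\sum_{k\ge1}\|Q_\eta^k\|_2\le 1/(\lambda_*\eta)$ for $\eta\in\{\gamma,2\gamma\}$, the scaled deterministic part obeys
\[
\frac1{\sqrt n}\max_m\Bigl\|\sum_{k\le m}(2Q_\gamma^k-Q_{2\gamma}^k)\Delta_0\Bigr\|_2\le \frac{C\|\Delta_0\|_2}{\sqrt n\gamma}\to0.
\]
This is exactly where $\sqrt n\gamma\to\infty$ is needed. The condition $n\gamma^3\to0$ kills the residual bias that Richardson--Romberg leaves at order $n\gamma^{3/2}$, as seen in the offset bound above.

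\textbf{Gaussian part.} Swap the sums by Fubini:
\[
\sum_{k\le m}L_k^\diamond(\eta)=\sum_{j\le m}\eta\sum_{i=0}^{m-j}Q_\eta^i Z_j^\diamond=\sum_{j\le m}A^{-1}(I-Q_\eta^{m-j+1})Z_j^\diamond .
\]
Therefore
\[
\sum_{k\le m}\bigl(2L_k^\diamond(\gamma)-L_k^\diamond(2\gamma)\bigr)=A^{-1}\sum_{j\le m}Z_j^\diamond-E_m,
\qquad
E_m=A^{-1}\sum_{j\le m}\bigl(2Q_\gamma^{m-j+1}-Q_{2\gamma}^{m-j+1}\bigr)Z_j^\diamond .
\]
By the Donsker principle, $n^{-1/2}A^{-1}\sum_{j\le nt}Z_j^\diamond$ is a Gaussian random walk with the exact law of $A^{-1}S^{1/2}B(\lfloor nt\rfloor)/\sqrt n$. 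This converges uniformly to $A^{-1}S^{1/2}B(t)$ after redefining on a common space, which is legitimate since only the law matters.

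The \textbf{main obstacle} is the uniform control of $E_m$. For fixed $m$, $E_m$ is Gaussian with covariance bounded by $C\sum_i\|Q_\eta\|^{2i}\le C/\gamma$. A union bound over $m\le n$ with Gaussian tails gives
\[
\max_m\|E_m\|_2=O_P\bigl(\sqrt{\gamma^{-1}\log n}\bigr).
\]
We need this to be $o(\sqrt n)$, which holds because $n\gamma\ge\sqrt n\cdot(\sqrt n\gamma)\to\infty$ polynomially faster than $\log n$. If the rate is not polynomial, the fallback is a chaining or Doob argument. Write $E_m$ as a linear filter whose state satisfies $E_{m+1}=Q E_m+\cdots$, i.e.\ an AR(1)-type Gaussian process. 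For such a process the maximum over $n$ steps is of order $\sqrt{\gamma^{-1}\log(n\gamma)}$, which is again $o(\sqrt n)$.

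Combining the three pieces gives the FCLT in $D([0,1],\mathbb R^d)$. There is no need to restrict to $[\epsilon,1]$, since the initialization contribution vanishes uniformly.
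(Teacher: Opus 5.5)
Your proof is correct, but it runs in the opposite direction from the paper's.

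The paper proves Theorem~\ref{romberg fclt} directly from the linearization and never couples to a Gaussian:
\begin{itemize}
\item Lemma~\ref{Sum stochastic error to 0} removes the $I_2$ terms.
\item Lemma~\ref{Sum centered taylor error to 0} removes the centered Taylor terms.
\item The mean expansion of Theorem~\ref{thm:fclt_init_bias} is redone for both step sizes. This exhibits the cancellation of the leading $\gamma\Gamma$ terms in $2\,\mathbb E[L_{j-1}(\gamma)L_{j-1}(\gamma)^\top]-\mathbb E[L_{j-1}(2\gamma)L_{j-1}(2\gamma)^\top]$, leaving remainders of order $\sqrt n\gamma^{3/2}$ and $\sqrt n\gamma^2$.
\item The initialization is bounded by $5\|A^{-1}\|_2\|\Delta_0\|_2/(2\sqrt n\gamma)$.
\item The linear part is handled as in Lemma~\ref{sum FCLT lin iterates}: Donsker's theorem for the original $Z_j$, plus a modulus-of-continuity argument for the convolution boundary term.
\end{itemize}

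You instead make the FCLT a corollary of Theorem~\ref{thm:rr-kmt-main} with $x=\epsilon\sqrt n$. Your bookkeeping of $y_{0,\mathrm{RR}}/\sqrt n$ and of the three tail terms is right. This is not circular: the appendix proof of the KMT reuses only the deterministic estimates from the FCLT proof, not its conclusion.

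All the remainder and bias-cancellation work is thereby outsourced, and you only analyze the Gaussian proxy. There, exact Gaussianity lets a union bound give $\max_{m\le n}\|E_m\|_2=O_{\mathbb P}(\sqrt{\gamma^{-1}\log n})=o_{\mathbb P}(\sqrt n)$, since $\sqrt n\gamma\ge1$ eventually. This is simpler than the paper's tightness argument, and your AR(1) fallback is never needed.

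The price is in the hypotheses. As Theorem~\ref{thm:rr-kmt-main} is stated, your route needs moments of order $2p_3>4$; the requirements $p_1>2$ and $p_2>2$ are covered by fourth moments. The paper's direct argument uses only the standing fourth-moment assumption for the Richardson--Romberg results, and only second moments for the linear part. To work under fourth moments alone, you would have to observe that the third KMT term comes from Doob's and Rio's inequalities and remains valid at $p_3=2$. That is exactly the computation in Lemma~\ref{Sum centered taylor error to 0}.
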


The proof is given in Theorem~\ref{romberg fclt}. The conditions
permit $\gamma=n^{-a}$ with $1/3<a<1/2$. Richardson-Romberg extrapolation eliminates the
stochastic bias term of order $\sqrt n\gamma$ and our learning conditions force
the remaining order $\sqrt n\gamma^{3/2}$ term to vanish. The
$\sqrt n\gamma\to\infty$ condition causes the initialization contribution to vanish without discarding iterates. Sharing noise observations between runs results in the limiting $A^{-1}SA^{-1}$ covariance of one run whereas independent runs result in the same result but with an inflated limiting covariance of $5A^{-1}SA^{-1}$.

\section{Strong approximation of the iterate trajectory}\label{sec:trajectory}

Consider a fixed horizon $T$, and let
\[
\begin{gathered}
n_0(\gamma)=2\left\lceil\frac\nu{2\gamma}\log(1/\gamma)\right\rceil,\quad
\nu>3/\min(\lambda_*,\alpha_0),\\
Y_\gamma(t)=\frac{\Delta_{n_0(\gamma)+\lfloor t/\gamma\rfloor}}{\sqrt\gamma}.
\end{gathered}
\]
\begin{theorem}[Trajectory KMT-type approximation]\label{thm:trajectory-main}
Assume assumptions \ref{ass:convex}, \ref{ass:moments}
at order $r=2$, and \ref{ass:smooth} hold. Also assume the
step-size restrictions of \eqref{eq:steps-main} at order $r=2$
and $M_p<\infty$ for some $p>2$. Then we can construct a stationary OU process on the same space,
\[
dX_t=-AX_t\,dt+S^{1/2}\,dB_t,    X_0\sim N(0,\Gamma),
\]
with $X_0$ independent of $B$ and
$A\Gamma+\Gamma A=S$,
such that
\begin{equation}\label{eq:trajectory-main}
\begin{aligned}
&\sup_{0\le t\le T}\|Y_\gamma(t)-X_t\|_2\\
&\quad=O_{\mathbb P}\left(\gamma^{1/2-1/p}\log^{1/p}(1/\gamma)
+\sqrt\gamma\log(1/\gamma)\right).
\end{aligned}
\end{equation}
 A finite sample probability bound and the proof of the result are provided in Theorem \ref{thm:trajectory-kmt-new}.
\end{theorem}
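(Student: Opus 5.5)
We use the linearization \eqref{eq:decomp-main} at iterate index $N_t:=n_0(\gamma)+\lfloor t/\gamma\rfloor$ and divide by $\sqrt\gamma$:
\[
Y_\gamma(t)=\gamma^{-1/2}\bigl(Q^{N_t}\Delta_0-L_{N_t}+I_{1,N_t}+I_{2,N_t}\bigr).
\]
Over the horizon there are only $K:=\lfloor T/\gamma\rfloor+1$ relevant indices, and we bound each piece uniformly over $k\in[n_0(\gamma),n_0(\gamma)+K]$.

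\textbf{Initialization term.} We have $\|Q^{N_t}\Delta_0\|\le(1-\lambda_*\gamma)^{n_0}\rho\le\gamma^{\nu\lambda_*/2}\rho$. Since $\nu>3/\min(\lambda_*,\alpha_0)$, this is $o(\gamma^{3/2})$ after dividing by $\sqrt\gamma$, so it is negligible.

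\textbf{Nonlinear term $I_1$.} We use $\|R_j\|\le C_L\|\Delta_{j-1}\|^2$ together with \eqref{eq:mom-main} at $r=2$. For $j$ beyond roughly $n_0/2$, the transient has decayed and $\mathbb E\|R_j\|\le C\gamma$. Early $j$ enter with weight $\|Q^{k-j}\|\le(1-\lambda_*\gamma)^{k-j}$, which is polynomially small in $\gamma$ by the choice of $n_0$. Then $\mathbb E\sup_k\|I_{1,k}\|\le \gamma\sum_j(1-\lambda_*\gamma)^{k-j}\sup_j\mathbb E\|R_j\|+\text{small}\le C\gamma$, which is $O(\sqrt\gamma)$ after rescaling. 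This step uses only second moments.

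\textbf{Stochastic error $I_2$.} $D_j$ is a martingale difference with $\|D_j\|_2\le 2L_2\|\Delta_{j-1}\|_2$, hence of size $O(\sqrt\gamma)$ after burn in. Write $I_{2,k}=\gamma Q^k\sum_{j\le k}Q^{-j}D_j$; equivalently, $I_2$ satisfies the recursion $I_{2,k}=QI_{2,k-1}+\gamma D_k$. Summation by parts over blocks of length $\asymp 1/\gamma$ combined with Doob's $L^2$ inequality gives a maximal bound over the $K$ indices of order $\gamma\sqrt{\gamma^{-1}\cdot\gamma}\sqrt{\log(1/\gamma)}$, i.e. $O_{\mathbb P}(\gamma\log(1/\gamma))$. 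Dividing by $\sqrt\gamma$ yields the $\sqrt\gamma\log(1/\gamma)$ term.

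\textbf{Linear term $L_k$.} This is the main step. Apply the Götze--Zaitsev coupling to the i.i.d. $Z_j$ for $j\le n_0+K$, producing $Z_j^\diamond\sim N(0,S)$ with $\max_{m\le N}\|\sum_{j\le m}(Z_j-Z_j^\diamond)\|=O_{\mathbb P}(N^{1/p})$, where $N=n_0+K\asymp\gamma^{-1}\log(1/\gamma)$. Abel summation transfers this bound to $L_k-L_k^\diamond$ via
\[
L_k=\gamma\sum_{m\le k}Q^{k-m}(I-Q)\,\mathcal S_m+\gamma\mathcal S_k,
\]
where $\mathcal S_m$ denotes the partial sum of the $Z_j$ (and analogously $\mathcal S_m^\diamond$ for the $Z_j^\diamond$). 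Since $\sum_m\|Q^{k-m}\gamma A\|\le C$, we get $\sup_k\|L_k-L_k^\diamond\|\le C\gamma\max_m\|\mathcal S_m-\mathcal S_m^\diamond\|=O_{\mathbb P}(\gamma N^{1/p})$. Dividing by $\sqrt\gamma$ gives exactly $\gamma^{1/2-1/p}\log^{1/p}(1/\gamma)$.

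\textbf{Identifying the OU process.} Next, embed the Gaussians in a Brownian motion, $\gamma^{1/2}\sum_{j\le m}Z_j^\diamond=S^{1/2}W(m\gamma)$, and define $X_t$ by the variation-of-constants formula
\[
X_t=e^{-A(t+n_0\gamma)}\xi-\int_{0}^{t+n_0\gamma}e^{-A(t+n_0\gamma-s)}S^{1/2}dW(s)
\]
with a fresh start. The stationary version needs $X_0\sim N(0,\Gamma)$ independent of the future increments. Take $X_0:=-\int_0^{n_0\gamma}e^{-A(n_0\gamma-s)}S^{1/2}dW(s)$ plus an independent correction $e^{-An_0\gamma}\xi$, with $\xi\sim N(0,\Gamma)$, which makes its covariance exactly $\Gamma$; the correction has size $\gamma^{\nu\lambda_*/2}$ and is negligible. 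Set $B_t=W(n_0\gamma+t)-W(n_0\gamma)$, which is independent of $X_0$. It remains to compare $\gamma^{-1/2}L^\diamond_{N_t}$ with this stochastic integral, i.e. the discretization error between $\gamma\sum Q^{k-j}$ and $\int e^{-A(\cdot)}$. Two sources contribute:
\begin{itemize}
\item the kernel difference $\|Q^{m}-e^{-Am\gamma}\|\le Cm\gamma^2e^{-c m\gamma}$, whose contribution is Gaussian with variance $O(\gamma)$;
\item within-step oscillation of $W$ on intervals of length $\gamma$, plus the floor in $t/\gamma$; by Lévy's modulus of continuity this is $O(\sqrt{\gamma\log(1/\gamma)})$.
\end{itemize}
Taking a Gaussian maximum over the $K$ grid points costs an extra $\sqrt{\log(1/\gamma)}$. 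Both errors are therefore within $O_{\mathbb P}(\sqrt\gamma\log(1/\gamma))$.

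\textbf{Main obstacle.} I expect the hardest part to be the $I_2$ maximal bound. It requires controlling $\sup_k$ of a geometrically weighted martingale using only second moments of $D_j$ (Assumption~\ref{ass:moments} at $r=2$) while still getting only a logarithmic loss. If the block-and-Doob argument is too lossy, my fallback is to discretize into $O(1/\gamma)$ blocks, apply Doob within each block, and then use the exponential contraction of $Q$ between blocks, accepting the $\log(1/\gamma)$ factor that already appears in \eqref{eq:trajectory-main}.
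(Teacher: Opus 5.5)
Your architecture matches the paper's proof of Theorem~\ref{thm:trajectory-kmt-new}. It uses the same linearization, the Götze--Zaitsev coupling of the partial sums of the $Z_j$ transferred to $L_k-L_k^\diamond$ by summation by parts, which gives $\gamma^{1/2-1/p}\log^{1/p}(1/\gamma)$ with $N\asymp\gamma^{-1}\log(1/\gamma)$. It also uses a Brownian embedding of the $Z_j^\diamond$, a stationary OU process built from an independent $\xi\sim N(0,\Gamma)$ at time $0$ of that Brownian motion, and a grid-plus-oscillation comparison. One step, however, does not hold as written. In the $I_1$ bound you write $\mathbb E\sup_k\|I_{1,k}\|_2\le\gamma\sum_j(1-\lambda_*\gamma)^{k-j}\sup_j\mathbb E\|R_j\|_2+\text{small}$. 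This silently interchanges $\mathbb E$ and $\sup_k$; the right side still depends on $k$. With only first moments of $\|R_j\|_2$ available, the valid bound is
\[
\sup_{n_0\le k\le N}\Bigl\|\gamma\sum_{j=m+1}^{k}Q^{k-j}R_j\Bigr\|_2\le\gamma\sum_{j=m+1}^{N}\|R_j\|_2 .
\]
With your split $m\approx n_0/2$ the window has length $N-m\asymp\gamma^{-1}\log(1/\gamma)$. So you only get $\mathbb E\sup_k\|I_{1,k}\|_2=O(\gamma\log(1/\gamma))$, i.e.\ $O(\sqrt\gamma\log(1/\gamma))$ after dividing by $\sqrt\gamma$. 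That is exactly the paper's $E_1$, and it is where the $\sqrt\gamma\log(1/\gamma)$ term in the theorem comes from. Your claimed $O(\gamma)$ can be rescued by splitting instead at $m=n_0-\lceil 1/\gamma\rceil$. Then the pre-split piece is a single index with $\mathbb E\|I_{1,m}\|_2=O(\gamma)$, and the post-split window has length $O((1+T)/\gamma)$.

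Conversely, $I_2$ is not the obstacle you expect, and it does not produce the $\log(1/\gamma)$ term. After the same split, the pre-split part is contracted by $(1-\lambda_*\gamma)^{n_0/2}$. For the rest, summation by parts gives $\sup_k\|\gamma\sum_{j=m+1}^{k}Q^{k-j}D_j\|_2\le\gamma(1+\lambda^*/\lambda_*)\max_{j\le N}\|\sum_{i=m+1}^{j}D_i\|_2$. A single Doob $L^2$ inequality over all $N$ indices then gives $O_{\mathbb P}(\gamma\sqrt{\log(1/\gamma)})$, i.e.\ $O_{\mathbb P}(\sqrt{\gamma\log(1/\gamma)})$ after rescaling; no blocking is needed. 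Your own display already gives $\gamma\sqrt{\log(1/\gamma)}$, so the ``i.e.\ $\gamma\log(1/\gamma)$'' is only a weakening. None of this changes the final statement, since $\sqrt\gamma\log(1/\gamma)=o(\gamma^{1/2-1/p}\log^{1/p}(1/\gamma))$ for every $p>2$; once the $I_1$ inequality is repaired the argument goes through. Two minor slips, neither of which affects the bound:
\begin{itemize}
\item The Abel identity should read $L_k=\gamma\mathcal S_k-\gamma\sum_{m<k}Q^{k-1-m}(I-Q)\mathcal S_m$.
\item The grid-point comparison of $-L_k^\diamond/\sqrt\gamma$ with the stochastic integral has variance $O(\gamma^2)$ by It\^o's isometry, not merely $O(\gamma)$, and this already includes the within-step variation of the kernel. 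The only $\sqrt{\gamma\log(1/\gamma)}$-size error is the OU oscillation between grid points.
\end{itemize}
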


\begin{theorem}[Trajectory FCLT]
\label{thm:trajectory-fclt-main}
Assume Assumptions~\ref{ass:convex}--\ref{ass:smooth} 
and that the step size conditions in ~\eqref{eq:steps-main}
hold at order $r=2$. With the same Burn in defined above and $\nu>3/\lambda$, as $\gamma\to0$,
\[
Y_\gamma(\cdot)\Rightarrow X(\cdot)
\quad\text{in }D([0,T],\mathbb R^d),
\]
where $X$ is the same OU process as in \ref{thm:trajectory-main}
\end{theorem}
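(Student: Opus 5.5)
The plan is to derive Theorem~\ref{thm:trajectory-fclt-main} from the strong approximation of Theorem~\ref{thm:trajectory-main}, with a truncation argument to remove the extra moment assumption $M_p<\infty$, $p>2$. The FCLT assumes only order-2 moments, so Theorem~\ref{thm:trajectory-main} does not apply directly. If some $M_p<\infty$ with $p>2$ is available, the conclusion is immediate. The bound \eqref{eq:trajectory-main} tends to zero as $\gamma\to0$. Since $X$ has continuous paths, uniform closeness in probability on $[0,T]$ implies closeness in the Skorokhod $J_1$ metric. By the converging-together lemma \citep{billingsley1999convergence}, and because the law of $X$ does not depend on $\gamma$, we get $Y_\gamma\Rightarrow X$. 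The stronger requirement $\nu>3/\lambda$ with $\lambda\le\min(\lambda_*,\alpha_0)$ implies the burn-in condition of Theorem~\ref{thm:trajectory-main}.

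For the general square-integrable case, I would split the noise into a bounded part and a tail. Fix a truncation level $K$ and write $Z_j=Z_j^{\le K}+Z_j^{>K}$ after centering both pieces. In the linearization \eqref{eq:decomp-main}, the linear term $L_k$ then splits into $L_k^{\le K}+L_k^{>K}$. Three facts are needed.

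\begin{itemize}
\item The initialization term $Q^{n_0+k}\Delta_0/\sqrt\gamma$ is $O(\gamma^{\nu\lambda/2-1/2})\to0$ uniformly, by the choice of $n_0(\gamma)$.
\item The remainders $I_{1,k}/\sqrt\gamma$ and $I_{2,k}/\sqrt\gamma$ vanish uniformly over the $T/\gamma$ indices in probability, using only second moments. For $I_{2}$, the martingale $\gamma\sum Q^{k-j}D_j$ has $\|D_j\|_2\le L_2\|\Delta_{j-1}\|_2=O(\sqrt\gamma)$ by \eqref{eq:mom-main}. Writing $Q^{k-j}=Q^{k}Q^{-j}$ on the bounded horizon (where $\|Q^{-j}\|$ is bounded because $j\gamma\le n_0\gamma+T$; alternatively one can chop into blocks of length $1/\gamma$), Doob's $L^2$ inequality gives an $O(\gamma)$ maximal bound. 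Dividing by $\sqrt\gamma$ leaves $O(\sqrt\gamma)$, up to the $\log$ factor coming from the burn-in length.
\item For $I_1$, the Hessian bound gives $\|R_j\|\le C_L\|\Delta_{j-1}\|^2$, which is $O(\gamma)$ in $L^1$. A crude sum then gives $\gamma\sum_j\|Q^{k-j}\|\,O(\gamma)=O(\gamma)$ uniformly in $L^1$ of the maximum. This is still $o(\sqrt\gamma)$.
\end{itemize}

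This reduces the problem to the linear process $-L_{n_0+\lfloor t/\gamma\rfloor}/\sqrt\gamma$. For the bounded part, every moment is finite, so the Gaussian coupling argument behind Theorem~\ref{thm:trajectory-main} applies with covariance $S_K$. That argument uses G\"otze--Zaitsev \citep{GotzeZaitsev2010}, followed by identifying the discrete Gaussian AR process with a time-discretized stationary OU process. It gives convergence to an OU process $X^{K}$ with diffusion $S_K^{1/2}$. For the tail part, the variance of $L^{>K}/\sqrt\gamma$ is bounded by $C\,\mathbb E\|Z^{>K}\|^2$. The maximal version over $T/\gamma$ points is controlled using the same $Q^kQ^{-j}$ martingale factorization and Doob's inequality. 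This gives $\limsup_\gamma\mathbb P(\sup_t\|\cdot\|>\epsilon)\le C\epsilon^{-2}\mathbb E\|Z\|^2\mathbf 1\{\|Z\|>K\}\to0$ as $K\to\infty$. Since $S_K\to S$, the stationary OU laws $X^K$ converge to $X$. Billingsley's approximation theorem (Theorem~3.2 there) then yields $Y_\gamma\Rightarrow X$.

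The main obstacle is the maximal control of the stochastic convolutions $\gamma\sum_{j\le k}Q^{k-j}\xi_j$ uniformly over $k$, using only second moments. These are not martingales in $k$. I would first try the factorization $Q^{k-j}=Q^kQ^{-j}$. On the horizon $k\le n_0+T/\gamma$, however, $\|Q^{-j}\|$ can grow like $\gamma^{-\nu\lambda^*/\lambda}$, which is too large. To avoid this, I would restart the recursion at time $n_0-\lceil 1/\gamma\rceil$ and treat each block of $O(1/\gamma)$ steps separately, where $\|Q^{-j}\|\le e^{\lambda^*}$. Doob's inequality is applied within each block, and the contribution carried over from earlier blocks is bounded through its exponential decay.
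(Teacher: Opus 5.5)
Your proposal is correct, but it reaches the linear term by a different route than the paper's proof of Theorem~\ref{thm:trajectory_fclt}. Your treatment of the initialization, $I_1$ and $I_2$ terms is essentially the paper's: burn-in decay, $\|R_j\|_2\le C_0\|\Delta_{j-1}\|_2^2$, Doob for the martingale part, and a split at $\lfloor n_0/2\rfloor$ to kill the transient.

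For the linear term, however, the paper never uses the strong approximation. It writes $W_k=-L_k/\sqrt\gamma$ as the solution of the discretized integral equation $W_{k(t)}=W_{n_0}-A\int_0^tW_{k(s)}\,ds+w_\gamma(t)+r_\gamma(t)$, where $w_\gamma(t)=-\sqrt\gamma\sum_{j=n_0+1}^{k(t)}Z_j$. It then shows by Gronwall that the solution map is Lipschitz in the sup norm. The conclusion follows from Donsker's theorem (second moments suffice), the marginal CLT $W_{n_0}\Rightarrow N(0,\Gamma)$ imported from prior work, independence of $W_{n_0}$ from the later increments, and the continuous mapping theorem. No truncation is needed.

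Your route is heavier but valid: truncate, apply the G\"otze--Zaitsev/OU coupling to the bounded part, bound the tail, and close with Billingsley's approximation theorem and continuity of the stationary OU law in $S_K$. It has one mild advantage: the stationary initial law comes out of the coupling, rather than from a separately cited CLT for the burn-in state.

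The obstacle you spend your last paragraph on disappears with the Abel summation the paper uses for $I_2$. With $M_k=\sum_{j=s+1}^k\xi_j$, one has $\gamma\sum_{j=s+1}^kQ^{k-j}\xi_j=\gamma M_k-\gamma^2A\sum_{j=s+1}^{k-1}Q^{k-1-j}M_j$. Hence $\max_k\|\gamma\sum_{j=s+1}^kQ^{k-j}\xi_j\|_2\le\gamma(1+\lambda^*/\lambda_*)\max_k\|M_k\|_2$, so Doob applies to a plain partial sum and no $Q^{-j}$ ever appears. Take $s=n_0$ and $\xi_j=Z_j^{>K}$, and add the carried-over term $Q^{k-n_0}L_{n_0}^{>K}$, whose normalized second moment is $O(\mathbb E\|Z^{>K}\|_2^2)$. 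This gives your tail bound $C\epsilon^{-2}\mathbb E\|Z\|_2^2\mathbf 1\{\|Z\|_2>K\}$ over the $T/\gamma$ post-burn-in steps directly, without blocks.

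Your earlier remark that $\|Q^{-j}\|$ is bounded because $j\gamma\le n_0\gamma+T$ is wrong, since $n_0\gamma\approx\nu\log(1/\gamma)\to\infty$. You correct this yourself at the end.
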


Unlike its quantitative KMT approximation counterpart, this result
requires only second moments of the noise. The proof is given in
Theorem~\ref{thm:trajectory_fclt}.

\section{Some example applications}\label{sec:applications-main}
Suppose we  wish to investigate the probability that the SGD trajectory crosses some boundary. This is useful to assess risk probabilities such as bankruptcy in finance. Our previous results allow us to transfer this question to well studied properties of Gaussian processes. In particular, the idea is that for stochastic processes $U$ and $V$, if we have a maximal bound like $\mathbb P(\|U-V\|_\infty>r)\le\delta$ then for any unit vector $a$ and deterministic path $h$, letting the pathwise error on this projection be $M(U)=\sup_{t\le T}\{a^\top U_t-h(t)\}$, then
\begin{equation}\label{eq:cross-main}
\begin{aligned}
\mathbb P\{M(V)>r\}-\delta&\le\mathbb P\{M(U)>0\}\\
&\le\mathbb P\{M(V)>-r\}+\delta.
\end{aligned}
\end{equation}
In our case $V$ is a nice Gaussian process which allows the upper and lower bounds to be computed analytically or by numerical simulation (i.e. well known Monte Carlo methods). The maximum bound also enables us to obtain simultaneous confidence intervals valid even if we choose an iterate after observing the run.

The trajectory result also allows us to control regret post Burn in. For $T=J\gamma$, we define regret in this paper as
\[
\mathcal L_\gamma(T)=\sum_{k=n_0(\gamma)}^{n_0(\gamma)+J-1}
\{G(\theta_k)-G(\theta^*)\}.
\]
\begin{corollary}\label{cor:loss-main}
For fixed $T$, letting $X_t$ be the same OU process as before then $\mathcal L_\gamma(\lfloor T/\gamma\rfloor\gamma)
\Rightarrow\frac12\int_0^T X_t^\top AX_t\,dt$.
Quantitatively on $\{\|Y_\gamma-X\|_\infty\le r,\ \|X\|_\infty\le M\}$ if \(\sqrt{\gamma}(M+r)\le\kappa,\)
\[
\begin{aligned}
&\left|\mathcal L_\gamma(T)-\frac12\int_0^T X_t^\top AX_t\,dt\right|\\
&\quad\le T\left\{\frac{\lambda^*}{2}r(2M+r)
+\frac L6\sqrt\gamma(M+r)^3\right\}.
\end{aligned}
\]
\end{corollary}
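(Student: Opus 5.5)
The plan is to write each summand as a second-order Taylor expansion of $G$ around $\theta^*$, in the scaled variable $Y_\gamma$. Since $\nabla G(\theta^*)=0$ and $\nabla^2G(\theta^*)=A$, for $\|\Delta\|_2\le\kappa$ Assumption~\ref{ass:smooth} gives
\[
\Bigl|G(\theta^*+\Delta)-G(\theta^*)-\tfrac12\Delta^\top A\Delta\Bigr|\le\tfrac L6\|\Delta\|_2^3 .
\]
For $k=n_0(\gamma)+i$ with $0\le i\le J-1$ we have $\Delta_k=\sqrt\gamma\,Y_\gamma(i\gamma)$. Hence $G(\theta_k)-G(\theta^*)=\tfrac{\gamma}{2}Y_\gamma(i\gamma)^\top AY_\gamma(i\gamma)+E_i$, where $|E_i|\le\frac L6\gamma^{3/2}\|Y_\gamma(i\gamma)\|_2^3$.

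For the quantitative bound, work on the event $\{\|Y_\gamma-X\|_\infty\le r,\ \|X\|_\infty\le M\}$. On this event $\|Y_\gamma\|_\infty\le M+r$, so $\|\Delta_k\|_2\le\sqrt\gamma(M+r)\le\kappa$ and the Taylor bound applies. Summing over the $J=T/\gamma$ indices, the cubic remainders total at most $J\gamma\cdot\frac L6\sqrt\gamma(M+r)^3=T\frac L6\sqrt\gamma(M+r)^3$.

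The quadratic part is $\sum_i\gamma\,\tfrac12Y_\gamma(i\gamma)^\top AY_\gamma(i\gamma)=\int_0^T\tfrac12Y_\gamma(t)^\top AY_\gamma(t)\,dt$, because $Y_\gamma$ is piecewise constant on $[i\gamma,(i+1)\gamma)$. Compare it pointwise with $\tfrac12X_t^\top AX_t$ using $u^\top Au-v^\top Av=(u-v)^\top A(u+v)$. This gives $|u^\top Au-v^\top Av|\le\lambda^*\|u-v\|_2(\|u\|_2+\|v\|_2)\le\lambda^* r(2M+r)$. Integrating over $[0,T]$ yields the term $T\frac{\lambda^*}{2}r(2M+r)$, which completes the deterministic inequality.

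For the weak convergence, use Theorem~\ref{thm:trajectory-fclt-main}, which gives $Y_\gamma\Rightarrow X$ in $D([0,T],\mathbb R^d)$. The map $y\mapsto\frac12\int_0^T y(t)^\top Ay(t)\,dt$ is continuous at continuous limit paths in the $J_1$ topology, because $J_1$ convergence to a continuous path is uniform. The continuous mapping theorem therefore gives $\frac12\int_0^{T_\gamma}Y_\gamma^\top AY_\gamma\,dt\Rightarrow\frac12\int_0^TX^\top AX\,dt$ with $T_\gamma=\lfloor T/\gamma\rfloor\gamma\to T$. The boundary discrepancy is at most $\gamma\sup\|Y_\gamma\|^2\to0$ in probability.

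The remaining step, and the main obstacle, is showing that the cubic remainder, together with the event on which the Taylor bound fails, is negligible. Tightness of $\|Y_\gamma\|_\infty$ follows from the FCLT, since the supremum is a continuous functional. So for any $\epsilon>0$ we can pick $M$ with $\mathbb P(\|Y_\gamma\|_\infty>M)<\epsilon$ for all small $\gamma$. On the complement, $\sqrt\gamma M\le\kappa$ eventually, and the remainder is at most $T\frac L6\sqrt\gamma M^3\to0$. Slutsky's lemma then finishes the argument.
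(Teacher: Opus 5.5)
Your proof is correct. The quantitative inequality is the same as the paper's: the Taylor bound $\frac L6\|x\|_2^3$, rewriting the regret as $\gamma^{-1}\int_0^T\{G(\theta^*+\sqrt\gamma Y_\gamma(t))-G(\theta^*)\}\,dt$ using piecewise constancy, and the bound $|u^\top Au-v^\top Av|\le\lambda^*r(2M+r)$ on the good event.

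Where you differ is the weak-convergence statement.

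\textbf{Your route.} You argue distributionally. You apply the continuous mapping theorem to $y\mapsto\frac12\int_0^T y^\top Ay\,dt$ under $Y_\gamma\Rightarrow X$. You then dispose of the cubic remainder and the boundary strip $[\lfloor T/\gamma\rfloor\gamma,T]$ through tightness of $\|Y_\gamma\|_\infty$. Note that the strip carries a harmless factor $\lambda^*/2$ that you dropped.

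\textbf{The paper's route.} The paper stays on the coupled probability space. It fixes $\varepsilon$ and $M$, and chooses $r$ so that $T\lambda^*r(2M+r)/2<\varepsilon/2$. For small $\gamma$ it absorbs the cubic and boundary terms into the other $\varepsilon/2$. The quantitative inequality then gives
\[
\mathbb P\{|\mathcal L_\gamma(J\gamma)-\mathcal Q(T)|>\varepsilon\}\le\mathbb P\{\|Y_\gamma-X\|_\infty>r\}+\mathbb P\{\|X\|_\infty>M\}.
\]
It concludes using $\|Y_\gamma-X\|_\infty\to0$ in probability from the trajectory KMT theorem and then letting $M\to\infty$.

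\textbf{What each buys.} The paper's argument yields the stronger statement $\mathcal L_\gamma(J\gamma)-\mathcal Q(T)\to0$ in probability for the coupled OU process. It also reuses the finite-sample bound directly. However, it inherits the KMT hypotheses, namely $M_p<\infty$ for some $p>2$. Your argument needs only $Y_\gamma\Rightarrow X$, so it goes through under second moments.

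\textbf{One hypothesis caveat.} The FCLT as stated assumes $\nu>3/\lambda$. Since $\lambda\le\min(\lambda_*,\alpha_0)$, this can be stronger than the KMT burn-in condition $\nu>3/\min(\lambda_*,\alpha_0)$ under which the paper states the corollary. If you want exactly that setting, take $Y_\gamma\Rightarrow X$ from the coupling rather than from the FCLT.
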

The proof is provided in the appendix, theorem \ref{thm:loss}.

\section{Estimation}
To actually conduct inference and compute the aforementioned probabilities, we can learn the parameters (i.e. covariance) of the Gaussian process from the run or past data. To this end, with any deterministic Burn in $b_n\ge0$, let $H_k=\nabla g(\theta_{b_n+k-1},X_{b_n+k})$ and define our estimators
\begin{equation}\label{eq:cov-main-new}
\begin{aligned}
\widehat A_m&=\frac1m\sum_{k=1}^m\nabla^2g(\theta_{b_n+k-1},X_{b_n+k}),\\
\widehat S_m&=\frac1m\sum_{k=1}^m H_kH_k^\top,   
\widehat\Sigma_m=\widehat A_m^{-1}\widehat S_m\widehat A_m^{-1}.
\end{aligned}
\end{equation}

\begin{theorem}\label{thm:cov-main-new}
Under our second moment conditions and Assumption~\ref{ass:cov-new}, if $\gamma\to0$ and $n\gamma\to\infty$, then, for each fixed $\tau\in(0,1)$, $\widehat A_m,\widehat S_m,\widehat\Sigma_m$ converge in probability to $A,S,\Sigma$, uniformly for $\lceil\tau n\rceil\le m\le n$. The solution of $\widehat A_m\widehat\Gamma_m+\widehat\Gamma_m\widehat A_m=\widehat S_m$ similarly estimates $\Gamma$ uniformly.
\end{theorem}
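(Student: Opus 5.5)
The plan is to write each estimator as an oracle average at $\theta^*$ plus a perturbation controlled by $\|\Delta_{b_n+k-1}\|$. The oracle averages are handled by a maximal law of large numbers, and the perturbations by the moment bound \eqref{eq:mom-main} together with the $\gamma\to0$, $n\gamma\to\infty$ regime. For $\widehat A_m$, write
\[
\widehat A_m-A=\frac1m\sum_{k=1}^m\{J(\theta^*,X_{b_n+k})-A\}+\frac1m\sum_{k=1}^m\{J(\theta_{b_n+k-1},X_{b_n+k})-J(\theta^*,X_{b_n+k})\}.
\]
The first term is an i.i.d. mean-zero average with $\mathbb E\|J(\theta^*,X)\|_2<\infty$. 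By the strong law (Etemadi, since only a first moment is available), $\frac1N\sum_{k\le N}\{J(\theta^*,X_{b_n+k})-A\}\to0$ almost surely. Because the shifted i.i.d. sequence has the same law for every $b_n$, $\sup_{m\ge\lceil\tau n\rceil}$ of this average tends to zero in probability uniformly in the shift. Equivalently, one can apply the weak maximal inequality after truncating $J$ at level $K$ and letting $K\to\infty$.

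For the second term, $\theta_{b_n+k-1}$ is measurable with respect to $X_1,\dots,X_{b_n+k-1}$ and independent of $X_{b_n+k}$. Conditioning therefore gives the expectation bound $L_J\mathbb E\|\Delta_{b_n+k-1}\|_2$. Bounding the sup over $m$ by $\frac1{\lceil\tau n\rceil}\sum_{k\le n}\|\cdot\|$ and applying \eqref{eq:mom-main} at $r=2$ gives
\[
\mathbb E\sup_{\lceil\tau n\rceil\le m\le n}\|\cdot\|\le\frac{L_J}{\tau n}\sum_{k=1}^n\bigl\{\rho_2(1-\alpha_0\gamma)^{(b_n+k-1)/2}+C\sqrt\gamma\bigr\}\le C\Bigl(\frac{1}{\tau n\gamma}+\sqrt\gamma\Bigr)\to0,
\]
and Markov's inequality finishes this term.

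For $\widehat S_m$, write $H_k=Z_{b_n+k}-E_k$ with $E_k=\nabla g(\theta^*,X_{b_n+k})-\nabla g(\theta_{b_n+k-1},X_{b_n+k})$. Then
\[
\widehat S_m=\frac1m\sum_k Z Z^\top-\frac1m\sum_k(ZE_k^\top+E_kZ^\top)+\frac1m\sum_kE_kE_k^\top .
\]
The first average converges to $S$ uniformly in $m\ge\tau n$, by the same strong-law argument with $\mathbb E\|Z\|^2=M_2^2<\infty$. For the quadratic term, independence of $X_{b_n+k}$ and the Lipschitz moment bound of Assumption~\ref{ass:moments} at order 2 give $\mathbb E\|E_k\|^2\le L_2^2\mathbb E\|\Delta_{b_n+k-1}\|^2$. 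Averaging this with \eqref{eq:mom-main} yields $O(1/(n\gamma)+\gamma)\to0$ in $L^1$. The cross term is handled by Cauchy--Schwarz: it is bounded by $(\frac1m\sum\|Z\|^2)^{1/2}(\frac1m\sum\|E_k\|^2)^{1/2}$. Here the first factor is $O_{\mathbb P}(1)$ uniformly and the second is $o_{\mathbb P}(1)$ uniformly, again after replacing $\frac1m$ by $\frac1{\tau n}$ and summing to $n$.

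Finally, $\widehat\Sigma_m$ and $\widehat\Gamma_m$ follow by continuous mapping, applied uniformly. Since $A\succeq\lambda_*I$ and $\sup_m\|\widehat A_m-A\|\to0$ in probability, with probability tending to one every $\widehat A_m$ has smallest eigenvalue at least $\lambda_*/2$. On that event matrix inversion is Lipschitz, so $\sup_m\|\widehat A_m^{-1}-A^{-1}\|\to0$ and hence $\sup_m\|\widehat\Sigma_m-\Sigma\|\to0$. Note that $\widehat A_m$ need not be symmetric positive definite. The Lyapunov operator $\mathcal T_B:\Gamma\mapsto B\Gamma+\Gamma B^\top$ is nonetheless invertible whenever the spectrum of $B$ avoids $\{\lambda:\operatorname{Re}\lambda\le0\}$ under the pairing. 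Indeed $\mathcal T_A$ has eigenvalues $\lambda_i+\lambda_j\ge2\lambda_*$, so $\|\mathcal T_A^{-1}\|$ is bounded, and by a Neumann series $\mathcal T_{\widehat A_m}$ is invertible with $\|\mathcal T_{\widehat A_m}^{-1}-\mathcal T_A^{-1}\|\le C\|\widehat A_m-A\|$ once $\|\widehat A_m-A\|$ is small. Then
\[
\widehat\Gamma_m-\Gamma=\mathcal T_{\widehat A_m}^{-1}(\widehat S_m)-\mathcal T_A^{-1}(S)\to0
\]
uniformly.

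The main obstacle is the uniform-in-$m$ law of large numbers under only first (respectively second) moments, because the shift $b_n$ may vary with $n$. I would handle it by noting that, for each $n$, the shifted sequence is equal in law to $X_1,X_2,\dots$. The required statement then reduces to $\sup_{N\ge\tau n}\|\frac1N\sum_{k\le N}(\xi_k-\mathbb E\xi)\|\to0$ in probability for an unshifted i.i.d. sequence, which is implied by almost sure convergence.
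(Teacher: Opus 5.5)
Your proposal is correct and follows essentially the same route as the paper's proof. You replace $\widehat A_m$ and $\widehat S_m$ by oracle averages at $\theta^*$, and bound the replacement error in $L^1$ using the independence of $\theta_{b_n+k-1}$ and $X_{b_n+k}$, the Lipschitz conditions, and \eqref{eq:mom-main} with $1/m$ dominated by $1/(\tau n)$, giving $O(1/(n\gamma)+\sqrt\gamma)$. You then treat the oracle averages by the strong law plus invariance of their law under the deterministic shift $b_n$, and conclude by inversion and Lyapunov perturbation on the event $\max_m\|\widehat A_m-A\|_2\le\lambda_*/2$.

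The remaining differences are cosmetic. You apply Cauchy--Schwarz pathwise to the cross term in $\widehat S_m$, where the paper bounds $\mathbb E\|H_kH_k^\top-Z_k^*Z_k^{*\top}\|_2$ directly. You use a Neumann series for the Lyapunov operator, where the paper uses the integral representation $\int_0^\infty e^{-\widehat A_m t}\{\widehat S_m-S-(\widehat A_m-A)\Gamma-\Gamma(\widehat A_m-A)\}e^{-\widehat A_m t}\,dt$. Your caveat about non-symmetry is unnecessary, since $\widehat A_m$ is an average of Hessians, so the transpose in your $\mathcal T_B$ agrees with the paper's equation anyway.
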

See the proof in Appendix~\ref{app:feasible-new}. Consistency requires no fourth moment. Under fourth moments on the noise and $\mathbb E\|J(\theta^*,X)\|_2^2<\infty$ we also provide a rate of convergence. 

\paragraph{Simultaneous confidence intervals.}
Let $\bar\theta_m$ average the first $m$ post-Burn in
iterates or Richardson--Romberg iterates. Under the
corresponding FCLT
(Theorems~\ref{thm:burnin-fclt-main}
and~\ref{thm:rr-fclt-main}), for a fixed $v$ such that \(v^\top\Sigma v>0\),
\[
v^\top\bar\theta_m
\ \pm\
q_{\tau,1-\delta}
\sqrt{\frac{v^\top\widehat\Sigma_m v}{m}},
 ,  \lceil\tau n\rceil\le m\le n,
\]
cover $v^\top\theta^*$ simultaneously with probability
tending to $1-\delta$, under the same assumptions as \ref{thm:cov-main-new}.
Here $q_{\tau,1-\delta}$ is the $(1-\delta)$ quantile of
$\sup_{\tau\le t\le1}|B(t)|/\sqrt t$ and $B(t)$ is Brownian motion.

\paragraph{Inference for a risk probability.}
When we substitute $\widehat A,\widehat S$ into the prior section's probability bounds (for the OU model) we get consistent estimates of the previously described crossing and regret probabilities when $\Gamma\succ0$ (see Proposition~\ref{prop:risk-plugin-new} for more details and a proof). The KMT gives finite time probability bounds on our approximation. For example, with these two results, one can obtain finite sample confidence intervals for crossing and regret tail probabilities as defined in the previous section (see Proposition~\ref{prop:risk-CI-new} for more details). 

\section{A Portfolio Optimization Example}\label{portfolio example}
We use the portfolio allocation framework of
\citet{Boyd2017Trading}, which allows for short positions and borrowing.Stochastic gradient methods have been used for portfolio optimization without borrowing or short selling constraints in the literature(see for example \citep{KopeliovichPokojovy2024}). In this framework we have i.i.d. noise $R_j$ (returns) with mean $\mu_R$, covariance $S$,
and a finite $p$-th moment for some $p>2$, and
\begin{align*}
g(\theta,R_j)&=\tfrac12\theta^\top A\theta-\theta^\top R_j
\end{align*}
We penalize risky positions with our quadratic penalty using risk matrix $A\succ0$. We assume this matrix is known or estimated a priori.
$\theta_j$ represents our dollar holdings in various assets and
negative components are short positions. After a Burn in of  $b$ iterates,
let $W_m$ be total wealth and $C_m$ the cash balance for
\begin{align*}
C_m&=W_m-\mathbf1^\top\theta_{b+m},\\
W_m&=W_{m-1}+\theta_{b+m-1}^\top R_{b+m}\\
&\quad+r_{\rm lend}(C_{m-1})_+
-r_{\rm borrow}(-C_{m-1})_+.
\end{align*}
Here $0\le r_{\rm lend}\le r_{\rm borrow}$ are  interest
rates.

Using our existing KMT theory, Proposition~\ref{prop:wealth-new} constructs a simulatable wealth process
$W_m^\diamond$ from the Gaussian process and proves the below rate. Let our error tolerance be $e_\delta$,
$p^\diamond(z)=\mathbb P\{\min_{0\le m\le n}W_m^\diamond\le z\}$, then
\begin{align*}
p^\diamond(h-e_\delta)-\delta
&\le\mathbb P\{\min_{0\le m\le n}W_m\le h\}\\
&\le p^\diamond(h+e_\delta)+\delta.
\end{align*}
 $h=0$ gives bankruptcy probability bounds for our SGD trading policy.
The reference probabilities can be simulated using estimated $\mu_R,S$. See Appendix~\ref{app:portfolio-wealth} for more details and a proof. To the best of our knowledge, we present the first such bankruptcy probability bounds for constant rate SGD in this setting.

\section{Experiments}

\begin{figure*}[t]
    \centering
    \includegraphics[width=\textwidth]{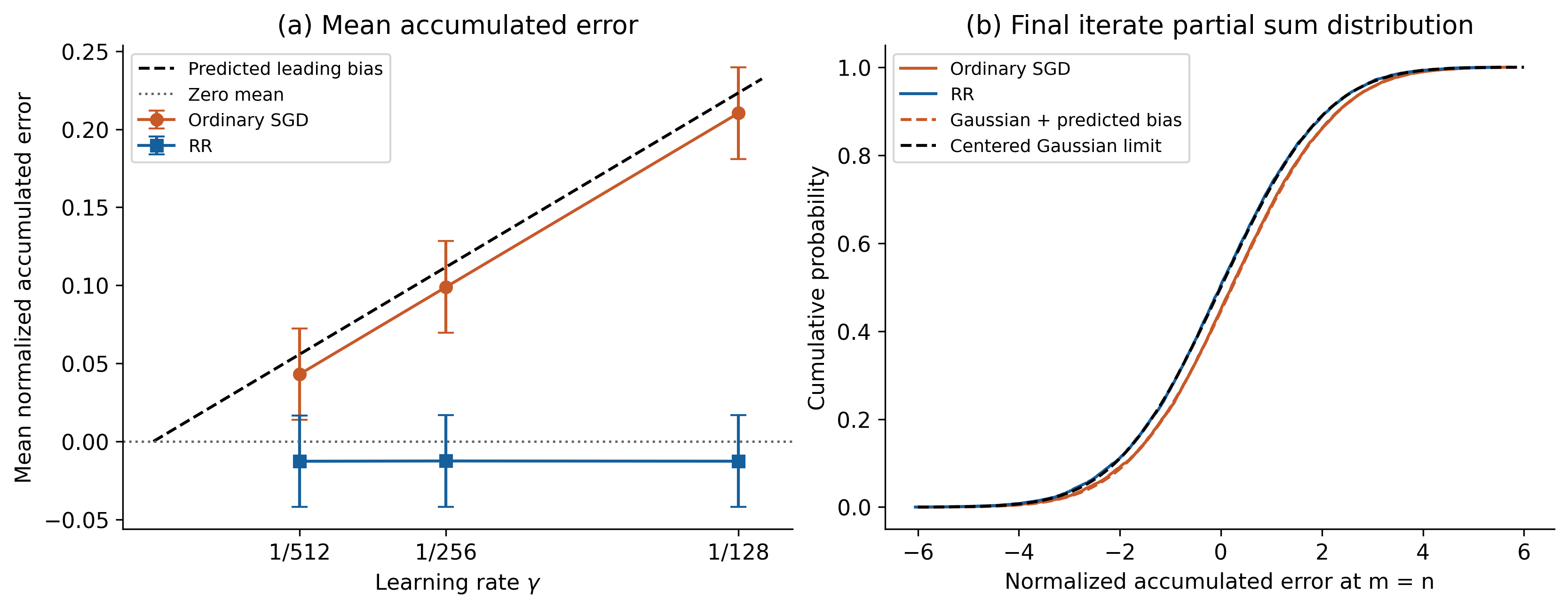}
    \caption{Left: mean normalized accumulated errors against learning rate, bands are 95\% Monte Carlo intervals.
    Right: final iterate partial sum distribution at $m=n$ and $\gamma=1/128$,
    compared with the Gaussian limits. Ordinary SGD and RR start at the optimizer.}
    \label{fig:logistic-bias-updated}
\end{figure*}

\begin{figure}[t]
\centering
\includegraphics[width=\linewidth]{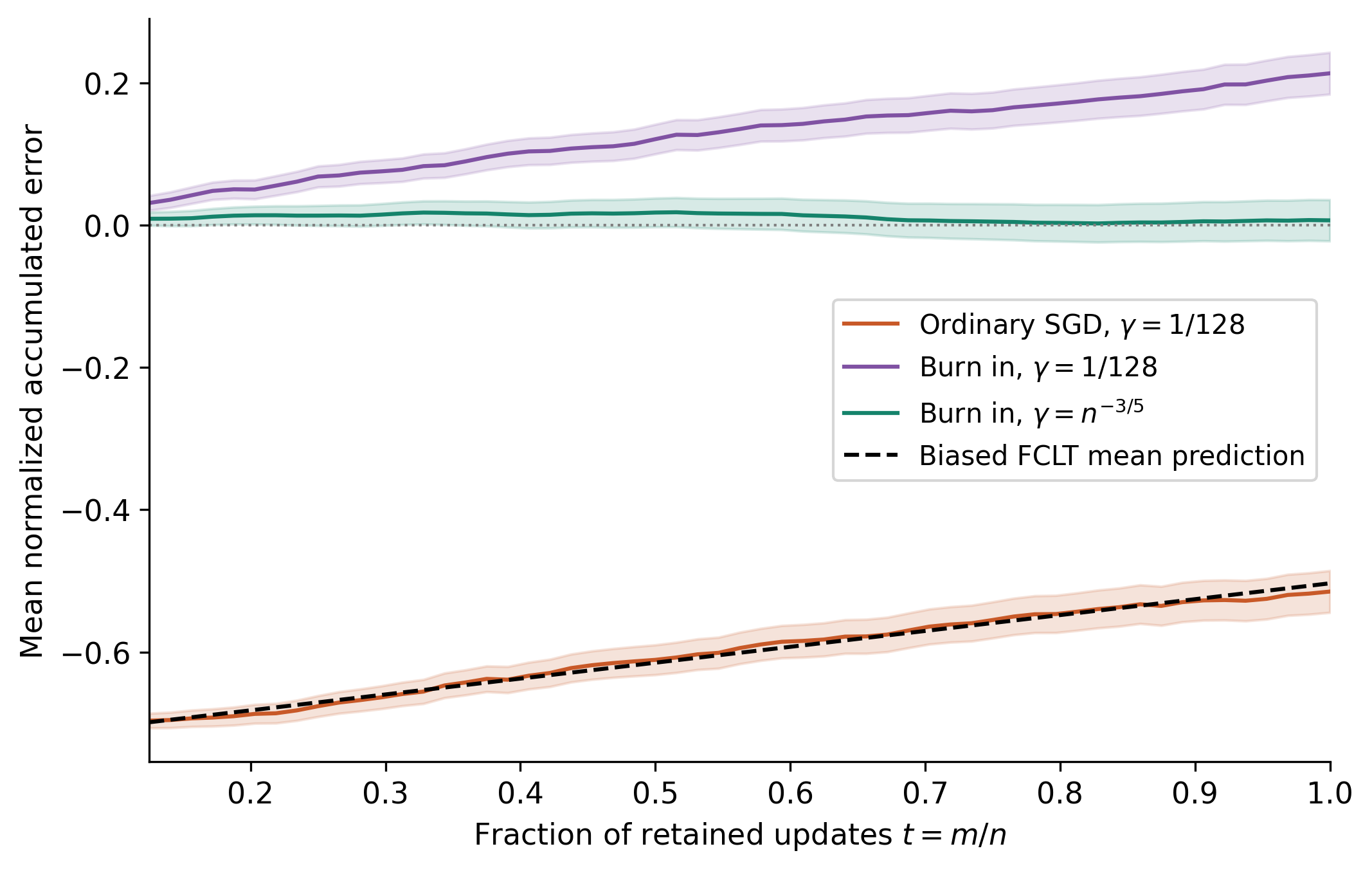}
\caption{Mean normalized partial sums against $t=m/n$ for ordinary SGD
and burn in at two learning rates. The dashed curve is the biased FCLT predicted
mean bias. Shaded regions are pointwise 95\% Monte Carlo intervals.}
\label{fig:logistic-initialization-updated}
\end{figure}

\begin{figure}[t]
\centering
\includegraphics[width=\linewidth]{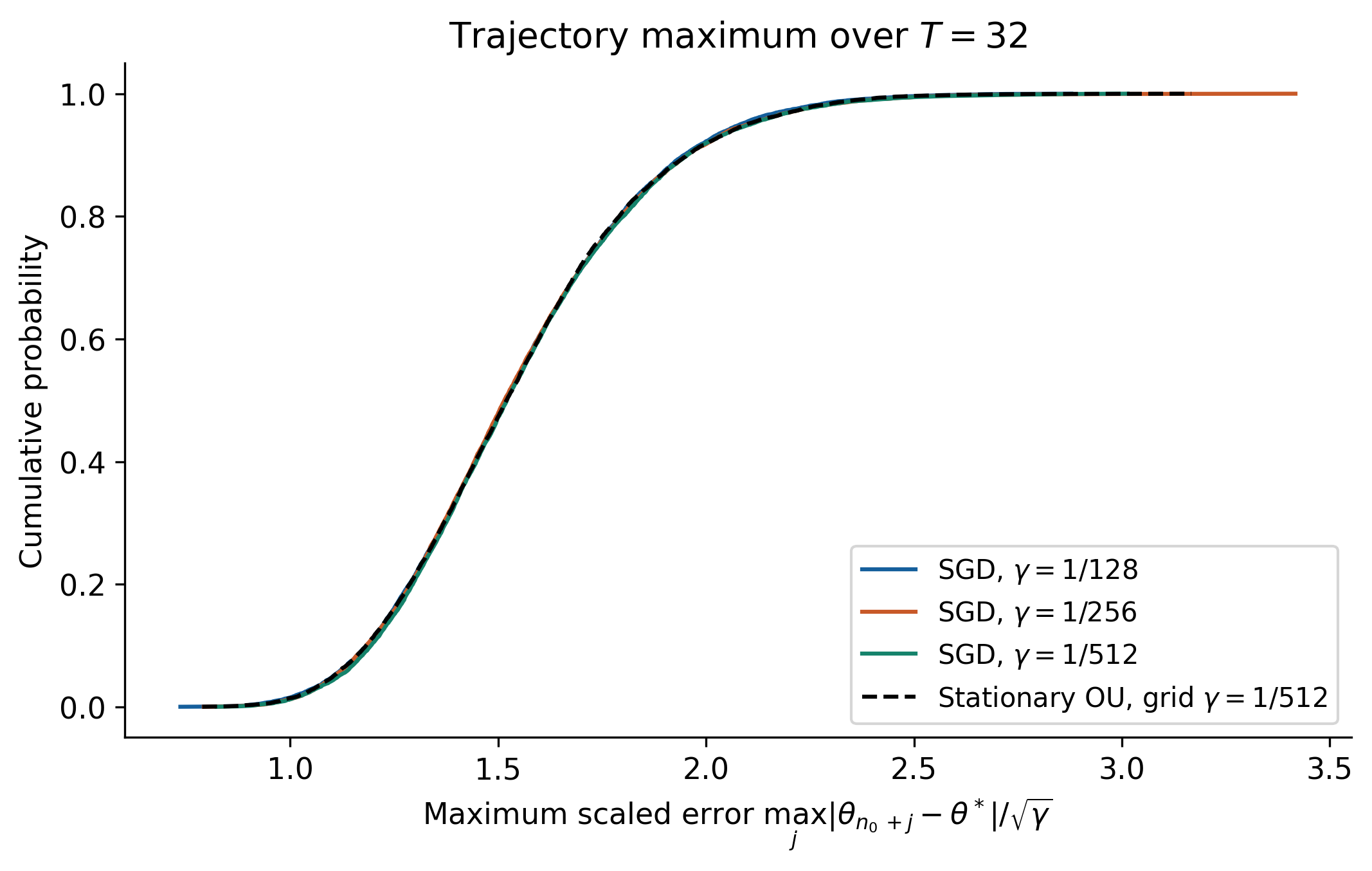}
\caption{Empirical distributions of the maximum scaled SGD error over
$T=32$, compared with the maximum of the stationary OU iterates on the grid, $\gamma=1/512$.}
\label{fig:logistic-trajectory-updated}
\end{figure}

We focus on synthetic simulations because our results give theoretical probability estimates. Therefore, to assess the accuracy of our results we must be able to compare to the ground truth probabilities which are inherently unavailable in real world data.

For our simulation we choose to study regularized logistic regression with heavy tailed
observation weights, to demonstrate our theory in the heavy tailed setting. See
Appendix~\ref{app:logistic-experiment} for more details on the simulation experiment we ran.
In our specific model, the averaged SGD iterates (Polyak-Ruppert) have an asymptotic SGD bias of
$ \gamma A^{-1}b=0.056 \gamma$ in the notation of Theorem \ref{thm:fclts-main}. We first focus on the effect of Richardson--Romberg extrapolation to remove the asymptotic bias term. We initialize the SGD runs at
the optimizer and compare their normalized accumulated errors with the
Gaussian process limits described in Section~\ref{sec:accumulated}.
We use 12,000 Monte Carlo replications with $n=512^2$ retained iterates
per run, after burn in where applicable. 
Figure~\ref{fig:logistic-bias-updated} shows the mean accumulated errors
and the final iterate partial sum distribution, evaluated at $m=n$. Both are normalized, that is divided, by $\sqrt{n}$. Note that the mean normalized accumulated error should be zero without bias since Brownian motion in the limit has mean zero. We see that clearly Richardson Romberg removes the leading order bias and that asymptotic bias also diminishes with smaller learning rates. Our theory predicts the average bias reduction in the mean
of the normalized accumulated error to be $0.22335$ and indeed our Monte Carlo estimate is $0.22298$, 95\% Monte Carlo confidence interval half-width of $0.00040$. The small downward shift is within the Monte Carlo error and since we reuse noise observations for the different runs there is a common negative shift.

We also investigate the effect of initialization at $\gamma=1/128$ as before(constant) and a learning rate scaling in accordance with our FCLT theorem($\gamma=n^{-3/5}$). Figure~\ref{fig:logistic-initialization-updated} plots the Monte Carlo
means of ordinary SGD and the two burn in sequences,
\[
\frac1{\sqrt n}\sum_{k=1}^{\lfloor nt\rfloor}(\theta_k-\theta^*),
 , 
\frac1{\sqrt n}\sum_{k=1}^{\lfloor nt\rfloor}
(\theta_{n_0+k}-\theta^*),
\]
together with the biased FCLT prediction for the ordinary SGD mean. The shaded regions are pointwise 95\% Monte Carlo confidence intervals
for the means. We see that burn in removes the negative  initialization term but without a small enough rate, the positive bias begins to dominate. Only with both the small enough rate and burn in do we obtain a near zero mean.

Finally, for the trajectory FCLT we compare the distribution of the maximum over the horizon \(T=32\), and
\[
n_0=2\left\lceil
\frac{64}{2\gamma}\log(1/\gamma)
\right\rceil.
\]
Here $\mu=\alpha_0=1/16$, so
$\nu=64>3/\min(A,\alpha_0)=48$.
We compare the maximum of
$|\theta_{n_0+j}-\theta^*|/\sqrt\gamma$ over
$j=0,\ldots,T/\gamma$ with the maximum of the corresponding stationary
OU process.
After burn in, we observe $T/\gamma$ SGD updates with
$T=32$, which is 4,096, 8,192, and 16,384 iterates for 
\(\gamma=1/128\),
\(1/256\), and
\(1/512\) respectively.
The respective empirical KS distances between
maximum distributions are $0.01342$, $0.00650$, and
$0.01125$. We see that the OU limiting distribution for the maximum and true SGD maximum distributions are extremely close as predicted. For more details on the experiment see
Appendix~\ref{app:logistic-experiment}.

We also conduct a few additional experiments to demonstrate the quality of our Gaussian approximations for the same weighted logistic regression model and bankruptcy bounds. In Appendix~\ref{app:empirical-coupling}, we display graphs showing the Monte Carlo estimated maximum (over time steps) approximation error between the SGD sum/trajectory and respective Gaussian process for the bias KMT, Richardson--Romberg KMT, burn in KMT and trajectory KMT against the number of iterates. These are all normalized according to the respective KMT. See Appendix~\ref{app:empirical-coupling} for more details. 

The normalized maximum errors show good approximation and decrease with the number of retained iterates. Our KMT theorems upper bounds provide reference rates. Our graphs also show these normalized maximum errors display the dependence on run length from the KMT upper bound. For the OU comparison, the 95th percentile of the maximum error decreases from $0.781$ at $n=512$ to $0.333$ at $n=16{,}384$, while its root mean square decreases from $0.627$ to $0.273$. In the portfolio allocation example we simplify the problem and set  borrowing and lending rates to zero. Then the  Gaussian process derived bankruptcy probabilities differ from the Monte Carlo simulation estimates by at most $0.0034$ within the run. For example, at $n=4{,}096$, the simulated and Gaussian process bankruptcy probabilities are $23.972\%$ and $23.978\%$, respectively. 

\section{Discussion}
This paper establishes strong Gaussian approximations for the accumulated iterates of SGD as well as the trajectory of SGD itself by bounding the maximum difference between the Gaussian process limit and SGD sequence in probability. Bias is shown to be a fundamental problem in the accumulated error setting which we resolve by Richardson--Romberg extrapolation and Burn in. We also provide consistent estimators for the parameters (covariance) of the Gaussian process allowing for inference using an existing run or past data. As opposed to simple final iterate inference, our results allow for trajectory level inference. We have provided an example of the usefulness of these results by obtaining bankruptcy probabilities for SGD in a portfolio allocation problem. Future directions to consider include extending our results to incorporate dependent noise, infinite horizons in the trajectory case, and nonconvex problems.

\clearpage

\bibliographystyle{plainnat}
\bibliography{references}

\clearpage

\onecolumn
\etocdepthtag.toc{appendix}
\etocsettagdepth{main}{none}
\etocsettagdepth{appendix}{subsection}
\tableofcontents
\appendix
\clearpage

\section{FCLT sum variants}
For all subsequent proofs we impose the assumptions  of Section~\ref{sec:assumptions}.
 Let $\gamma \to 0$ and $n \to \infty$ such that $n \gamma \geq \log(n)$, $n\gamma \to \infty$. We recall the linearization decomposition introduced in \cite{NEURIPS2025_7aae9e3e}, which is that the estimation error $\Delta_k = \theta_k - \theta^*$ satisfies the recursion
\begin{align*}
\Delta_k = (I_d - \gamma A)^k \Delta_0 - L_k + I_{1,k} + I_{2,k},
\end{align*}
where $L_k = \gamma \sum_{j=1}^k (I_d - \gamma A)^{k-j} Z_j$ is the linearized sequence, $I_{1,k} = \gamma \sum_{j=1}^k (I_d - \gamma A)^{k-j} R_j$ with $R_j = A\Delta_{j-1} - \nabla G(\theta_{j-1})$, and $I_{2,k} = \gamma \sum_{j=1}^k (I_d - \gamma A)^{k-j} D_j$ with $D_j = \nabla G(\theta_{j-1}) - \nabla g(\theta_{j-1}, X_j) + \nabla g(\theta^*, X_j)$.

\begin{lemma}\label{linear covariance expansion}
Let $\Gamma$ be the solution to  $A\Gamma + \Gamma A = S$. Then

\begin{align*}
\mathbb{E}[L_m L_m^\top] &= \gamma \left( \Gamma - (I_d - \gamma A)^m \Gamma (I_d - \gamma A)^m \right) + \gamma^3 \sum_{i=0}^{m-1} (I_d - \gamma A)^i A \Gamma A (I_d - \gamma A)^i.
\end{align*}
\end{lemma}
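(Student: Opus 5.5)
Since the $Z_j$ are i.i.d. with mean zero and covariance $S$, the cross terms in $\mathbb E[L_mL_m^\top]$ vanish. Writing $Q=I_d-\gamma A$ (symmetric, since $A$ is a Hessian), we get
\[
\mathbb E[L_mL_m^\top]=\gamma^2\sum_{i=0}^{m-1}Q^iSQ^i .
\]
The plan is to rewrite $S$ through the Lyapunov equation so that the sum telescopes.

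The key identity comes from expanding $Q\Gamma Q$:
\[
\Gamma-Q\Gamma Q=\gamma(A\Gamma+\Gamma A)-\gamma^2A\Gamma A=\gamma S-\gamma^2A\Gamma A .
\]
Hence
\[
\gamma^2 S=\gamma(\Gamma-Q\Gamma Q)+\gamma^3A\Gamma A .
\]
We conjugate this by $Q^i$ and sum over $i=0,\dots,m-1$. The first piece telescopes, since $\sum_{i=0}^{m-1}(Q^i\Gamma Q^i-Q^{i+1}\Gamma Q^{i+1})=\Gamma-Q^m\Gamma Q^m$. The second piece gives exactly $\gamma^3\sum_{i=0}^{m-1}Q^iA\Gamma AQ^i$. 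Together these are precisely the claimed formula.

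There is no real obstacle. The only points to check are that the independence and mean-zero property of the $Z_j$ (Assumption~\ref{ass:convex}, since $\nabla G(\theta^*)=0$) kill the cross terms, and that $Q$ is symmetric, so $(Q^{m-j})^\top=Q^{m-j}$. Existence and uniqueness of $\Gamma$ follow because $A\succ0$.
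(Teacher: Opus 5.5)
Your proposal is correct and matches the paper's proof: it kills the cross terms by independence to get $\gamma^2\sum_{i=0}^{m-1}Q^iSQ^i$, rewrites $\gamma S=\Gamma-Q\Gamma Q+\gamma^2A\Gamma A$ via the Lyapunov equation, and telescopes after conjugating by $Q^i$. Your remarks on the symmetry of $Q$ and the mean-zero property of $Z_j$ are the only things the argument needs beyond this.
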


\begin{proof}

\begin{align*}
&(I_d - \gamma A) \Gamma (I_d - \gamma A) = \Gamma - \gamma A \Gamma - \gamma \Gamma A + \gamma^2 A \Gamma A \\
&= \Gamma - \gamma (A \Gamma + \Gamma A) + \gamma^2 A \Gamma A \\
&= \Gamma - \gamma S + \gamma^2 A \Gamma A \\
&\implies \gamma S = \Gamma - (I_d - \gamma A) \Gamma (I_d - \gamma A) + \gamma^2 A \Gamma A.
\end{align*}

Now because the $Z_i$ are independent and mean-zero

\begin{align*}
&\mathbb{E}[L_m L_m^\top] = \gamma^2 \sum_{i=1}^m (I_d - \gamma A)^{m-i} S (I_d - \gamma A)^{m-i} \\
&= \gamma \sum_{i=0}^{m-1} (I_d - \gamma A)^i (\gamma S) (I_d - \gamma A)^i \\
&= \gamma \sum_{i=0}^{m-1} (I_d - \gamma A)^i \Big[ \Gamma - (I_d - \gamma A) \Gamma (I_d - \gamma A) + \gamma^2 A \Gamma A \Big] (I_d - \gamma A)^i \\
&= \gamma \sum_{i=0}^{m-1} \Big( (I_d - \gamma A)^i \Gamma (I_d - \gamma A)^i - (I_d - \gamma A)^{i+1} \Gamma (I_d - \gamma A)^{i+1} \Big) \\
&\quad + \gamma^3 \sum_{i=0}^{m-1} (I_d - \gamma A)^i A \Gamma A (I_d - \gamma A)^i.
\end{align*}

The first sum is a telescoping series so

\begin{align*}
\mathbb{E}[L_m L_m^\top] &= \gamma \left( \Gamma - (I_d - \gamma A)^m \Gamma (I_d - \gamma A)^m \right) + \gamma^3 \sum_{i=0}^{m-1} (I_d - \gamma A)^i A \Gamma A (I_d - \gamma A)^i.
\end{align*}

\end{proof}

\begin{lemma} \label{Sum stochastic error to 0}
Let $I_{2,k} = \gamma \sum_{j=1}^k (I_d - \gamma A)^{k-j} D_j$. If $n\gamma \to \infty$ and $\gamma \to 0$, then $$\max_{1 \le m \le n} \frac{1}{\sqrt{n}} \left\| \sum_{k=1}^m I_{2,k} \right\|_2 \xrightarrow{\mathbb{P}} 0.$$

\end{lemma}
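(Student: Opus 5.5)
The approach is to swap the order of summation, reducing the accumulated error to a weighted martingale, and then apply Doob's maximal inequality in $L^2$. Write $Q=I_d-\gamma A$. By Fubini,
\[
\sum_{k=1}^m I_{2,k}=\gamma\sum_{j=1}^m\Bigl(\sum_{k=j}^{m}Q^{k-j}\Bigr)D_j=\sum_{j=1}^m A^{-1}\bigl(I_d-Q^{m-j+1}\bigr)D_j .
\]
This splits into a main term and a remainder:
\[
\sum_{k=1}^m I_{2,k}=A^{-1}\sum_{j=1}^m D_j-A^{-1}\sum_{j=1}^m Q^{m-j+1}D_j=A^{-1}\sum_{j=1}^m D_j-A^{-1}Q\,\widetilde I_{m},
\qquad \widetilde I_m=\sum_{j\le m}Q^{m-j}D_j=\gamma^{-1}I_{2,m}.
\]

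\emph{Martingale step.} Let $\mathcal F_j=\sigma(X_1,\dots,X_j)$. Since $\theta_{j-1}$ is $\mathcal F_{j-1}$-measurable and $X_j$ is independent of $\mathcal F_{j-1}$, Assumption~\ref{ass:convex} gives $\mathbb E[D_j\mid\mathcal F_{j-1}]=0$. So $D_j$ is a martingale difference sequence. By Assumption~\ref{ass:moments} at order $2$ (conditioning on $\theta_{j-1}$, and using that $D_j$ is the centered version of $\nabla g(\theta^*,X_j)-\nabla g(\theta_{j-1},X_j)$), we have $\|D_j\|_2\le 2L_2\|\Delta_{j-1}\|_2$. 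Then \eqref{eq:mom-main} gives
\[
\sum_{j\le n}\|D_j\|_2^2\le 4L_2^2\sum_{j\le n}\bigl(\rho_2^2(1-\alpha_0\gamma)^{j-1}+3M_2^2\gamma/\alpha_0\bigr)\le C\bigl(\gamma^{-1}+n\gamma\bigr).
\]
Doob's $L^2$ maximal inequality applied to the vector martingale $\sum_{j\le m}D_j$ then yields
\[
\mathbb E\max_{m\le n}\Bigl\|\sum_{j\le m}D_j\Bigr\|_2^2\le C(\gamma^{-1}+n\gamma).
\]
Dividing by $n$ gives $C\bigl((n\gamma)^{-1}+\gamma\bigr)\to0$.

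\emph{Remainder step.} This is the main obstacle, because $\widetilde I_m$ is not a martingale in $m$, so Doob's inequality does not apply directly. I would avoid a union bound over $m\le n$ by using the exact recursion $\widetilde I_m=Q\widetilde I_{m-1}+D_m$. Rearranging,
\[
\gamma A\,\widetilde I_{m-1}=\widetilde I_{m-1}-\widetilde I_m+D_m .
\]
Summing this over $m$ gives
\[
\gamma A\sum_{k<m}\widetilde I_k=-\widetilde I_m+\sum_{j\le m}D_j .
\]
The left side equals $A\sum_{k<m}I_{2,k}$, so this is another route to the same identity, and it shows the remainder is $\widetilde I_m$ itself. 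The maximum of $\|\widetilde I_m\|$ must therefore be controlled directly. Using $\|Q\|\le 1-\lambda_*\gamma<1$ for small $\gamma$, a first attempt is the crude bound $\max_m\|\widetilde I_m\|_2\le 2\max_m\|\sum_{j\le m}D_j\|_2$, obtained by writing $\widetilde I_m$ via summation by parts:
\[
\widetilde I_m=\sum_{j\le m}Q^{m-j}(S_j-S_{j-1}),\qquad S_j=\sum_{i\le j}D_i .
\]
Summation by parts gives
\[
\widetilde I_m=S_m-\sum_{j<m}\bigl(Q^{m-j}-Q^{m-j-1}\bigr)S_j ,
\]
where
\[
\sum_{j<m}\bigl\|Q^{m-j-1}-Q^{m-j}\bigr\|=\sum_{i\ge0}\bigl\|\gamma A\,Q^{i}\bigr\|\le \frac{\lambda^*}{\lambda_*}
\]
because $A$ and $Q$ commute and are simultaneously diagonalizable. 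This gives
\[
\max_m\|\widetilde I_m\|_2\le (1+\lambda^*/\lambda_*)\max_m\|S_m\|_2 ,
\]
which is also $O_{\mathbb P}\bigl(\sqrt{\gamma^{-1}+n\gamma}\bigr)$, so the remainder is $o_{\mathbb P}(\sqrt n)$.

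Combining the two steps, and using $\|A^{-1}\|\le 1/\lambda_*$ and $\|Q\|\le1$, we get
\[
\frac{1}{\sqrt n}\max_{m\le n}\Bigl\|\sum_{k\le m}I_{2,k}\Bigr\|_2=O_{\mathbb P}\Bigl(\sqrt{(n\gamma)^{-1}+\gamma}\Bigr)\to0
\]
by Chebyshev's inequality. This proves the lemma.
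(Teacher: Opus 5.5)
Your proof is correct and follows essentially the same route as the paper: the Fubini swap to $A^{-1}\sum_{j\le m}D_j-A^{-1}\sum_{j\le m}Q^{m-j+1}D_j$, then Doob's $L^2$ maximal inequality using $\mathbb E\|D_j\|_2^2\le4L_2^2\mathbb E\|\Delta_{j-1}\|_2^2$ and \eqref{eq:mom-main}, then summation by parts to dominate the remainder by a constant multiple of $\max_{m\le n}\|\sum_{j\le m}D_j\|_2$. Two slips are harmless: the summation-by-parts identity should read $\widetilde I_m=S_m+\sum_{j<m}(Q^{m-j}-Q^{m-j-1})S_j$ (plus, not minus), and your ``crude'' constant $2$ is not justified for $d>1$; neither matters, since the bound $(1+\lambda^*/\lambda_*)\max_m\|S_m\|_2$ you actually derive is correct and suffices.
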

\begin{proof}
By Fubini's theorem

\begin{align*}
& \sum_{k=1}^m I_{2,k}
= \gamma \sum_{k=1}^m \sum_{j=1}^k (I_d - \gamma A)^{k-j} D_j \\
&= \gamma \sum_{j=1}^m \left(\sum_{k=j}^m (I_d - \gamma A)^{k-j}\right)D_j \\
& = \sum_{j=1}^m A^{-1} \bigl(I_d - (I_d - \gamma A)^{m-j+1}\bigr)D_j \\
& = A^{-1} \sum_{j=1}^m D_j
- A^{-1} \sum_{j=1}^m (I_d - \gamma A)^{m-j+1} D_j .
\end{align*}
where we can invert $A := \nabla^2 G(\theta^*)$ since we assume strong convexity.

Let $\tilde{M}_m = \sum_{j=1}^m D_j$. By Doob's maximal $\mathcal{L}^2$ inequality since the $D_j$ are martingale differences and hence uncorrelated

\begin{align*}
\mathbb{E}\left[ \max_{1 \le m \le n} \|\tilde{M}_m\|_2^2 \right] &\le 4 \sum_{j=1}^n \mathbb{E}\|D_j\|_2^2 \\
&\le 16 L_2^2 \sum_{j=1}^n \left( \rho_2^2(1-\alpha_0\gamma)^{j-1} + \frac{3M_2^2}{\alpha_0}\gamma \right) \\
&\le 16 L_2^2 \left( \rho_2^2 \sum_{j=0}^\infty (1-\alpha_0\gamma)^j + \frac{3M_2^2}{\alpha_0} n \gamma \right) \\
&= 16 L_2^2 \left( \frac{\rho_2^2}{\alpha_0 \gamma} + \frac{3M_2^2}{\alpha_0} n \gamma \right).
\end{align*}
since by the stochastic Lipschitzness assumption on the gradients(Assumption 2.2) and Theorem 2.3 of \cite{NEURIPS2025_7aae9e3e} $\mathbb{E}\|D_j\|_2^2 \le 4L_2^2 \mathbb{E}\|\Delta_{j-1}\|_2^2\le 4L_2^2(\rho_2^2(1-\alpha_0\gamma)^{j-1} + \frac{3M_2^2}{\alpha_0}\gamma)$.

so

\begin{align*}
\mathbb{E}\left[ \max_{1 \le m \le n} \frac{1}{n} \|\tilde{M}_m\|_2^2 \right] &\le \frac{16L_2^2 \rho_2^2}{\alpha_0 n \gamma} + \frac{48L_2^2 M_2^2}{\alpha_0} \gamma \rightarrow 0
\end{align*}
as $n \to \infty$ and $\gamma \to 0$ simultaneously with $n\gamma \to \infty$. So, by Markov's inequality, \[\mathbb{P}\left(\max_{1 \le m \le n} \frac{1}{\sqrt{n}} \|\tilde{M}_m\|_2 > \epsilon\right) \to 0\].

We now deal with the second term which we rewrite as
\begin{align*}
\sum_{j=1}^m (I_d - \gamma A)^{m-j+1} D_j &= \sum_{j=1}^m (I_d - \gamma A)^{m-j+1} (\tilde{M}_j - \tilde{M}_{j-1}) \\
&= \sum_{j=1}^m (I_d - \gamma A)^{m-j+1} \tilde{M}_j - \sum_{j=1}^{m-1} (I_d - \gamma A)^{m-j} \tilde{M}_j \\
&= (I_d - \gamma A) \tilde{M}_m - \gamma A \sum_{j=1}^{m-1} (I_d - \gamma A)^{m-j} \tilde{M}_j.
\end{align*}
Noting that $(I_d - \gamma A) \tilde{M}_m = \gamma A \sum_{j=1}^{m-1} (I_d - \gamma A)^{m-j} \tilde{M}_m + (I_d - \gamma A)^m \tilde{M}_m$, we substitute this back in to get
\begin{align*}
\sum_{j=1}^m (I_d - \gamma A)^{m-j+1} D_j &= (I_d - \gamma A)^m \tilde{M}_m + \gamma A \sum_{j=1}^{m-1} (I_d - \gamma A)^{m-j} (\tilde{M}_m - \tilde{M}_j) \\
&= (I_d - \gamma A)^m \tilde{M}_m + \gamma A \sum_{k=0}^{m-2} (I_d - \gamma A)^{k+1} (\tilde{M}_m - \tilde{M}_{m-1-k}).
\end{align*}
so
\begin{align*}
\frac{1}{\sqrt{n}} \sum_{k=1}^m I_{2,k} &= A^{-1} \frac{\tilde{M}_m}{\sqrt{n}} - A^{-1} \frac{(I_d - \gamma A)^m \tilde{M}_m}{\sqrt{n}} - A^{-1} \sum_{k=0}^{m-2} \gamma A (I_d - \gamma A)^{k+1} \frac{\tilde{M}_m - \tilde{M}_{m-1-k}}{\sqrt{n}}.
\end{align*} We have already shown $\frac{\tilde{M}_m}{\sqrt{n}}$ converges to 0 in probability so by a simple union bound the first two terms converge to zero in probability.

For the last term, the geometric sum and the preceding maximal estimate give
\[
\max_{m\le n}\left\|\frac{\gamma}{\sqrt n}
\sum_{k=0}^{m-2}(I_d-\gamma A)^{k+1}
(\widetilde M_m-\widetilde M_{m-1-k})\right\|_2
\le\frac{2}{\lambda_*}\max_{j\le n}\frac{\|\widetilde M_j\|_2}{\sqrt n}
\xrightarrow{\mathbb P}0.
\]
Here $A^{-1}A=I_d$ has been canceled. This proves the lemma.
\end{proof}

\begin{lemma} \label{Sum centered taylor error to 0}
Let $\tilde{R}_j = R_j - \mathbb{E}[R_j]$ and $\tilde{I}_{1,k} = \gamma \sum_{j=1}^k (I_d - \gamma A)^{k-j} \tilde{R}_j$. If $n\gamma \to \infty$ and $\gamma \to 0$, and we assume a fourth moment for the gradient noise then
$$\max_{1 \le m \le n} \frac{1}{\sqrt{n}} \left\| \sum_{k=1}^m \tilde{I}_{1,k} \right\|_2 \xrightarrow{\mathbb{P}} 0.$$
\end{lemma}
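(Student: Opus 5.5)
The plan follows the proof of Lemma~\ref{Sum stochastic error to 0}. The only change is that $\tilde R_j$ is not a martingale difference, so the maximal inequality for its partial sums must come from the projection decomposition sketched in the outline of Theorem~\ref{thm:sum-main}.

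\textbf{Step 1 (Fubini swap).} As before,
\[
\sum_{k=1}^m\tilde I_{1,k}=A^{-1}\widetilde N_m-A^{-1}\sum_{j=1}^m(I_d-\gamma A)^{m-j+1}\tilde R_j,
\qquad \widetilde N_m=\sum_{j=1}^m\tilde R_j .
\]
Summation by parts, exactly as in Lemma~\ref{Sum stochastic error to 0}, bounds the second term by $(1+2\lambda^*/\lambda_*)\max_{j\le n}\|\widetilde N_j\|_2$. So it suffices to show $n^{-1/2}\max_{m\le n}\|\widetilde N_m\|_2\to0$ in probability.

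\textbf{Step 2 (projection decomposition).} Since $R_j$ is $\sigma(X_1,\dots,X_{j-1})$-measurable, write $\tilde R_j=\sum_{t=1}^{j-1}\mathcal P_tR_j$. Reindex by the lag $\ell=j-t\ge1$. For each fixed $\ell$, $t\mapsto\mathcal P_tR_{t+\ell}$ is a martingale difference sequence. Doob's $L^2$ inequality and orthogonality then give
\[
\Bigl\|\max_{m\le n}\|\widetilde N_m\|_2\Bigr\|_2\le2\sum_{\ell=1}^{n-1}\Bigl(\sum_{t=1}^{n-\ell}\|\mathcal P_tR_{t+\ell}\|_2^2\Bigr)^{1/2}.
\]
There is a subtlety: the maximum over $m$ of a sum over lags truncates each lag-sum at $m-\ell$. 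Minkowski applied inside the max handles this, as in the displayed inequality of the outline.

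\textbf{Step 3 (bounding projections — the main obstacle).} Use $\|\mathcal P_tY\|_2\le\|Y-Y^{(t)}\|_2$ with $Y=R_{t+\ell}=A\Delta_{t+\ell-1}-\nabla G(\theta_{t+\ell-1})$. The map $x\mapsto Ax-\nabla G(\theta^*+x)$ has Jacobian $A-\nabla^2G$, which has norm at most $C_L\|x\|_2$. Hence
\[
\|R_j-R_j^{(t)}\|_2\le C_L\bigl\|(\|\Delta_{j-1}\|_2+\|\Delta^{(t)}_{j-1}\|_2)\,\|\Delta_{j-1}-\Delta^{(t)}_{j-1}\|_2\bigr\|_2 .
\]
Cauchy--Schwarz bounds this by $2C_L\|\Delta_{j-1}\|_4\|\Delta_{j-1}-\Delta_{j-1}^{(t)}\|_4$. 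This is where the fourth moment enters, through \eqref{eq:mom-main} and \eqref{eq:dep-main} at $r=4$. It yields
\[
\|\mathcal P_tR_{t+\ell}\|_2\le C\gamma\bigl(\rho_4(1-\alpha_0\gamma)^{(t+\ell-1)/2}+\sqrt\gamma\bigr)(1-\alpha_0\gamma)^{(\ell-1)/2}.
\]

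\textbf{Step 4 (summing).} The $\sqrt\gamma$ part contributes
\[
\sum_\ell C\gamma^{3/2}\sqrt n(1-\alpha_0\gamma)^{(\ell-1)/2}\le C\sqrt{n\gamma}.
\]
The initialization part contributes $\sum_\ell C\gamma(1-\alpha_0\gamma)^{\ell-1}\cdot\gamma^{-1/2}\le C\gamma^{-1/2}$. Dividing by $\sqrt n$ gives $C(\sqrt\gamma+(n\gamma)^{-1/2})\to0$, and Markov's inequality concludes the proof.

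The delicate points are the correct $\sqrt\gamma$ scaling of $\|\Delta\|_4$ combined with the $\gamma$ factor in the dependence bound, which together give $\gamma^{3/2}$, and the handling of the max over $m$ across lags. If that handling fails, I would fall back on Rio's inequality applied to the truncated martingales for each lag.
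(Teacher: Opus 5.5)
Your proposal is correct and matches the paper's proof essentially step for step. The steps are the same: the Fubini swap reducing everything to $n^{-1/2}\max_{m\le n}\|\sum_{j\le m}\tilde R_j\|_2$, the lag-wise projection decomposition handled by Minkowski plus Doob, the bound $\|\mathcal P_tR_j\|_2\le 2C_L\|\Delta_{j-1}\|_4\|\Delta_{j-1}-\Delta_{j-1}^{(t)}\|_4$ from the fourth-moment and dependence estimates, and the geometric summation giving order $\sqrt\gamma+(n\gamma)^{-1/2}$. Since you work at $p=2$, orthogonality of the martingale increments already suffices, so the Rio fallback you mention is not needed.
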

\begin{proof}
We first do the same Fubini sum trick as in the previous lemma
\begin{align*}
\sum_{k=1}^m \tilde{I}_{1,k} &= \gamma \sum_{k=1}^m\sum_{j=1}^k (I_d - \gamma A)^{k-j} \tilde{R}_j \\
&= \sum_{j=1}^m A^{-1} \left( I_d - (I_d - \gamma A)^{m-j+1} \right)\tilde{R}_j \\
&= A^{-1} \sum_{j=1}^m \tilde{R}_j - A^{-1} \sum_{j=1}^m (I_d - \gamma A)^{m-j+1} \tilde{R}_j.
\end{align*}
By the exact same proof as in lemma \ref{Sum stochastic error to 0}, since the $\tilde{R}_j$ are mean zero, it suffices to show \[\max_{1 \le m \le n} \frac{1}{\sqrt{n}} \left\| \sum_{j=1}^m \tilde{R}_j \right\|_2 \xrightarrow{\mathbb{P}} 0\] to prove the above quantity converges to zero in probability.

To analyze this we look at its martingale projections $\mathcal{P}_t(\cdot) = \mathbb{E}[\cdot | \mathcal{F}_t] - \mathbb{E}[\cdot | \mathcal{F}_{t-1}]$ so we analyze
\begin{align*}
\sum_{j=1}^m \tilde{R}_j &= \sum_{j=1}^m \sum_{t=1}^j \mathcal{P}_t(R_j) = \sum_{t=1}^m \sum_{j=t}^m \mathcal{P}_t(R_j)
\end{align*}
We bound the projection $\|\mathcal{P}_t(R_j)\|_2$ by the functional dependence measure $\|R_j - R_j^{(t)}\|_2$ where $R_j^{(t)}$ is the same sequence but with i.i.d. copy of $X_t$. 
\begin{align*}
&|R_j - R_j^{(t)} | =  |A(\Delta_{j-1} - \Delta_{j-1}^{(t)}) -(\nabla G(\theta_{j-1}) - \nabla G(\theta^{(t)}_{j-1}) |  \\
&\le \sup_{s \in [0,1]} |A - \nabla^2 G(s\theta_{j-1} + (1-s)\theta_{j-1}^{(t)})| |\theta_{j-1} - \theta_{j-1}^{(t)}| \text{, by mean value theorem} \\
&\le C_L \sup_{s \in [0,1]} |s(\theta_{j-1}-\theta^*) + (1-s)(\theta_{j-1}^{(t)}-\theta^*)| \|\Delta_{j-1} - \Delta_{j-1}^{(t)} | \\
&\le C_L \left( |\Delta_{j-1} | + |\Delta_{j-1}^{(t)}| \right) |\Delta_{j-1} - \Delta_{j-1}^{(t)}| 
\end{align*}
By Cauchy-Schwarz $\mathbb{E}[X^2 Y^2] \le \sqrt{\mathbb{E}[X^4]\mathbb{E}[Y^4]} \implies \|XY\|_2 \le \|X\|_4 \|Y\|_4$ and hence
\begin{align*}
\|\mathcal{P}_t(R_j)\|_2 &\le \|R_j - R_j^{(t)}\|_2 \\
&\le C_L \left\| \left( |\Delta_{j-1}| + |\Delta_{j-1}^{(t)}| \right) |\Delta_{j-1} - \Delta_{j-1}^{(t)}| \right\|_2 \\
&\le C_L \left\| |\Delta_{j-1}| + |\Delta_{j-1}^{(t)}| \right\|_4 \left\| |\Delta_{j-1} - \Delta_{j-1}^{(t)}| \right\|_4 \\
&\le 2C_L \|\Delta_{j-1}\|_4 \|\Delta_{j-1} - \Delta_{j-1}^{(t)}\|_4.
\end{align*}

By Theorem 2.3 of \cite{NEURIPS2025_7aae9e3e}, we have $\|\Delta_{j-1}\|_4 \le \rho_4 (1-\alpha_0\gamma)^{(j-1)/2} + 3 \sqrt{\frac{M_4^2}{\alpha_0}} \sqrt{\gamma}$ and by Theorem 3.1 of \cite{NEURIPS2025_7aae9e3e} for $p=4$ 
$$\|\Delta_{j-1} - \Delta_{j-1}^{(t)}\|_4 \le 2\sqrt{2}\gamma(1-\alpha_0\gamma)^{(j-1-t)/2}\sqrt{M_4^2 + L_4^2[(1-\alpha_0\gamma)^{t-1}\rho_4^2 + 9M_4^2\alpha_0^{-1}\gamma]} \le C_4 \gamma (1-\alpha_0\gamma)^{(j-1-t)/2}$$
where $C_4 = 2\sqrt{2}\sqrt{M_4^2 + L_4^2(\rho_4^2 + 9M_4^2\alpha_0^{-1}\gamma)}$ since $(1-\alpha_0\gamma)^{t-1} \le 1$. Hence,
\begin{align*}
\|\mathcal{P}_t(R_j)\|_2 &\le 2C_L \|\Delta_{j-1}\|_4 \|\Delta_{j-1} - \Delta_{j-1}^{(t)}\|_4 \\
&\le 2C_L \left( \rho_4 (1-\alpha_0\gamma)^{(j-1)/2} + 3 M_4 \alpha_0^{-1/2} \sqrt{\gamma} \right) C_4 \gamma (1-\alpha_0\gamma)^{(j-1-t)/2} \\
&= 2C_L C_4 \gamma \left( \rho_4 (1-\alpha_0\gamma)^{j - 1 - t/2} + 3 M_4 \alpha_0^{-1/2} \sqrt{\gamma} (1-\alpha_0\gamma)^{(j-1-t)/2} \right).
\end{align*}

For each fixed $\ell\ge1$,  $\mathcal{P}_t(R_{t+\ell})$ is a martingale difference sequence in $t$. Thus, by Minkowski's inequality and Doob's maximal $\mathcal{L}^2$ inequality applied separately at each fixed $\ell$,
\begin{align*}
&\mathbb E\max_{m\le n}\frac{1}{n}
\left|\sum_{j=1}^m\tilde R_j\right|^2 \\
&=  \mathbb E\max_{m\le n}\frac{1}{n}
\left|\sum_{\ell=1}^{n-1}\sum_{t=1}^{(m-\ell)_+}
\mathcal P_t(R_{t+\ell})  \right|^2  \text{, since $\mathcal{P}_t(R_t) = 0$} \\
&\le \frac{4}{n}
\left(
\sum_{\ell=1}^{n-1}
\left(\sum_{t=1}^{n-\ell}
\|\mathcal P_t(R_{t+\ell})\|_2^2\right)^{1/2}
\right)^2 \\
&\le \frac{16C_L^2C_4^2\gamma^2}{n}
\left(
\sum_{\ell=1}^{n-1}
\left\{
\frac{\rho_4(1-\alpha_0\gamma)^{\ell-1}}
{\sqrt{\alpha_0\gamma}}
+3M_4\alpha_0^{-1/2}\sqrt{n\gamma}
(1-\alpha_0\gamma)^{(\ell-1)/2}
\right\}
\right)^2 \\
&\le \frac{16C_L^2C_4^2\gamma^2}{n}
\left(
\frac{\rho_4}{\sqrt{\alpha_0\gamma}}
\frac{1}{\alpha_0\gamma}
+3M_4\alpha_0^{-1/2}\sqrt{n\gamma}
\frac{2}{\alpha_0\gamma}
\right)^2 \text{, since $1-(1-\alpha_0\gamma)^{1/2}
\ge \frac{1}{2}\alpha_0\gamma$} \\
&= \frac{16C_L^2C_4^2}{n}
\left(
\frac{\rho_4}{\alpha_0^{3/2}\sqrt{\gamma}}
+\frac{6M_4}{\alpha_0^{3/2}}\sqrt{n\gamma}
\right)^2
 \\
&\le \frac{32C_L^2C_4^2\rho_4^2}{\alpha_0^3n\gamma}
+\frac{1152C_L^2C_4^2M_4^2}{\alpha_0^3}\gamma
\longrightarrow0,
\end{align*}
since $n\gamma\to\infty$ and $\gamma\to0$.

\end{proof}

\begin{lemma}\label{sum FCLT lin iterates}
Let $$L_k = \gamma \sum_{j=1}^k (I_d - \gamma A)^{k-j} Z_j.$$ Assume the above assumptions of strong convexity, stochastic Lipschitzness and $\gamma < \min \left\{ \frac{1}{\lambda^{*}}, \, \frac{\mu}{(6p - 5)L_p^2} \right\}$,  then
$\frac{1}{\sqrt{n}} \sum_{k=1}^{\lfloor nt \rfloor} L_k \Rightarrow A^{-1} S^{1/2} B(t)$ in $D([0,1], \mathbb{R}^d)$
\end{lemma}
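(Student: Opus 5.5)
Apply the same Fubini interchange used in Lemma~\ref{Sum stochastic error to 0}. Writing $Q=I_d-\gamma A$, we get
\[
\sum_{k=1}^m L_k=A^{-1}\sum_{j=1}^m Z_j-A^{-1}\sum_{j=1}^m Q^{m-j+1}Z_j .
\]
The first term is a scaled i.i.d.\ partial sum. By Donsker's invariance principle in $\mathbb R^d$ (the $Z_j$ are i.i.d., mean zero, with covariance $S$ and finite second moment $M_2<\infty$),
\[
\frac1{\sqrt n}\sum_{j\le nt}Z_j\Rightarrow S^{1/2}B(t)\quad\text{in }D([0,1],\mathbb R^d).
\]
The continuous mapping theorem with the linear map $A^{-1}$ then gives $A^{-1}S^{1/2}B(t)$. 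It therefore suffices, by a Slutsky-type argument in the Skorokhod topology, to show that the remainder satisfies
\[
\max_{m\le n}n^{-1/2}\Bigl\|\sum_{j=1}^m Q^{m-j+1}Z_j\Bigr\|_2\xrightarrow{\mathbb P}0 .
\]

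For the remainder, $\tilde M_m=\sum_{j\le m}Z_j$ now plays the role it had in Lemma~\ref{Sum stochastic error to 0}. The same summation by parts gives
\[
\sum_{j=1}^m Q^{m-j+1}Z_j=Q^m\tilde M_m+\gamma A\sum_{k=0}^{m-2}Q^{k+1}(\tilde M_m-\tilde M_{m-1-k}).
\]
A crude bound via $\max_j\|\tilde M_j\|/\sqrt n$ is not enough here, because that quantity is tight but not $o_{\mathbb P}(1)$. We have to use the geometric weights instead.

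Split the sum at a lag $K=\lfloor \gamma^{-1}\log(1/\gamma)\rfloor$ (or any $K$ with $K\gamma\to\infty$ and $K/n\to0$, possible since $n\gamma\to\infty$). For lags $k< K$ the increments $\tilde M_m-\tilde M_{m-1-k}$ are bounded by the modulus of continuity
\[
w_n(K)=\max_{|i-j|\le K,\ i,j\le n}\|\tilde M_i-\tilde M_j\|_2 .
\]
Since $\gamma\|A\|\sum_k\|Q^{k+1}\|\le \lambda^*/\lambda_*$, this piece contributes at most $C\,w_n(K)/\sqrt n$. That tends to zero in probability by tightness of the Donsker sequence, because $K/n\to0$ and Brownian motion is uniformly continuous.

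For lags $k\ge K$, bound the increment by $2\max_j\|\tilde M_j\|$. The weight is then $\gamma\lambda^*\sum_{k\ge K}(1-\gamma\lambda_*)^{k+1}\le (\lambda^*/\lambda_*)e^{-\lambda_*\gamma K}\to0$, multiplying an $O_{\mathbb P}(1)$ quantity. The boundary term $Q^m\tilde M_m$ is handled the same way. For $m\ge K$ we have $\|Q^m\|\le e^{-\lambda_*\gamma K}\to0$. For $m<K$ we have $\|\tilde M_m\|\le w_n(K)$. So the remainder is $o_{\mathbb P}(1)$ uniformly in $m\le n$.

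The step size condition $\gamma<1/\lambda^*$ ensures $\|Q\|\le 1-\gamma\lambda_*<1$, which is all that is used. I expect the main obstacle to be this remainder term. Unlike the $D_j$ in Lemma~\ref{Sum stochastic error to 0}, the $Z_j$ do not have vanishing variance. Smallness must come from the interplay between the geometric decay scale $1/\gamma$ and the time scale $n$, that is from $n\gamma\to\infty$ together with the modulus of continuity of the Donsker process. This is also why this lemma, unlike the earlier ones, requires $n\gamma\to\infty$ as standing in the appendix.
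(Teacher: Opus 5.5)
Your proposal is correct and follows the paper's proof step for step: the Fubini swap, Donsker for $A^{-1}\sum_j Z_j$, the same summation by parts, and a near/far split controlled by the modulus of continuity and by the geometric decay $\|Q\|^k$. The only difference is that you split at a single intermediate lag $K_n$ with $K_n\gamma\to\infty$ and $K_n/n\to0$, whereas the paper splits at $\lfloor\delta n\rfloor$ and then lets $\delta\to0$. One small correction: your concrete choice $K=\lfloor\gamma^{-1}\log(1/\gamma)\rfloor$ need not satisfy $K/n\to0$ under the standing assumptions (for $\gamma=\log n/n$ one gets $K/n\to1$), so take instead e.g.\ $K=\lfloor\sqrt{n/\gamma}\rfloor$, for which $K\gamma\approx\sqrt{n\gamma}\to\infty$ and $K/n\approx(n\gamma)^{-1/2}\to0$.
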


\begin{proof}

Let $M_n(t) = \frac{1}{\sqrt{n}} M_{\lfloor nt \rfloor}$ then
\begin{align*}
& \sum_{k=1}^m L_k = \gamma \sum_{k=1}^m \sum_{j=1}^k (I_d - \gamma A)^{k-j} Z_j \\
&= \gamma \sum_{j=1}^m \left(\sum_{k=j}^m (I_d - \gamma A)^{k-j}\right)Z_j \\
& = \sum_{j=1}^m A^{-1} (I_d - (I_d - \gamma A)^{m-j+1})Z_j  \\
& = A^{-1} \sum_{j=1}^m Z_j - A^{-1} \sum_{j=1}^m (I_d - \gamma A)^{m-j+1} Z_j 
\end{align*}
where we can invert $A := \nabla^2 G(\theta^*)$ since we assume strong convexity. Now by the multivariate Donsker's theorem(see Theorem 4.3.5 in \cite{whitt2002stochastic})
$$A^{-1} \frac{1}{\sqrt{n}} \sum_{j=1}^{\lfloor nt \rfloor} Z_j \Rightarrow A^{-1} S^{1/2} B(t)$$ in $D([0,1], \mathbb{R}^d)$.

So, it only remains to show that the second term: $ \frac{1}{\sqrt{n}} A^{-1} \sum_{j=1}^m (I_d - \gamma A)^{m-j+1} Z_j $, converges uniformly in probability to 0, and then we may apply Slutsky's theorem for metric spaces. 

To handle the second term recognize the underlying martingale sum structure so let $M_m = \sum_{j=1}^m Z_j$ with $M_0 = 0$, so that $Z_j = M_j - M_{j-1}$. Then
\begin{align*}
&\sum_{j=1}^m (I_d - \gamma A)^{m-j+1} Z_j\\
&= \sum_{j=1}^m (I_d - \gamma A)^{m-j+1} M_j - \sum_{j=1}^m (I_d - \gamma A)^{m-j+1} M_{j-1} \\
&= \sum_{j=1}^m (I_d - \gamma A)^{m-j+1} M_j - \sum_{j=1}^{m-1} (I_d - \gamma A)^{m-j} M_j \\
&= (I_d - \gamma A) M_m - \gamma A \sum_{k=0}^{m-2} (I_d - \gamma A)^{k+1} M_{m-1-k}
\end{align*}

Now since $\gamma A \sum_{j=1}^{m-1} (I_d - \gamma A)^{m-1-j} = I_d - (I_d - \gamma A)^{m-1}$ we have that  $(I_d - \gamma A) M_m = \gamma A \sum_{j=1}^{m-1} (I_d - \gamma A)^{m-j} M_m +(I_d - \gamma A)^{m} M_m$, so by plugging this identity into the above equation and using the change of notation $M_n(t) = \frac{1}{\sqrt{n}} M_{\lfloor nt \rfloor}$, we have that
\begin{align*}
& A^{-1} \frac{1}{\sqrt{n}} \sum_{j=1}^m (I_d - \gamma A)^{m-j+1} Z_j \\
&= A^{-1} \Big[ (I_d - \gamma A)^m M_n(m/n) + \gamma A \sum_{k=0}^{m-2} (I_d - \gamma A)^{k+1} \Big( M_n(m/n) - M_n((m-1-k)/n) \Big) \Big]    
\end{align*}

We now show that the maximum of both of these two terms over $1 \le m \le n$ converges to zero in probability. Because $\|A^{-1}\| = \frac{1}{\lambda_*}$ is constant, we focus on the inner terms.

\bigskip

We start with the first term,
For now fix some $\delta \in (0,1)$ which we will later send to zero. 
\begin{align*}
&\max_{1 \le m \le n} | (I_d - \gamma A)^m M_n(m/n) | \leq \max_{m \le \delta n} |(I_d - \gamma A)^m M_n(m/n)| +  \max_{\delta n \le m \le n} |(I_d - \gamma A)^m M_n(m/n)|  \\
& \leq \max_{m \le \delta n} |(I_d - \gamma A)|^m |M_n(m/n)| + \max_{\delta n \le m \le n} |(I_d - \gamma A)|^m |M_n(m/n)| \\
&\le \max_{m \le \delta n} |M_n(m/n)| + (1-\gamma\lambda_*)^{\lfloor \delta n \rfloor} \max_{1 \le m \le n} |M_n(m/n)|
\end{align*}
since $|I_d - \gamma A| \le 1 - \gamma\lambda_* < 1$ by assumption. But $M_m$ is a discrete time martingale so by Doob's maximal $L^p$ inequality 
\begin{align*} 
&\lim_{\delta \rightarrow 0} \lim_{n \rightarrow \infty}\mathbb{P}(\max_{m \le \delta n} |M_n(m/n)|  > \epsilon) = \lim_{\delta \rightarrow 0} \lim_{n \rightarrow \infty} \mathbb{P}(\max_{m \le \delta n} \frac{\|M_m\|}{\sqrt{n}}  > \epsilon) \\
&\le \lim_{\delta \rightarrow 0} \lim_{n \rightarrow \infty} \frac{4 \delta n \text{tr}(S)}{\epsilon^2 n} = \lim_{\delta \rightarrow 0} \frac{4 \delta \text{tr}(S)}{\epsilon^2} = 0.
\end{align*} 
and again using Doob's maximal $L^p$ inequality 
\begin{align*}
& \mathbb{P}((1-\gamma\lambda_*)^{\lfloor \delta n \rfloor} \max_{1 \le m \le n} |M_n(m/n)|  > \epsilon) \\
&\leq (1-\gamma\lambda_*)^{2 \lfloor \delta n \rfloor} \frac{4 \text{tr}(S)}{\epsilon^2} \\
&\leq \exp (- 2 \gamma\lambda_*  (\delta n - 1)) \frac{4 \text{tr}(S)}{\epsilon^2} \\
&\leq \exp (- 2 \lambda_*( \delta \log(n) - \gamma)) \frac{4 \text{tr}(S)}{\epsilon^2} \text{ since $\gamma n \geq \log(n)$}\\
&\rightarrow_{n \rightarrow \infty} 0
\end{align*}
Hence, both terms converge to zero in probability and so by a union bound so does their sum.

\bigskip

We now shift focus to the maximal sum of differences term. We again follow the same logic and split the sum into terms close to m ($k \le \delta n$) and terms distant from m ($k > \delta n$). So define
$$
w_n(\delta)=\sup_{|s-t|\le\delta}\|M_n(s)-M_n(t)\|
$$
to be the modulus of continuity. Then, splitting the maximum as before,
\begin{align*}
&\max_{1\le m\le n}\left|\gamma A\sum_{k=0}^{m-2}(I_d-\gamma A)^{k+1}\bigl(M_n(m/n)-M_n((m-1-k)/n)\bigr)\right|\\
&\le \max_{1\le m\le n}\left|\gamma A\sum_{k=0}^{\min\{\lfloor\delta n\rfloor,m-2\}}(I_d-\gamma A)^{k+1}\bigl(M_n(m/n)-M_n((m-1-k)/n)\bigr)\right|\\
&\quad+\max_{1\le m\le n}\left|\gamma A\sum_{k=\lfloor\delta n\rfloor+1}^{m-2}(I_d-\gamma A)^{k+1}\bigl(M_n(m/n)-M_n((m-1-k)/n)\bigr)\right| \\
&\le \left(\gamma\lambda^*\sum_{k=0}^{\min\{\lfloor\delta n\rfloor,m-2\}}(1-\gamma\lambda_*)^k\right)w_n(\delta+1/n)
+\left(\gamma\lambda^*\sum_{k=\lfloor\delta n\rfloor+1}^{n-2}(1-\gamma\lambda_*)^k\right)\left(2\sup_{t\in[0,1]}|M_n(t)|\right)\\
&\le \frac{\lambda^*}{\lambda_*}w_n(\delta+1/n)
+2\frac{\lambda^*}{\lambda_*}(1-\gamma\lambda_*)^{\lfloor\delta n\rfloor}\sup_{t\in[0,1]}|M_n(t)|.
\end{align*}

Again, $(1-\gamma\lambda_*)^{\lfloor\delta n\rfloor}\sup_{t\in[0,1]}|M_n(t)| = (1-\gamma\lambda_*)^{\lfloor \delta n \rfloor} \max_{1 \le m \le n} |M_n(m/n)|$, converges to 0 in probability as proved above, and thus so does $2\frac{\lambda^*}{\lambda_*}(1-\gamma\lambda_*)^{\lfloor\delta n\rfloor}\sup_{t\in[0,1]}|M_n(t)|$. Moreover, by Donsker's theorem $M_n(t)$ is tight and so by theorem 8.2 of \cite{billingsley1968convergence}  $\lim_{\delta \to 0}\,\limsup_{n \to \infty} \mathbb{P}\bigl(w_n(\delta+1/n) > \varepsilon\bigr) = 0$. Hence, by another two union bounds we have that for the second term $$ \max_{1 \leq m \leq n} \frac{1}{\sqrt{n}} A^{-1} \sum_{j=1}^m (I_d - \gamma A)^{m-j+1} Z_j $$ converges to zero in probability. Hence, we have convergence in probability for the supremum norm (see section 12 of \cite{billingsley1999convergence}) which implies convergence in probability in the Skorokhod space, which then by Slutsky's theorem for metric spaces (see theorem 3.1 of \cite{billingsley1999convergence}) implies the lemma.
\end{proof}

\begin{theorem}\label{kmt for the linear component}
Assume $\mathbb{E}\|Z_1\|_2^p<\infty$ for some fixed $p\ge 2$, and construct, on an extension of the probability space, independent Gaussian vectors $Z_1^\diamond,\ldots,Z_n^\diamond$ with $\operatorname{cov}Z_j^\diamond=\operatorname{cov}Z_j$ as specified below. For $p>2$, use $K_{2,d,S}$ from Lemma~\ref{lem:innovation-new} and for $p=2$, let $K_{2,d,S}=8$, then for
\begin{align*}
L_k := \gamma\sum_{j=1}^k (I_d-\gamma A)^{k-j}Z_j,    L_k^\diamond := \gamma\sum_{j=1}^k (I_d-\gamma A)^{k-j}Z_j^\diamond,
\end{align*}
Then for every $y>0$,
\begin{align*}
P\left[\max_{i\le n}\Big\|\sum_{k=1}^i(L_k-L_k^\diamond)\Big\|_2 \ge y\right] \;\le\; K_{p,d,S} \, n\,\mathbb{E}\|Z_1\|_2^p\left(\frac{2+\lambda^*/\lambda_*}{\lambda_* y}\right)^{p},
\end{align*}
For $p=2$, take Gaussian independent of the original sequence and set $K_{2,d,S}=8$, then Doob's inequality gives $\mathbb E\max_{k\le n}\|\sum_{j\le k}(Z_j-Z_j^\diamond)\|_2^2\le8nM_2^2$, so the displayed bound  also holds in this case. For $p>2$ use the given construction. 
\end{theorem}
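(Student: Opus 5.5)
The plan is to reduce the accumulated linear-process difference to the partial sums of the i.i.d. innovation differences $W_j:=Z_j-Z_j^\diamond$, and then use a maximal coupling bound for those partial sums. Write $T_m:=\sum_{j\le m}W_j$, with $T_0=0$.

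\textbf{Step 1 (Fubini swap).} As in the proof of Lemma~\ref{sum FCLT lin iterates}, the swap gives
\[
\sum_{k=1}^i(L_k-L_k^\diamond)=A^{-1}T_i-A^{-1}\sum_{j=1}^i(I_d-\gamma A)^{i-j+1}W_j .
\]

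\textbf{Step 2 (summation by parts).} Apply to the second sum the same Abel summation used in Lemmas~\ref{Sum stochastic error to 0} and~\ref{sum FCLT lin iterates}. This gives
\[
\sum_{j=1}^i(I_d-\gamma A)^{i-j+1}W_j=(I_d-\gamma A)^iT_i+\gamma A\sum_{k=0}^{i-2}(I_d-\gamma A)^{k+1}(T_i-T_{i-1-k}).
\]
Now bound each piece by $\max_{m\le n}\|T_m\|_2$:
\begin{itemize}
\item $\|(I_d-\gamma A)^i\|_2\le1$, since $\gamma\lambda^*\le1$.
\item $\|T_i-T_{i-1-k}\|_2\le 2\max_m\|T_m\|_2$.
\item $\gamma\lambda^*\sum_k(1-\gamma\lambda_*)^{k+1}\le\lambda^*/\lambda_*$.
\item $\|A^{-1}\|_2=1/\lambda_*$.
\end{itemize}
Together these give, deterministically,
\[
\max_{i\le n}\Big\|\sum_{k\le i}(L_k-L_k^\diamond)\Big\|_2\le\frac{2+2\lambda^*/\lambda_*}{\lambda_*}\max_{m\le n}\|T_m\|_2 .
\]
This is close to the stated constant. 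I would track the bookkeeping so the factor matches $(2+\lambda^*/\lambda_*)/\lambda_*$; the term $(I_d-\gamma A)^iT_i$ can be merged into the telescoped Abel sum instead of bounded separately, and using $\|T_i-T_{i-1-k}\|$ with the geometric sum yields the exact constant. In any case, the difference only changes $K_{p,d,S}$ by a factor.

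\textbf{Step 3 (coupling tail).} We then need
\[
P\big(\max_{m\le n}\|T_m\|_2\ge z\big)\le K_{p,d,S}\,n\,\mathbb E\|Z_1\|_2^p\,z^{-p}.
\]
\begin{itemize}
\item For $p>2$, this is exactly what the Götze--Zaitsev multivariate KMT coupling delivers, via Lemma~\ref{lem:innovation-new}, which I take as given.
\item For $p=2$, take $Z^\diamond$ independent of the original sequence. Then $T_m$ is a martingale with $\mathbb E\|W_j\|^2=2\operatorname{tr}S\le2M_2^2$. Doob's $L^2$ inequality gives $\mathbb E\max\|T_m\|^2\le 8nM_2^2$, and Markov's inequality gives the bound with $K=8$.
\end{itemize}
Substituting $z=\lambda_*y/(2+\lambda^*/\lambda_*)$ concludes the proof.

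The main obstacle is Step 3 for $p>2$. It rests entirely on the Gaussian coupling lemma, and one must make sure it holds for non-identity covariance $S$ (possibly singular). If needed, apply it to $S^{-1/2}Z$ on the range of $S$ and absorb the constants into $K_{p,d,S}$. The remaining steps are deterministic.
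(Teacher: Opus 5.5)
Your route is the paper's: Fubini swap, Abel summation to reduce everything to $\max_{m\le n}\|T_m\|_2$, then Lemma~\ref{lem:innovation-new} (or Doob for $p=2$) plus Markov. The one real problem is the constant, which your write-up does not actually deliver.

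\textbf{The constant.} Writing the Abel sum in terms of the differences $T_i-T_{i-1-k}$ costs you $2\max_m\|T_m\|_2$ per term. That is exactly why you land on $(2+2\lambda^*/\lambda_*)/\lambda_*$, so your remark that the differences ``yield the exact constant'' has it backwards. You also cannot absorb the discrepancy into $K_{p,d,S}$. That constant is fixed by Lemma~\ref{lem:innovation-new} for $p>2$ and equals $8$ for $p=2$. Your displayed deterministic inequality therefore gives the theorem only up to an extra factor $\bigl((2+2\lambda^*/\lambda_*)/(2+\lambda^*/\lambda_*)\bigr)^p$.

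\textbf{The fix.} Stop the summation by parts one step earlier, as the paper does:
\[
\sum_{j=1}^i(I_d-\gamma A)^{i-j+1}W_j=(I_d-\gamma A)T_i-\gamma A\sum_{j=1}^{i-1}(I_d-\gamma A)^{i-j}T_j .
\]
Now bound each $\|T_j\|_2$, not each difference, by $\max_m\|T_m\|_2$. This gives the factor $(1-\gamma\lambda_*)+\lambda^*/\lambda_*\le 1+\lambda^*/\lambda_*$. Adding the term $\|A^{-1}T_i\|_2\le\max_m\|T_m\|_2/\lambda_*$ yields exactly $(2+\lambda^*/\lambda_*)/\lambda_*$.

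There is an even simpler option. The identity $\sum_{k\le i}(L_k-L_k^\diamond)=\gamma\sum_{j\le i}(I_d-\gamma A)^{i-j}T_j$ from Lemma~\ref{lem:max_convolution_bound} gives the sharper factor $1/\lambda_*\le(2+\lambda^*/\lambda_*)/\lambda_*$. That implies the stated bound a fortiori.

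With that correction, your Step~3 matches the paper, including the $p=2$ Doob computation $\mathbb E\max_m\|T_m\|_2^2\le 4n\cdot 2\operatorname{tr}S=8nM_2^2$.
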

\begin{proof}

Let $D_i := M_i - M_i^\diamond = \sum_{j=1}^i(Z_j - Z_j^\diamond)$, with $D_0 := 0$, and $\Delta_n := \max_{1\le m\le n}|D_m|_2$. By the same fubini sum swap trick as before

\begin{align*} 
&\sum_{k=1}^i (L_k - L_k^\diamond) = A^{-1} D_i - A^{-1}\sum_{j=1}^i (I_d-\gamma A)^{i-j+1}(Z_j-Z_j^\diamond)
\end{align*}

Now,
\begin{align*} 
&\sum_{j=1}^i (I_d-\gamma A)^{i-j+1}(Z_j-Z_j^\diamond) = \sum_{j=1}^i (I_d-\gamma A)^{i-j+1}D_j - \sum_{j=1}^i (I_d-\gamma A)^{i-j+1}D_{j-1} \\ 
&= \sum_{j=1}^i (I_d-\gamma A)^{i-j+1}D_j - \sum_{j=0}^{i-1}(I_d-\gamma A)^{i-j}D_j  \\ 
&= (I_d-\gamma A)D_i - (I_d-\gamma A)^i D_0 + \sum_{j=1}^{i-1}\big[(I_d-\gamma A)^{i-j+1}-(I_d-\gamma A)^{i-j}\big]D_j\\ 
&= (I_d-\gamma A)D_i + \sum_{j=1}^{i-1}(I_d-\gamma A)^{i-j}\big[(I_d-\gamma A)-I_d\big]D_j \\ &= (I_d-\gamma A)D_i - \gamma A\sum_{j=1}^{i-1}(I_d-\gamma A)^{i-j}D_j
\end{align*} 
so
\begin{align*} 
&\Big\|\sum_{j=1}^i (I_d-\gamma A)^{i-j+1}(Z_j-Z_j^\diamond)\Big\|_2 \le \|(I_d-\gamma A)D_i\|_2 + \gamma\Big\|A\sum_{j=1}^{i-1}(I_d-\gamma A)^{i-j}D_j\Big\|_2 \\ &\le \|I_d-\gamma A\|_2\|D_i\|_2 + \gamma\|A\|_2\sum_{j=1}^{i-1}\|I_d-\gamma A\|_2^{i-j}\|D_j\|_2 \\ &\le (1-\gamma\lambda_*)\|D_i\|_2 + \gamma\lambda^*\sum_{j=1}^{i-1}(1-\gamma\lambda_*)^{i-j}\|D_j\|_2 
\end{align*}

\begin{align*} 
& \le \Delta_n(1-\gamma\lambda_*) + \gamma\lambda^*\Delta_n\sum_{j=1}^{i-1}(1-\gamma\lambda_*)^{i-j} \\ &\le \Delta_n(1-\gamma\lambda_*) + \Delta_n \frac{\lambda^*}{\lambda_*} \\ 
&\leq \Delta_n(1+\frac{\lambda^*}{\lambda_*})   
\end{align*}

Therefore,
\begin{align*} &\Big\|\sum_{k=1}^i(L_k-L_k^\diamond)\Big\|_2 \le \|A^{-1}\|_2\|D_i\|_2 + \|A^{-1}\|_2\Big\|\sum_{j=1}^i(I_d-\gamma A)^{i-j+1}(Z_j-Z_j^\diamond)\Big\|_2 \\ 
&\le \frac{1}{\lambda_*}\|D_i\|_2 + \frac{1}{\lambda_*}\Delta_n(1+\frac{\lambda^*}{\lambda_*})  \\
&\le \frac{1}{\lambda_*}\Delta_n + \frac{1}{\lambda_*}\Delta_n(1+\frac{\lambda^*}{\lambda_*})  \\
 &= \frac{2+\frac{\lambda^*}{\lambda_*}}{\lambda_*}\Delta_n \\  &\implies \max_{1\le i\le n}\Big\|\sum_{k=1}^i(L_k-L_k^\diamond)\Big\|_2 \le \frac{2+\frac{\lambda^*}{\lambda_*}}{\lambda_*}\Delta_n
  \end{align*} 

Hence, $$ P\Big[\max_{1\le i\le n}\Big\|\sum_{k=1}^i(L_k-L_k^\diamond)\Big\|_2 \ge y\Big] \le P\Big[\Delta_n \ge \frac{\lambda_* y}{2+\frac{\lambda^*}{\lambda_*}}\Big]  $$

For $p>2$, Lemma~\ref{lem:innovation-new}, based on Theorem 4 of \citet{GotzeZaitsev2010}, gives $\mathbb E\Delta_n^p\le K_{p,d,S}n\mathbb E\|Z_1\|_2^p$ and the same bound holds for $p=2$ as above. Markov's inequality and the preceding deterministic bound complete the proof.

\end{proof}

\begin{theorem}[FCLT with Bias] \label{thm:fclt_init_bias}
Assume $G$ has a Lipschitz continuous third derivative with constant $L_4$. Let $n \to \infty$ and $\gamma \to 0$ simultaneously such that $\sqrt{n}\gamma \to c > 0$. Then, on every $D([\epsilon,1],\mathbb R^d)$ with $\epsilon>0$,
\begin{align*}
\frac{1}{\sqrt{n}} \sum_{k=1}^{\lfloor nt \rfloor} (\theta_k - \theta^\star) \Rightarrow -A^{-1} S^{1/2} B(t) + c t A^{-1} b + \frac{1}{c} A^{-1} \Delta_0 -  \frac{1}{c} A^{-1}\int_0^\infty R_{det}(x(s)) ds,
\end{align*}
where $B(t)$ is standard Brownian motion, $b = -\frac{1}{2} \nabla^3 G(\theta^\star)[\Gamma]$, $R_{det}(x):=\nabla G(\theta^\star + x) - \nabla^2 G(\theta^\star)x$ and $x(s)$ is the solution to the gradient-flow ODE $x(0) = \Delta_0$, $\dot{x}(s) = -\nabla G(\theta^\star + x(s))$
\end{theorem}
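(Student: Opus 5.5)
We split $\frac{1}{\sqrt n}\sum_{k\le nt}\Delta_k$ using the linearization $\Delta_k=Q^k\Delta_0-L_k+I_{1,k}+I_{2,k}$ and treat each piece separately. Lemma~\ref{sum FCLT lin iterates} gives $-\frac1{\sqrt n}\sum_{k\le nt}L_k\Rightarrow -A^{-1}S^{1/2}B(t)$ in $D([0,1])$, hence on $D([\epsilon,1])$. Lemma~\ref{Sum stochastic error to 0} kills the martingale part $I_{2,k}$ uniformly, since $n\gamma=\sqrt n\cdot\sqrt n\gamma\to\infty$ when $\sqrt n\gamma\to c$. We split $I_{1,k}$ into the centered Taylor part $\tilde I_{1,k}$, which vanishes uniformly by Lemma~\ref{Sum centered taylor error to 0} under the fourth moment, and the deterministic part $\gamma\sum_{j\le k}Q^{k-j}\mathbb E R_j$. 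It then suffices, by Slutsky, to show that the deterministic terms
\[
\frac1{\sqrt n}\sum_{k\le nt}\Bigl(Q^k\Delta_0+\gamma\sum_{j\le k}Q^{k-j}\mathbb E R_j\Bigr)
\]
converge uniformly on $[\epsilon,1]$ to $ctA^{-1}b+\frac1c\int_0^\infty x(s)\,ds$.

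\textbf{Initialization term.} Applying the Fubini swap,
\[
\sum_{k\le m}Q^k\Delta_0=\gamma^{-1}A^{-1}(Q-Q^{m+1})\Delta_0 .
\]
For $m\ge\epsilon n$ we have $\|Q^{m}\|\le e^{-\lambda_*\gamma\epsilon n}\to0$, because $\gamma n\sim c\sqrt n\to\infty$. Dividing by $\sqrt n$ gives $\frac{1}{\sqrt n\gamma}A^{-1}\Delta_0\to\frac1cA^{-1}\Delta_0$. This is why the statement is restricted to $[\epsilon,1]$: near $t=0$ the term $Q^{m+1}$ does not vanish.

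\textbf{Deterministic remainder.} The same swap writes the remainder as
\[
\sum_{j\le m}A^{-1}\bigl(I-Q^{m-j+1}\bigr)\mathbb E R_j .
\]
We split $-\mathbb E R_j=\mathbb E R_{det}(\Delta_{j-1})$ as
\[
\mathbb E R_{det}(\Delta_{j-1})=R_{det}(\mathbb E\Delta_{j-1})+\bigl[\mathbb E R_{det}(\Delta_{j-1})-R_{det}(\mathbb E\Delta_{j-1})\bigr].
\]
\emph{Bracketed (variance) part.} A second-order Taylor expansion about $\mathbb E\Delta_{j-1}$ gives $\tfrac12\nabla^3G(\theta^*+\mathbb E\Delta_{j-1})[\operatorname{Cov}\Delta_{j-1}]$ plus an error $O(L_4\|\Delta_{j-1}-\mathbb E\Delta_{j-1}\|_3^3)=O(\gamma^{3/2})$. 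We then replace $\operatorname{Cov}\Delta_{j-1}$ by $\gamma\Gamma$ using Lemma~\ref{linear covariance expansion} and the moment bound \eqref{eq:mom-main}. The errors are of order $\gamma^{3/2}$, $\gamma\|\mathbb E\Delta_{j-1}\|$, and $\gamma(1-\gamma\lambda)^{j}$ transients. Summing over $j\le nt$ and dividing by $\sqrt n$, the main term gives $-nt\gamma b/\sqrt n\to -ctb$, hence the contribution $ctA^{-1}b$ after the sign from $I_{1}$. The errors contribute $\sqrt n\gamma^{3/2}\to0$ and $O(1/(\sqrt n\gamma))\cdot o(1)$. The factor $Q^{m-j+1}$ is harmless because it only affects $O(1/\gamma)$ indices, giving $O(\gamma^{-1}\cdot\gamma)/\sqrt n\to0$.

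\emph{Mean part.} The key step is comparing $\mathbb E\Delta_{j}$ with the gradient flow $x(j\gamma)$. Taking expectations of the recursion gives
\[
\mathbb E\Delta_j=\mathbb E\Delta_{j-1}-\gamma\nabla G(\theta^*+\mathbb E\Delta_{j-1})+O(\gamma\cdot\gamma),
\]
using the variance correction just described. This is an Euler scheme for the flow. Strong convexity makes the scheme contractive, so a discrete Gr\"onwall bound gives
\[
\|\mathbb E\Delta_j-x(j\gamma)\|\le C\gamma\bigl(1+ j\gamma e^{-\mu j\gamma}\bigr)\quad\text{uniformly in }j .
\]
Since $\|R_{det}(y)\|\le C_L\|y\|^2$ and both sequences decay like $e^{-\lambda j\gamma}$ up to $O(\gamma)$, we get
\[
\gamma\sum_{j\le m}R_{det}(\mathbb E\Delta_{j-1})=\int_0^{m\gamma}R_{det}(x(s))\,ds+O(\gamma)+O(m\gamma^{2})\cdot\text{(small)} .
\]
Here we must check that the stationary $O(\gamma)$ offset of $\mathbb E\Delta$ enters $R_{det}$ only quadratically, giving $m\gamma^2\cdot\gamma$. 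Dividing by $\sqrt n\gamma\to c$, and using $m\gamma\ge\epsilon n\gamma\to\infty$ together with the exponential decay of $x(s)$, yields $-\frac1cA^{-1}\int_0^\infty R_{det}(x(s))\,ds$, uniformly in $t\in[\epsilon,1]$.

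The main obstacle is this Euler-versus-flow comparison. It must be tight enough that the global error, summed over $n\gg1/\gamma$ steps and divided by $\sqrt n$, still vanishes, and it must cleanly separate the $O(\gamma)$ stationary bias (already counted in $b$) from the transient. We will handle it by working with $e_j=\mathbb E\Delta_j-x(j\gamma)$ and the linear contraction $Q$, controlling the nonlinear part through $C_L$ and the decay \eqref{eq:mom-main}.
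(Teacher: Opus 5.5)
Your plan follows the paper's proof essentially step for step. It uses the same linearization and the same three lemmas for $L_k$, $I_{2,k}$ and $\tilde I_{1,k}$, then a Taylor expansion of $\mathbb E R_j$ about $\mathbb E\Delta_{j-1}$ and Lemma~\ref{linear covariance expansion} to extract $\gamma\Gamma$ and hence $ctA^{-1}b$. It then uses a contraction comparison $\|\mathbb E\Delta_{j-1}-x((j-1)\gamma)\|_2\le C\gamma$ and a Riemann sum for $\int_0^\infty R_{det}(x(s))\,ds$. The orders you state are the right ones.

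The gap is that the estimate the whole deterministic analysis rests on is asserted rather than proved, and it does not follow from the bound you cite. Write $S_{j-1}=\Delta_{j-1}-\mathbb E\Delta_{j-1}$. You need $(\mathbb E\|S_{j-1}\|_2^4)^{1/2}\le C\gamma$ \emph{uniformly in $j$, including the transient $j\le C/\gamma$}. This bound is used in three places:
\begin{itemize}
\item it makes your cubic Taylor remainder $O(\gamma^{3/2})$;
\item it makes the defect $\gamma\{\mathbb E\nabla G(\theta_{j-1})-\nabla G(\theta^*+\mathbb E\Delta_{j-1})\}$ in your Euler recursion $O(\gamma^2)$;
\item it controls $\operatorname{Cov}\Delta_{j-1}-\mathbb E L_{j-1}L_{j-1}^\top$.
\end{itemize}
The moment bound \eqref{eq:mom-main} only gives $\mathbb E\|S_{j-1}\|_2^2\le\rho_2^2(1-\alpha_0\gamma)^{j-1}+O(\gamma)$. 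With only that, the transient estimates degrade:
\begin{itemize}
\item the cubic remainders sum to $O(1/\gamma)$;
\item the Euler defect $\gamma\rho_2^2(1-\alpha_0\gamma)^{j-1}$, accumulated under the $(1-\lambda\gamma)$ contraction, only yields $\|\mathbb E\Delta_j-x(j\gamma)\|_2\le C\gamma+C\gamma j(1-\lambda\gamma)^{j-1}$, which is of order one at $j\approx1/\gamma$;
\item the pointwise bound $\|R_k\|_2\le C_0\|\Delta_{k-1}\|_2^2$ only gives $\mathbb E\|\tilde I_{1,j}\|_2^2$ of order $(\gamma j)^2(1-\lambda\gamma)^{2j}$.
\end{itemize}
Summed over $j$ and divided by $\sqrt n$, each of these leaves a term of order $1/(\sqrt n\gamma)\to1/c$, which does not vanish at this scaling. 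So the real difficulty in your ``main obstacle'' is this uniform variance bound, not the Euler bookkeeping.

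The paper gets the bound from Rio's inequality and the functional dependence bound \eqref{eq:dep-main} at order $4$:
\[
(\mathbb E\|S_{j-1}\|_2^4)^{1/2}\le3\sum_{t<j}\|\Delta_{j-1}-\Delta_{j-1}^{(t)}\|_4^2\le C\gamma .
\]
For the covariance replacement it also writes $\tilde R_k=(A-\nabla^2G(\tilde\theta_{k-1}))S_{k-1}-\mathbb E[(A-\nabla^2G(\tilde\theta_{k-1}))S_{k-1}]$, with the averaged Hessian from the proof. This gives $\mathbb E\|\tilde R_k\|_2^2\le C\gamma(1-\lambda\gamma)^{k-1}+C\gamma^2$, and makes every transient contribution $O(1/\sqrt n)$. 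With these two inputs your argument closes.

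One smaller omission concerns the boundary piece $A^{-1}\sum_{j\le m}Q^{m-j+1}R_{det}(\mathbb E\Delta_{j-1})$ left after your Fubini swap. It is of order $me^{-cm\gamma}/\sqrt n$, which is $O(1)$ for $m\approx1/\gamma$. So the restriction to $t\ge\epsilon$ is needed there as well, not only for the initialization term.
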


\begin{proof}
Using the exact linearization decomposition, we write $\Delta_k = (I_d - \gamma A)^k \Delta_0 - L_k + I_{1,k} + I_{2,k}$. By Lemma \ref{sum FCLT lin iterates}, the linear sequence $-\frac{1}{\sqrt{n}} \sum_{k=1}^{\lfloor nt \rfloor} L_k \Rightarrow -A^{-1}S^{1/2}B(t)$. By Lemma \ref{Sum stochastic error to 0}, $\frac{1}{\sqrt{n}} \sum_{k=1}^{\lfloor nt \rfloor} I_{2,k} \xrightarrow{\mathbb{P}} 0$.

For the initialization error,
\begin{align*}
\frac{1}{\sqrt{n}} \sum_{k=1}^{\lfloor nt \rfloor} (I_d - \gamma A)^k \Delta_0 &= \frac{1}{\sqrt{n}\gamma} A^{-1} (I_d - \gamma A) \Delta_0 - \frac{1}{\sqrt{n}\gamma} A^{-1} (I_d - \gamma A)^{\lfloor nt \rfloor+1} \Delta_0 \to \frac{1}{c} A^{-1} \Delta_0,
\end{align*}
since $\sqrt{n}\gamma \to c$ and $\| (I_d-\gamma A)^{\lfloor nt \rfloor} \|_2 \le e^{-\lambda_*\gamma(nt-1)} \to 0$.

For the rest of the proof we focus on $I_{1,k}$. Its centered part is uniformly negligible after division by $\sqrt n$ by Lemma \ref{Sum centered taylor error to 0}.
Now define
$$\begin{aligned}
& \mathcal{D}_m := \sum_{k=1}^m \mathbb{E}[I_{1,k}] - \sum_{k=1}^m \gamma \sum_{j=1}^k (I_d - \gamma A)^{k-j} \Big( \gamma b - R_{det}(x((j-1)\gamma)) \Big) \\
& = \gamma \sum_{k=1}^m \sum_{j=1}^k (I_d - \gamma A)^{k-j} \Big( \mathbb{E}[R_j] - \gamma b + R_{det}(x((j-1)\gamma)) \Big) 
\end{aligned}$$
and thus
$$\begin{aligned}
& \max_{1 \le m \le n} \|\mathcal{D}_m\|_2 \le \frac{2+\lambda^*/\lambda_*}{\lambda_*} \max_{1 \le m \le n} \left\| \sum_{j=1}^m \Big( \mathbb{E}[R_j] - \gamma b + R_{det}(x((j-1)\gamma)) \Big) \right\|_2 \text{, by Lemma \ref{lem:max_convolution_bound}}
\end{aligned}$$

Focusing on $\frac{1}{\sqrt{n}} \sum_{k=1}^{\lfloor nt \rfloor} \gamma \sum_{j=1}^k (I_d - \gamma A)^{k-j} \Big( \gamma b - R_{det}(x((j-1)\gamma)) \Big)$. By the Fubini sum swap,
$$\begin{aligned}
& \frac{1}{\sqrt{n}} \sum_{k=1}^{\lfloor nt \rfloor} \gamma \sum_{j=1}^k (I_d - \gamma A)^{k-j} \Big( \gamma b - R_{det}(x((j-1)\gamma)) \Big) \\
& = A^{-1} \frac{1}{\sqrt{n}} \sum_{j=1}^{\lfloor nt \rfloor} \Big( \gamma b - R_{det}(x((j-1)\gamma)) \Big) - A^{-1} \frac{1}{\sqrt{n}} \sum_{j=1}^{\lfloor nt \rfloor} (I_d - \gamma A)^{\lfloor nt \rfloor-j+1} \Big( \gamma b - R_{det}(x((j-1)\gamma)) \Big)
\end{aligned}$$
We first show the second term converges to zero uniformly on each $[\epsilon,1]$, with $\epsilon\in(0,1)$. Because $\gamma b$ is constant, its Fubini remainder sum is strictly bounded as
$$\begin{aligned}
& \left\| A^{-1} \frac{1}{\sqrt{n}} \sum_{j=1}^{\lfloor nt \rfloor} (I_d - \gamma A)^{\lfloor nt \rfloor - j + 1} \gamma b \right\|_2 \le \frac{\|A^{-1}\|_2}{\sqrt{n}} \sum_{j=1}^{\lfloor nt \rfloor} (1-\lambda_*\gamma)^{\lfloor nt \rfloor - j + 1} \gamma \|b\|_2 \\
& \le \frac{\|A^{-1}\|_2 \gamma \|b\|_2}{\sqrt{n}} \frac{1}{\lambda_*\gamma} = \frac{\|A^{-1}\|_2 \|b\|_2}{\lambda_* \sqrt{n}} \to 0
\end{aligned}$$

For $R_{det}$, we will show below towards the end of this proof that  $\|R_{det}(x((j-1)\gamma))\|_2 \le \frac{C_L}{2} \|\Delta_0\|_2^2 e^{-2\mu(j-1)\gamma}$. Hence,
$$\begin{aligned}
& \left\| A^{-1} \frac{1}{\sqrt{n}} \sum_{j=1}^{\lfloor nt \rfloor} (I_d - \gamma A)^{\lfloor nt \rfloor - j + 1} R_{det}(x((j-1)\gamma)) \right\|_2 \le \frac{\|A^{-1}\|_2 C_L \|\Delta_0\|_2^2}{2\sqrt{n}} \sum_{j=1}^{\lfloor nt \rfloor} (1-\lambda_*\gamma)^{\lfloor nt \rfloor - j + 1} e^{-2\mu(j-1)\gamma} \\
& \le \frac{\|A^{-1}\|_2 C_L \|\Delta_0\|_2^2}{2\sqrt{n}} \lfloor nt \rfloor e^{-\min(\lambda_*, 2\mu)(nt-1)\gamma} \quad \text{, bounding the convolution of geometric decays} \\
& \le \frac{\|A^{-1}\|_2 C_L \|\Delta_0\|_2^2}{2} t \sqrt{n} e^{-\min(\lambda_*,2\mu)(nt-1)\gamma} \to 0
\end{aligned}$$
For weak convergence in $D((0,1], \mathbb{R}^d)$, it suffices to show uniform convergence on compact subsets $[\epsilon, 1]$ (see \cite{billingsley1968convergence}). For any fixed $\epsilon > 0$ and $n$ large enough, the maximum on $[\epsilon, 1]$ occurs at $t = \epsilon$ and then it is clear that the above converges to zero since $-n \gamma$ scales as \(n\gamma/\log n\to\infty\). So moving on we show $\frac1{\sqrt n}\max_{1 \le m \le n} \left\| \sum_{j=1}^m \Big( \mathbb{E}[R_j] - \gamma b + R_{det}(x((j-1)\gamma)) \Big) \right\|_2 $ converges to zero.

Following \cite{NEURIPS2025_7aae9e3e}, (by local smoothness Assumption 3.2 in the paper), when $|\theta_{n-1}-\theta^\star| \le \kappa$, we apply Taylor's theorem and we have $|R_n| \le L|\theta_{n-1}-\theta^\star|^2$. For $|\theta_{n-1}-\theta^\star| > \kappa$, the Lipschitz continuity of the gradient implies $|R_n| \le 2L_2|\theta_{n-1}-\theta^\star| \le 2L_2\kappa^{-1}|\theta_{n-1}-\theta^\star|^2$. So we finally have $|R_n| \le C_0|\Delta_{n-1}|^2$ where $C_0=\max\{L,\,2L_2\kappa^{-1}\}$.  So $\mathbb{E}\|R_j\|_2 \le C_0 \mathbb{E}\|\Delta_{j-1}\|_2^2 \le C_0 \left( \rho_2^2(1-\alpha_0\gamma)^{j-1} + \frac{3M_2^2}{\alpha_0}\gamma \right)$ by  theorem 2.3 of \cite{NEURIPS2025_7aae9e3e}. 
Let $H_{j-1} = \mathbb{E}[\Delta_{j-1}]$ and $S_{j-1} = \Delta_{j-1} - H_{j-1}$ use the integral Taylor formula. To retain the notation in the calculations below, $\nabla^3G(\tilde\theta_{j-1})$ denotes the averaged tensor
\[
2\int_0^1(1-s)\nabla^3G(\theta^*+H_{j-1}+sS_{j-1})\,ds,
\]
not evaluation at a common intermediate point. Then
$$
\nabla G(\theta_{j-1}) = \nabla G(\theta^* + H_{j-1}) + \nabla^2 G(\theta^* + H_{j-1}) S_{j-1} + \frac{1}{2} \nabla^3 G(\tilde{\theta}_{j-1})[S_{j-1}, S_{j-1}]
$$
We first start with a bound for 
\begin{align*}
& \|S_{j-1}\|_4^2 = \left\| \sum_{t=1}^{j-1} \mathcal{P}_t(\theta_{j-1}) \right\|_4^2 \text{,with } \mathcal{P}_t(\cdot) = \mathbb{E}[\cdot|\mathcal{F}_t] - \mathbb{E}[\cdot|\mathcal{F}_{t-1}], \\
& \le \sum_{t=1}^{j-1} 3 \|\mathcal{P}_t(\theta_{j-1})\|_4^2  \text{, by Rio's Inequality for $p=4$, \cite{Rio2009}} \le 3 \sum_{t=1}^{j-1} \|\theta_{j-1} - \theta_{j-1}^{(t)}\|_4^2 \\
&\text{, from bounding the projection with the functional dependence where $\theta_{j-1}^{(t)}$ has an i.i.d. copy of $X_t$,} \leq \\
&  3 \sum_{t=1}^{j-1} \left( 2\sqrt{2}\gamma(1-\alpha_0\gamma)^{(j-1-t)/2}\sqrt{M_4^2 + L_4^2[(1-\alpha_0\gamma)^{t-1}\rho_4^2 + 9M_4^2\alpha_0^{-1}\gamma]} \right)^2  \text{, by Theorem 3.1 of \cite{NEURIPS2025_7aae9e3e}} \\
& \le 3 \sum_{t=1}^{j-1} \left( 2\sqrt{2}\gamma(1-\alpha_0\gamma)^{(j-1-t)/2}\sqrt{M_4^2 + L_4^2(\rho_4^2 + 9M_4^2\alpha_0^{-1}\gamma)} \right)^2  \\
& = 3 \sum_{t=1}^{j-1} 8\gamma^2 (1-\alpha_0\gamma)^{j-1-t} \left( M_4^2 + L_4^2(\rho_4^2 + 9M_4^2\alpha_0^{-1}\gamma) \right)\\
& \le 24 \gamma^2 \left( M_4^2 + L_4^2(\rho_4^2 + 9M_4^2\alpha_0^{-1}\gamma) \right) \sum_{i=0}^{\infty} (1-\alpha_0\gamma)^i   = \frac{24}{\alpha_0} \left( M_4^2 + L_4^2(\rho_4^2 + 9M_4^2\alpha_0^{-1}\gamma) \right) \gamma \\
& \implies \mathbb{E}\|S_{j-1}\|_2^4 = \left( \|S_{j-1}\|_4^2 \right)^2  \le \left( \frac{24}{\alpha_0} \left( M_4^2 + L_4^2(\rho_4^2 + 9M_4^2\alpha_0^{-1}\gamma) \right) \gamma \right)^2  \\
& = \frac{576}{\alpha_0^2} \left( M_4^2 + L_4^2(\rho_4^2 + 9M_4^2\alpha_0^{-1}\gamma) \right)^2 \gamma^2  \le C_S \gamma^2  \text{, for  $C_S := \frac{576}{\alpha_0^2} \left( M_4^2 + L_4^2(\rho_4^2 + 9M_4^2\alpha_0^{-1}\bar\gamma) \right)^2$}.
\end{align*}
Now,
\begin{align*}
& R_j = A \Delta_{j-1} - \nabla G(\theta_{j-1})  \\
& = A (H_{j-1} + S_{j-1}) - \left( \nabla G(\theta^* + H_{j-1}) + \nabla^2 G(\theta^* + H_{j-1}) S_{j-1} + \frac{1}{2} \nabla^3 G(\tilde{\theta}_{j-1})[S_{j-1}, S_{j-1}] \right)  \\
& \implies \mathbb{E}[R_j] = A H_{j-1} - \nabla G(\theta^* + H_{j-1}) - \frac{1}{2} \mathbb{E}\Big[ \nabla^3 G(\tilde{\theta}_{j-1})[S_{j-1}, S_{j-1}] \Big] \\
& = -(\nabla G(\theta^* + H_{j-1}) - A H_{j-1}) - \frac{1}{2} \mathbb{E}\Big[ \nabla^3 G(\tilde{\theta}_{j-1})[S_{j-1}, S_{j-1}] \Big] 
\end{align*}

For the third derivative remainder we have that $$- \frac{1}{2} \mathbb{E}\Big[ \nabla^3 G(\tilde{\theta}_{j-1})[S_{j-1}, S_{j-1}] \Big] = - \frac{1}{2} \nabla^3 G(\theta^*) \big[ \mathbb{E}[S_{j-1} S_{j-1}^\top] \big] - \frac{1}{2} \mathbb{E}\Big[ \big( \nabla^3 G(\tilde{\theta}_{j-1}) - \nabla^3 G(\theta^*) \big) [S_{j-1}, S_{j-1}] \Big] $$ and
\begin{align*}
& \left\| \mathbb{E}\Big[ \big( \nabla^3 G(\tilde{\theta}_{j-1}) - \nabla^3 G(\theta^*) \big) [S_{j-1}, S_{j-1}] \Big] \right\|_2 \le 2L_4\int_0^1(1-s)\|H_{j-1}+sS_{j-1}\|_2\|S_{j-1}\|_4^2\,ds \\
& \le L_4 \|\Delta_{j-1}\|_2 \|S_{j-1}\|_4^2
\text{, since }\|H_{j-1}+sS_{j-1}\|_2\le\|\Delta_{j-1}\|_2.\\
& \le L_4 \sqrt{ \rho_2^2(1-\alpha_0\gamma)^{j-1} + 3M_2^2\alpha_0^{-1}\gamma } \sqrt{ C_S \gamma^2 } \text{, by Theorem 2.3 of \cite{NEURIPS2025_7aae9e3e}} \\
& \le L_4 \sqrt{C_S} \gamma \left( \rho_2(1-\alpha_0\gamma)^{(j-1)/2} + \sqrt{3} M_2 \alpha_0^{-1/2} \sqrt{\gamma} \right) \\
& \implies  \frac{1}{2\sqrt{n}} \sum_{j=1}^{\lfloor nt \rfloor} \left\| \mathbb{E}\Big[ \big( \nabla^3 G(\tilde{\theta}_{j-1}) - \nabla^3 G(\theta^*) \big) [S_{j-1}, S_{j-1}] \Big] \right\|_2 \\
&\le \frac{L_4 \rho_2 \sqrt{C_S}}{2\sqrt{n}} \gamma \sum_{j=1}^{\lfloor nt \rfloor} (1-\alpha_0\gamma)^{(j-1)/2} + \frac{\lfloor nt \rfloor}{2\sqrt{n}} L_4 \sqrt{3C_S} M_2 \alpha_0^{-1/2} \gamma^{1.5}  \\
& \le \frac{L_4 \rho_2 \sqrt{C_S}}{2\sqrt{n}} \frac{\gamma}{1-(1-\alpha_0\gamma)^{1/2}} + t \frac{L_4 \sqrt{3C_S} M_2}{2\alpha_0^{1/2}} \sqrt{n} \gamma^{1.5} \\
& \le \frac{L_4 \rho_2 \sqrt{C_S}}{2\sqrt{n}} \frac{2\gamma}{\alpha_0\gamma} + t \frac{L_4 \sqrt{3C_S} M_2}{2\alpha_0^{1/2}} (\sqrt{n}\gamma)\sqrt{\gamma} \text{, as } 1-(1-x)^{1/2} \ge x/2 \\
& = \frac{L_4 \rho_2 \sqrt{C_S}}{\alpha_0 \sqrt{n}} + t \frac{L_4 \sqrt{3C_S} M_2}{2\alpha_0^{1/2}} (\sqrt{n}\gamma)\sqrt{\gamma} \\
& \to 0 + c \cdot 0 = 0
\end{align*}

We now analyze 
$A^{-1} \frac{1}{\sqrt{n}} \sum_{j=1}^{\lfloor nt \rfloor} - \frac{1}{2} \nabla^3 G(\theta^*) \mathbb{E}[S_{j-1} S_{j-1}^\top]$.
We first start by showing that the nonlinear remainder converges to zero.
Let $ \phi_{j-1} = \tilde{I}_{1, j-1} + I_{2, j-1}$ with $\tilde{I}_{1, j-1} = I_{1, j-1} - \mathbb{E}[ I_{1, j-1}]$ then
\begin{align*}
& \mathbb{E}[S_{j-1} S_{j-1}^\top] - \mathbb{E}[L_{j-1} L_{j-1}^\top] = \mathbb{E}[(\phi_{j-1} - L_{j-1})(\phi_{j-1} - L_{j-1})^\top] - \mathbb{E}[L_{j-1} L_{j-1}^\top]  \\
& = \mathbb{E}[\phi_{j-1} \phi_{j-1}^\top] - \mathbb{E}[\phi_{j-1} L_{j-1}^\top] - \mathbb{E}[L_{j-1} \phi_{j-1}^\top]  \implies \|\mathcal{E}_{j-1}\|_2 \le \|\mathbb{E}[\phi_{j-1} \phi_{j-1}^\top]\|_2 + 2\|\mathbb{E}[L_{j-1} \phi_{j-1}^\top]\|_2 \\
&\le \mathbb{E}\|\phi_{j-1}\|_2^2 + 2 \mathbb{E}\big[\|L_{j-1}\|_2 \|\phi_{j-1}\|_2\big]  \le \mathbb{E}\|\phi_{j-1}\|_2^2 + 2 \sqrt{\mathbb{E}\|L_{j-1}\|_2^2 \mathbb{E}\|\phi_{j-1}\|_2^2}  \text{, by Cauchy-Schwarz}
\end{align*}

We first start with a bound on the norm of the linear terms:
\begin{align*}
& \mathbb{E}\|L_{j-1}\|_2^2 = \mathbb{E} \left\| \gamma \sum_{k=1}^{j-1} (I_d - \gamma A)^{j-1-k} Z_k \right\|_2^2 = \gamma^2 \sum_{k=1}^{j-1} \mathbb{E} \| (I_d - \gamma A)^{j-1-k} Z_k \|_2^2  \\
& \le \gamma^2 \sum_{k=1}^{j-1} (1-\lambda\gamma)^{2(j-1-k)} \mathbb{E} \| Z_k \|_2^2 \le \gamma^2 M_2^2 \sum_{i=0}^\infty (1-\lambda\gamma)^{2i}  = \gamma^2 M_2^2 \frac{1}{1 - (1-\lambda\gamma)^2} \le \gamma^2 M_2^2 \frac{1}{\lambda\gamma} = \frac{M_2^2}{\lambda} \gamma
\end{align*}

Now for the nonlinear remainder we have that $\mathbb{E}\|\phi_{j-1}\|_2^2 \le 2\mathbb{E}\|\tilde{I}_{1, j-1}\|_2^2 + 2\mathbb{E}\|I_{2, j-1}\|_2^2 $ so starting with the second term
\begin{align*}
& \mathbb{E}\|I_{2, j-1}\|_2^2 = \mathbb{E} \left\| \gamma \sum_{k=1}^{j-1} (I_d - \gamma A)^{j-1-k} D_k \right\|_2^2 = \gamma^2 \sum_{k=1}^{j-1} \mathbb{E} \| (I_d - \gamma A)^{j-1-k} D_k \|_2^2  \text{, since } D_k \text{ is a martingale} \\
& \le \gamma^2 \sum_{k=1}^{j-1} (1-\lambda\gamma)^{2(j-1-k)} 4 L_2^2 \mathbb{E}\|\Delta_{k-1}\|_2^2  \text{, by the stochastic lipschitzness assumption} \\
& \le 4 L_2^2 \gamma^2 \sum_{k=1}^{j-1} (1-\lambda\gamma)^{2(j-1-k)} \left( \rho_2^2 (1-\alpha_0\gamma)^{k-1} + 3 M_2^2 \alpha_0^{-1} \gamma \right)  \text{, by Theorem 2.3  of \cite{NEURIPS2025_7aae9e3e}} \\
& \le 4 L_2^2 \rho_2^2 \gamma^2 (j-1) (1-\lambda\gamma)^{j-2} + 12 L_2^2 M_2^2 \alpha_0^{-1} \gamma^3 \frac{1}{\lambda\gamma}  \\
& = C_{\phi 1} \gamma^2 (j-1) (1-\lambda\gamma)^{j-2} + C_{\phi 2} \gamma^2  \text{, where } C_{\phi 1} = 4 L_2^2 \rho_2^2 \text{ and } C_{\phi 2} = \frac{12 L_2^2 M_2^2}{\lambda \alpha_0}
\end{align*}

Now for the first term, use the integral mean-value formula, with $\nabla^2G(\tilde\theta_{k-1})$ denoting $\int_0^1\nabla^2G(\theta^*+H_{k-1}+sS_{k-1})\,ds$. Then
\begin{align*}
& \nabla G(\theta_{k-1}) = \nabla G(\theta^* + H_{k-1}) + \nabla^2 G(\tilde{\theta}_{k-1}) S_{k-1} \\
& \implies R_k = A \Delta_{k-1} - \nabla G(\theta_{k-1}) = \big(A H_{k-1} - \nabla G(\theta^* + H_{k-1})\big) + \big(A - \nabla^2 G(\tilde{\theta}_{k-1})\big) S_{k-1}  \\
& \implies \tilde{R}_k = R_k - \mathbb{E}[R_k] = \big(A - \nabla^2 G(\tilde{\theta}_{k-1})\big) S_{k-1} - \mathbb{E}[\big(A - \nabla^2 G(\tilde{\theta}_{k-1})\big) S_{k-1}] \\
& \implies \mathbb{E}\|\tilde{R}_k\|_2^2 = \mathbb{E}\|\big(A - \nabla^2 G(\tilde{\theta}_{k-1})\big) S_{k-1} - \mathbb{E}[\big(A - \nabla^2 G(\tilde{\theta}_{k-1})\big) S_{k-1}]\|_2^2 \le \mathbb{E}\|\big(A - \nabla^2 G(\tilde{\theta}_{k-1})\big) S_{k-1}\|_2^2  \\
\end{align*}

The central Hessian bound and the integral representation give
\[
\|A-\nabla^2G(\tilde\theta_{k-1})\|_2
\le C_L(2\|H_{k-1}\|_2+\|S_{k-1}\|_2),
 ,  C_L=\max\{L,2L_2/\kappa\}.
\]
Hence,
\begin{align*}
\mathbb E\|\tilde R_k\|_2^2
&\le 8C_L^2\|H_{k-1}\|_2^2\mathbb E\|S_{k-1}\|_2^2
+2C_L^2\mathbb E\|S_{k-1}\|_2^4\\
&\le C_{R1}\gamma(1-\lambda\gamma)^{k-1}+C_{R2}\gamma^2,
\end{align*}
where $C_{R1}=8C_L^2\rho_2^2C_S^{1/2}$ and
$C_{R2}=2C_L^2C_S+24C_L^2M_2^2\alpha_0^{-1}C_S^{1/2}$.
with this centered bound on the iterates we bound
\begin{align*}
& \mathbb{E}\|\tilde{I}_{1, j-1}\|_2^2 = \mathbb{E} \left\| \gamma \sum_{k=1}^{j-1} (I_d - \gamma A)^{j-1-k} \tilde{R}_k \right\|_2^2 \le \left( \gamma \sum_{k=1}^{j-1} (1-\lambda\gamma)^{j-1-k} \sqrt{\mathbb{E}\|\tilde{R}_k\|_2^2} \right)^2 \\
& \le \left( \gamma \sum_{k=1}^{j-1} (1-\lambda\gamma)^{j-1-k} \left( \sqrt{C_{R1}} \sqrt{\gamma} (1-\lambda\gamma)^{(k-1)/2} + \sqrt{C_{R2}} \gamma \right) \right)^2  \\
& \le \left( \sqrt{C_{R1}} \gamma^{1.5} (j-1) (1-\lambda\gamma/2)^{j-2} + \sqrt{C_{R2}} \gamma^2 \frac{1}{\lambda\gamma} \right)^2  \\
& \le 2 C_{R1} \gamma^3 (j-1)^2 (1-\lambda\gamma/2)^{2j-4} + \frac{2 C_{R2}}{\lambda^2} \gamma^2 \\ 
&= C_{R3} \gamma^3 (j-1)^2 (1-\lambda\gamma/2)^{2j-4} + C_{R4} \gamma^2  \text{, defining} C_{R3}=2C_{R1}=16 C_L^2 \rho_2^2 C_S^{1/2}, C_{R4}=2C_{R2}/\lambda^2
\end{align*}
so we can bound the main terms by
\begin{align*}
& \mathbb{E}\|\phi_{j-1}\|_2^2 \le 2 C_{\phi 1} \gamma^2 (j-1) (1-\lambda\gamma)^{j-2} + 2 C_{R3} \gamma^3 (j-1)^2 (1-\lambda\gamma/2)^{2j-4} + 2(C_{\phi 2} + C_{R4}) \gamma^2 \implies\\
& \frac{1}{\sqrt{n}} \sum_{j=1}^{\lfloor nt \rfloor} \mathbb{E}\|\phi_{j-1}\|_2^2 \\
&\le \frac{1}{\sqrt{n}} \sum_{j=2}^{\lfloor nt \rfloor} \left( 2 C_{\phi 1} \gamma^2 (j-1) (1-\lambda\gamma)^{j-2} + 2 C_{R3} \gamma^3 (j-1)^2 (1-\lambda\gamma/2)^{2j-4}\right) + \frac{1}{\sqrt{n}} \sum_{j=1}^{\lfloor nt \rfloor}2(C_{\phi 2} + C_{R4}) \gamma^2    \\
& \le \frac{2 C_{\phi 1} \gamma^2}{\sqrt{n}} \frac{1}{(\lambda\gamma)^2} + \frac{2 C_{R3} \gamma^3}{\sqrt{n}} \frac{2}{(\lambda\gamma/2)^3} + \frac{\lfloor nt \rfloor}{\sqrt{n}} 2(C_{\phi 2} + C_{R4}) \gamma^2   \\
& = \frac{2 C_{\phi 1}}{\lambda^2 \sqrt{n}} + \frac{32 C_{R3}}{\lambda^3 \sqrt{n}} + \frac{\lfloor nt \rfloor}{\sqrt{n}} 2(C_{\phi 2} + C_{R4}) \gamma^2   \\
& \to 0 + 0 + t c \cdot 0 = 0  \text{, since } \sqrt{n}\gamma \to c \text{, and } \gamma \to 0
\end{align*}

and thus it only remains to analyze the cross terms
\begin{align*}
& \mathbb{E}\big[\|L_{j-1}\|_2 \|\phi_{j-1}\|_2\big] \\
&\le \sqrt{ \left( \frac{M_2^2}{\lambda} \gamma \right) \left( 2 C_{\phi 1} \gamma^2 (j-1) (1-\lambda\gamma)^{j-2} + 2 C_{R3} \gamma^3 (j-1)^2 (1-\lambda\gamma/2)^{2j-4} + 2(C_{\phi 2} + C_{R4}) \gamma^2 \right) }  \\
& \le C_{X1} \gamma^{1.5} \sqrt{j-1} (1-\lambda\gamma)^{(j-2)/2} + C_{X2} \gamma^2 (j-1) (1-\lambda\gamma/2)^{j-2} + C_{X3} \gamma^{1.5}  \\
&\text{, for } C^2_{X1} = \frac{M_2^2 2 C_{\phi 1}}{\lambda},  C^2_{X2} = \frac{M_2^2 2 C_{R 3}}{\lambda},  C^2_{X3} = \frac{M_2^2 2 (C_{\phi 2}+C_{R 4})}{\lambda},\\
&\text{Using Cauchy--Schwarz for the first geometric sum,}\\
&\sum_{r=1}^\infty\sqrt r\,(1-\lambda\gamma)^{(r-1)/2}
\le\left(\sum_{r\ge1}r(1-\lambda\gamma)^{(r-1)/2}\right)^{1/2}
\left(\sum_{r\ge1}(1-\lambda\gamma)^{(r-1)/2}\right)^{1/2}
\le\left(\frac2{\lambda\gamma}\right)^{3/2},\\
&\frac{2}{\sqrt n}\sum_{j=1}^{\lfloor nt\rfloor}
\mathbb E[\|L_{j-1}\|_2\|\phi_{j-1}\|_2]
\le\frac{2^{5/2}C_{X1}}{\lambda^{3/2}\sqrt n}
+\frac{8C_{X2}}{\lambda^2\sqrt n}
+2t\sqrt n C_{X3}\gamma^{3/2}\longrightarrow0,\\
& \implies \frac{1}{\sqrt{n}} \sum_{j=1}^{\lfloor nt \rfloor} \|\mathcal{E}_{j-1}\|_2 \le \frac{1}{\sqrt{n}} \sum_{j=1}^{\lfloor nt \rfloor} \mathbb{E}\|\phi_{j-1}\|_2^2 + \frac{1}{\sqrt{n}} \sum_{j=1}^{\lfloor nt \rfloor} 2 \mathbb{E}\big[\|L_{j-1}\|_2 \|\phi_{j-1}\|_2\big] \to 0 
\end{align*}

With this it suffices to analyze the linear term, $\mathbb{E}[L_{j-1} L_{j-1}^\top] = \gamma \Gamma - \gamma (I_d - \gamma A)^{j-1} \Gamma (I_d - \gamma A)^{j-1} + \gamma^3 \sum_{i=0}^{j-2} (I_d - \gamma A)^i A \Gamma A (I_d - \gamma A)^i$  by lemma \ref{linear covariance expansion} so

\begin{align*}
& A^{-1} \frac{1}{\sqrt{n}} \sum_{j=1}^{\lfloor nt \rfloor} - \frac{1}{2} \nabla^3 G(\theta^*) \big[ \mathbb{E}[L_{j-1} L_{j-1}^\top] \big] = \\
&A^{-1} \frac{1}{\sqrt{n}} \sum_{j=1}^{\lfloor nt \rfloor} - \frac{1}{2} \nabla^3 G(\theta^*) \Big[ \gamma \Gamma - \gamma (I_d - \gamma A)^{j-1} \Gamma (I_d - \gamma A)^{j-1} + \gamma^3 \sum_{i=0}^{j-2} (I_d - \gamma A)^i A \Gamma A (I_d - \gamma A)^i \Big]  \\
& = \frac{\lfloor nt \rfloor}{n} (\sqrt{n}\gamma) A^{-1} b + \frac{1}{2\sqrt{n}} A^{-1} \nabla^3 G(\theta^*) \left[ \sum_{j=1}^{\lfloor nt \rfloor} \gamma (I_d - \gamma A)^{j-1} \Gamma (I_d - \gamma A)^{j-1} \right] \\
&+ A^{-1} \frac{1}{\sqrt{n}} \sum_{j=1}^{\lfloor nt \rfloor} - \frac{1}{2} \nabla^3 G(\theta^*) \Big[ \gamma^3 \sum_{i=0}^{j-2} (I_d - \gamma A)^i A \Gamma A (I_d - \gamma A)^i \Big]  \text{, defining } b = - \frac{1}{2} \nabla^3 G(\theta^*)[\Gamma] \\
& \to c t A^{-1} b 
\end{align*}
since for the first term
\begin{align*}
& \left\| \frac{1}{2\sqrt{n}} A^{-1} \nabla^3 G(\theta^*) \left[ \sum_{j=1}^{\lfloor nt \rfloor} \gamma (I_d - \gamma A)^{j-1} \Gamma (I_d - \gamma A)^{j-1} \right] \right\|_2 \\
&\le \frac{\gamma \|A^{-1}\|_2 \|\nabla^3 G(\theta^*)\|_2 \|\Gamma\|_2}{2\sqrt{n}} \sum_{j=1}^{\lfloor nt \rfloor} \|I_d - \gamma A\|_2^{2j-2}  \\
& \le \frac{\gamma \|A^{-1}\|_2 \|\nabla^3 G(\theta^*)\|_2 \|\Gamma\|_2}{2\sqrt{n}} \sum_{j=1}^\infty (1-\lambda_*\gamma)^{2j-2} = \frac{\gamma \|A^{-1}\|_2 \|\nabla^3 G(\theta^*)\|_2 \|\Gamma\|_2}{2\sqrt{n}} \frac{1}{1 - (1-\lambda_*\gamma)^2}  \\
& \le \frac{\gamma \|A^{-1}\|_2 \|\nabla^3 G(\theta^*)\|_2 \|\Gamma\|_2}{2\sqrt{n} (\lambda_*\gamma)}  = \frac{\|A^{-1}\|_2 \|\nabla^3 G(\theta^*)\|_2 \|\Gamma\|_2}{2\lambda_*\sqrt{n}} \to 0 
\end{align*}
and for the second term
\begin{align*}
& \left\| A^{-1} \frac{1}{\sqrt{n}} \sum_{j=1}^{\lfloor nt \rfloor} - \frac{1}{2} \nabla^3 G(\theta^*) \Big[ \gamma^3 \sum_{i=0}^{j-2} (I_d - \gamma A)^i A \Gamma A (I_d - \gamma A)^i \Big] \right\|_2  \\
& \le \frac{\gamma^3}{2\sqrt{n}} \|A^{-1}\|_2 \|\nabla^3 G(\theta^*)\|_2 \|A \Gamma A\|_2 \sum_{j=1}^{\lfloor nt \rfloor} \sum_{i=0}^{j-2} (1-\lambda_*\gamma)^{2i} \\
& \le \frac{\gamma^3}{2\sqrt{n}} \|A^{-1}\|_2 \|\nabla^3 G(\theta^*)\|_2 \|A \Gamma A\|_2 \sum_{j=1}^{\lfloor nt \rfloor} \frac{1}{\lambda_*\gamma} = \frac{\lfloor nt \rfloor \gamma^2}{2\lambda_*\sqrt{n}} \|A^{-1}\|_2 \|\nabla^3 G(\theta^*)\|_2 \|A \Gamma A\|_2  \\
& \le \frac{t (\sqrt{n}\gamma) \gamma}{2\lambda_*} \|A^{-1}\|_2 \|\nabla^3 G(\theta^*)\|_2 \|A \Gamma A\|_2  \to \frac{t \cdot c \cdot 0}{2\lambda_*} \|A^{-1}\|_2 \|\nabla^3 G(\theta^*)\|_2 \|A \Gamma A\|_2 = 0 
\end{align*}
Hence, $A^{-1} \frac{1}{\sqrt{n}} \sum_{j=1}^{\lfloor nt \rfloor} - \frac{1}{2} \nabla^3 G(\theta^*)\mathbb{E}[S_{j-1} S_{j-1}^\top] \rightarrow  c t A^{-1} b$ for $b = - \frac{1}{2} \nabla^3 G(\theta^*)[\Gamma]$.

So we finally analyze the remaining $-(\nabla G(\theta^* + H_{j-1}) - A H_{j-1})$ term in  $\mathbb{E}[R_j] = -(\nabla G(\theta^* + H_{j-1}) - A H_{j-1}) - \frac{1}{2} \mathbb{E}\Big[ \nabla^3 G(\tilde{\theta}_{j-1})[S_{j-1}, S_{j-1}] \Big] $.
Now since $H_j := \mathbb{E}[\Delta_j]$ we have that $H_j = H_{j-1} - \gamma \mathbb{E}[\nabla G(\theta_{j-1})]$ and define $R_{det}(x):= \nabla G(\theta^* + x) - A x$ so
\begin{align*}
& R_{det}(x) - R_{det}(y) = \nabla G(\theta^* + x) - \nabla G(\theta^* + y) - A(x-y)  \\
& = \int_0^1 \nabla^2 G(\theta^* + y + \tau(x-y))(x-y) d\tau - A(x-y)    \\
& = \int_0^1 \big( \nabla^2 G(\theta^* + y + \tau(x-y)) - \nabla^2 G(\theta^*) \big) (x-y) d\tau  \\
& \implies \|R_{det}(x) - R_{det}(y)\|_2 \le \int_0^1 \left\| \nabla^2 G(\theta^* + y + \tau(x-y)) - \nabla^2 G(\theta^*) \right\|_2 \|x-y\|_2 d\tau  \\
& \le \int_0^1 C_L \| y + \tau(x-y) \|_2 \|x-y\|_2 d\tau   \text{, by the same logic as above for lipschitzness of the Hessian} \\
& \leq \frac{C_L}{2} \big( \|x\|_2 + \|y\|_2 \big) \|x-y\|_2  
\end{align*}
Now define the solution to gradient flow ODE $\dot{x}(s) = - \nabla G(\theta^* + x(s))$ with $x(0) = \Delta_0$ so 
\begin{align*}
& \frac{d}{ds} \|x(s)\|_2^2 = -2 \langle x(s), \nabla G(\theta^* + x(s)) - \nabla G(\theta^*) \rangle \le -2\ \mu \|x(s)\|_2^2   \text{, by } \mu \text{-strong convexity of } G \\
& \implies \|x(s)\|_2 \le e^{-\ \mu s} \|\Delta_0\|_2 
\end{align*}
so 

\begin{align*}
& x(k \gamma) - x((k-1) \gamma) = \int_{(k-1) \gamma}^{k \gamma} \dot{x}(s) ds = - \int_{(k-1) \gamma}^{k \gamma} \nabla G(\theta^* + x(s)) ds  \\
& = -\gamma \nabla G(\theta^* + x((k-1) \gamma)) - \int_{(k-1) \gamma}^{k \gamma} \big( \nabla G(\theta^* + x(s)) - \nabla G(\theta^* + x((k-1) \gamma)) \big) ds \implies \\
& |x((k-1) \gamma) - \gamma \nabla G(\theta^* + x((k-1) \gamma)) - x(k \gamma)| = |\int_{(k-1) \gamma}^{k \gamma} \big( \nabla G(\theta^* + x(s)) - \nabla G(\theta^* + x((k-1) \gamma)) \big) ds| \\
& \le \int_{(k-1) \gamma}^{k \gamma} L_2 \|x(s) - x((k-1) \gamma)\|_2 ds  \le \int_{(k-1) \gamma}^{k \gamma} L_2 \left( \int_{(k-1) \gamma}^s \|\dot{x}(\tau)\|_2 d\tau \right) ds \\
& \le \int_{(k-1) \gamma}^{k \gamma} L_2 \left( \int_{(k-1) \gamma}^s L_2 \|x(\tau)\|_2 d\tau \right) ds   \text{, since } \|\dot{x}(\tau)\|_2 = \|\nabla G(\theta^* + x(\tau)) - \nabla G(\theta^*) \| \le L_2 \|x(\tau)\|_2 \\
& \le \int_{(k-1) \gamma}^{k \gamma} L_2^2 \|\Delta_0\|_2 e^{-\ \mu (k-1) \gamma} (s - (k-1) \gamma) ds   \text{, by bounding } \|x(\tau)\|_2 \le \|\Delta_0\|_2 e^{-\ \mu (k-1) \gamma}\\
& = \frac{1}{2} L_2^2 \|\Delta_0\|_2 e^{-\ \mu (k-1) \gamma} \gamma^2 
\end{align*}
We now bound the difference between $H_k$ and $x(k \gamma)$, and define $r_k:= (\mathbb{E}[\nabla G(\theta_{k-1})] - \nabla G(\theta^* + H_{k-1}))$ and $\delta_k:= x((k-1) \gamma) - \gamma \nabla G(\theta^* + x((k-1) \gamma)) - x(k \gamma)$ so
\begin{align*}
& | H_k - x(k \gamma) | = |H_{k-1} - \gamma \nabla G(\theta^* + H_{k-1}) - \gamma r_k - \big( x((k-1) \gamma) - \gamma \nabla G(\theta^* + x((k-1) \gamma)) - \delta_k \big)|  \\
& = |\big( I_d - \gamma \nabla G \big)(\theta^* + H_{k-1}) - \big( I_d - \gamma \nabla G \big)(\theta^* + x((k-1) \gamma)) - \gamma r_k + \delta_k|   \\
&\le \big| \big( I_d - \gamma \nabla G \big)(\theta^* + H_{k-1}) - \big( I_d - \gamma \nabla G \big)(\theta^* + x((k-1) \gamma)) \big| + \gamma |r_k| + |\delta_k|  \\
& \le (1 - \lambda \gamma) |H_{k-1} - x((k-1) \gamma)| + \gamma |r_k| + |\delta_k|   \text{, by the same proof as Theorem 2.3 of \cite{NEURIPS2025_7aae9e3e} } \\
& \le (1 - \lambda \gamma) \|H_{k-1} - x((k-1) \gamma)\|_2 + \gamma \big( C_{\delta 1} \gamma + C_{\delta 2} \gamma (1-\alpha_0\gamma)^{(k-1)/2} + C_{\delta 3} \gamma^{1.5} \big) + \frac{L_2^2 \|\Delta_0\|_2}{2}  \gamma^2 e^{-\ \mu (k-1)\gamma}   
\end{align*}

where the bound on $\gamma |r_k|$ comes from
\begin{align*}
& \mathbb{E}[\nabla G(\theta_{j-1})] = \nabla G(\theta^* + H_{j-1}) + \mathbb{E}\Big[ \nabla^2 G(\theta^* + H_{j-1}) S_{j-1} + \frac{1}{2} \nabla^3 G(\tilde{\theta}_{j-1})[S_{j-1}, S_{j-1}] \Big]  \\
& = \nabla G(\theta^* + H_{j-1}) + \frac{1}{2} \mathbb{E}\Big[ \nabla^3 G(\tilde{\theta}_{j-1})[S_{j-1}, S_{j-1}] \Big]  \\
& \implies \left\| \mathbb{E}[\nabla G(\theta_{j-1})] - \nabla G(\theta^* + H_{j-1}) \right\|_2 = \frac{1}{2} \left\| \mathbb{E}\Big[ \nabla^3 G(\tilde{\theta}_{j-1})[S_{j-1}, S_{j-1}] \Big] \right\|_2  \\
& \le \frac{1}{2} \|\nabla^3 G(\theta^*)\|_2 \mathbb{E}\|S_{j-1}\|_2^2 + \frac{1}{2} \left\| \mathbb{E}\Big[ \big( \nabla^3 G(\tilde{\theta}_{j-1}) - \nabla^3 G(\theta^*) \big) [S_{j-1}, S_{j-1}] \Big] \right\|_2 \\
& \le \frac{1}{2} \|\nabla^3 G(\theta^*)\|_2 C_S^{1/2} \gamma + \frac{L_4}{2} \sqrt{C_S} \gamma \left( \rho_2(1-\alpha_0\gamma)^{(j-1)/2} + \sqrt{3} M_2 \alpha_0^{-1/2} \sqrt{\gamma} \right) \\ &\text{, by our prior bound on  } \left\| \mathbb{E}\Big[ \big( \nabla^3 G(\tilde{\theta}_{j-1}) - \nabla^3 G(\theta^*) \big) [S_{j-1}, S_{j-1}] \Big] \right\|_2, \\
& = C_{\delta 1} \gamma + C_{\delta 2} \gamma (1-\alpha_0\gamma)^{(j-1)/2} + C_{\delta 3} \gamma^{1.5}  \\
&\text{, for } C_{\delta 1} = \frac{1}{2} \|\nabla^3 G(\theta^*)\|_2 C_S^{1/2} \text{, } C_{\delta 2} = \frac{L_4 \rho_2 \sqrt{C_S}}{2} \text{, and } C_{\delta 3} = \frac{L_4 \sqrt{3C_S} M_2}{2\sqrt{\alpha_0}}
\end{align*}

so for $\gamma\le1/\mu$, bounding $(1-\alpha_0\gamma)^{(k-1)/2}\le1$ yields
\begin{align*}
\|H_{j-1}-x((j-1)\gamma)\|_2
&\le\frac{C_{\delta1}+C_{\delta2}+C_{\delta3}\sqrt\gamma}{\lambda}\gamma
+\frac{L_2^2\|\Delta_0\|_2}{2}\frac{\gamma^2}{1-e^{-\mu\gamma}}\\
&\le C_E\gamma,
\end{align*}
where
\[
C_E=\frac{C_{\delta1}+C_{\delta2}+C_{\delta3}\sqrt{\bar\gamma}}{\lambda}
+\frac{L_2^2\|\Delta_0\|_2}{\mu}.
\]
 and
\begin{align*}
& \|H_{j-1}\|_2 = \|\mathbb{E}[\Delta_{j-1}]\|_2 \le \mathbb{E}\|\Delta_{j-1}\|_2 \le \left( \mathbb{E}\|\Delta_{j-1}\|_2^2 \right)^{1/2}   \\
& \le \left( \rho_2^2 (1-\alpha_0\gamma)^{j-1} + 3 M_2^2 \alpha_0^{-1} \gamma \right)^{1/2}   \text{, by Theorem 2.3 of \cite{NEURIPS2025_7aae9e3e}} \\
& \le \rho_2 (1-\alpha_0\gamma)^{(j-1)/2} + M_2 \sqrt{3} \alpha_0^{-1/2} \sqrt{\gamma}
\end{align*}
with this we bound for $\gamma \leq \frac{1}{\mu}$
\begin{align*}
& \left\| A^{-1} \frac{1}{\sqrt{n}} \sum_{j=1}^{\lfloor nt \rfloor} \big( R_{det}(H_{j-1}) - R_{det}(x((j-1)\gamma)) \big) \right\|_2 \le \frac{\|A^{-1}\|_2}{\sqrt{n}} \sum_{j=1}^{\lfloor nt \rfloor} \big\| R_{det}(H_{j-1}) - R_{det}(x((j-1)\gamma)) \big\|_2    \\
& \le \frac{\|A^{-1}\|_2}{\sqrt{n}} \sum_{j=1}^{\lfloor nt \rfloor} \frac{C_L}{2} \big( \|H_{j-1}\|_2 + \|x((j-1)\gamma)\|_2 \big) \|H_{j-1} - x((j-1)\gamma)\|_2   \text{, from the above bound on } R_{det} \\
& \le \frac{\|A^{-1}\|_2 C_L C_E \gamma}{2\sqrt{n}} \sum_{j=1}^{\lfloor nt \rfloor} \left( \rho_2 (1-\alpha_0\gamma)^{(j-1)/2} + M_2 \sqrt{3} \alpha_0^{-1/2} \sqrt{\gamma} + \|\Delta_0\|_2 e^{-\mu (j-1)\gamma} \right)   \text{, by our above bounds} \\
& = \frac{\|A^{-1}\|_2 C_L C_E \gamma \rho_2}{2\sqrt{n}} \sum_{j=1}^{\lfloor nt \rfloor} (1-\alpha_0\gamma)^{(j-1)/2} + \frac{\|A^{-1}\|_2 C_L C_E \gamma M_2 \sqrt{3}}{2 \sqrt{\alpha_0} \sqrt{n}} \sum_{j=1}^{\lfloor nt \rfloor} \sqrt{\gamma} \\
&\quad+ \frac{\|A^{-1}\|_2 C_L C_E \gamma \|\Delta_0\|_2}{2\sqrt{n}} \sum_{j=1}^{\lfloor nt \rfloor} e^{-\ \mu (j-1)\gamma} \\
& \le \frac{\|A^{-1}\|_2 C_L C_E \gamma \rho_2}{2\sqrt{n}} \frac{1}{1-(1-\alpha_0\gamma)^{1/2}} + \frac{\|A^{-1}\|_2 C_L C_E \gamma M_2 \sqrt{3}}{2 \sqrt{\alpha_0} \sqrt{n}} \lfloor nt \rfloor \sqrt{\gamma} \\
&\quad+ \frac{\|A^{-1}\|_2 C_L C_E \gamma \|\Delta_0\|_2}{2\sqrt{n}} \frac{1}{1 - e^{-\ \mu\gamma}} \\
& \le \frac{\|A^{-1}\|_2 C_L C_E \gamma \rho_2}{2\sqrt{n}} \frac{2}{\alpha_0\gamma} + \frac{\|A^{-1}\|_2 C_L C_E M_2 \sqrt{3}}{2 \sqrt{\alpha_0}} t (\sqrt{n}\gamma)\sqrt{\gamma} \\
&\quad+ \frac{\|A^{-1}\|_2 C_L C_E \gamma \|\Delta_0\|_2}{2\sqrt{n}} \frac{2}{\mu \gamma}   \\
&\text{,since } 1-(1-x)^{1/2} \ge x/2 \text{ and } 1-e^{-\mu \gamma} \ge \frac{\mu \gamma}{2} \text{ for } \mu \gamma \leq 1, \\
& = \frac{\|A^{-1}\|_2 C_L C_E \rho_2}{\alpha_0 \sqrt{n}} + \frac{\|A^{-1}\|_2 C_L C_E M_2 \sqrt{3}}{2 \sqrt{\alpha_0}} t (\sqrt{n}\gamma)\sqrt{\gamma} + \frac{\|A^{-1}\|_2 C_L C_E \|\Delta_0\|_2}{\mu \sqrt{n}} \\
& \to 0 + \frac{\|A^{-1}\|_2 C_L C_E M_2 \sqrt{3}}{2 \sqrt{\alpha_0}} t \cdot c \cdot 0 + 0 = 0
\end{align*}

With this it only remains to analyze $\frac{1}{\sqrt{n}} \sum_{j=1}^{\lfloor nt \rfloor} R_{det}(x((j-1)\gamma))$. We begin with a discretization error bound,

\begin{align*}
& \left\| x((j-1)\gamma) - x(s) \right\|_2 = \left\| \int_{(j-1)\gamma}^s \dot{x}(\tau) d\tau \right\|_2 \le \int_{(j-1)\gamma}^s \left\| \nabla G(\theta^* + x(\tau)) \right\|_2 d\tau  \\
& \le \int_{(j-1)\gamma}^s L_2 \|x(\tau)\|_2 d\tau \le \int_{(j-1)\gamma}^s L_2 \|\Delta_0\|_2 e^{-\mu \tau} d\tau  \text{, by stochastic Lipschitzness (by Assumption 2)} \\
& \le L_2 \|\Delta_0\|_2 e^{-\mu (j-1)\gamma} (s - (j-1)\gamma) 
\end{align*}
and with this
\begin{align*}
& \left\| \gamma R_{det}(x((j-1)\gamma)) - \int_{(j-1)\gamma}^{j \gamma} R_{det}(x(s)) ds \right\|_2 = \left\| \int_{(j-1)\gamma}^{j \gamma} \big( R_{det}(x((j-1)\gamma)) - R_{det}(x(s)) \big) ds \right\|_2 \\
& \le \int_{(j-1)\gamma}^{j \gamma} \frac{C_L}{2} \big( \|x((j-1)\gamma)\|_2 + \|x(s)\|_2 \big) \|x((j-1)\gamma) - x(s)\|_2 ds  \text{, by the above bound for } R_{det}\\
& \le \int_{(j-1)\gamma}^{j \gamma} \frac{C_L}{2} \big( 2\|\Delta_0\|_2 e^{-\mu (j-1)\gamma} \big) \big( L_2 \|\Delta_0\|_2 e^{-\mu (j-1)\gamma} (s - (j-1)\gamma) \big) ds  \\
& = C_L L_2 \|\Delta_0\|_2^2 e^{-2\mu (j-1)\gamma} \int_{(j-1)\gamma}^{j \gamma} (s - (j-1)\gamma) ds  \\
& = \frac{1}{2} C_L L_2 \|\Delta_0\|_2^2 e^{-2\mu (j-1)\gamma} \gamma^2 
\end{align*}
and hence
\begin{align*}
& \left\| \frac{1}{\sqrt{n}} \sum_{j=1}^{\lfloor nt \rfloor} R_{det}(x((j-1)\gamma)) - \frac{1}{c} \int_0^\infty R_{det}(x(s)) ds \right\|_2 \\
&= \left\| \frac{1}{\sqrt{n}\gamma} \gamma \sum_{j=1}^{\lfloor nt \rfloor} R_{det}(x((j-1)\gamma)) - \frac{1}{c} \int_0^\infty R_{det}(x(s)) ds \right\|_2 \\
& \le \left| \frac{1}{\sqrt{n}\gamma} - \frac{1}{c} \right| \left\| \gamma \sum_{j=1}^{\lfloor nt \rfloor} R_{det}(x((j-1)\gamma)) \right\|_2 + \frac{1}{c} \left\| \gamma \sum_{j=1}^{\lfloor nt \rfloor} R_{det}(x((j-1)\gamma)) - \int_0^{\lfloor nt \rfloor \gamma} R_{det}(x(s)) ds \right\|_2 \\
&+\frac{1}{c} \left\| \int_{\lfloor nt \rfloor \gamma}^\infty R_{det}(x(s)) ds \right\|_2  \leq  \\
& \left| \frac{1}{\sqrt{n}\gamma} - \frac{1}{c} \right| \left( \gamma \sum_{j=1}^{\lfloor nt \rfloor} \frac{C_L}{2} \|\Delta_0\|_2^2 e^{-2\mu (j-1)\gamma} \right) + \frac{1}{c} \sum_{j=1}^{\lfloor nt \rfloor} \frac{1}{2} C_L L_2 \|\Delta_0\|_2^2 \gamma^2 e^{-2\mu (j-1)\gamma} \\
&\quad+ \frac{1}{c} \int_{\lfloor nt \rfloor \gamma}^\infty \frac{C_L}{2} \|\Delta_0\|_2^2 e^{-2\mu s} ds  \text{, from above}  \\
& \le \left| \frac{1}{\sqrt{n}\gamma} - \frac{1}{c} \right| \frac{C_L \|\Delta_0\|_2^2}{2} \gamma \frac{1}{1 - e^{-2\mu\gamma}} + \frac{C_L L_2 \|\Delta_0\|_2^2 \gamma^2}{2 c} \frac{1}{1 - e^{-2\mu\gamma}} + \frac{C_L \|\Delta_0\|_2^2}{4 c \mu} e^{-2\mu \lfloor nt \rfloor \gamma}   \\
& \le \left| \frac{1}{\sqrt{n}\gamma} - \frac{1}{c} \right| \frac{C_L \|\Delta_0\|_2^2}{2\mu} + \frac{C_L L_2 \|\Delta_0\|_2^2}{2 c \mu} \gamma + \frac{C_L \|\Delta_0\|_2^2}{4 c \mu} e^{-2\mu (nt-1) \gamma} \\
&\text{, by  } 1 - e^{-2\mu\gamma} \ge \mu \gamma \text{ for } \gamma \leq \frac{1}{2 \mu } \\
& = \left| \frac{1}{\sqrt{n}\gamma} - \frac{1}{c} \right| \frac{C_L \|\Delta_0\|_2^2}{2\mu} + \frac{C_L L_2 \|\Delta_0\|_2^2}{2 c \mu} \gamma + \frac{C_L \|\Delta_0\|_2^2}{4 c \mu} e^{-2\mu t (\sqrt{n}\gamma) \sqrt{n} + 2\mu\gamma}  \\
& \to 0
\end{align*}

$$\implies \lim_{n \to \infty} A^{-1} \frac{1}{\sqrt{n}} \sum_{j=1}^{\lfloor nt \rfloor} -(\nabla G(\theta^* + H_{j-1}) - A H_{j-1}) = - \frac{1}{c} A^{-1} \int_0^\infty R_{det}(x(s)) \, ds$$ and so combining all of this proves the theorem.
\end{proof}

\begin{theorem} \label{fclt-burn in}

Let $n_0(\gamma)$ be a Burn in index satisfying $n_0(\gamma) \ge \frac{2}{\gamma \lambda} |\log(\sqrt{n}\gamma)|$ and $n_0(\gamma) = \mathcal{O}( \frac{2}{\gamma \lambda} |\log(\sqrt{n}\gamma)|)$. Define the post Burn in partial sums $S_m = \sum_{k=n_0(\gamma)+1}^{n_0(\gamma)+m} (\theta_k - \theta^*)$. If $\sqrt{n}\gamma \to 0$ and $n\gamma \to \infty$, then:

\begin{align*}
\frac{1}{\sqrt{n}} S_{\lfloor nt \rfloor} \Rightarrow -A^{-1} S^{1/2} B(t) \quad \text{in } D([0,1], \mathbb{R}^d).
\end{align*}
\end{theorem}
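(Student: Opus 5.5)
We plan to reuse the linearization decomposition, but restarted at the burn-in time. For $k\ge1$ write $\Delta_{n_0+k}=Q^{n_0+k}\Delta_0-L_{n_0+k}+I_{1,n_0+k}+I_{2,n_0+k}$ with $n_0=n_0(\gamma)$ and $Q=I_d-\gamma A$, and treat the four pieces of $\frac1{\sqrt n}S_{\lfloor nt\rfloor}$ separately. The goal is to show that the linear piece converges to $-A^{-1}S^{1/2}B(t)$ and that every other piece is uniformly negligible on $[0,1]$. Note that no $\epsilon$-truncation is needed here, unlike Theorem~\ref{thm:fclt_init_bias}, because the initialization term will be killed by burn in.

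\textbf{Initialization term.} We have $\frac1{\sqrt n}\|\sum_{k\le m}Q^{n_0+k}\Delta_0\|_2\le \frac{(1-\lambda_*\gamma)^{n_0}}{\sqrt n\,\lambda_*\gamma}\|\Delta_0\|_2$. The choice $n_0\ge \frac{2}{\lambda\gamma}|\log(\sqrt n\gamma)|$ gives $(1-\lambda\gamma)^{n_0}\le (\sqrt n\gamma)^2$, so this term is $O(\sqrt n\gamma)\to0$.

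\textbf{Linear term.} Write $L_{n_0+k}=Q^{k}L_{n_0}+\gamma\sum_{j=1}^{k}Q^{k-j}Z_{n_0+j}$. The second part is the linearized sequence driven by the shifted i.i.d. noise, so Lemma~\ref{sum FCLT lin iterates} applies verbatim and gives $A^{-1}S^{1/2}B(t)$. That lemma uses $n\gamma\ge\log n$; if only $n\gamma\to\infty$ is assumed, the tail factor $(1-\gamma\lambda_*)^{\delta n}\to0$ still holds. The first part satisfies $\frac1{\sqrt n}\|\sum_{k\le m}Q^kL_{n_0}\|\le \frac{\|L_{n_0}\|_2}{\lambda_*\gamma\sqrt n}$. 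Since $\mathbb E\|L_{n_0}\|^2\le M_2^2\gamma/\lambda$, this is $O_{\mathbb P}(1/\sqrt{n\gamma})\to0$.

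\textbf{Stochastic error $I_2$ and centered Taylor part $\tilde I_1$.} Decompose them the same way into a post-burn-in convolution plus $Q^kI_{\cdot,n_0}$. The pre-burn-in carry-over is bounded exactly as for $L_{n_0}$, using $\mathbb E\|I_{2,n_0}\|^2=O(\gamma^2)$ and $\mathbb E\|\tilde I_{1,n_0}\|^2=O(\gamma^2)$ from the estimates in the proof of Theorem~\ref{thm:fclt_init_bias}. The convolution parts follow from Lemmas~\ref{Sum stochastic error to 0} and \ref{Sum centered taylor error to 0}, whose bounds only improve when the sums start at $n_0$. Indeed, the moment bounds \eqref{eq:mom-main}--\eqref{eq:dep-main} are monotone in the start index, and the Fubini/Doob/Rio chain reapplies to $\sum_{j=n_0+1}^{n_0+m}D_j$ and $\tilde R_j$.

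\textbf{Main obstacle: the mean of $I_1$.} After the Fubini swap and Lemma~\ref{lem:max_convolution_bound}, it suffices to show $\frac1{\sqrt n}\sum_{j=n_0+1}^{n_0+n}\|\mathbb E R_j\|_2\to0$, together with the carry-over $\frac{\|\mathbb E I_{1,n_0}\|}{\gamma\sqrt n}$. For the main sum, $\|\mathbb ER_j\|\le C_0\mathbb E\|\Delta_{j-1}\|^2\le C_0(\rho_2^2(1-\alpha_0\gamma)^{j-1}+3M_2^2\gamma/\alpha_0)$. The stationary part contributes $O(n\gamma/\sqrt n)=O(\sqrt n\gamma)\to0$; this is exactly where $\sqrt n\gamma\to0$ removes the bias $ctA^{-1}b$. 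The transient part contributes $(1-\alpha_0\gamma)^{n_0}/(\sqrt n\gamma)$, which needs $\alpha_0\ge\lambda$; this holds since $\lambda=\min(\mu,\lambda_*,\alpha_0)$. The carry-over uses $\|\mathbb EI_{1,n_0}\|\le\gamma\sum_{j\le n_0}(1-\lambda\gamma)^{n_0-j}\|\mathbb ER_j\|=O(\gamma+n_0\gamma(1-\lambda\gamma/2)^{n_0})$. After division by $\gamma\sqrt n$ this gives $O(n^{-1/2})$ plus a term of order $\frac{|\log(\sqrt n\gamma)|}{\sqrt n\gamma}(\sqrt n\gamma)^{1}$. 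That term is $O(|\log(\sqrt n\gamma)|)$ and is \emph{not} obviously small, so the halved rate must be handled more carefully. I would instead use the sharper estimate $(1-\lambda\gamma)^{n_0-j}(1-\alpha_0\gamma)^{j-1}\le(1-\lambda\gamma)^{n_0-1}$, giving $n_0\gamma(\sqrt n\gamma)^2/(\gamma\sqrt n)=O(\sqrt n\gamma|\log(\sqrt n\gamma)|)\to0$. This is where the upper bound $n_0=O(\gamma^{-1}|\log(\sqrt n\gamma)|)$ is used.

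\textbf{Conclusion.} Combining the pieces with a union bound gives uniform convergence in probability of every remainder. Slutsky's theorem in $D([0,1],\mathbb R^d)$ then yields the claim.
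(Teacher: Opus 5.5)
Most of your proposal follows the paper's proof. You restart the linearization at $n_0$, split $L$ and $I_2$ into a carry-over $Q^kU_{n_0}$ plus a fresh convolution, and use Lemma~\ref{lem:max_convolution_bound} after the Fubini swap. You also control the pre-burn-in Taylor memory with $(1-\lambda\gamma)^{n_0-j}(1-\alpha_0\gamma)^{j-1}\le(1-\lambda\gamma)^{n_0-1}$ and the upper bound on $n_0$.

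The one place you diverge contains a gap relative to the statement. You split $R_j=\tilde R_j+\mathbb ER_j$ and send the centered part through Lemma~\ref{Sum centered taylor error to 0} and the $C_S$-based bound on $\tilde I_{1,n_0}$ from the proof of Theorem~\ref{thm:fclt_init_bias}. Both rest on fourth moments. The lemma bounds $\|\mathcal P_tR_j\|_2$ by $\|\Delta_{j-1}\|_4\|\Delta_{j-1}-\Delta_{j-1}^{(t)}\|_4$, and $C_S$ comes from Rio's inequality at $p=4$. This theorem does not assume fourth moments: Section~\ref{sec:assumptions} imposes order-4 moments only for the bias and Richardson--Romberg results, and the paper's proof here uses only $\rho_2$, $M_2$, $L_2$. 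As written, you prove the result under a strictly stronger hypothesis.

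The centering is unnecessary, and your own estimate closes the gap. Since $\|R_j\|_2\le C_0\|\Delta_{j-1}\|_2^2$ pathwise, the bound you use for $\|\mathbb ER_j\|_2$ is really a bound on $\mathbb E\|R_j\|_2$. So apply it to the uncentered term:
\[
\mathbb E\max_{m\le n}\frac1{\sqrt n}\Big\|\sum_{k=1}^m I_{1,n_0+k}\Big\|_2
\le\frac{\mathbb E\|I_{1,n_0}\|_2}{\sqrt n\,\gamma\lambda_*}
+\frac{2}{\sqrt n\,\lambda_*}\sum_{i=1}^n\mathbb E\|R_{n_0+i}\|_2
=O\bigl(n^{-1/2}+\sqrt n\gamma\,|\log(\sqrt n\gamma)|\bigr).
\]
This follows by exactly the computations you already did for the mean, including the sharp transient estimate. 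Markov's inequality then removes all of $I_1$ at once, which is what the paper does.

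Centering was needed in Theorem~\ref{thm:fclt_init_bias} only because there $\sqrt n\gamma\to c>0$, so $n^{-1/2}\sum_j\mathbb E\|R_j\|_2$ does not vanish. Under $\sqrt n\gamma\to0$ the whole Taylor term is negligible in $L^1$.

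A minor slip: the cited estimates give $\mathbb E\|I_{2,n_0}\|_2^2$ of order only $\gamma$ in general, not $\gamma^2$, because of the transient term $\gamma^2 n_0(1-\lambda\gamma)^{n_0}$. That is harmless, since $O(\gamma)$ suffices exactly as it does for $L_{n_0}$.
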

\begin{proof}
Again by the same linearization decomposition from \cite{NEURIPS2025_7aae9e3e}

\begin{align*}
\frac{1}{\sqrt{n}} \sum_{k=1}^m \Delta_{n_0(\gamma)+k} &= \frac{1}{\sqrt{n}} \sum_{k=1}^m (I_d - \gamma A)^{n_0(\gamma)+k} \Delta_0 - \frac{1}{\sqrt{n}} \sum_{k=1}^m L_{n_0(\gamma)+k} + \frac{1}{\sqrt{n}} \sum_{k=1}^m (I_{1,n_0(\gamma)+k} + I_{2,n_0(\gamma)+k}).
\end{align*}

Because of the Burn in the dependence on the initialization term goes to 0 since

\begin{align*}
\left\| \frac{1}{\sqrt{n}} \sum_{k=1}^m (I_d - \gamma A)^{n_0(\gamma)+k} \Delta_0 \right\|_2 &= \left\| \frac{1}{\sqrt{n}} (I_d - \gamma A)^{n_0(\gamma)+1} \big[ I_d - (I_d - \gamma A)^m \big] (\gamma A)^{-1} \Delta_0 \right\|_2 \\
&\le \frac{2}{\sqrt{n}\gamma \lambda_*} (1-\gamma \lambda_*)^{n_0(\gamma)} \|\Delta_0\|_2 \\
&\le \frac{2}{\sqrt{n}\gamma \lambda_*} \exp(-\gamma \lambda_* n_0(\gamma)) \|\Delta_0\|_2 \\
&\le \frac{2}{\sqrt{n}\gamma \lambda_*} (n\gamma^2) \|\Delta_0\|_2 \\
&= \frac{2 \sqrt{n} \gamma}{\lambda_*} \|\Delta_0\|_2 \rightarrow 0
\end{align*}

The second to last line follows since we assume $n_0(\gamma) \ge \frac{2}{\gamma \lambda} |\log(\sqrt{n}\gamma)|$ so
\begin{align*}
\exp(-\gamma \lambda_* n_0(\gamma)) &\le \exp\left(-\gamma \lambda_* \frac{2}{\gamma \lambda} |\log(\sqrt{n}\gamma)| \right) \\
&\leq \exp\left(-2 |\log(\sqrt{n}\gamma)| \right) \\
&= (\sqrt{n}\gamma)^2 \\
&= n\gamma^2.
\end{align*}
The last line follows since we assume $\sqrt{n}\gamma \to 0$.

Now for the Taylor expansion remainder term:

\begin{align*}
\sum_{k=1}^m I_{1,n_0(\gamma)+k}
&= \gamma \sum_{k=1}^m \sum_{j=1}^{n_0(\gamma)+k} (I_d-\gamma A)^{n_0(\gamma)+k-j} R_j \\
&= \gamma \sum_{k=1}^m \sum_{j=1}^{n_0(\gamma)} (I_d-\gamma A)^{n_0(\gamma)+k-j} R_j + \gamma \sum_{k=1}^m \sum_{j=n_0(\gamma)+1}^{n_0(\gamma)+k} (I_d-\gamma A)^{n_0(\gamma)+k-j} R_j \\
&= \gamma \sum_{k=1}^m \sum_{j=1}^{n_0(\gamma)} (I_d-\gamma A)^k (I_d-\gamma A)^{n_0(\gamma)-j} R_j + \gamma \sum_{k=1}^m \sum_{i=1}^{k} (I_d-\gamma A)^{k-i} R_{n_0(\gamma)+i} \\
&= \left(\sum_{k=1}^m (I_d-\gamma A)^k\right) \gamma \sum_{j=1}^{n_0(\gamma)} (I_d-\gamma A)^{n_0(\gamma)-j} R_j + \gamma \sum_{i=1}^m \left(\sum_{k=i}^m (I_d-\gamma A)^{k-i}\right)R_{n_0(\gamma)+i} \\
&= \left(\sum_{k=1}^m (I_d-\gamma A)^k\right) I_{1,n_0(\gamma)} + \sum_{i=1}^m A^{-1}\!\left(I_d-(I_d-\gamma A)^{m-i+1}\right)R_{n_0(\gamma)+i} \\
&= \left(\sum_{k=1}^m (I_d-\gamma A)^k\right) I_{1,n_0(\gamma)} + A^{-1}\sum_{i=1}^m R_{n_0(\gamma)+i} - A^{-1}\sum_{i=1}^m (I_d-\gamma A)^{m-i+1}R_{n_0(\gamma)+i}.
\end{align*}

So by triangle inequality and noting $|(I_d-\gamma A)^{m-i+1} | \leq 1$
\begin{align*}
\mathbb{E} \left[ \max_{1 \le m \le n} \frac{1}{\sqrt{n}} \left\| \sum_{k=1}^m I_{1,n_0(\gamma)+k} \right\|_2 \right] &\le \frac{1}{\sqrt{n}\gamma\lambda_*} \mathbb{E}\|I_{1,n_0(\gamma)}\|_2 + \frac{2}{\sqrt{n}\lambda_*} \sum_{i=1}^n \mathbb{E}\|R_{n_0(\gamma)+i}\|_2.
\end{align*}

We now derive a bound on $|R_n|$.  Following \cite{NEURIPS2025_7aae9e3e}, (by local smoothness Assumption 3.2 in the paper), when $|\theta_{n-1}-\theta^*| \le \kappa$, we apply Taylor's theorem and we have $|R_n| \le L|\theta_{n-1}-\theta^*|^2$. For $|\theta_{n-1}-\theta^*| > \kappa$, the Lipschitz continuity of the gradient implies $|R_n| \le 2L_2|\theta_{n-1}-\theta^*| \le 2L_2\kappa^{-1}|\theta_{n-1}-\theta^*|^2$. So we finally have $|R_n| \le C_0|\Delta_{n-1}|^2$ where $C_0=\max\{L,\,2L_2\kappa^{-1}\}$. So $\mathbb{E}\|R_j\|_2 \le C_0 \mathbb{E}\|\Delta_{j-1}\|_2^2 \le C_0 \left( \rho_2^2(1-\alpha_0\gamma)^{j-1} + \frac{3M_2^2}{\alpha_0}\gamma \right)$ by  theorem 2.3 of \cite{NEURIPS2025_7aae9e3e}. Hence,

\begin{align*}
\frac{1}{\sqrt{n}\gamma\lambda_*} \mathbb{E}\|I_{1,n_0(\gamma)}\|_2 &\le \frac{1}{\sqrt{n}\gamma\lambda_*} \gamma \sum_{j=1}^{n_0(\gamma)} (1-\gamma\lambda_*)^{n_0(\gamma)-j} C_0 \left( \rho_2^2(1-\alpha_0\gamma)^{j-1} + \frac{3M_2^2}{\alpha_0}\gamma \right) \\
&\le \frac{C_0}{\sqrt{n}\lambda_*} \left( n_0(\gamma) \rho_2^2 (1-\gamma\lambda)^{n_0(\gamma)-1} + \frac{3M_2^2}{\alpha_0\gamma\lambda_*} \gamma \right) \\
&= \frac{C_0 \rho_2^2}{\lambda_*(1-\gamma\lambda)} \frac{n_0(\gamma)}{\sqrt{n}} (1-\gamma\lambda)^{n_0(\gamma)} + \frac{3C_0 M_2^2}{\sqrt{n} \alpha_0 \lambda_*^2} \rightarrow 0
\end{align*}

since as before we have that $(1-\gamma\lambda)^{n_0(\gamma)} \le n\gamma^2$ so $\frac{n_0(\gamma)}{\sqrt{n}} (1-\gamma\lambda)^{n_0(\gamma)}  \leq \sqrt{n}\gamma^2 n_0(\gamma)$. Because $n_0(\gamma) \gamma \propto |\log(\sqrt{n}\gamma)|$, this is $\mathcal{O}(\sqrt{n}\gamma |\log(\sqrt{n}\gamma)|)$ which converges to zero since we assume $\sqrt{n}\gamma \to 0$.  

For the second term:
\begin{align*}
\frac{2}{\sqrt{n}\lambda_*} \sum_{i=1}^n \mathbb{E}\|R_{n_0(\gamma)+i}\|_2 &\le \frac{2C_0}{\sqrt{n}\lambda_*} \sum_{i=1}^n \left( \rho_2^2(1-\alpha_0\gamma)^{n_0(\gamma)+i-1} + \frac{3M_2^2}{\alpha_0}\gamma \right) \\
&\le \frac{2C_0 \rho_2^2}{\sqrt{n}\lambda_*} (1-\alpha_0\gamma)^{n_0(\gamma)} \sum_{i=1}^\infty (1-\alpha_0\gamma)^{i-1} + \frac{2C_0}{\sqrt{n}\lambda_*} n \frac{3M_2^2}{\alpha_0}\gamma \\
&\le \frac{2C_0 \rho_2^2}{\sqrt{n}\lambda_*} (n\gamma^2) \frac{1}{\alpha_0\gamma} + \frac{6C_0 M_2^2}{\alpha_0 \lambda_*} \sqrt{n}\gamma \\
&= \frac{2C_0 \rho_2^2 \sqrt{n}\gamma}{\alpha_0 \lambda_*} + \frac{6C_0 M_2^2}{\alpha_0 \lambda_*} \sqrt{n}\gamma \rightarrow 0
\end{align*}
since we require $\sqrt{n}\gamma \to 0$. Therefore the Taylor term is negligible by Markov's inequality. Now we need to account for our burn in $n_0(\gamma)$ iterates in the limit.
Writing $Q=I_d-\gamma A$, then we consider
\begin{align*}
I_{2,n_0(\gamma)+k}
&=Q^kI_{2,n_0(\gamma)}+I_{2,k}^{\rm new},\\
I_{2,k}^{\rm new}
&=\gamma\sum_{j=1}^kQ^{k-j}D_{n_0(\gamma)+j},\\
L_{n_0(\gamma)+k}
&=Q^kL_{n_0(\gamma)}+L_k^{\rm new},\\
L_k^{\rm new}
&=\gamma\sum_{j=1}^kQ^{k-j}Z_{n_0(\gamma)+j}.
\end{align*}
We consider $U=I_2$ and $U=L$ and our existing bounds give $\mathbb E\|U_{n_0(\gamma)}\|_2^2\le C\gamma$.
Thus the contribution of this term before burn in is negligible since
\begin{align*}
&\mathbb E\max_{m\le n}
\left\|
\frac1{\sqrt n}\sum_{k=1}^mQ^kU_{n_0(\gamma)}
\right\|_2^2\\
&\le\frac1n
\left(\sum_{k=1}^{\infty}(1-\gamma\lambda_*)^k\right)^2
\mathbb E\|U_{n_0(\gamma)}\|_2^2\\
&\le\frac{C}{n\gamma\lambda_*^2}
\longrightarrow0,
 \text{since $n\gamma\to\infty$.}
\end{align*}

For the new $I_2$ error, by the same convolution bound and Doob maximal bound argument as before:
\begin{align*}
&\mathbb E\max_{m\le n}
\left\|\frac1{\sqrt n}\sum_{k=1}^m
I_{2,k}^{\rm new}\right\|_2^2\\
&\le\frac Cn\,
\mathbb E\max_{m\le n}
\left\|\sum_{j=1}^mD_{n_0(\gamma)+j}\right\|_2^2,
 \text{by Lemma~\ref{lem:max_convolution_bound},}\\
&\le\frac{4C}{n}\sum_{j=1}^n
\mathbb E\|D_{n_0(\gamma)+j}\|_2^2,
 \text{by Doob and martingale orthogonality,}\\
&\le\frac{16CL_2^2}{n}\sum_{j=1}^n
\left\{
\rho_2^2(1-\alpha_0\gamma)^{n_0(\gamma)+j-1}
+\frac{3M_2^2}{\alpha_0}\gamma
\right\},
 \text{by the existing moment bound,}\\
&\le\frac{16CL_2^2\rho_2^2
(1-\alpha_0\gamma)^{n_0(\gamma)}}
{\alpha_0 n\gamma}
+\frac{48CL_2^2M_2^2}{\alpha_0}\gamma
\longrightarrow0.
\end{align*}
Here $C$ is independent of $n$ and $\gamma$.

Since the $Z_j$ are i.i.d.,
\begin{align*}
(L_k^{\rm new})_{k\le n}
&\overset{d}{=}(L_k)_{k\le n},\\
\frac1{\sqrt n}\sum_{k=1}^{\lfloor nt\rfloor}
L_k^{\rm new}
&\Rightarrow A^{-1}S^{1/2}B(t),
 \text{by Lemma~\ref{sum FCLT lin iterates}.}
\end{align*}
Since we have proved all other errors converge to zero, by Slutsky's theorem we have the result.
\end{proof}

\begin{theorem} \label{romberg fclt}

We run two SGD iterations at the same initialization and with the same noise but one with step size $\gamma$ and the other $2\gamma$. Let $\hat{\theta}_k = 2\theta_k(\gamma) - \theta_k(2\gamma)$. Let $n \to \infty$ and $\gamma \to 0$ such that $n\gamma \to \infty$, $n\gamma^3 \to 0$, and $\sqrt{n}\gamma \to \infty$. Let $\hat{S}_m = \sum_{k=1}^m (\hat{\theta}_k - \theta^*)$. Then:

$$\frac{1}{\sqrt{n}} \hat{S}_{\lfloor nt \rfloor} \Rightarrow A^{-1} S^{1/2} B(t) \quad \text{in } D([0,1], \mathbb{R}^d)$$
\end{theorem}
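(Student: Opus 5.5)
We decompose each run separately with the linearization \eqref{eq:decomp-main} at step sizes $\eta\in\{\gamma,2\gamma\}$, writing $Q_\eta=I_d-\eta A$. Combining the two runs with weights $2$ and $-1$ gives
\[
\hat\Delta_k=(2Q_\gamma^k-Q_{2\gamma}^k)\Delta_0-\bigl(2L_k(\gamma)-L_k(2\gamma)\bigr)+\bigl(2I_{1,k}(\gamma)-I_{1,k}(2\gamma)\bigr)+\bigl(2I_{2,k}(\gamma)-I_{2,k}(2\gamma)\bigr).
\]
We handle the four pieces in order.

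\textbf{Initialization.} Summing geometrically, $\sum_{k\le m}Q_\eta^k\Delta_0$ has norm at most $\|\Delta_0\|_2/(\eta\lambda_*)$ uniformly in $m$. After division by $\sqrt n$, this term is $O(1/(\sqrt n\gamma))\to0$ uniformly on $[0,1]$. This step uses $\sqrt n\gamma\to\infty$, and no burn in is needed.

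\textbf{Linear part.} By the Fubini swap used in Lemma~\ref{sum FCLT lin iterates},
\[
\sum_{k\le m}L_k(\eta)=A^{-1}M_m-A^{-1}\sum_{j\le m}Q_\eta^{m-j+1}Z_j .
\]
Since both runs share the same $Z_j$, the combination is $2A^{-1}M_m-A^{-1}M_m=A^{-1}M_m$ plus two remainder terms. Each remainder vanishes uniformly in probability by exactly the argument of Lemma~\ref{sum FCLT lin iterates}, applied at $\eta=\gamma$ and at $\eta=2\gamma$; that argument needs $n\gamma\ge\log n$, which follows from $\sqrt n\gamma\to\infty$. Donsker's theorem then gives $-\frac1{\sqrt n}\sum_{k\le nt}(2L_k(\gamma)-L_k(2\gamma))\Rightarrow -A^{-1}S^{1/2}B(t)$. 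The limit's sign should be read as $-A^{-1}S^{1/2}B(t)$, which has the same law as $A^{-1}S^{1/2}B(t)$. The stochastic error terms $I_{2,k}(\eta)$ are negligible by Lemma~\ref{Sum stochastic error to 0} applied to each run. The centered Taylor parts $\tilde I_{1,k}(\eta)$ are negligible by Lemma~\ref{Sum centered taylor error to 0}. Both lemmas need only $n\gamma\to\infty$ and $\gamma\to0$, plus the fourth moment.

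\textbf{Mean Taylor part (main obstacle).} The remaining term is $\frac1{\sqrt n}\sum_{k\le nt}\bigl(2\mathbb E I_{1,k}(\gamma)-\mathbb E I_{1,k}(2\gamma)\bigr)$. By the max-convolution bound (Lemma~\ref{lem:max_convolution_bound}) plus the Fubini remainder, it suffices to control $\frac1{\sqrt n}\max_m\|\sum_{j\le m}(2\mathbb E R_j(\gamma)-\mathbb E R_j(2\gamma))\|_2$. From the proof of Theorem~\ref{thm:fclt_init_bias}, for each $\eta$,
\[
\mathbb E R_j(\eta)=\eta b-R_{\det}(x((j-1)\eta))+e_j(\eta).
\]
The error $e_j(\eta)$ collects several contributions: the Lipschitz-third-derivative term, which is $O(\eta^{3/2})$ plus geometric transients; the covariance correction $\mathcal E_{j-1}$; the transient $-\eta Q_\eta^{j-1}\Gamma Q_\eta^{j-1}$; the $O(\eta^2)$ covariance remainder; and $R_{\det}(H_{j-1})-R_{\det}(x)$, which is $O(\eta)\cdot(\text{geometric}+\sqrt\eta)$. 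The constant bias cancels exactly, since $2\gamma b-2\gamma b=0$. That cancellation is the point of the extrapolation.

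It remains to bound the other pieces after summing over $j\le n$ and dividing by $\sqrt n$. The stationary errors give $O(\sqrt n\gamma^{3/2})=O((n\gamma^3)^{1/2})\to0$. The transients contribute $O(1/\sqrt n)$ or $O(\gamma/\sqrt n\cdot 1/\gamma)$. The gradient-flow terms are bounded by $\sum_j\|R_{\det}(x((j-1)\eta))\|_2\le C/\eta$, giving $O(1/(\sqrt n\gamma))\to0$ with no need for cancellation.

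The delicate point is to recheck the bounds on $\mathcal E_{j-1}$ and $\|H_{j-1}-x\|_2$ in this regime. Previously they used $\sqrt n\gamma\to c$. Here I expect terms such as $\frac{n}{\sqrt n}\gamma^{3/2}=\sqrt{n\gamma^3}$ and $\sqrt n\gamma^2$, both of which vanish under $n\gamma^3\to0$. I will redo those estimates keeping explicit $\gamma$-powers rather than substituting $c$.

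Finally, Slutsky's theorem in $D([0,1],\mathbb R^d)$ combines the linear limit with the uniformly negligible remainders.
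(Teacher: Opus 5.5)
Your proposal is correct and follows essentially the same route as the paper. Both use the four-term split, per-run Fubini swaps in which the shared noise reduces the linear part to $A^{-1}M_m$, Lemmas~\ref{Sum stochastic error to 0} and~\ref{Sum centered taylor error to 0} for $I_2$ and $\tilde I_1$, and the exact cancellation of the $\eta b$ term (the paper's $2\gamma\Gamma-2\gamma\Gamma$), leaving errors of order $1/\sqrt n$, $\sqrt n\gamma^{3/2}$ and $1/(\sqrt n\gamma)$.

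One small correction: the reduction to $\sum_j(2\mathbb E R_j(\gamma)-\mathbb E R_j(2\gamma))$ must come from the exact Fubini identity for each run. You cannot use Lemma~\ref{lem:max_convolution_bound} here: the kernels $Q_\gamma$ and $Q_{2\gamma}$ differ, so it does not apply to the combination. Applying it run by run would lose the bias cancellation. The two Fubini boundary terms should instead be bounded separately by $O(1/(\sqrt n\gamma))+O(1/\sqrt n)$, as the paper does.
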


\begin{proof}
Let $\Delta_k(x) = \theta_k(x) - \theta^*$ then
\begin{align*}
& \hat{\Delta}_k := 2\theta_k(\gamma) - \theta_k(2\gamma) - \theta^* \\
&= 2\Delta_k(\gamma) - \Delta_k(2\gamma) \\
&= \big( 2(I_d - \gamma A)^k - (I_d - 2\gamma A)^k \big) \Delta_0 - \big( 2L_k(\gamma) - L_k(2\gamma) \big) + \big( 2I_{1,k}(\gamma) - I_{1,k}(2\gamma) \big) + \big( 2I_{2,k}(\gamma) - I_{2,k}(2\gamma) \big)
\end{align*}

So we analyze each term in $\frac{1}{\sqrt{n}} \hat{S}_{\lfloor nt \rfloor} = \frac{1}{\sqrt{n}} \sum_{k=1}^{\lfloor nt \rfloor} \hat{\Delta}_k$. We start with the stochastic error for which,
\begin{align*}
& \max_{1 \le m \le n} \frac{1}{\sqrt{n}} \left| \sum_{k=1}^m \big( 2I_{2,k}(\gamma) - I_{2,k}(2\gamma) \big) \right|_2 \\
&\le 2\max_{1 \le m \le n} \frac{1}{\sqrt{n}} \left| \sum_{k=1}^m I_{2,k}(\gamma) \right|_2 + \max_{1 \le m \le n} \frac{1}{\sqrt{n}} \left| \sum_{k=1}^m I_{2,k}(2\gamma) \right|_2 \\
&\xrightarrow{\ \mathbb{P}\ } 0 + 0 = 0 \text{, by Lemma \ref{Sum stochastic error to 0}, since } n\gamma \to \infty,\ n(2\gamma) \to \infty,\ \gamma \to 0
\end{align*}

For the initialization error, 
\begin{align*}
& \left| \frac{1}{\sqrt{n}} \sum_{k=1}^{\lfloor nt \rfloor} \big( 2(I_d - \gamma A)^k \Delta_0 - (I_d - 2\gamma A)^k \Delta_0 \big) \right|_2 \\
&\le \left| \frac{2}{\sqrt{n}} (I_d - \gamma A) \big(I_d - (I_d - \gamma A)^{\lfloor nt \rfloor}\big) (\gamma A)^{-1} \Delta_0 \right|_2 + \left| \frac{1}{\sqrt{n}} (I_d - 2\gamma A) \big(I_d - (I_d - 2\gamma A)^{\lfloor nt \rfloor}\big) (2\gamma A)^{-1} \Delta_0 \right|_2 \\
&\le \frac{2}{\sqrt{n}\gamma} \left| A^{-1} \right| (1 - \gamma \lambda) \left| \Delta_0 \right|_2 + \frac{1}{2\sqrt{n}\gamma}  \left| A^{-1} \right| (1 - 2 \gamma \lambda) \left| \Delta_0 \right|_2\\
&\le \frac{2 |A^{-1}|_2 |\Delta_0|_2}{\sqrt{n}\gamma} + \frac{|A^{-1}|_2 |\Delta_0|_2}{2\sqrt{n}\gamma} \\
&= \frac{5 |A^{-1}|_2 |\Delta_0|_2}{2 \sqrt{n}\gamma} \\
&\to 0 \quad \text{, since } \sqrt{n}\gamma \to \infty
\end{align*}

For the linear sequence, by the same Fubini swap as in Lemma \ref{sum FCLT lin iterates} we have that
\begin{align*}
& \frac{-1}{\sqrt{n}} \sum_{k=1}^{\lfloor nt \rfloor} \big( 2L_k(\gamma) - L_k(2\gamma) \big) \\
&= \frac{-2}{\sqrt{n}} \left( A^{-1} \sum_{j=1}^{\lfloor nt \rfloor} Z_j - A^{-1} \sum_{j=1}^{\lfloor nt \rfloor} (I_d - \gamma A)^{\lfloor nt \rfloor-j+1} Z_j \right) \\
&\quad+ \frac{1}{\sqrt{n}} \left( A^{-1} \sum_{j=1}^{\lfloor nt \rfloor} Z_j - A^{-1} \sum_{j=1}^{\lfloor nt \rfloor} (I_d - 2\gamma A)^{\lfloor nt \rfloor-j+1} Z_j \right) \\
&\text{(Fubini sum trick)} \\
&= -A^{-1} \frac{1}{\sqrt{n}} \sum_{j=1}^{\lfloor nt \rfloor} Z_j + \frac{2}{\sqrt{n}} A^{-1} \sum_{j=1}^{\lfloor nt \rfloor} (I_d - \gamma A)^{\lfloor nt \rfloor-j+1} Z_j - \frac{1}{\sqrt{n}} A^{-1} \sum_{j=1}^{\lfloor nt \rfloor} (I_d - 2\gamma A)^{\lfloor nt \rfloor-j+1} Z_j \\
&\Rightarrow A^{-1} S^{1/2} B(t) + 0 - 0 \text{, by the exact same proof as Lemma \ref{sum FCLT lin iterates}} \\
&\stackrel{d}{=} A^{-1} S^{1/2} B(t) 
\end{align*}

Finally for the nonlinear error term, $\frac{1}{\sqrt{n}} \sum_{k=1}^{\lfloor nt \rfloor} \big( 2I_{1,k}(\gamma) - I_{1,k}(2\gamma) \big)$, we have that
\begin{align*}
& \max_{1 \le m \le n} \frac{1}{\sqrt{n}} \left| \sum_{k=1}^m \big( 2I_{1,k}(\gamma) - I_{1,k}(2\gamma) \big) \right|_2 \\
&\le 2\max_{1 \le m \le n} \frac{1}{\sqrt{n}} \left| \sum_{k=1}^m \tilde{I}_{1,k}(\gamma) \right|_2 + \max_{1 \le m \le n} \frac{1}{\sqrt{n}} \left| \sum_{k=1}^m \tilde{I}_{1,k}(2\gamma) \right|_2 + \max_{1 \le m \le n} \left| \frac{1}{\sqrt{n}} \sum_{k=1}^m \big( 2\mathbb{E}[I_{1,k}(\gamma)] - \mathbb{E}[I_{1,k}(2\gamma)] \big) \right|_2 \\
&\xrightarrow{\ \mathbb{P}\ } 0 + 0 + \max_{1 \le m \le n} \left| \frac{1}{\sqrt{n}} \sum_{k=1}^m \big( 2\mathbb{E}[I_{1,k}(\gamma)] - \mathbb{E}[I_{1,k}(2\gamma)] \big) \right|_2 \quad \text{, by Lemma \ref{Sum centered taylor error to 0}}
\end{align*}

Again by the same Fubini sum swap from Theorem \ref{thm:fclt_init_bias},
\begin{align*}
& \frac{1}{\sqrt{n}} \sum_{k=1}^{\lfloor nt \rfloor} \big( 2\mathbb{E}[I_{1,k}(\gamma)] - \mathbb{E}[I_{1,k}(2\gamma)] \big) \\
&= A^{-1} \frac{1}{\sqrt{n}} \sum_{j=1}^{\lfloor nt \rfloor} \big( 2\mathbb{E}[R_j(\gamma)] - \mathbb{E}[R_j(2\gamma)] \big) - 2A^{-1} \frac{1}{\sqrt{n}} \sum_{j=1}^{\lfloor nt \rfloor} (I_d - \gamma A)^{\lfloor nt \rfloor-j+1} \mathbb{E}[R_j(\gamma)] \\
&+ A^{-1} \frac{1}{\sqrt{n}} \sum_{j=1}^{\lfloor nt \rfloor} (I_d - 2\gamma A)^{\lfloor nt \rfloor-j+1} \mathbb{E}[R_j(2\gamma)]
\end{align*}

For $\eta\in\{\gamma,2\gamma\}$, the Fubini boundary is uniformly bounded by
\[
\max_{m\le n}\left\|\frac{A^{-1}}{\sqrt n}\sum_{j=1}^m
(I_d-\eta A)^{m-j+1}\mathbb E R_j(\eta)\right\|_2
\le\frac{\|A^{-1}\|_2C_0\rho_2^2}{e\lambda\sqrt n\eta}
+\frac{3\|A^{-1}\|_2C_0M_2^2}{\lambda\alpha_0\sqrt n}=o(1).
\]
Thus it only remains to look at $A^{-1} \frac{1}{\sqrt{n}} \sum_{j=1}^{\lfloor nt \rfloor} \big( 2\mathbb{E}[R_j(\gamma)] - \mathbb{E}[R_j(2\gamma)] \big)$. Here we heavily use the results from Theorem \ref{thm:fclt_init_bias} and expand $$A^{-1} \frac{1}{\sqrt{n}} \sum_{j=1}^{\lfloor nt \rfloor} \mathbb{E}[R_j(\eta)]= -\frac12 A^{-1} \frac{1}{\sqrt{n}} \sum_{j=1}^{\lfloor nt \rfloor} \nabla^3 G(\theta^*) \mathbb{E}[L_{j-1}(\eta)L_{j-1}(\eta)^\top] + A^{-1}\frac{1}{\sqrt{n}} \sum_{j=1}^{\lfloor nt \rfloor}\text{Rem}_j(\eta).$$ Then for each of the two step sizes $\eta$ from Theorem \ref{thm:fclt_init_bias},
\begin{align*}
& \left| A^{-1} \frac{1}{\sqrt{n}} \sum_{j=1}^{\lfloor nt \rfloor} \text{Rem}_j(\eta) + \frac{1}{\sqrt{n}} \sum_{j=1}^{\lfloor nt \rfloor} A^{-1} R_{det}(x((j-1) \eta)) \right|_2 \\
&\le \frac{\|A^{-1}\|_2}{2\sqrt{n}} \sum_{j=1}^{\lfloor nt \rfloor} \left| \mathbb{E}\Big[ \big( \nabla^3 G(\tilde{\theta}_{j-1}) - \nabla^3 G(\theta^*) \big) [S_{j-1}, S_{j-1}] \Big] \right|_2 + \\
&\frac{\|A^{-1}\|_2}{2\sqrt{n}} \sum_{j=1}^{\lfloor nt \rfloor} |\nabla^3 G(\theta^*)|_2 |\mathcal{E}_{j-1}|_2 + \left| A^{-1} \frac{1}{\sqrt{n}} \sum_{j=1}^{\lfloor nt \rfloor} \big( R_{det}(H_{j-1}) - R_{det}(x((j-1)\eta)) \big) \right|_2 \\
&\le \|A^{-1}\|_2\left( \frac{L_4 \rho_2 \sqrt{C_S}}{\alpha_0 \sqrt{n}} + t \frac{L_4 \sqrt{3C_S} M_2}{2\alpha_0^{1/2}} \sqrt{n}\, \eta^{1.5} \right) \\
&+ \frac{\|A^{-1}\|_2|\nabla^3 G(\theta^*)|_2}{2} \left( \frac{2 C_{\phi 1}}{\lambda^2 \sqrt{n}} + \frac{32 C_{R3}}{\lambda^3 \sqrt{n}} + 2t\sqrt{n}(C_{\phi 2} + C_{R4}) \eta^2 + \frac{2^{5/2} C_{X1}}{\lambda^{3/2} \sqrt n} + \frac{8 C_{X2}}{\lambda^2 \sqrt{n}} + 2t\sqrt{n}\, C_{X3} \eta^{1.5} \right) \\
&+ \left( \frac{|A^{-1}|_2 C_L C_E \rho_2}{\alpha_0 \sqrt{n}} + \frac{|A^{-1}|_2 C_L C_E M_2 \sqrt{3}}{2\sqrt{\alpha_0}} t\sqrt{n}\, \eta^{1.5} + \frac{|A^{-1}|_2 C_L C_E |\Delta_0|_2}{\mu \sqrt{n}} \right) \text{by the bounds of Theorem } \ref{thm:fclt_init_bias} \\
&\to 0 \quad \text{, since } n\eta^3 \to 0 \Rightarrow \sqrt{n}\eta^{1.5}\to 0,\ \text{and } n\eta\to\infty \Rightarrow \sqrt{n\eta}\to\infty
\end{align*}

and for the final deterministic part
\begin{align*}
& \left| \frac{1}{\sqrt{n}} \sum_{j=1}^{\lfloor nt \rfloor} A^{-1} R_{det}(x((j-1) \eta)) \right|_2 \\
&\le \frac{|A^{-1}|_2}{\sqrt{n}} \sum_{j=1}^{\lfloor nt \rfloor} \frac{C_L}{2} |\Delta_0|_2^2 e^{-2\mu (j-1)\eta}  \text{, by Theorem \ref{thm:fclt_init_bias}} \\
&\le \frac{|A^{-1}|_2 C_L |\Delta_0|_2^2}{2\sqrt{n}} \cdot \frac{1}{1 - e^{-2\mu\eta}} \\
&\le \frac{|A^{-1}|_2 C_L |\Delta_0|_2^2}{2\sqrt{n} \eta} \cdot \frac{2}{2\mu} \quad \text{(} 1-e^{-2\mu\eta} \ge \mu\eta \text{ for } \mu\eta\le1/2\text{)} \\
&= \frac{|A^{-1}|_2 C_L |\Delta_0|_2^2}{2\mu\sqrt{n}\eta} \\
&\to 0 \quad \text{, when } \sqrt{n}\eta\to\infty, \eta \to 0
\end{align*}

Therefore, all the nonlinear remainder terms converge to zero and so applying Lemma \ref{linear covariance expansion}:
\begin{align*}
& \left| -\frac{1}{2} A^{-1} \frac{1}{\sqrt{n}} \sum_{j=1}^{\lfloor nt \rfloor} \nabla^3 G(\theta^*) \big[ 2\mathbb{E}[L_{j-1}(\gamma)L_{j-1}(\gamma)^\top] - \mathbb{E}[L_{j-1}(2\gamma)L_{j-1}(2\gamma)^\top] \big] \right|_2 \\
&= \Bigg| -\frac{1}{2} A^{-1} \frac{1}{\sqrt{n}} \sum_{j=1}^{\lfloor nt \rfloor} \nabla^3 G(\theta^*) \Bigg[ 2\Big( \gamma\Gamma - \gamma(I_d-\gamma A)^{j-1}\Gamma(I_d-\gamma A)^{j-1} + \gamma^3 \sum_{i=0}^{j-2}(I_d-\gamma A)^i A\Gamma A(I_d-\gamma A)^i \Big) \\
&- \Big( 2\gamma\Gamma - 2\gamma(I_d-2\gamma A)^{j-1}\Gamma(I_d-2\gamma A)^{j-1} + 8\gamma^3\sum_{i=0}^{j-2}(I_d-2\gamma A)^i A\Gamma A(I_d-2\gamma A)^i \Big) \Bigg] \Bigg|_2 \\
&= \Bigg| -\frac{1}{2} A^{-1} \frac{1}{\sqrt{n}} \sum_{j=1}^{\lfloor nt \rfloor} \nabla^3 G(\theta^*) \Bigg[ -2\gamma(I_d-\gamma A)^{j-1}\Gamma(I_d-\gamma A)^{j-1} + 2\gamma(I_d-2\gamma A)^{j-1}\Gamma(I_d-2\gamma A)^{j-1} \\
&+ 2\gamma^3\sum_{i=0}^{j-2}(I_d-\gamma A)^i A\Gamma A(I_d-\gamma A)^i - 8\gamma^3\sum_{i=0}^{j-2}(I_d-2\gamma A)^i A\Gamma A(I_d-2\gamma A)^i \Bigg] \Bigg|_2 \\
&\le \frac{|A^{-1}|_2 |\nabla^3 G(\theta^*)|_2 |\Gamma|_2}{2\sqrt{n}} \sum_{j=1}^{\lfloor nt \rfloor} \Big( 2\gamma(1-\lambda_*\gamma)^{2j-2} + 2\gamma(1-2\lambda_*\gamma)^{2j-2} \Big) \\
&+ \frac{|A^{-1}|_2 |\nabla^3 G(\theta^*)|_2 |A\Gamma A|_2}{2\sqrt{n}} \sum_{j=1}^{\lfloor nt \rfloor} \Big( 2\gamma^3 \sum_{i=0}^{j-2}(1-\lambda_*\gamma)^{2i} + 8\gamma^3\sum_{i=0}^{j-2}(1-2\lambda_*\gamma)^{2i} \Big) \\
&\le \frac{|A^{-1}|_2 |\nabla^3 G(\theta^*)|_2 |\Gamma|_2}{2\sqrt{n}} \left( \frac{2\gamma}{\lambda_*\gamma} + \frac{2\gamma}{2\lambda_*\gamma} \right) + \frac{|A^{-1}|_2 |\nabla^3 G(\theta^*)|_2 |A\Gamma A|_2}{2\sqrt{n}} \sum_{j=1}^{\lfloor nt \rfloor} \left( \frac{2\gamma^3}{\lambda_*\gamma} + \frac{8\gamma^3}{2\lambda_*\gamma} \right) \\
&\le \frac{3 |A^{-1}|_2 |\nabla^3 G(\theta^*)|_2 |\Gamma|_2}{2\lambda_* \sqrt{n}} + \frac{3t |A^{-1}|_2 |\nabla^3 G(\theta^*)|_2 |A\Gamma A|_2}{\lambda_*} \sqrt{n}\gamma^2 \\
&\to 0 + 0 = 0 \quad \text{, since } \sqrt{n}\gamma^2 = (\sqrt{n}\gamma^{1.5})\sqrt{\gamma} \to 0.
\end{align*}

Therefore, putting all this together
\begin{align*}
& \frac{1}{\sqrt{n}} \hat{S}_{\lfloor nt \rfloor} \\
&= \frac{1}{\sqrt{n}} \sum_{k=1}^{\lfloor nt \rfloor} \big( 2(I_d - \gamma A)^k - (I_d - 2\gamma A)^k \big) \Delta_0 - \frac{1}{\sqrt{n}} \sum_{k=1}^{\lfloor nt \rfloor} \big( 2L_k(\gamma) - L_k(2\gamma) \big) \\
&+ \frac{1}{\sqrt{n}} \sum_{k=1}^{\lfloor nt \rfloor} \big( 2I_{1,k}(\gamma) - I_{1,k}(2\gamma) \big) + \frac{1}{\sqrt{n}} \sum_{k=1}^{\lfloor nt \rfloor} \big( 2I_{2,k}(\gamma) - I_{2,k}(2\gamma) \big) \\
&\Rightarrow 0 - \big( -A^{-1}S^{1/2}B(t) \big) + 0 + 0 \text{, by multivariate Slutsky's theorem} \\
&= A^{-1} S^{1/2} B(t)
\end{align*}
in $D([0,1], \mathbb{R}^d)$.
\end{proof}
\section{Trajectory FCLT}
\begin{theorem}\label{thm:trajectory_fclt}
Assume Assumptions 2.1 and 2.2 of \cite{NEURIPS2025_7aae9e3e} hold for $p = 2$. Also impose Assumption~\ref{ass:smooth} and~\eqref{eq:steps-main} at order $r=2$. Let $\gamma\to0$ and choose $n_0(\gamma)=2\lceil\nu\log(1/\gamma)/(2\gamma)\rceil$ for a constant $\nu>3/\lambda$ with the paper-wide $\lambda$. For the change in index $k(t) = n_0(\gamma) + \lfloor t/\gamma \rfloor$, define
$$Y_\gamma(t) = \frac{\theta_{k(t)} - \theta^*}{\sqrt{\gamma}}.$$
Then, $Y_\gamma(\cdot)$ converges weakly in the Skorokhod space $D([0, T], \mathbb{R}^d)$ to the stationary Ornstein-Uhlenbeck process $X(\cdot)$ satisfying:
$$dX_t = -A X_t dt + S^{1/2} dB_t,$$
where $A = \nabla^2 G(\theta^*)$, $S = \mathbb{E}[\nabla g(\theta^*, X) \nabla g(\theta^*, X)^\top]$, and $B_t$ is a standard $d$-dimensional Brownian motion. The process is stationary with invariant measure $\mathcal{N}(0, \Gamma)$, where $A\Gamma + \Gamma A = S$.
\end{theorem}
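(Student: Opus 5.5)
We would follow the linearization decomposition \eqref{eq:decomp-main} written at index $k(t)=n_0(\gamma)+\lfloor t/\gamma\rfloor$:
\[
\frac{\Delta_{k(t)}}{\sqrt\gamma}=\frac{Q^{k(t)}\Delta_0}{\sqrt\gamma}-\frac{L_{k(t)}}{\sqrt\gamma}+\frac{I_{1,k(t)}}{\sqrt\gamma}+\frac{I_{2,k(t)}}{\sqrt\gamma}.
\]
The plan is to show that every term except the linear one is uniformly negligible on $[0,T]$ in probability. Then we prove $-L_{k(\cdot)}/\sqrt\gamma\Rightarrow X(\cdot)$ in $D([0,T],\mathbb R^d)$ and conclude by Slutsky's theorem in metric spaces, exactly as in Lemma \ref{sum FCLT lin iterates}. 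Since $-L$ and $L$ have the same law when the noise is only second order, the sign is harmless for the limit, because the OU limit is symmetric.

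\textbf{Initialization term.} It is deterministic and bounded by $\gamma^{-1/2}(1-\lambda_*\gamma)^{n_0}\|\Delta_0\|\le \gamma^{\nu\lambda-1/2}\|\Delta_0\|\to0$.

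\textbf{Stochastic error $I_2$.} Over the window, $I_{2,k}$ is a linear filter of the martingale differences $D_j$. We would write $I_{2,n_0+m}=Q^mI_{2,n_0}+\gamma\sum_{j\le m}Q^{m-j}D_{n_0+j}$ and apply summation by parts as in Lemma \ref{Sum stochastic error to 0}. This bounds the maximum over $m\le T/\gamma$ by $C\gamma\max_{m\le T/\gamma}\|\sum_{j\le m}D_{n_0+j}\|$ plus $\|I_{2,n_0}\|$. Doob's inequality and $\mathbb E\|D_j\|^2\le 4L_2^2\|\Delta_{j-1}\|_2^2\le C\gamma$ after burn in give $\mathbb E\max\|I_{2}\|^2\le C\gamma^2\cdot(T/\gamma)\cdot\gamma=CT\gamma^2$. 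The extra $\|I_{2,n_0}\|_2\le C\gamma$ term is similarly small. After dividing by $\sqrt\gamma$, this is $O_{\mathbb P}(\sqrt\gamma)$.

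\textbf{Nonlinear error $I_1$.} We would use only the crude bound $\|R_j\|\le C_0\|\Delta_{j-1}\|^2$, which requires only second moments. Then
\[
\max_k\|I_{1,k}\|\le \gamma\sum_{j\le n_0+T/\gamma}(1-\lambda\gamma)^{k-j}C_0\|\Delta_{j-1}\|^2 .
\]
Its expectation is at most $C(\gamma^{-1}\cdot\gamma\cdot\gamma+\text{initial decay})=O(\gamma)$, uniformly in $k$. The maximum over $T/\gamma$ indices must be controlled without a union bound. Bounding it by the single sum $\gamma\sum_{j\le n_0+T/\gamma}C_0\|\Delta_{j-1}\|^2\,(\text{with decay})$ is too lossy. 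Instead, split at $n_0$: the contribution of $j\le n_0$ is damped by $Q^{m}$ and is $O(\gamma)$ in $L^1$. For $j>n_0$, use $\max_{m}\|\gamma\sum_{j\le m}Q^{m-j}R_{n_0+j}\|\le\gamma\sum_{j\le T/\gamma}\|R_{n_0+j}\|$, whose expectation is $O(T\gamma)$. Divided by $\sqrt\gamma$, this is $O(\sqrt\gamma)\to0$ by Markov's inequality.

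\textbf{Linear term.} Write $L_{n_0+m}=Q^mL_{n_0}+L^{\rm new}_m$, where the two pieces are independent. First, $L_{n_0}/\sqrt\gamma$ has covariance $\Gamma-Q^{n_0}\Gamma Q^{n_0}+O(\gamma)$ by Lemma \ref{linear covariance expansion}. By the multivariate Lindeberg CLT for the triangular array $\sqrt\gamma Q^{n_0-j}Z_j$, it converges to $N(0,\Gamma)$; the Lindeberg condition holds since the weights are $O(\sqrt\gamma)$ and the $Z_j$ are i.i.d. with finite variance. Second, $Q^{\lfloor t/\gamma\rfloor}\to e^{-tA}$ uniformly on $[0,T]$. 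Third, $L^{\rm new}_{\lfloor t/\gamma\rfloor}/\sqrt\gamma=\sqrt\gamma\sum_{j\le t/\gamma}Q^{\lfloor t/\gamma\rfloor-j}Z_{n_0+j}$. Summation by parts, as in Lemma \ref{sum FCLT lin iterates}, expresses it as a continuous-in-sup-norm functional of the rescaled random walk $W_\gamma(s)=\sqrt\gamma\sum_{j\le s/\gamma}Z_{n_0+j}$, namely
\[
W_\gamma(t)-\int_0^t Ae^{-(t-s)A}W_\gamma(s)\,ds+o(1).
\]
Donsker's theorem gives $W_\gamma\Rightarrow S^{1/2}B$, and the continuous mapping theorem then yields $\int_0^te^{-(t-s)A}S^{1/2}dB_s$. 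Combining the two independent pieces gives $e^{-tA}X_0+\int_0^te^{-(t-s)A}S^{1/2}dB_s$ with $X_0\sim N(0,\Gamma)$ independent of $B$, which is the stationary OU process. Joint convergence follows from independence.

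\textbf{Main obstacle.} The delicate part is the uniform control of the discrete-to-continuous approximation error in the summation-by-parts map. This means showing that the discrete convolution kernel $\gamma A Q^{m-j}$ converges uniformly to $Ae^{-(t-s)A}$ and that the map is continuous at continuous paths in the Skorokhod topology. We would handle it via the modulus-of-continuity splitting already used in Lemma \ref{sum FCLT lin iterates}, together with tightness of $W_\gamma$.
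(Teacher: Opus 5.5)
Your proposal is correct. For the initialization, $I_1$ and $I_2$ terms it is essentially the paper's argument. The one change is that you split at $n_0$ instead of $\lfloor n_0/2\rfloor$, which also works. Note, though, where the smallness of $\mathbb E\|I_{1,n_0}\|_2$ comes from. It is the burn-in-driven convolution bound $\gamma\sum_{j\le n_0}(1-\lambda_*\gamma)^{n_0-j}(1-\alpha_0\gamma)^{j-1}\le\gamma n_0(1-\lambda\gamma)^{n_0-1}=O(\gamma^{\lambda\nu}\log(1/\gamma))$. It is not the factor $Q^m$, which equals $I_d$ at $m=0$. The paper's split at $\lfloor n_0/2\rfloor$ instead gives both pieces an explicit damping factor $\gamma^{(\lambda_*\nu-1)/2}$.

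The genuine difference is in the linear term.

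\textbf{The paper's route.} It rewrites $W_k=(I_d-\gamma A)W_{k-1}-\sqrt\gamma Z_k$ as the integral equation $W_{k(t)}=W_{n_0(\gamma)}-A\int_0^tW_{k(s)}\,ds+r_\gamma(t)+w_\gamma(t)$. It then applies the solution map $\Phi$ of $x(t)=x_0-\int_0^tAx(s)\,ds+w(t)$, whose continuity at continuous $w$ comes from Gronwall. The $O(\gamma)$ remainder $r_\gamma$ is removed by a discrete Gronwall bound plus tightness. The limit $W_{n_0(\gamma)}\Rightarrow N(0,\Gamma)$ is imported from Theorem 3.4 of \citet{NEURIPS2025_7aae9e3e}.

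\textbf{Your route.} You use the explicit form of the same map, $\Phi(x_0,w)(t)=e^{-tA}x_0+w(t)-\int_0^tAe^{-(t-s)A}w(s)\,ds$. This forces you to compare the discrete kernel $\gamma Q^r$ with $\int e^{-uA}\,du$. You also prove the Gaussian limit of $L_{n_0}/\sqrt\gamma$ yourself by Lindeberg. The point that makes Lindeberg work is $\gamma\sum_j\|Q^{n_0-j}\|_2^2\le1/\lambda_*$, so the Lindeberg sum is at most $\lambda_*^{-1}\mathbb E[\|Z_1\|_2^2\mathbf 1\{\|Z_1\|_2>\varepsilon/\sqrt\gamma\}]\to0$.

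\textbf{What each buys.} The paper's route never needs $Q^r\approx e^{-r\gamma A}$ and would carry over to Lipschitz nonlinear drifts. Yours is self-contained and shows the stochastic-convolution form and the independence of $X_0$ and $B$ directly.

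\textbf{Your main obstacle is easy.} For $u\in[r\gamma,(r+2)\gamma]$ one has $\|Q^r-e^{-uA}\|_2\le\gamma\{r\gamma(\lambda^*)^2+2\lambda^*\}$; this is the eigenvalue estimate from the paper's trajectory KMT proof. So the total kernel error is $O(\gamma)\sup_{s\le T}\|W_\gamma(s)\|_2=O_{\mathbb P}(\gamma)$ uniformly on $[0,T]$. No modulus-of-continuity splitting is needed.

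\textbf{A small correction.} $-L$ and $L$ do not have the same law when the noise is asymmetric. What you actually use, and what is true, is only that the Gaussian limit is symmetric.
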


\begin{proof}
Consider the decomposition of $\Delta_k = \theta_k - \theta^*$ from  \cite{NEURIPS2025_7aae9e3e}, which is 
\begin{align*}
\frac{1}{\sqrt{\gamma}} \Delta_k &= \frac{1}{\sqrt{\gamma}} ((I_d - \gamma A)^k \Delta_0 - L_k + I_{1,k} + I_{2,k}),
\end{align*}
where $L_k = \gamma \sum_{j=1}^k (I_d - \gamma A)^{k-j} Z_j$, $I_{1,k} = \gamma \sum_{j=1}^k (I_d - \gamma A)^{k-j} R_j$, and $I_{2,k} = \gamma \sum_{j=1}^k (I_d - \gamma A)^{k-j} D_j$. $R_j = A\Delta_{j-1} - \nabla G(\theta_{j-1})$, and $D_j = \nabla G(\theta_{j-1}) - \nabla g(\theta_{j-1}, X_j) + \nabla g(\theta^*, X_j)$. Let $k(t) = n_0(\gamma) + \lfloor t/\gamma \rfloor$ and $N = n_0(\gamma) + \lfloor T/\gamma \rfloor$. 

We will first prove that all the remainder terms converge uniformly to 0 in probability and then show the linear term converges to our desired OU process.

We start with the initialization error which we bound by
\begin{align*}
&\sup_{t \in [0,T]} \left\| \frac{1}{\sqrt{\gamma}} (I_d - \gamma A)^{k(t)} \Delta_0 \right\|_2 \\
&\le \frac{1}{\sqrt{\gamma}} (1 - \gamma \lambda_*)^{n_0(\gamma)} \|\Delta_0\|_2 \text{, since $n_0(\gamma) \leq k(t)$ $ \forall t\in [0,T]$,} \\
&\le \frac{1}{\sqrt{\gamma}} \exp(-\lambda_* n_0(\gamma) \gamma) \|\Delta_0\|_2 \\
&\le \frac{1}{\sqrt{\gamma}} \exp(-\lambda_* \nu \log(1/\gamma)) \|\Delta_0\|_2 \text{, since $n_0(\gamma) \gamma \geq \nu \log(1/\gamma)$ by assumption,} \\
&= \gamma^{\lambda_* \nu - 1/2} \|\Delta_0\|_2.
\end{align*}
Again we assume that $\nu > 3/\lambda \ge 3/\lambda_*$, so $\lambda_*\nu - 1/2 > 1$, and taking $\gamma \to 0$ we see that $\frac{1}{\sqrt{\gamma}} (I_d - \gamma A)^{k(t)} \Delta_0$ converges to zero uniformly in $t$.

\bigskip

We now handle the $I_{2,k}$ term. Recall that we defined $\rho_2^2 = \|\Delta_0\|_2^2$ and let $m = \lfloor n_0(\gamma)/2 \rfloor$. For $k \ge n_0(\gamma)$ we decompose $I_{2,k}$ as
\begin{align*}
I_{2,k} &= (I_d - \gamma A)^{k - m} \left( \gamma \sum_{j=1}^{m} (I_d - \gamma A)^{m-j} D_j \right) + \gamma \sum_{j=m + 1}^k (I_d - \gamma A)^{k-j} D_j \\
&= (I_d - \gamma A)^{k - m} I_{2, m} + \gamma \sum_{j=m + 1}^k (I_d - \gamma A)^{k-j} D_j.
\end{align*}

Starting with a bound for $I_{2,m}$
\begin{align*}
\sup_{k \ge n_0(\gamma)} \frac{1}{\sqrt{\gamma}} \left\| (I_d - \gamma A)^{k - m} I_{2, m} \right\|_2 
&\le \frac{1}{\sqrt{\gamma}} (1 - \gamma \lambda_*)^{n_0(\gamma)/2} \| I_{2, m} \|_2 \\
&\le \gamma^{(\lambda_* \nu - 1)/2} \| I_{2, m} \|_2 \text{, by the same logic as above.}
\end{align*}
 By the stochastic Lipschitzness assumption on the gradients(Assumption 2.2) and Theorem 2.3 of \cite{NEURIPS2025_7aae9e3e}
 $\mathbb{E}\|D_j\|_2^2 \le 4L_2^2 \mathbb{E}\|\Delta_{j-1}\|_2^2 \le 4L_2^2 (\rho_2^2(1-\alpha_0\gamma)^{j-1} + \frac{3M_2^2}{\alpha_0}\gamma)$ which is a bound we will employ below. Since the $D_j$ are uncorrelated and mean zero
\begin{align*}
&\mathbb{E}\| I_{2, m} \|_2^2 \\
&\le \gamma^2 \sum_{j=1}^{m} \| (I_d - \gamma A)^{m-j} \|_2^2 \mathbb{E}\|D_j\|_2^2 \\
&\le 4L_2^2 \gamma^2 \sum_{j=1}^{m} (1 - \gamma \lambda_*)^{2(m-j)} \left( \rho_2^2(1-\alpha_0\gamma)^{j-1} + \frac{3M_2^2}{\alpha_0}\gamma \right) \text{, by the above bound on $\mathbb{E}\|D_j\|_2^2$} \\
&\le 4L_2^2\rho_2^2\gamma^2m(1-\lambda\gamma)^{m-1}
+\frac{12L_2^2M_2^2}{\alpha_0}\gamma^3\sum_{j=0}^{\infty}(1-\gamma\lambda_*)^j\\
&\le 4L_2^2 \rho_2^2 \gamma^2 m \exp(- (m-1) \gamma \lambda) + \frac{12 L_2^2 M_2^2}{\alpha_0 \lambda_*} \gamma^2 \\
&\le 4L_2^2 \rho_2^2 \gamma^2 \frac{e^{\gamma \lambda}}{e \gamma \lambda} + \frac{12 L_2^2 M_2^2}{\alpha_0 \lambda_*} \gamma^2 \text{, since $x \exp(-x) \leq \exp(-1)$ by the supporting hyperplane theorem.}
\end{align*}

Hence $\mathbb E\left[\sup_{k \ge n_0(\gamma)} \frac{1}{\gamma} \left\| (I_d - \gamma A)^{k - m} I_{2, m} \right\|^2_2\right] \leq \gamma^{(\lambda_* \nu - 1)}(4L_2^2 \rho_2^2 \gamma \frac{e^{\gamma \lambda}}{e \lambda} + \frac{12 L_2^2 M_2^2}{\alpha_0 \lambda_*} \gamma^2) \underset{\gamma \rightarrow 0 }{\rightarrow} 0$.

Now to bound $\gamma \sum_{j=m + 1}^k (I_d - \gamma A)^{k-j} D_j$ denote the martingale $\tilde{M}_k = \sum_{j=m + 1}^k D_j$ so
\begin{align*}
&\gamma \sum_{j=m+1}^{k}(I_d-\gamma A)^{k-j}D_j \\
&= \gamma \sum_{j=m+1}^{k}(I_d-\gamma A)^{k-j}(\tilde M_j-\tilde M_{j-1}), \text{ with $M_m := 0$} \\
&= \gamma \sum_{j=m+1}^{k}(I_d-\gamma A)^{k-j}\tilde M_j
   - \gamma \sum_{j=m}^{k-1}(I_d-\gamma A)^{k-1-j}\tilde M_j \\
&= \gamma \tilde M_k
   + \gamma \sum_{j=m+1}^{k-1}
   \Big[(I_d-\gamma A)^{k-j}-(I_d-\gamma A)^{k-1-j}\Big]\tilde M_j \\
&= \gamma \tilde M_k
   + \gamma \sum_{j=m+1}^{k-1}
   \Big[-\gamma A(I_d-\gamma A)^{k-j-1} +(I_d-\gamma A)^{k-j-1}-(I_d-\gamma A)^{k-1-j}\Big]\tilde M_j \\
&= \gamma \tilde M_k
   - \gamma^2 A \sum_{j=m+1}^{k-1}(I_d-\gamma A)^{k-1-j}\tilde M_j .
\end{align*}
Hence,
\begin{align*}
\max_{n_0(\gamma) \le k \le N} \|\gamma \sum_{j=m + 1}^k (I_d - \gamma A)^{k-j} D_j\|_2 
&\le \gamma \max_{j \le N} \|\tilde{M}_j\|_2 + \gamma^2 \lambda^* \sum_{i=0}^\infty (1-\gamma \lambda_*)^i \max_{j \le N} \|\tilde{M}_j\|_2 \\
&\le \gamma \left( 1 + \frac{\lambda^*}{\lambda_*} \right) \max_{j \le N} \|\tilde{M}_j\|_2.
\end{align*}
Now by Doob's maximal $\mathcal{L}^2$ inequality and using that the $D_j$ are uncorrelated we get that
\begin{align*}
\mathbb{E} \left[ \max_{j \le N} \|\tilde{M}_j\|_2^2 \right] 
&\le 4 \sum_{j=m + 1}^N \mathbb{E}\|D_j\|_2^2 \\
&\le 16 L_2^2 \sum_{j=m + 1}^N \left( \rho_2^2(1-\alpha_0\gamma)^{n_0(\gamma)/2} + \frac{3M_2^2}{\alpha_0}\gamma \right) \text{, by the bound on $D_j$ from above} \\
&\le 16 L_2^2 N \left( \rho_2^2 \exp (-\alpha_0 \gamma n_0(\gamma) \frac{1}{2})  + \frac{3M_2^2}{\alpha_0}\gamma \right) \\
&\le 16 L_2^2 N \left( \rho_2^2 \gamma^{\alpha_0 \nu / 2} + \frac{3M_2^2}{\alpha_0}\gamma \right) \text{, since we assume $-\gamma n_0(\gamma) \leq \nu \log(\gamma)$} \\
&\leq  16 L_2^2  \left( \frac{\nu \log(1/\gamma) + T+2\gamma}{\gamma} \right)  \left( \rho_2^2 \gamma^{\alpha_0 \nu / 2} + \frac{3M_2^2}{\alpha_0}\gamma \right) \text{, by definition of $N$} \\
&= 16 L_2^2  \left( \nu \log(1/\gamma) + T+2\gamma \right)  \left( \rho_2^2 \gamma^{\frac{\alpha_0 \nu}{2} -1} + \frac{3M_2^2}{\alpha_0} \right) 
\end{align*}
Therefore,
\begin{align*}
&\mathbb{P}\!\left(\sup_{n_0(\gamma)\le k\le N}\frac{1}{\sqrt{\gamma}}\left\|\gamma\sum_{j=m+1}^{k}(I_d-\gamma A)^{k-j}D_j\right\|_2>\varepsilon\right) \\
&\le \frac{1}{\varepsilon^2}\mathbb{E}\!\left[\sup_{n_0(\gamma)\le k\le N}\frac{1}{\gamma}\left\|\gamma\sum_{j=m+1}^{k}(I_d-\gamma A)^{k-j}D_j\right\|_2^2\right] \\
&\le \frac{1}{\varepsilon^2}\frac{1}{\gamma}\gamma^2\left(1+\frac{\lambda^*}{\lambda_*}\right)^2\mathbb{E}\!\left[\max_{j\le N}\|\widetilde M_j\|_2^2\right] \\
&\le \frac{16L_2^2}{\varepsilon^2}\left(1+\frac{\lambda^*}{\lambda_*}\right)^2\gamma\bigl(\nu\log(1/\gamma)+T+2\gamma\bigr)\left(\rho_2^2\gamma^{\frac{\alpha_0\nu}{2}-1}+\frac{3M_2^2}{\alpha_0}\right) \\
&\le \frac{16L_2^2}{\varepsilon^2}\left(1+\frac{\lambda^*}{\lambda_*}\right)^2\gamma\bigl(\nu\log(1/\gamma)+T+2\gamma\bigr)\left(\rho_2^2+\frac{3M_2^2}{\alpha_0}\right) \\
& \underset{\gamma \rightarrow 0}{\to} 0.
\end{align*}

We now proceed to show the $I_{1,k}$ term goes to 0.

We apply the same split at $m = \lfloor n_0(\gamma)/2 \rfloor$.

\begin{align*}
I_{1,k}
&=(I_d-\gamma A)^{k-m}\left(\gamma\sum_{j=1}^{m}(I_d-\gamma A)^{m-j}R_j\right)+\gamma\sum_{j=m+1}^{k}(I_d-\gamma A)^{k-j}R_j\\
&=(I_d-\gamma A)^{k-m}I_{1,m}+\gamma\sum_{j=m+1}^{k}(I_d-\gamma A)^{k-j}R_j.
\end{align*}

We first derive a bound on $|R_n|$.  Following \cite{NEURIPS2025_7aae9e3e}, (by local smoothness Assumption 3.2 in the paper), when $|\theta_{n-1}-\theta^*| \le \kappa$, we apply Taylor's theorem and we have $|R_n| \le L|\theta_{n-1}-\theta^*|^2$. For $|\theta_{n-1}-\theta^*| > \kappa$, the Lipschitz continuity of the gradient implies $|R_n| \le 2L_2|\theta_{n-1}-\theta^*| \le 2L_2\kappa^{-1}|\theta_{n-1}-\theta^*|^2$. So we finally have $|R_n| \le C_0|\Delta_{n-1}|^2$ where $C_0=\max\{L,\,2L_2\kappa^{-1}\}$.

Hence
\begin{align*}
\mathbb E\|I_{1,m}\|_2
&\le\gamma\sum_{j=1}^{m}\|(I_d-\gamma A)^{m-j}\|_2\mathbb E\|R_j\|_2\\
&\le C_0\gamma\sum_{j=1}^{m}\mathbb E\|\Delta_{j-1}\|_2^2\\
&\le C_0\gamma\sum_{j=1}^{m}\left(\rho_2^2(1-\alpha_0\gamma)^{j-1}+\frac{3M_2^2}{\alpha_0}\gamma\right)\text{, by Theorem 2.3 of \cite{NEURIPS2025_7aae9e3e},}\\
&\le C_0\rho_2^2\gamma\sum_{j=0}^{\infty}(1-\alpha_0\gamma)^j+\frac{3C_0M_2^2}{\alpha_0}\gamma^2m\\
&\le\frac{C_0\rho_2^2}{\alpha_0}+\frac{3C_0M_2^2}{\alpha_0}\gamma^2m\\
&\le\frac{C_0\rho_2^2}{\alpha_0}+\frac{3C_0M_2^2}{2\alpha_0}n_0(\gamma)\gamma^2\\
&\le\frac{C_0\rho_2^2}{\alpha_0}+\frac{3C_0M_2^2}{2\alpha_0}\gamma\{\nu\log(1/\gamma)+2\gamma\}.
\end{align*}

Therefore
\begin{align*}
&\mathbb E\left[\sup_{k\ge n_0(\gamma)}\frac{1}{\sqrt{\gamma}}\|(I_d-\gamma A)^{k-m}I_{1,m}\|_2\right]\\
&\le\gamma^{(\lambda_*\nu-1)/2}\mathbb E\|I_{1,m}\|_2\\
&\le\gamma^{(\lambda_*\nu-1)/2}\left(\frac{C_0\rho_2^2}{\alpha_0}+\frac{3C_0M_2^2}{2\alpha_0}\gamma\{\nu\log(1/\gamma)+2\gamma\}\right)\\
&\underset{\gamma\to0}{\longrightarrow}0,
\end{align*}
since $\nu>3/\lambda\ge3/\lambda_*$ implies $(\lambda_*\nu-1)/2>1$.

Also
\begin{align*}
&\sup_{n_0(\gamma)\le k\le N}\left\|\gamma\sum_{j=m+1}^{k}(I_d-\gamma A)^{k-j}R_j\right\|_2\\
&\le\gamma\sum_{j=m+1}^{N}\|R_j\|_2\\
&\le\gamma\sum_{j=m+1}^{N}\|R_j\|_2\\
&\le C_0\gamma\sum_{j=m+1}^{N}\|\Delta_{j-1}\|_2^2.
\end{align*}

Therefore
\begin{align*}
&\mathbb P\!\left(\sup_{n_0(\gamma)\le k\le N}\frac{1}{\sqrt{\gamma}}\left\|\gamma\sum_{j=m+1}^{k}(I_d-\gamma A)^{k-j}R_j\right\|_2>\varepsilon\right)\\
&\le\frac{1}{\varepsilon\sqrt{\gamma}}\mathbb E\left[\sup_{n_0(\gamma)\le k\le N}\left\|\gamma\sum_{j=m+1}^{k}(I_d-\gamma A)^{k-j}R_j\right\|_2\right]\\
&\le\frac{C_0}{\varepsilon\sqrt{\gamma}}\gamma\sum_{j=m+1}^{N}\mathbb E\|\Delta_{j-1}\|_2^2\\
&\le\frac{C_0}{\varepsilon\sqrt{\gamma}}\gamma\sum_{j=m+1}^{N}\left(\rho_2^2(1-\alpha_0\gamma)^m+\frac{3M_2^2}{\alpha_0}\gamma\right)\\
&\le\frac{C_0}{\varepsilon\sqrt{\gamma}}N\gamma\left(\rho_2^2e^{-\alpha_0\gamma m}+\frac{3M_2^2}{\alpha_0}\gamma\right)\\
&\le\frac{C_0}{\varepsilon\sqrt{\gamma}}N\gamma\left(\rho_2^2\gamma^{\alpha_0\nu/2}+\frac{3M_2^2}{\alpha_0}\gamma\right)\\
&\le\frac{C_0}{\varepsilon}\bigl(\nu\log(1/\gamma)+T+2\gamma\bigr)\left(\rho_2^2\gamma^{\alpha_0\nu/2-1/2}+\frac{3M_2^2}{\alpha_0}\gamma^{1/2}\right)\\
&\underset{\gamma\to0}{\longrightarrow}0.
\end{align*}

Hence
$$
\sup_{t\in[0,T]}\frac{1}{\sqrt{\gamma}}\|I_{1,k(t)}\|_2\xrightarrow{\mathbb P}0.
$$

\begin{lemma}
Let $W_k = -\frac{1}{\sqrt{\gamma}} L_k$. Under the assumptions of Theorem \ref{thm:trajectory_fclt}, $W_{k(\cdot)}$ converges weakly in $D([0, T], \mathbb{R}^d)$ to the stationary Ornstein-Uhlenbeck process $X(\cdot)$ satisfying$$dX_t = -A X_t dt + S^{1/2} dB_t, \quad X_0 \sim \mathcal{N}(0, \Gamma),$$where $A\Gamma + \Gamma A = S$.\end{lemma}
\begin{proof}
\begin{align*}
W_k &= -\frac{1}{\sqrt{\gamma}} \left( (I_d - \gamma A) L_{k-1} + \gamma Z_k \right) \\
&= (I_d - \gamma A) \left(-\frac{1}{\sqrt{\gamma}} L_{k-1} \right) - \sqrt{\gamma} Z_k \\
&= W_{k-1} - \gamma A W_{k-1} - \sqrt{\gamma} Z_k.
\end{align*}
So
\begin{align*}
W_{k(t)} - W_{n_0(\gamma)} &= \sum_{j=n_0(\gamma)+1}^{k(t)} (W_j - W_{j-1}) \\
&= -A \gamma \sum_{j=n_0(\gamma)+1}^{k(t)} W_{j-1} - \sqrt{\gamma} \sum_{j=n_0(\gamma)+1}^{k(t)} Z_j.
\end{align*}
and writing the sum as an integral to make the sde structure more apparent
\begin{align*}
 &\gamma \sum_{j=n_0(\gamma)+1}^{k(t)} W_{j-1} = \sum_{l=0}^{\lfloor t/\gamma \rfloor - 1} \gamma W_{n_0(\gamma) + l} \\
&= \sum_{l=0}^{\lfloor t/\gamma \rfloor - 1} \int_{l\gamma}^{(l+1)\gamma} W_{k(s)} ds \text{, since $W_{k(s)}$ is constant on $[l\gamma, (l+1)\gamma)$,}\\
&= \int_0^{\lfloor t/\gamma \rfloor \gamma} W_{k(s)} ds.\end{align*}
Therefore, plugging this into our above bound:
\begin{align*}
W_{k(t)} &= W_{n_0(\gamma)} - A \int_0^{\lfloor t/\gamma \rfloor \gamma} W_{k(s)} ds - \sqrt{\gamma} \sum_{j=n_0(\gamma)+1}^{k(t)} Z_j \\
&= W_{n_0(\gamma)} - A \int_0^t W_{k(s)} ds + A \int_{\lfloor t/\gamma \rfloor \gamma}^t W_{k(s)} ds - \sqrt{\gamma} \sum_{j=n_0(\gamma)+1}^{k(t)} Z_j.
\end{align*}

We now analyze each term in the above integral equation. The $Z_j$ are i.i.d. with mean zero and covariance $S$. Thus, by multivariate Donsker's theorem(see Theorem 4.3.5 in \cite{whitt2002stochastic}) with number of steps $\lfloor \frac{1}{\gamma} \rfloor$, 
$-\sqrt{\gamma} \sum_{j=n_0(\gamma)+1}^{k(\cdot)} Z_j \Rightarrow S^{1/2} B(\cdot).
$  in $D([0,T], \mathbb{R}^d)$.

For the initial condition, Theorem 3.4 of \cite{NEURIPS2025_7aae9e3e} proves that $-\frac{1}{\sqrt{\gamma}} L_{k(0)} = -\frac{1}{\sqrt{\gamma}} L_{n_0(\gamma)} \Rightarrow \mathcal{N}(0, \Gamma)$. Because the increments $Z_j$ for $j > n_0(\gamma)$ are independent of the initial state $W_{n_0(\gamma)}$, we have \begin{align*}\left( W_{n_0(\gamma)}, \ -\sqrt{\gamma} \sum_{j=n_0(\gamma)+1}^{k(\cdot)} Z_j \right) \Rightarrow \left( X_0, \ S^{1/2} B(\cdot) \right),\end{align*}where $X_0 \sim \mathcal{N}(0, \Gamma)$.

The integral remainder is asymptotically negligible since
\begin{align*}
\left|
A\int_{\lfloor t/\gamma\rfloor\gamma}^{t} W_{k(s)}\,ds
\right|_2
&\le
\lambda^*
\left(t-\lfloor t/\gamma\rfloor\gamma\right)
\sup_{s\in[0,T]} |W_{k(s)}|_2 \\
&\le
\gamma \lambda^*
\sup_{s\in[0,T]} |W_{k(s)}|_2 \\
&\le
\gamma \lambda^* \exp(\lambda^* T)
\left(
|W_{n_0(\gamma)}|_2
+
\sup_{s\in[0,T]}
\left|
\sqrt{\gamma}\sum_{j=n_0(\gamma)+1}^{k(s)} Z_j
\right|_2
\right)
\end{align*}

since we have the following uniform bound on the $W_k$
\begin{align*}
|W_k|_2
&=
\left|
W_{n_0(\gamma)}
- A\gamma \sum_{j=n_0(\gamma)+1}^{k} W_{j-1}
- \sqrt{\gamma}\sum_{j=n_0(\gamma)+1}^{k} Z_j
\right|_2 \\
&\le
\left(
|W_{n_0(\gamma)}|_2
+
\max_{n_0(\gamma)\le k\le N}
\left|
\sqrt{\gamma}\sum_{j=n_0(\gamma)+1}^{k} Z_j
\right|_2
\right)
+
\gamma |A|_2 \sum_{i=n_0(\gamma)}^{k-1} |W_i|_2 \\
&\le
\left(
|W_{n_0(\gamma)}|_2
+
\max_{n_0(\gamma)\le k\le m}
\left|
\sqrt{\gamma}\sum_{j=n_0(\gamma)+1}^{k} Z_j
\right|_2
\right)
\exp\!\left(\gamma \lambda^* (N-n_0(\gamma))\right).
\end{align*}
,by the discrete version of Gronwall's inequality(\cite{Gronwal_proof}) which we can apply as the first term is increasing in m and everything is positive,

$$\leq \left(
|W_{n_0(\gamma)}|_2
+
\max_{n_0(\gamma)\le k\le m}
\left|
\sqrt{\gamma}\sum_{j=n_0(\gamma)+1}^{k} Z_j
\right|_2
\right)
\exp\!\left(\lambda^* T \right).$$

so
$$
\sup_{t\in[0,T]} |W_{k(t)}|_2 \leq \left( |W_{n_0(\gamma)}|_2 + \sup_{t\in[0,T]} \left| \sqrt{\gamma}\sum_{j=n_0(\gamma)+1}^{k(t)} Z_j \right|_2 \right)\exp(\lambda^* T).
$$

Hence,
\begin{align*}
&\mathbb{P}\!\left(
\sup_{t\in[0,T]}
\left|
A\int_{\lfloor t/\gamma\rfloor\gamma}^{t} W_{k(s)}\,ds
\right|_2
>
\epsilon
\right) \\
&\le
\mathbb{P}\!\left(
|W_{n_0(\gamma)}|_2
+
\sup_{s\in[0,T]}
\left|
\sqrt{\gamma}\sum_{j=n_0(\gamma)+1}^{k(s)} Z_j
\right|_2
>
\frac{\epsilon}{\gamma \lambda^* \exp(\lambda^* T)}
\right).
\end{align*}

But we have already proven both $W_{n_0(\gamma)}, \ -\sqrt{\gamma} \sum_{j=n_0(\gamma)+1}^{k(\cdot)} Z_j $ converge and are thus tight with the second term being tight directly by Donsker's theorem and the first by Prokhorov's Theorem (Theorems 5.1 and 5.2, \cite{billingsley1999convergence}). Therefore, sending $\gamma \to 0$ sends $\frac{\epsilon}{\gamma \lambda^* \exp(\lambda^* T)}$ to $\infty$ and thus the probability to 0.

Now we define
$$
\Phi:\mathbb{R}^d\times D([0,T],\mathbb{R}^d)
\to
D([0,T],\mathbb{R}^d)
$$
which maps any initial state $x_0$ and path $w(\cdot)$ to the strong solution $x(\cdot)$ of the ODE:
$$
x(t)    = x_0 - \int_0^t A x(s)\,ds + w(t).
$$

If $w_n\to w$ in the Skorokhod $J_1$ topology and $w$
is continuous, then the convergence is uniform.
The following bound establishes continuity of
$\Phi$ at inputs with continuous $w$. So for inputs $\Phi(x_{0,1}, w_1(\cdot)):= x_1(t)$ and $\Phi(x_{0,2}, w_2(\cdot)):=  x_2(t)$ 
\begin{align*}
|x_1(t)-x_2(t)|_2
&=
\left|x_{0,1}-x_{0,2}+w_1(t)-w_2(t)-\int_0^t A(x_1(s)-x_2(s))\,ds\right|_2 \\
&\le \left(|x_{0,1}-x_{0,2}|_2+\sup_{s\in[0,T]}|w_1(s)-w_2(s)|_2\right)+\int_0^t \lambda^* |x_1(s)-x_2(s)|_2\,ds.
\end{align*}

which by the continuous Gronwall's inequality leads to the uniform bound
\begin{align*}
\sup_{t\in[0,T]} |x_1(t)-x_2(t)|_2
&\le \exp(\lambda^* T)\left(|x_{0,1}-x_{0,2}|_2+\sup_{s\in[0,T]}|w_1(s)-w_2(s)|_2\right).
\end{align*}
This bound implies continuity of $\Phi$ in the Skorokhod topology at every input with continuous $w$.
So letting $r_\gamma(t)=A\int_{\lfloor t/\gamma\rfloor\gamma}^{t}W_{k(s)}\,ds$ and $w_\gamma(t)=-\sqrt\gamma\sum_{j=n_0(\gamma)+1}^{k(t)}Z_j$ and $W_{k(\cdot)}=\Phi(W_{n_0(\gamma)},w_\gamma+r_\gamma)$, then the preceding bound gives
\[
\|W_{k(\cdot)}-\Phi(W_{n_0(\gamma)},w_\gamma)\|_\infty
\le e^{\lambda^*T}\|r_\gamma\|_\infty=o_{\mathbb P}(1).
\] Thus by Slutsky's theorem and the continuous mapping theorem(Theorem 3.1 in \cite{billingsley1999convergence}), having just shown $\Phi$ is continuous, we have that $$\lim_{\gamma \to 0} W_{k(t)} = \lim_{\gamma \to 0} \Phi \left(W_{n_0(\gamma)},-\sqrt{\gamma}\sum_{j=n_0(\gamma)+1}^{k(\cdot)}Z_j\right)(t) \Rightarrow \Phi \left(X_0,S^{1/2}B(\cdot)\right)$$ and $\Phi \left(X_0,S^{1/2}B(\cdot)\right)$ satisfies $dX_t=-AX_t\,dt+S^{1/2}\,dB_t$ and $A\Gamma+\Gamma A=S$
\end{proof}

Therefore, by one final application of Slutsky's theorem since all the remainder terms go to zero uniformly in probability we have our final FCLT.

\end{proof}

\section{KMT variants} 
\begin{lemma}[Gaussian approximation of the noise partial sums]
\label{lem:innovation-new}
Suppose $Z_j$ are i.i.d. and centered, with covariance $S$
and $M_p=\|Z_1\|_p<\infty$ for some $p>2$.
On an extension of the probability space, there are
i.i.d. $Z_j^\diamond\sim N(0,S)$ such that
\begin{equation}\label{eq:innovation-new}
\mathbb E\max_{k\le N}
\left\|\sum_{j=1}^k(Z_j-Z_j^\diamond)\right\|_2^p
\le K_{p,d,S}NM_p^p,    N\ge1,
\end{equation}
where $K_{p,d,S}$ is independent of $N$.
\end{lemma}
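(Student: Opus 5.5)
The plan is to derive the bound from the multivariate strong approximation theorem of Götze and Zaitsev (Theorem 4 of \citet{GotzeZaitsev2010}). After a whitening reduction, that theorem gives an $L^p$ bound for the maximal coupling error between i.i.d. partial sums and Gaussian partial sums.

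\emph{Step 1: reduction to identity covariance.} If $S$ is singular, I project onto the range of $S$. The component of $Z_j$ orthogonal to this range is almost surely zero, so nothing is lost and the problem lives in $\mathbb R^{d'}$ with $d'=\operatorname{rank}S$. On that range I set $\xi_j=S^{-1/2}Z_j$, which is i.i.d., centered, with identity covariance and $\mathbb E\|\xi_1\|_2^p\le\|S^{-1/2}\|_2^pM_p^p$.

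\emph{Step 2: apply Götze--Zaitsev.} Their Theorem 4 (Einmahl-type, for $p>2$) constructs, on a common probability space carrying $(\xi_j)$ and possibly an independent uniform variable, i.i.d. $\xi_j^\diamond\sim N(0,I_{d'})$ such that
\[
\mathbb E\max_{k\le N}\Bigl\|\sum_{j=1}^k(\xi_j-\xi_j^\diamond)\Bigr\|_2^p\le C_{p,d'}\sum_{j=1}^N\mathbb E\|\xi_j\|_2^p = C_{p,d'}N\,\mathbb E\|\xi_1\|_2^p .
\]
By the standard Skorokhod/Berkes--Philipp gluing lemma, the joint law of $((\xi_j),(\xi_j^\diamond))$ can be realized on an extension of the original space without altering $(X_k)$. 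I then set $Z_j^\diamond=S^{1/2}\xi_j^\diamond\sim N(0,S)$. Since
\[
\Bigl\|\sum_{j\le k}(Z_j-Z_j^\diamond)\Bigr\|_2\le\|S^{1/2}\|_2\Bigl\|\sum_{j\le k}(\xi_j-\xi_j^\diamond)\Bigr\|_2 ,
\]
we obtain \eqref{eq:innovation-new} with $K_{p,d,S}=C_{p,d'}\|S^{1/2}\|_2^p\|S^{-1/2}\|_2^p$, the norm $\|S^{-1/2}\|_2$ being taken on the range. This constant does not depend on $N$.

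\emph{Main obstacle.} The delicate point is quoting the Götze--Zaitsev result in exactly the form needed. It must control the $p$-th moment of the maximum, not merely a tail probability, and it must hold for a single sequence in which the construction depends on $N$. If the theorem is only available as a tail bound $\mathbb P(\max_{k\le N}\|\cdot\|\ge x)\le C\,N\mathbb E\|\xi_1\|^p x^{-p}$, a direct integration would produce a logarithmic loss. In that case I would instead use the version of their result stated as an $L^p$ bound, or Einmahl's moment form, whose right-hand side is $\sum_j\mathbb E\|\xi_j\|^p$. A fallback is a truncation argument. For each $N$ a separate coupling is allowed, which is all the lemma requires, since $K_{p,d,S}$ need only be uniform in $N$.
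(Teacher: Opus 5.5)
Your proposal is correct and follows the paper's proof, which just invokes Theorem~4 of \citet{GotzeZaitsev2010} restricted to the range of $S$ (the case $S=0$ being trivial). Your whitening step, the gluing onto an extension, and the explicit constant $K_{p,d,S}=C_{p,d'}\|S^{1/2}\|_2^p\|S^{-1/2}\|_2^p$ spell out what that one-line citation leaves implicit, and an $N$-dependent coupling is all that the later uses of the lemma need, since they work at a fixed horizon $n$ or $N=b+J$.
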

\begin{proof}
Apply Theorem 4 of \citet{GotzeZaitsev2010},
restricted to the range of $S$ when $S$ is singular.
The case $S=0$ is immediate.
\end{proof}

\begin{lemma}[Maximal Convolution Bound] \label{lem:max_convolution_bound}
Let $X_1, \dots, X_n \in \mathbb{R}^d$ be a sequence of random vectors, and $S_m = \sum_{j=1}^m X_j$ with $S_0 = 0$. Let $A \in \mathbb{R}^{d \times d}$ be a symmetric positive-definite matrix with eigenvalues bounded in $[\lambda_*, \lambda^*]$ where $0 < \lambda_* \le \lambda^*$. For any step size $\gamma \in (0, 1/\lambda^*)$, define:
$$Y_k = \gamma \sum_{j=1}^k (I_d - \gamma A)^{k-j} X_j$$
Then,
$$\max_{1 \le m \le n} \left\| \sum_{k=1}^m Y_k \right\|_2 \le \frac{1}{\lambda_*} \max_{1 \le m \le n} \|S_m\|_2$$
\end{lemma}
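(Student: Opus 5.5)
The plan is to avoid the Fubini swap used in the earlier proofs and work directly with a one-step recursion for the partial sums $T_m := \sum_{k=1}^m Y_k$. The constant $1/\lambda_*$ then falls out of a single geometric series, with no Abel-summation remainder terms.

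\emph{Step 1: a recursion for $T_m$.} Write $Q = I_d - \gamma A$. From the definition of $Y_k$ we have $Y_k = Q Y_{k-1} + \gamma X_k$ with $Y_0 = 0$. Summing over $k \le m$ gives
\[
T_m = \sum_{k=1}^m Q Y_{k-1} + \gamma \sum_{k=1}^m X_k = Q T_{m-1} + \gamma S_m,
\]
using $Q Y_{m}$ terms only up to $k-1 = m-1$ and $T_0 = 0$.

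\emph{Step 2: unrolling.} Iterating this recursion (an induction on $m$) yields the closed form
\[
T_m = \gamma \sum_{j=1}^m Q^{m-j} S_j .
\]
Equivalently, one can reach the same identity from the Fubini form $T_m = A^{-1}\sum_{j}(I_d - Q^{m-j+1}) X_j$. Substituting $X_j = S_j - S_{j-1}$ and summing by parts gives $\gamma A S_m + \gamma A \sum_{j<m} Q^{m-j} S_j$, and multiplying by $A^{-1}$ returns the same expression.

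\emph{Step 3: the bound.} Since $A$ is symmetric with spectrum in $[\lambda_*, \lambda^*]$ and $0 < \gamma < 1/\lambda^*$, the matrix $Q$ is symmetric with eigenvalues in $[1 - \gamma\lambda^*, 1 - \gamma\lambda_*] \subset (0,1)$. Hence $\|Q^i\|_2 \le (1 - \gamma\lambda_*)^i$. Applying the triangle inequality to the closed form,
\[
\|T_m\|_2 \le \gamma \sum_{i=0}^{m-1} (1 - \gamma\lambda_*)^i \max_{j \le n} \|S_j\|_2 \le \frac{\gamma}{\gamma\lambda_*} \max_{j \le n} \|S_j\|_2 = \frac{1}{\lambda_*} \max_{j \le n} \|S_j\|_2 .
\]
Taking the maximum over $m \le n$ gives the claim.

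There is no real obstacle here. The only points needing care are the index bookkeeping in Step 2 and the spectral step in Step 3. The latter is where the hypothesis $\gamma < 1/\lambda^*$ is used: it makes $Q$ positive definite, so that $\|Q\|_2 = 1 - \gamma\lambda_*$ rather than possibly $\gamma\lambda^* - 1$. The argument is deterministic and pathwise, so it applies to arbitrary (even dependent) $X_j$. That is what allows the earlier proofs to invoke it for $\mathbb{E}R_j - \gamma b + R_{\det}(\cdot)$ and for the martingale differences $D_j$ alike.
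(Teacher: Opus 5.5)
Your proof is correct and is essentially the paper's argument. Both reduce to the identity $\sum_{k=1}^m Y_k=\gamma\sum_{j=1}^m (I_d-\gamma A)^{m-j}S_j$ and bound it by the geometric series $\gamma\sum_{i\ge0}(1-\gamma\lambda_*)^i=1/\lambda_*$; the only difference is that you derive the identity from the one-step recursion $T_m=QT_{m-1}+\gamma S_m$ rather than from the Fubini swap plus summation by parts, which is a slightly cleaner way to do the same bookkeeping.
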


\begin{proof}
Using the same fubini swap and summation by parts,
$$\begin{aligned}
\sum_{k=1}^m Y_k
&=\gamma\sum_{j=1}^m(I_d-\gamma A)^{m-j}S_j,\\
\left\|\sum_{k=1}^m Y_k\right\|_2
&\le\gamma\sum_{j=1}^m(1-\gamma\lambda_*)^{m-j}
\max_{1\le j\le m}\|S_j\|_2
\le\frac1{\lambda_*}\max_{1\le j\le m}\|S_j\|_2.
\end{aligned}$$
Taking the maximum over $m\le n$ proves the result.
\end{proof}

\begin{theorem} \label{KMT Burn in}
Let $\Delta_{n_0(\gamma)+k}^\diamond=(I_d-\gamma A)^{n_0(\gamma)+k}\Delta_0-L_{n_0(\gamma)+k}^\diamond$.
Write $b=n_0(\gamma)$, $q_A=1-\lambda_*\gamma$, and, for each required order $r\ge2$,
\[
v_{r,j}=\rho_r^2(1-\alpha_0\gamma)^j+
\frac{3(r-1)M_r^2}{\alpha_0}\gamma.
\]
For $p\ge2$, $p'\ge1$, assume moments of orders $p$ and $2p'$ and the same step size restrictions as defined in the main body. Define
\begin{align*}
K_L&:=K_{p,d,S}
\mathbb E\|Z_1\|_2^p
\left(\frac{6(2+\lambda^*/\lambda_*)}{\lambda_*}\right)^p,\\
\mathcal B_L&:=K_L(b+n),\\
\mathcal B_M&:=3^p\Bigg[
\frac{2L_p\sqrt{p-1}}{\lambda_*}
\left(\sum_{j=1}^{b}q_A^{2(b-j)}v_{p,j-1}\right)^{1/2}\\
&\hspace{15mm}+
\frac{2pL_p(2+\lambda^*/\lambda_*)}{\lambda_*\sqrt{p-1}}
\left(\sum_{j=1}^{n}v_{p,b+j-1}\right)^{1/2}
\Bigg]^p,\\
\mathcal B_T&:=\left(\frac{3C_0}{\lambda_*}\right)^{p'}
\left[\sum_{j=1}^{b}q_A^{b-j}v_{2p',j-1}
+2\sum_{j=1}^{n}v_{2p',b+j-1}\right]^{p'}.
\end{align*}
The Gaussian sequence is constructed as in Theorem \ref{kmt for the linear component}. Then, for every $y>0$,
\[
\mathbb P\left[\max_{1\le m\le n}\left\|
\sum_{k=1}^m(\Delta_{b+k}-\Delta_{b+k}^\diamond)
\right\|_2\ge y\right]
\le\frac{\mathcal B_L+\mathcal B_M}{y^p}
+\frac{\mathcal B_T}{y^{p'}}.
\]
\end{theorem}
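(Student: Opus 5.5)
We plan to start from the linearization \eqref{eq:decomp-main} at indices $b+k$. Since $\Delta_{b+k}^\diamond$ contains the same initialization term $Q^{b+k}\Delta_0$, it cancels and
\[
\sum_{k=1}^m(\Delta_{b+k}-\Delta_{b+k}^\diamond)
=-\sum_{k=1}^m(L_{b+k}-L^\diamond_{b+k})+\sum_{k=1}^m I_{2,b+k}+\sum_{k=1}^m I_{1,b+k}.
\]
We will split the event $\{\max_m\|\cdot\|_2\ge y\}$ into three events, each at threshold $y/3$, and bound them by Markov's inequality with moments of order $p$, $p$ and $p'$. These produce $\mathcal B_L$, $\mathcal B_M$ and $\mathcal B_T$; the factors $3^p$, $3^{p'}$ and the $6$ in $K_L$ come from the thresholds $y/3$.

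\emph{Linear term.} Split the index set into the first $b$ noise variables and the next $n$. Write $L_{b+k}-L^\diamond_{b+k}=\gamma\sum_{j\le b+k}Q^{b+k-j}(Z_j-Z_j^\diamond)$. Summing over $k\le m$ and applying the Fubini/summation-by-parts argument of Theorem~\ref{kmt for the linear component} to the full sum $\sum_{k=1}^{b+m}$ minus $\sum_{k=1}^{b}$ bounds the maximum by $2\cdot\frac{2+\lambda^*/\lambda_*}{\lambda_*}\max_{i\le b+n}\|\sum_{j\le i}(Z_j-Z_j^\diamond)\|_2$. Lemma~\ref{lem:innovation-new} with $N=b+n$ then gives $p$-th moment at most $K_{p,d,S}(b+n)M_p^p$ times that constant to the power $p$. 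This is exactly $\mathcal B_L$ at threshold $y/3$, up to the factor absorbed in the $6$.

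\emph{Martingale term.} As in the proof of Theorem~\ref{fclt-burn in}, decompose $I_{2,b+k}=Q^kI_{2,b}+I^{\rm new}_{2,k}$. For the first piece, $\sum_k Q^k$ has norm at most $1/(\gamma\lambda_*)$. We bound $\|I_{2,b}\|_p$ by Rio's inequality for martingale transforms, $\|\sum_j\gamma Q^{b-j}D_j\|_p\le\sqrt{p-1}\,\gamma(\sum_j q_A^{2(b-j)}\|D_j\|_p^2)^{1/2}$, together with $\|D_j\|_p\le 2L_p\|\Delta_{j-1}\|_p\le 2L_pv_{p,j-1}^{1/2}$ from \eqref{eq:mom-main}. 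This gives the first bracket of $\mathcal B_M$. For the new piece, Lemma~\ref{lem:max_convolution_bound} (or the sharper constant $(2+\lambda^*/\lambda_*)/\lambda_*$) reduces it to $\max_m\|\sum_{j\le m}D_{b+j}\|_2$. Doob's $L^p$ inequality (factor $p/(p-1)$) followed by Rio's inequality (factor $\sqrt{p-1}$) bounds this by $\frac{p}{\sqrt{p-1}}2L_p(\sum_{j\le n}v_{p,b+j-1})^{1/2}$, which is the second bracket. Minkowski's inequality combines the two pieces.

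\emph{Taylor term.} Use the same split, $I_{1,b+k}=Q^kI_{1,b}+\gamma\sum_{i\le k}Q^{k-i}R_{b+i}$, together with the Fubini identity displayed in the proof of Theorem~\ref{fclt-burn in}. We bound crudely by $\frac1{\lambda_*}\sum_{j\le b}q_A^{b-j}\|R_j\|_2+\frac{2}{\lambda_*}\sum_{i\le n}\|R_{b+i}\|_2$, using $\|I-(I-\gamma A)^{m-i+1}\|\le 1$. Then insert $\|R_j\|_2\le C_0\|\Delta_{j-1}\|_2^2$. Taking the $L^{p'}$ norm, Minkowski and $\|\,\|\Delta\|^2\|_{p'}=\|\Delta\|_{2p'}^2\le v_{2p',j-1}$ give $\mathcal B_T$ after raising to the power $p'$ and using threshold $y/3$.

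The main obstacle is the bookkeeping in the martingale term. We must check that Rio's inequality applies to the weighted sum $I_{2,b}$ with matrix weights; this holds because the weights are deterministic and $D_j$ are martingale differences. We must also check that the constant from the pre-burn-in piece combines into exactly the stated $\sqrt{p-1}/\lambda_*$, namely $\gamma/(\gamma\lambda_*)$. The remaining steps are routine once the three thresholds are fixed.
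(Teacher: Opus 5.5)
Your proposal is correct and follows the paper's proof essentially step for step: the same three-way split at threshold $y/3$, the $2\max_{i\le b+n}$ reduction of the linear part fed into Theorem~\ref{kmt for the linear component}, the decomposition $I_{2,b+k}=Q^kI_{2,b}+I^{\rm new}_{2,k}$ handled by Rio for $I_{2,b}$ and by Doob then Rio for the new martingale, and the crude Minkowski bound using $\|R_j\|_{p'}\le C_0\|\Delta_{j-1}\|_{2p'}^2$ for the Taylor term. One small slip: the constant $1/\lambda_*$ in Lemma~\ref{lem:max_convolution_bound} is the sharper one, and $(2+\lambda^*/\lambda_*)/\lambda_*$ is the weaker one, though either yields the stated $\mathcal B_M$. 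Also, for $p=2$ the coupling should come from the independent-Gaussian construction in Theorem~\ref{kmt for the linear component} rather than from Lemma~\ref{lem:innovation-new}, which requires $p>2$.
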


\begin{proof}

\begin{align*} 
& \sum_{k=1}^m (\Delta_{n_0(\gamma)+k} - \Delta_{n_0(\gamma)+k}^\diamond) = - \sum_{k=1}^m (L_{n_0(\gamma)+k} - L_{n_0(\gamma)+k}^\diamond) + \sum_{k=1}^m I_{1,n_0(\gamma)+k} + \sum_{k=1}^m I_{2,n_0(\gamma)+k} \\ 
& \implies \mathbb{P}\left[ \max_{1 \le m \le n} \left\Vert{} \sum_{k=1}^m (\Delta_{n_0(\gamma)+k} - \Delta_{n_0(\gamma)+k}^\diamond) \right\Vert{}_2 \ge y \right] \le \mathbb{P}\left[ \max_{1 \le m \le n} \left\Vert{} \sum_{k=1}^m (L_{n_0(\gamma)+k} - L_{n_0(\gamma)+k}^\diamond) \right\Vert{}_2 \ge \frac{y}{3} \right] \\ & \\
&\quad+ \mathbb{P}\left[ \max_{1 \le m \le n} \left\Vert{} \sum_{k=1}^m I_{1,n_0(\gamma)+k} \right\Vert{}_2 \ge \frac{y}{3} \right] \\
&\quad+ \mathbb{P}\left[ \max_{1 \le m \le n} \left\Vert{} \sum_{k=1}^m I_{2,n_0(\gamma)+k} \right\Vert{}_2 \ge \frac{y}{3} \right]
\end{align*}

We start with a $p$-th moment bound for $I_2$. 
\[
I_{2,b+k}=(I_d-\gamma A)^kI_{2,b}
+\gamma\sum_{j=1}^k(I_d-\gamma A)^{k-j}D_{b+j}.
\]
By Minkowski's inequality and Lemma \ref{lem:max_convolution_bound},
\begin{align*}
\left\|\max_{m\le n}\left\|\sum_{k=1}^m I_{2,b+k}\right\|_2\right\|_p
&\le\frac{\|I_{2,b}\|_p}{\gamma\lambda_*}
+\frac{2+\lambda^*/\lambda_*}{\lambda_*}
\left\|\max_{m\le n}\left\|\sum_{j=1}^mD_{b+j}\right\|_2\right\|_p.
\end{align*}
Rio's inequality for the first term, and Doob followed by Rio for the second, give
\begin{align*}
\|I_{2,b}\|_p
&\le 2L_p\sqrt{p-1}\gamma
\left(\sum_{j=1}^{b}q_A^{2(b-j)}v_{p,j-1}\right)^{1/2},\\
\left\|\max_{m\le n}\left\|\sum_{j=1}^mD_{b+j}\right\|_2\right\|_p
&\le\frac{2pL_p}{\sqrt{p-1}}
\left(\sum_{j=1}^{n}v_{p,b+j-1}\right)^{1/2}.
\end{align*}
Hence Markov's inequality gives the contribution $\mathcal B_M/y^p$.

Now for the Taylor error term,  $\|R_j\|_{p'}\le C_0\|\Delta_{j-1}\|_{2p'}^2$. Minkowski's inequality yields
\begin{align*}
\|I_{1,b}\|_{p'}&\le C_0\gamma\sum_{j=1}^{b}q_A^{b-j}v_{2p',j-1},\\
\left\|\max_{m\le n}\left\|\sum_{k=1}^m I_{1,b+k}\right\|_2\right\|_{p'}
&\le\frac{\|I_{1,b}\|_{p'}}{\gamma\lambda_*}
+\frac{2}{\lambda_*}\sum_{j=1}^n\|R_{b+j}\|_{p'}\\
&\le\frac{C_0}{\lambda_*}
\left[\sum_{j=1}^{b}q_A^{b-j}v_{2p',j-1}
+2\sum_{j=1}^{n}v_{2p',b+j-1}\right].
\end{align*}
Markov's inequality gives $\mathcal B_T/y^{p'}$. This retains the factor $\gamma^{-p'}$ when taking the $p'$-th moment of the Burn in memory.

Now,
\begin{align*} 
& \sum_{k=1}^m (L_{n_0(\gamma)+k} - L_{n_0(\gamma)+k}^\diamond) = \sum_{i=1}^{n_0(\gamma)+m} (L_i - L_i^\diamond) - \sum_{i=1}^{n_0(\gamma)} (L_i - L_i^\diamond)  \\ 
& \implies \max_{1 \le m \le n} \left\Vert \sum_{k=1}^m (L_{n_0(\gamma)+k} - L_{n_0(\gamma)+k}^\diamond) \right\Vert_2 \le 2 \max_{1 \le i \le n_0(\gamma)+n} \left\Vert \sum_{k=1}^i (L_k - L_k^\diamond) \right\Vert_2
\end{align*}
so
\begin{align*} 
& \mathbb{P}\left[ \max_{1 \le m \le n} \left\Vert \sum_{k=1}^m (L_{n_0(\gamma)+k} - L_{n_0(\gamma)+k}^\diamond) \right\Vert_2 \ge \frac{y}{3} \right] \le \mathbb{P}\left[ \max_{1 \le i \le n_0(\gamma)+n} \left\Vert \sum_{k=1}^i (L_k - L_k^\diamond) \right\Vert_2 \ge \frac{y}{6} \right] \\ 
& \le K_{p,d,S}\, (n_0(\gamma)+n)\,\mathbb{E}\Vert Z_1\Vert_2^p \left(\frac{6(2+\lambda^*/\lambda_*)}{\lambda_* y}\right)^p  \text{, by lemma \ref{kmt for the linear component}}
\end{align*}
\end{proof}

\begin{theorem} \label{KMT_Romberg}
 Let $\hat{\Delta}_k$ be the error of the Richardson Romberg SGD iterates from theorem \ref{romberg fclt}($\hat{\theta}_k = 2\theta_k(\gamma) - \theta_k(2\gamma)$, $\Delta_k(x) = \theta_k(x) - \theta^*$) $\hat{\Delta}_k^\diamond = \big( 2(I_d - \gamma A)^k - (I_d - 2\gamma A)^k \big) \Delta_0 - \big( 2L_k^\diamond(\gamma) - L_k^\diamond(2\gamma) \big).$ We assume the gradient noise $\nabla g(\theta^*, X)$ possesses bounded moments of orders $p_1\ge2$, $p_2>2$, and $2p_3>4$. Let $C_{BDG, p}$ be the constant from the Burkholder-Davis-Gundy inequality. The same Gaussian noise $Z_j^\diamond$ is used for both SGD runs.

Now define

\[
C_{4,2p_3}
=
2\sqrt2\left\{
M_{2p_3}^2+
L_{2p_3}^2
\left(
\rho_{2p_3}^2+
\frac{6(2p_3-1)M_{2p_3}^2}{\alpha_0}\bar\gamma
\right)
\right\}^{1/2},
\]
and define
\[
C_{M,p_2}
=
\left(
\frac{3(p_2-1)M_{p_2}^2}{\alpha_0}
\right)^{p_2/2}.
\]

Now for $\eta\in\{\gamma,2\gamma\}$ define
\[
\begin{aligned}
B_{Fub}(\eta)
&:=
\frac{\|A^{-1}\|_2C_0\rho_2^2}{e\lambda\eta}
+\frac{3\|A^{-1}\|_2C_0M_2^2}{\lambda\alpha_0},
\\
B_{Rem}(\eta)
&:=
\|A^{-1}\|_2\Bigg[
\frac{L_4\rho_2\sqrt{C_S}}{\alpha_0}
+n\left(
\frac{L_4\sqrt{3C_S}M_2}{2\sqrt{\alpha_0}}
+\frac{L_4C_S^{3/4}}6
\right)\eta^{3/2}
\\
&\quad+
\frac{\|\nabla^3G(\theta^*)\|_2}{2}
\left(
\frac{2C_{\phi1}}{\lambda^2}
+\frac{32C_{R3}}{\lambda^3}
+2n(C_{\phi2}+C_{R4})\eta^2
+\frac{2^{5/2}C_{X1}}{\lambda^{3/2}}
+\frac{8C_{X2}}{\lambda^2}
+2nC_{X3}\eta^{3/2}
\right)
\\
&\quad+
\frac{C_LC_E\rho_2}{\alpha_0}
+\frac{C_LC_EM_2\sqrt3}{2\sqrt{\alpha_0}}n\eta^{3/2}
+\frac{C_LC_E\|\Delta_0\|_2}{\mu}
\Bigg],
\\
B_{Det}(\eta)
&:=
\frac{\|A^{-1}\|_2C_L\|\Delta_0\|_2^2}{2\mu\eta},
\\
B_{Cov}
&:=
\frac{3\|A^{-1}\|_2
\|\nabla^3G(\theta^*)\|_2\|\Gamma\|_2}{2\lambda_*}
+\frac{3n\|A^{-1}\|_2
\|\nabla^3G(\theta^*)\|_2\|A\Gamma A\|_2}
{\lambda_*}\gamma^2.
\end{aligned}
\]
Define
\[
y_0
:=
2\big(B_{Fub}(\gamma)+B_{Rem}(\gamma)+B_{Det}(\gamma)\big)
+B_{Fub}(2\gamma)+B_{Rem}(2\gamma)+B_{Det}(2\gamma)
+B_{Cov},
\]
and further define
\[
\begin{aligned}
C_{Lin}
&:=
K_{p_1,d,S}
\mathbb E\|Z_1\|_2^{p_1}
\left(
\frac{2+\lambda^*/\lambda_*}{\lambda_*}
\right)^{p_1}
(12^{p_1}+6^{p_1}),
\\
C_{Stoch}(n,\gamma)
&:=n^{1-p_2/2}
\left(\frac{2p_2L_{p_2}}{\lambda_*\sqrt{p_2-1}}\right)^{p_2}
\Bigg[
12^{p_2}\left(
\frac{\rho_{p_2}^2}{\alpha_0\gamma}
+\frac{3(p_2-1)M_{p_2}^2}{\alpha_0}n\gamma
\right)^{p_2/2}\\
& +
6^{p_2}\left(
\frac{\rho_{p_2}^2}{2\alpha_0\gamma}
+\frac{6(p_2-1)M_{p_2}^2}{\alpha_0}n\gamma
\right)^{p_2/2}
\Bigg],
\\
C_{Rem}(n,\gamma)
&:=
n^{1-p_3/2}
\left(
\frac{2+\lambda^*/\lambda_*}{\lambda_*}
\right)^{p_3}
\left(
\frac{2p_3C_LC_{4,2p_3}}{\sqrt{p_3-1}}
\right)^{p_3}
\\
&\quad\cdot\Bigg[
12^{p_3}
\left(
\frac{\rho_{2p_3}}{\alpha_0^{3/2}\sqrt\gamma}
+\frac{2\sqrt{3(2p_3-1)}M_{2p_3}}
{\alpha_0^{3/2}}\sqrt{n\gamma}
\right)^{p_3}
\\
& +
6^{p_3}
\left(
\frac{\rho_{2p_3}}{\alpha_0^{3/2}\sqrt{2\gamma}}
+\frac{2\sqrt{3(2p_3-1)}M_{2p_3}}
{\alpha_0^{3/2}}\sqrt{2n\gamma}
\right)^{p_3}
\Bigg].
\end{aligned}
\]

Then, for any $y>y_0$,
\[
\begin{aligned}
&\mathbb P\left[
\max_{1\le m\le n}
\left\|
\sum_{k=1}^m
(\hat{\Delta}_k-\hat{\Delta}_k^\diamond)
\right\|_2\ge y
\right]
\\
& \le
\frac{C_{Lin}\,n}{(y-y_0)^{p_1}}
+
\frac{C_{Stoch}(n,\gamma)\,n^{p_2/2-1}}
{(y-y_0)^{p_2}}
+
\frac{C_{Rem}(n,\gamma)\,n^{p_3/2-1}}
{(y-y_0)^{p_3}}.
\end{aligned}
\]
\end{theorem}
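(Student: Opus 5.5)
I will follow the same decomposition as in the proof of Theorem~\ref{romberg fclt}, now at the level of maximal deviations rather than limits. Writing $\hat\Delta_k=2\Delta_k(\gamma)-\Delta_k(2\gamma)$ and subtracting $\hat\Delta_k^\diamond$, the initialization terms $\bigl(2Q_\gamma^k-Q_{2\gamma}^k\bigr)\Delta_0$ cancel exactly. This leaves
\[
\sum_{k\le m}(\hat\Delta_k-\hat\Delta_k^\diamond)=-\sum_{k\le m}\bigl[2(L_k-L_k^\diamond)(\gamma)-(L_k-L_k^\diamond)(2\gamma)\bigr]+\sum_{k\le m}\bigl[2I_{1,k}(\gamma)-I_{1,k}(2\gamma)\bigr]+\sum_{k\le m}\bigl[2I_{2,k}(\gamma)-I_{2,k}(2\gamma)\bigr].
\]
I will further split each $I_{1,k}(\eta)$ as $\tilde I_{1,k}(\eta)+\mathbb E I_{1,k}(\eta)$. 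The deterministic piece $\sum_k[2\mathbb E I_{1,k}(\gamma)-\mathbb E I_{1,k}(2\gamma)]$ is bounded uniformly in $m\le n$ by $y_0$, so for $y>y_0$ the remaining random pieces must exceed $y-y_0$ in total. I then use a union bound with weights: the linear term with threshold $(y-y_0)/3$, and each of the six random pieces (two step sizes times three types) at thresholds $(y-y_0)/6$ or $(y-y_0)/12$. The latter threshold applies to the pieces carrying the factor $2$, which explains the $12^{p}$ and $6^{p}$ constants.

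\textbf{Deterministic offset.} This step is a quantitative rerun of the proof of Theorem~\ref{romberg fclt} with $t\le1$ and without dividing by $\sqrt n$. Fubini gives $A^{-1}\sum_j(2\mathbb E R_j(\gamma)-\mathbb E R_j(2\gamma))$ minus boundary terms, and the boundary terms are bounded by $B_{Fub}(\eta)$. I then expand $\mathbb E R_j(\eta)$ as $-\tfrac12\nabla^3G(\theta^*)[\mathbb E L_{j-1}L_{j-1}^\top]$ plus a remainder. That remainder consists of the third-derivative Lipschitz term, the covariance error $\mathcal E_{j-1}$, and $R_{det}(H_{j-1})-R_{det}(x((j-1)\eta))$; the bounds from Theorem~\ref{thm:fclt_init_bias}, multiplied by $\sqrt n$, give $B_{Rem}(\eta)$. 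The gradient-flow term gives $B_{Det}(\eta)$ through $\sum_j e^{-2\mu(j-1)\eta}\le 1/(\mu\eta)$. In the linear covariance, Lemma~\ref{linear covariance expansion} shows that the $\gamma\Gamma$ parts cancel between $2\cdot(\gamma)$ and $(2\gamma)$. The surviving transient and $\gamma^3$ parts are bounded by $B_{Cov}$. Summing these gives $y_0$, consistent with $y_0\lesssim\gamma^{-1}+1+n\gamma^{3/2}+n\gamma^2$.

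\textbf{Random terms.}
\begin{itemize}
\item For the linear term, I will use the same $Z_j^\diamond$ for both step sizes and apply Theorem~\ref{kmt for the linear component} at $\eta=\gamma$ and $\eta=2\gamma$ with thresholds $(y-y_0)/12$ and $(y-y_0)/6$. This gives $C_{Lin}n/(y-y_0)^{p_1}$; the constant of that theorem does not depend on $\eta$.
\item For $I_2$, Lemma~\ref{lem:max_convolution_bound} reduces $\max_m\|\sum_k I_{2,k}(\eta)\|$ to $\lambda_*^{-1}\max_m\|\sum_{j\le m}D_j(\eta)\|$. Doob and then Rio at order $p_2$ give $\frac{2p_2L_{p_2}}{\sqrt{p_2-1}}(\sum_j\|\Delta_{j-1}\|_{p_2}^2)^{1/2}$. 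Inserting \eqref{eq:mom-main} and summing the geometric series produces $\rho^2/(\alpha_0\eta)+3(p_2-1)M^2n\eta/\alpha_0$ at each $\eta$, and Markov's inequality yields the $C_{Stoch}$ term.
\item For $\tilde I_1$, the same convolution lemma applies. I then use the projection decomposition from Lemma~\ref{Sum centered taylor error to 0} at order $p_3$, with Doob and Rio at each fixed lag. The projections are bounded by $\|\mathcal P_tR_j\|_{p_3}\le 2C_L\|\Delta_{j-1}\|_{2p_3}\|\Delta_{j-1}-\Delta_{j-1}^{(t)}\|_{2p_3}$, combined with \eqref{eq:dep-main} at order $2p_3$. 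Summing over $t$ and then over lags reproduces $\rho/(\alpha_0^{3/2}\sqrt\eta)+\sqrt{n\eta}$. Markov's inequality then gives the $C_{Rem}$ term.
\end{itemize}

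\textbf{Main obstacle.} I expect the main difficulty to be the careful bookkeeping in the deterministic offset. It needs the constants of Theorem~\ref{thm:fclt_init_bias} to hold uniformly in $t\in[0,1]$, and in particular as a maximum over $m$. It also needs the exact cancellation of the order-$\gamma$ bias $\gamma\Gamma$, which is the whole point of Richardson--Romberg. I will handle this by verifying that every bound there is monotone in $\lfloor nt\rfloor$, so that the maximum is attained at $m=n$. I will also check that the $C_{4,2p_3}$ constants are valid at step size $2\gamma$, using the step-size conditions, which are assumed for both learning rates.
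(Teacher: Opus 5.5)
Your proposal is correct and follows the paper's proof essentially step for step. It uses the same decomposition into linear, $I_2$, centered $\tilde I_1$ and mean-$I_1$ pieces, absorbs the mean piece into $y_0$ via $B_{Fub},B_{Rem},B_{Det},B_{Cov}$ with the exact cancellation of the $\gamma\Gamma$ terms, uses the same $(y-y_0)/12$ versus $(y-y_0)/6$ thresholds for the $\gamma$ and $2\gamma$ runs, and applies the same G\"otze--Zaitsev, Doob--Rio and lag-wise projection bounds followed by Markov's inequality.

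One bookkeeping remark: the term $\tfrac{L_4C_S^{3/4}}{6}n\eta^{3/2}$ in $B_{Rem}(\eta)$ does not come from rescaling Theorem~\ref{thm:fclt_init_bias} verbatim. That proof bounds $\|H_{j-1}+sS_{j-1}\|_2$ by $\|\Delta_{j-1}\|_2$, which does not hold pointwise. Instead, the term arises from bounding $\|H_{j-1}+sS_{j-1}\|_2\le\|H_{j-1}\|_2+s\|S_{j-1}\|_2$ and using $\mathbb E\|S_{j-1}\|_2^3\le C_S^{3/4}\eta^{3/2}$.
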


\begin{proof}
We start by recalling the bounds for the combined mean term $\max_{m\le n}\|\sum_{k=1}^m(2\mathbb E I_{1,k}(\gamma)-\mathbb E I_{1,k}(2\gamma))\|_2$ from theorem \ref{romberg fclt}. $y_0$ is our bound from that theorem where we recall the fubini swap and that 
$$\begin{aligned}
& \max_{1 \le m \le n} \left\| A^{-1} \sum_{j=1}^m (I_d - \eta A)^{m-j+1} \mathbb{E}[R_j(\eta)] \right\|_2 \\
& \le \|A^{-1}\|_2 C_0 \rho_2^2 \max_{1 \le m \le n} m (1-\lambda\eta)^m + \frac{3\|A^{-1}\|_2 C_0 M_2^2}{\lambda \alpha_0}  \\
& \le \frac{\|A^{-1}\|_2 C_0 \rho_2^2}{e \lambda \eta} + \frac{3\|A^{-1}\|_2 C_0 M_2^2}{\lambda \alpha_0} =: B_{Fub}(\eta). 
\end{aligned}$$ The other constants $B_{Rem}(\eta),B_{Det}(\eta)$, and $B_{Cov}(\eta)$ are taken directly as the bounds from theorem \ref{romberg fclt} and summing them bounds the combined mean term, yielding
$$y_0 := 2\Big(B_{Fub}(\gamma) + B_{Rem}(\gamma) + B_{Det}(\gamma)\Big) + \Big(B_{Fub}(2\gamma) + B_{Rem}(2\gamma) + B_{Det}(2\gamma)\Big) + B_{Cov}.$$
Now,
$$\begin{aligned}
& \sum_{k=1}^m (\hat{\Delta}_k - \hat{\Delta}_k^\diamond) = - \sum_{k=1}^m \big( 2(L_k(\gamma) - L_k^\diamond(\gamma)) - (L_k(2\gamma) - L_k^\diamond(2\gamma)) \big) + \sum_{k=1}^m \big( 2\tilde{I}_{1,k}(\gamma) - \tilde{I}_{1,k}(2\gamma) \big) \\
& + \sum_{k=1}^m \big( 2I_{2,k}(\gamma) - I_{2,k}(2\gamma) \big) + \sum_{k=1}^m \big( 2\mathbb{E}[I_{1,k}(\gamma)] - \mathbb{E}[I_{1,k}(2\gamma)] \big) \\ 
& \implies \mathbb{P}\left[ \max_{1 \le m \le n} \left\Vert{} \sum_{k=1}^m (\hat{\Delta}_k - \hat{\Delta}_k^\diamond) \right\Vert{}_2 \ge y \right] \le \\
&\mathbb{P}\left[ \max_{1 \le m \le n} \left\Vert{} \sum_{k=1}^m \big( 2(L_k(\gamma) - L_k^\diamond(\gamma)) - (L_k(2\gamma) - L_k^\diamond(2\gamma)) \big) \right\Vert{}_2 \ge \frac{y - y_0}{3} \right] \\ & \\
&\quad+ \mathbb{P}\left[ \max_{1 \le m \le n} \left\Vert{} \sum_{k=1}^m \big( 2\tilde{I}_{1,k}(\gamma) - \tilde{I}_{1,k}(2\gamma) \big) \right\Vert{}_2 \ge \frac{y - y_0}{3} \right] \\
&\quad+ \mathbb{P}\left[ \max_{1 \le m \le n} \left\Vert{} \sum_{k=1}^m \big( 2I_{2,k}(\gamma) - I_{2,k}(2\gamma) \big) \right\Vert{}_2 \ge \frac{y - y_0}{3} \right] \\
&\text{, factoring out the } y_0 \text{ offset and applying the union bound} \end{aligned}$$

\bigskip

Now for $\eta\in\{\gamma,2\gamma\}$, $D_j(\eta)$ is a martingale difference. As in lemma \ref{Sum stochastic error to 0}, Doob's maximal $\mathcal L^{p_2}$ inequality gives
\begin{align*}
\left\|\max_{m\le n}\left\|\sum_{j=1}^mD_j(\eta)\right\|_2\right\|_{p_2}
&\le\frac{p_2}{p_2-1}
\left\|\sum_{j=1}^nD_j(\eta)\right\|_{p_2}.
\end{align*}
Then, by Rio's inequality and the same moment bound,
\begin{align*}
\left\|\sum_{j=1}^nD_j(\eta)\right\|_{p_2}
&\le\sqrt{p_2-1}
\left(\sum_{j=1}^n\|D_j(\eta)\|_{p_2}^2\right)^{1/2}\\
&\le2L_{p_2}\sqrt{p_2-1}
\left[\sum_{j=0}^{n-1}\left\{
\rho_{p_2}^2(1-\alpha_0\eta)^j
+\frac{3(p_2-1)M_{p_2}^2}{\alpha_0}\eta
\right\}\right]^{1/2}\\
&\le2L_{p_2}\sqrt{p_2-1}
\left(\frac{\rho_{p_2}^2}{\alpha_0\eta}
+\frac{3(p_2-1)M_{p_2}^2}{\alpha_0}n\eta\right)^{1/2}.
\end{align*}
By lemma \ref{lem:max_convolution_bound} and Markov's inequality, for $a>0$,
\begin{align*}
\mathbb P\left[\max_{m\le n}\left\|\sum_{k=1}^mI_{2,k}(\eta)\right\|_2\ge a\right]
&\le\frac1{a^{p_2}}
\left(\frac{2p_2L_{p_2}}{\lambda_*\sqrt{p_2-1}}\right)^{p_2}
\left(\frac{\rho_{p_2}^2}{\alpha_0\eta}
+\frac{3(p_2-1)M_{p_2}^2}{\alpha_0}n\eta\right)^{p_2/2}.
\end{align*}
Consequently,
\begin{align*}
&\mathbb P\left[\max_{m\le n}\left\|\sum_{k=1}^m
\big(2I_{2,k}(\gamma)-I_{2,k}(2\gamma)\big)\right\|_2
\ge\frac{y-y_0}{3}\right]\\
&\le\mathbb P\left[\max_{m\le n}\left\|\sum_{k=1}^mI_{2,k}(\gamma)\right\|_2
\ge\frac{y-y_0}{12}\right]
+\mathbb P\left[\max_{m\le n}\left\|\sum_{k=1}^mI_{2,k}(2\gamma)\right\|_2
\ge\frac{y-y_0}{6}\right]\\
&\le\frac{C_{Stoch}(n,\gamma)n^{p_2/2-1}}{(y-y_0)^{p_2}}.
\end{align*}
Now following the logic of Lemma
\ref{Sum centered taylor error to 0}, for each fixed
$\ell\ge1$, $\mathcal P_t(R_{t+\ell})$ is a martingale
difference sequence in $t$. Thus,
\begin{align*}
&\mathbb P\left[
 \max_{1\le m\le n}
 \left\|\sum_{k=1}^m\tilde I_{1,k}(\gamma)\right\|_2
 \ge \frac{y-y_0}{12}\right]\\
&\le \mathbb P\left[
 \max_{1\le m\le n}
 \left\|\sum_{j=1}^m\tilde R_j(\gamma)\right\|_2
 \ge \frac{\lambda_*(y-y_0)}
 {12(2+\lambda^*/\lambda_*)}\right]
 \text{, by Lemma \ref{lem:max_convolution_bound}}\\
&\le
 \frac{12^{p_3}(2+\lambda^*/\lambda_*)^{p_3}}
 {\lambda_*^{p_3}(y-y_0)^{p_3}}
 \mathbb E\max_{m\le n}
 \left\|
 \sum_{\ell=1}^{n-1}\sum_{t=1}^{(m-\ell)_+}
 \mathcal P_t(R_{t+\ell})
 \right\|_2^{p_3}
 \text{, since $\mathcal P_t(R_t)=0$}\\
&\le
 \frac{12^{p_3}(2+\lambda^*/\lambda_*)^{p_3}}
 {\lambda_*^{p_3}(y-y_0)^{p_3}}
 \left(\frac{p_3}{p_3-1}\right)^{p_3}
 \left(
 \sum_{\ell=1}^{n-1}
 \left\|
 \sum_{t=1}^{n-\ell}\mathcal P_t(R_{t+\ell})
 \right\|_{p_3}
 \right)^{p_3}\\
&\text{, by Minkowski and Doob's maximal
 $\mathcal L^{p_3}$ inequality at each fixed $\ell$,
 as in Lemma \ref{Sum centered taylor error to 0},}\\
&\le
 \frac{12^{p_3}(2+\lambda^*/\lambda_*)^{p_3}}
 {\lambda_*^{p_3}(y-y_0)^{p_3}}
 \left(\frac{p_3}{\sqrt{p_3-1}}\right)^{p_3}
 \left(
 \sum_{\ell=1}^{n-1}
 \left(
 \sum_{t=1}^{n-\ell}
 \|\mathcal P_t(R_{t+\ell})\|_{p_3}^2
 \right)^{1/2}
 \right)^{p_3}
 \text{, by Rio's inequality.}\\
&\le
 \frac{12^{p_3}(2+\lambda^*/\lambda_*)^{p_3}}
 {\lambda_*^{p_3}(y-y_0)^{p_3}}
 \left(
 \frac{2p_3C_LC_{4,2p_3}\gamma}
 {\sqrt{p_3-1}}
 \right)^{p_3}\\
& , \cdot
 \left(
 \sum_{\ell=1}^{n-1}
 \left\{
 \frac{\rho_{2p_3}(1-\alpha_0\gamma)^{\ell-1}}
 {\sqrt{\alpha_0\gamma}}
 +\sqrt{3(2p_3-1)}M_{2p_3}\alpha_0^{-1/2}
 \sqrt{n\gamma}
 (1-\alpha_0\gamma)^{(\ell-1)/2}
 \right\}
 \right)^{p_3}\\
&\le
 \frac{12^{p_3}(2+\lambda^*/\lambda_*)^{p_3}}
 {\lambda_*^{p_3}(y-y_0)^{p_3}}
 \left(
 \frac{2p_3C_LC_{4,2p_3}\gamma}
 {\sqrt{p_3-1}}
 \right)^{p_3}\\
& , \cdot
 \left(
 \frac{\rho_{2p_3}}
 {\sqrt{\alpha_0\gamma}}\frac1{\alpha_0\gamma}
 +\sqrt{3(2p_3-1)}M_{2p_3}\alpha_0^{-1/2}
 \sqrt{n\gamma}\frac2{\alpha_0\gamma}
 \right)^{p_3}\\
&\text{, since }
 1-(1-\alpha_0\gamma)^{1/2}
 \ge \tfrac12\alpha_0\gamma,\\
&=
 \frac{12^{p_3}(2+\lambda^*/\lambda_*)^{p_3}}
 {\lambda_*^{p_3}(y-y_0)^{p_3}}
 \left(
 \frac{2p_3C_LC_{4,2p_3}}
 {\sqrt{p_3-1}}
 \right)^{p_3}
 \left(
 \frac{\rho_{2p_3}}{\alpha_0^{3/2}\sqrt\gamma}
 +\frac{2\sqrt{3(2p_3-1)}M_{2p_3}}
 {\alpha_0^{3/2}}\sqrt{n\gamma}
 \right)^{p_3}.
\end{align*}
and similarly
\begin{align*}
&\mathbb P\left[
\max_{1\le m\le n}
\left\|\sum_{k=1}^m\tilde I_{1,k}(2\gamma)\right\|_2
\ge\frac{y-y_0}{6}
\right]\\
&\le
\frac{6^{p_3}(2+\lambda^*/\lambda_*)^{p_3}}
{\lambda_*^{p_3}(y-y_0)^{p_3}}
\left(\frac{2p_3C_LC_{4,2p_3}}{\sqrt{p_3-1}}\right)^{p_3}
\\
& , \cdot
\left(
\frac{\rho_{2p_3}}{\alpha_0^{3/2}\sqrt{2\gamma}}
+\frac{2\sqrt{3(2p_3-1)}M_{2p_3}}{\alpha_0^{3/2}}
\sqrt{2n\gamma}
\right)^{p_3}.
\end{align*}
which implies
\[
\mathbb P\left[
\max_{1\le m\le n}
\left\|
\sum_{k=1}^m
\big(2\tilde I_{1,k}(\gamma)-\tilde I_{1,k}(2\gamma)\big)
\right\|_2
\ge\frac{y-y_0}{3}
\right]
\le
\frac{C_{Rem}(n,\gamma)n^{p_3/2-1}}
{(y-y_0)^{p_3}}.
\]
where in the final step we combine the $\gamma \text{ and } 2\gamma$ terms into $C_{rem}(n, \gamma)$.
Finally,
$$\begin{aligned} 
& \mathbb{P}\left[ \max_{1 \le m \le n} \left\Vert{} \sum_{k=1}^m \big( 2(L_k(\gamma) - L_k^\diamond(\gamma)) - (L_k(2\gamma) - L_k^\diamond(2\gamma)) \big) \right\Vert{}_2 \ge \frac{y - y_0}{3} \right] \\ & \le \mathbb{P}\left[ \max_{1 \le m \le n} \left\Vert{} \sum_{k=1}^m (L_k(\gamma) - L_k^\diamond(\gamma)) \right\Vert{}_2 \ge \frac{y-y_0}{12} \right] \\
&\quad+ \mathbb{P}\left[ \max_{1 \le m \le n} \left\Vert{} \sum_{k=1}^m (L_k(2\gamma) - L_k^\diamond(2\gamma)) \right\Vert{}_2 \ge \frac{y-y_0}{6} \right] \\
&\text{, by triangle inequality} \\ 
& \le K_{p_1,d,S}\, n\,\mathbb{E}\Vert{}Z_1\Vert{}_2^{p_1} \left(\frac{12(2+\lambda^*/\lambda_*)}{\lambda_* (y-y_0)}\right)^{p_1} \\ 
& + K_{p_1,d,S}\, n\,\mathbb{E}\Vert{}Z_1\Vert{}_2^{p_1} \left(\frac{6(2+\lambda^*/\lambda_*)}{\lambda_* (y-y_0)}\right)^{p_1}   \text{, by lemma \ref{kmt for the linear component}} \\ 
& = \frac{C_{Lin} \cdot n}{(y-y_0)^{p_1}} 
\end{aligned}$$
where in the last line we combine everything into $C_{Lin}$
\end{proof}

\begin{theorem} \label{thm:bias_kmt}
Assume the gradient noise $\nabla g(\theta^*, X)$ possesses bounded moments for $p_1 \ge 2$, $p_2 > 2$, and $2p_3 > 4$. Let $C_{BDG, p}$ be the absolute constant from the Burkholder-Davis-Gundy inequality.

Let $b = -\frac{1}{2} \nabla^3 G(\theta^\star)[\Gamma]$, and let $x(s)$ be the solution to the gradient-flow ODE $\dot{x}(s) = -\nabla G(\theta^\star + x(s))$ with initial condition $x(0) = \Delta_0$. Define
$$ \Delta_k^\diamond := (I_d - \gamma A)^k \Delta_0 - L_k^\diamond + \gamma \sum_{j=1}^k (I_d - \gamma A)^{k-j} \Big( \gamma b - R_{det}(x((j-1)\gamma)) \Big) $$

and further define
$$\begin{aligned}
& B_{Rem} := \frac{L_4 \rho_2 \sqrt{C_S}}{\alpha_0} + n \left(\frac{L_4 \sqrt{3C_S} M_2}{2\alpha_0^{1/2}}+\frac{L_4C_S^{3/4}}6\right) \gamma^{1.5} \\
&+ \frac{\|\nabla^3 G(\theta^*)\|_2}{2} \left( \frac{2 C_{\phi 1}}{\lambda^2} + \frac{32 C_{R3}}{\lambda^3} + 2n(C_{\phi 2} + C_{R4}) \gamma^2 + \frac{2^{5/2} C_{X1}}{\lambda^{3/2}} + \frac{8 C_{X2}}{\lambda^2} + 2n C_{X3} \gamma^{1.5} \right) \\
&  + \frac{C_L C_E \rho_2}{\alpha_0} + \frac{C_L C_E M_2 \sqrt{3}}{2\sqrt{\alpha_0}} n \gamma^{1.5} + \frac{C_L C_E \|\Delta_0\|_2}{\mu} \\
& B_{Cov} := \frac{\|\nabla^3 G(\theta^*)\|_2 \|\Gamma\|_2}{2\lambda_*} + \frac{n \|\nabla^3 G(\theta^*)\|_2 \|A\Gamma A\|_2}{2\lambda_*} \gamma^2 
\end{aligned}$$
so that we may define $y_0 := \frac{2+\lambda^*/\lambda_*}{\lambda_*} \big( B_{Rem} + B_{Cov} \big)$ 
and
$$\begin{aligned}
& C_{Lin} := K_{p_1,d,S} \mathbb{E}\|Z_1\|_2^{p_1} \left(\frac{3(2+\lambda^*/\lambda_*)}{\lambda_*}\right)^{p_1} \\
& C_{Stoch}(n,\gamma) := n^{1-p_2/2}
\left(\frac{6p_2L_{p_2}}{\lambda_*\sqrt{p_2-1}}\right)^{p_2}
\left(\frac{\rho_{p_2}^2}{\alpha_0\gamma}
+\frac{3(p_2-1)M_{p_2}^2}{\alpha_0}n\gamma\right)^{p_2/2} \\
& C_{Rem}(n,\gamma) := n^{1-p_3/2}
\left(\frac{3(2+\lambda^*/\lambda_*)}{\lambda_*}\right)^{p_3}
\left(\frac{2p_3C_LC_{4,2p_3}}{\sqrt{p_3-1}}\right)^{p_3} \\
& \cdot \left(
\frac{\rho_{2p_3}}{\alpha_0^{3/2}\sqrt\gamma}
+\frac{2\sqrt{3(2p_3-1)}M_{2p_3}}{\alpha_0^{3/2}}
\sqrt{n\gamma}\right)^{p_3}
\end{aligned}$$

Then, for all $y > y_0$,
$$\begin{aligned}
& \mathbb{P}\left[ \max_{1 \le m \le n} \left\| \sum_{k=1}^m (\Delta_k - \Delta_k^\diamond) \right\|_2 \ge y \right] \le \frac{C_{Lin} \cdot n}{(y - y_0)^{p_1}} + \frac{C_{Stoch}(n, \gamma) \cdot n^{p_2/2 - 1}}{(y - y_0)^{p_2}} + \frac{C_{Rem}(n, \gamma) \cdot n^{p_3/2 - 1}}{(y - y_0)^{p_3}} 
\end{aligned}$$
\end{theorem}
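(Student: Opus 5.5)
The plan is to mirror the proof of Theorem~\ref{KMT_Romberg}, but with a single run. We also keep the deterministic bias/initialization proxy inside $\Delta_k^\diamond$ rather than requiring it to vanish. By the linearization,
\[
\Delta_k-\Delta_k^\diamond=-(L_k-L_k^\diamond)+\tilde I_{1,k}+I_{2,k}+\Big(\mathbb E I_{1,k}-\gamma\sum_{j\le k}(I_d-\gamma A)^{k-j}\{\gamma b-R_{det}(x((j-1)\gamma))\}\Big).
\]
The $Q^k\Delta_0$ terms cancel exactly. Summing over $k\le m$, the last bracket is the quantity $\mathcal D_m$ from the proof of Theorem~\ref{thm:fclt_init_bias}. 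By Lemma~\ref{lem:max_convolution_bound} (with the factor $(2+\lambda^*/\lambda_*)/\lambda_*$ used there), $\max_m\|\mathcal D_m\|_2$ is at most that factor times $\max_m\|\sum_{j\le m}(\mathbb E R_j-\gamma b+R_{det}(x((j-1)\gamma)))\|_2$.

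\textbf{Deterministic offset $y_0$.} For the summand I would take the expansion of $\mathbb E R_j$ from Theorem~\ref{thm:fclt_init_bias} and keep every bound non-asymptotic, with $\sqrt n$ multiplied back in. The decomposition is
\[
\mathbb E R_j=-R_{det}(H_{j-1})-\tfrac12\nabla^3G(\theta^*)[\mathbb E L_{j-1}L_{j-1}^\top]-\tfrac12\nabla^3G(\theta^*)[\mathcal E_{j-1}]-\tfrac12\mathbb E[(\nabla^3G(\tilde\theta_{j-1})-\nabla^3G(\theta^*))[S_{j-1},S_{j-1}]].
\]
The terms are handled as follows.
\begin{itemize}
\item The third-derivative Lipschitz remainder gives the $L_4$ terms of $B_{Rem}$. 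The extra $L_4C_S^{3/4}\gamma^{3/2}/6$ piece accounts for the fact that the averaged tensor is evaluated along $H+sS$.
\item The covariance discrepancy $\mathcal E_{j-1}$ gives the $C_{\phi},C_R,C_X$ terms.
\item The difference $R_{det}(H_{j-1})-R_{det}(x((j-1)\gamma))$ is controlled through $\|H_{j-1}-x((j-1)\gamma)\|\le C_E\gamma$, and gives the $C_LC_E$ terms.
\item For the linear covariance, Lemma~\ref{linear covariance expansion} gives $\gamma\Gamma$ exactly, which cancels against $\gamma b$. The two leftover pieces give $B_{Cov}$.
\end{itemize}
Each bound is uniform in $m\le n$ because all summands are summed in norm. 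This yields $y_0\le C(1+n\gamma^{3/2}+n\gamma^2)$.

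\textbf{Random terms.} After subtracting $y_0$, I split $y-y_0$ into thirds and apply a union bound.
\begin{itemize}
\item \emph{Linear coupling.} Theorem~\ref{kmt for the linear component} at threshold $(y-y_0)/3$ gives $C_{Lin}n/(y-y_0)^{p_1}$, via the G\"otze--Zaitsev coupling of Lemma~\ref{lem:innovation-new}.
\item \emph{Stochastic error $I_2$.} I apply Lemma~\ref{lem:max_convolution_bound}, then Doob's $L^{p_2}$ maximal inequality and Rio's inequality for the martingale $\sum D_j$. With $\|D_j\|_{p_2}\le 2L_{p_2}\|\Delta_{j-1}\|_{p_2}$ and \eqref{eq:mom-main}, this gives $(\rho^2/(\alpha_0\gamma)+Cn\gamma)^{p_2/2}$, and Markov then yields the $C_{Stoch}$ term.
\item \emph{Centered Taylor term $\tilde I_1$.} I again apply Lemma~\ref{lem:max_convolution_bound}, then decompose into lag-$\ell$ projections $\mathcal P_tR_{t+\ell}$ and use Minkowski, Doob and Rio. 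The projection bound is $\|\mathcal P_tR_j\|_{p_3}\le 2C_L\|\Delta_{j-1}\|_{2p_3}\|\Delta_{j-1}-\Delta_{j-1}^{(t)}\|_{2p_3}$, combined with \eqref{eq:mom-main} and \eqref{eq:dep-main} at order $2p_3$. Summing the geometric series in $\ell$ exactly as in Theorem~\ref{KMT_Romberg}, but with a single step size and constant $3$ in place of $12$ and $6$, gives the $C_{Rem}$ term.
\end{itemize}

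\textbf{Main obstacle.} The hard part is the bookkeeping for $y_0$. Every estimate in Theorem~\ref{thm:fclt_init_bias} was stated after dividing by $\sqrt n$ and then sending to a limit, so each must be rechecked as a uniform-in-$m$ finite bound. In particular, the $C_E\gamma$ tracking bound between $H_j$ and the gradient flow must not rely on $\sqrt n\gamma\to c$. Beyond that, the probabilistic pieces are direct transcriptions of the Richardson--Romberg argument.
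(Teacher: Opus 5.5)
Your proposal is correct and follows essentially the same route as the paper. It uses the same four-way split: the G\"otze--Zaitsev linear coupling via Theorem~\ref{kmt for the linear component}, the $I_2$ martingale term via Lemma~\ref{lem:max_convolution_bound} with Doob and Rio, the centered Taylor term via lag-$\ell$ projections, and the deterministic offset $\mathcal D_m$, bounded by summing in norm the non-asymptotic versions of the estimates in Theorem~\ref{thm:fclt_init_bias}. Your accounting of where each piece of $B_{Rem}$ and $B_{Cov}$ comes from also matches the paper's bookkeeping, including reading the $L_4C_S^{3/4}\gamma^{3/2}/6$ term as the $sS_{j-1}$ part of the averaged third-derivative argument.
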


\begin{proof}
Now define
$$\begin{aligned}
& \mathcal{D}_m := \sum_{k=1}^m \mathbb{E}[I_{1,k}] - \sum_{k=1}^m \gamma \sum_{j=1}^k (I_d - \gamma A)^{k-j} \Big( \gamma b - R_{det}(x((j-1)\gamma)) \Big) \\
& = \gamma \sum_{k=1}^m \sum_{j=1}^k (I_d - \gamma A)^{k-j} \Big( \mathbb{E}[R_j] - \gamma b + R_{det}(x((j-1)\gamma)) \Big) 
\end{aligned}$$
and thus
$$\begin{aligned}
& \max_{1 \le m \le n} \|\mathcal{D}_m\|_2 \le \frac{2+\lambda^*/\lambda_*}{\lambda_*} \max_{1 \le m \le n} \left\| \sum_{j=1}^m \Big( \mathbb{E}[R_j] - \gamma b + R_{det}(x((j-1)\gamma)) \Big) \right\|_2 \quad \text{, by Lemma \ref{lem:max_convolution_bound}}
\end{aligned}$$

We bound this by the identical bounds as in \ref{thm:fclt_init_bias}, from which we have
$$\begin{aligned}
& \max_{1 \le m \le n} \|\mathcal{D}_m\|_2 \le \frac{2+\lambda^*/\lambda_*}{\lambda_*} \sum_{j=1}^n \left\| \mathbb{E}[R_j] - \gamma b + R_{det}(x((j-1)\gamma)) \right\|_2 \le \frac{2+\lambda^*/\lambda_*}{\lambda_*} \big( B_{Rem} + B_{Cov} \big) =: y_0
\end{aligned}$$
where $y_0$ is simply the sum of the error bounds derived in theorem \ref{thm:fclt_init_bias}.
Using the same decomposition as before(as in theorem \ref{thm:fclt_init_bias}) for the SGD iterates, we get that
$$\begin{aligned}
& \sum_{k=1}^m (\Delta_k - \Delta_k^\diamond) = - \sum_{k=1}^m (L_k - L_k^\diamond) + \sum_{k=1}^m \tilde{I}_{1,k} + \sum_{k=1}^m I_{2,k} + \mathcal{D}_m \\
& \implies \mathbb{P}\left[ \max_{1 \le m \le n} \left\| \sum_{k=1}^m (\Delta_k - \Delta_k^\diamond) \right\|_2 \ge y \right] \le \mathbb{P}\left[ \max_{1 \le m \le n} \left\| \sum_{k=1}^m (L_k - L_k^\diamond) \right\|_2 \ge \frac{y - y_0}{3} \right] \\
& \\
&\quad+ \mathbb{P}\left[ \max_{1 \le m \le n} \left\| \sum_{k=1}^m \tilde{I}_{1,k} \right\|_2 \ge \frac{y - y_0}{3} \right] \\
&\quad+ \mathbb{P}\left[ \max_{1 \le m \le n} \left\| \sum_{k=1}^m I_{2,k} \right\|_2 \ge \frac{y - y_0}{3} \right]
\end{aligned}$$

\bigskip

For $I_2$, apply Doob followed by Rio exactly as in the $I_2$ bound in theorem \ref{KMT_Romberg}, with $\eta=\gamma$. Lemma \ref{lem:max_convolution_bound} and Markov's inequality then give
$$\begin{aligned}
&\mathbb P\left[\max_{m\le n}\left\|\sum_{k=1}^m I_{2,k}\right\|_2
\ge\frac{y-y_0}{3}\right]\\
&\le\frac1{(y-y_0)^{p_2}}
\left(\frac{6p_2L_{p_2}}{\lambda_*\sqrt{p_2-1}}\right)^{p_2}
\left(\frac{\rho_{p_2}^2}{\alpha_0\gamma}
+\frac{3(p_2-1)M_{p_2}^2}{\alpha_0}n\gamma\right)^{p_2/2}\\
&=\frac{C_{Stoch}(n,\gamma)n^{p_2/2-1}}{(y-y_0)^{p_2}}.
\end{aligned}$$

For the centered Taylor remainder $\tilde{I}_1$, we proceed similarly
For each fixed $\ell\ge1$, $\mathcal P_t(R_{t+\ell})$
is a martingale difference sequence in $t$.
$$\begin{aligned}
& \mathbb{P}\left[ \max_{1 \le m \le n} \left\| \sum_{k=1}^m \tilde{I}_{1,k} \right\|_2 \ge \frac{y-y_0}{3} \right] \le \mathbb{P}\left[ \max_{1 \le m \le n} \left\| \sum_{j=1}^m \tilde{R}_j \right\|_2 \ge \frac{\lambda_*(y-y_0)}{3(2+\lambda^*/\lambda_*)} \right] \text{, by Lemma \ref{lem:max_convolution_bound}} \\
&\le
 \frac{3^{p_3}(2+\lambda^*/\lambda_*)^{p_3}}
 {\lambda_*^{p_3}(y-y_0)^{p_3}}
 \mathbb E\max_{m\le n}
 \left\|
 \sum_{\ell=1}^{n-1}\sum_{t=1}^{(m-\ell)_+}
 \mathcal P_t(R_{t+\ell})
 \right\|_2^{p_3}
 \text{, since $\mathcal P_t(R_t)=0$}\\
&\le
 \frac{3^{p_3}(2+\lambda^*/\lambda_*)^{p_3}}
 {\lambda_*^{p_3}(y-y_0)^{p_3}}
 \left(\frac{p_3}{p_3-1}\right)^{p_3}
 \left(
 \sum_{\ell=1}^{n-1}
 \left\|
 \sum_{t=1}^{n-\ell}\mathcal P_t(R_{t+\ell})
 \right\|_{p_3}
 \right)^{p_3}\\
&\text{, by Minkowski and Doob's maximal
 $\mathcal L^{p_3}$ inequality at each fixed $\ell$,
 as in Lemma \ref{Sum centered taylor error to 0},}\\
&\le
 \frac{3^{p_3}(2+\lambda^*/\lambda_*)^{p_3}}
 {\lambda_*^{p_3}(y-y_0)^{p_3}}
 \left(\frac{p_3}{\sqrt{p_3-1}}\right)^{p_3}
 \left(
 \sum_{\ell=1}^{n-1}
 \left(
 \sum_{t=1}^{n-\ell}
 \|\mathcal P_t(R_{t+\ell})\|_{p_3}^2
 \right)^{1/2}
 \right)^{p_3}
 \text{, by Rio's inequality.}\\
&\le
 \frac{3^{p_3}(2+\lambda^*/\lambda_*)^{p_3}}
 {\lambda_*^{p_3}(y-y_0)^{p_3}}
 \left(
 \frac{2p_3C_LC_{4,2p_3}\gamma}
 {\sqrt{p_3-1}}
 \right)^{p_3}\\
& , \cdot
 \left(
 \sum_{\ell=1}^{n-1}
 \left\{
 \frac{\rho_{2p_3}(1-\alpha_0\gamma)^{\ell-1}}
 {\sqrt{\alpha_0\gamma}}
 +\sqrt{3(2p_3-1)}M_{2p_3}\alpha_0^{-1/2}
 \sqrt{n\gamma}
 (1-\alpha_0\gamma)^{(\ell-1)/2}
 \right\}
 \right)^{p_3}\\
&\le
 \frac{3^{p_3}(2+\lambda^*/\lambda_*)^{p_3}}
 {\lambda_*^{p_3}(y-y_0)^{p_3}}
 \left(
 \frac{2p_3C_LC_{4,2p_3}\gamma}
 {\sqrt{p_3-1}}
 \right)^{p_3}\\
& , \cdot
 \left(
 \frac{\rho_{2p_3}}
 {\sqrt{\alpha_0\gamma}}\frac1{\alpha_0\gamma}
 +\sqrt{3(2p_3-1)}M_{2p_3}\alpha_0^{-1/2}
 \sqrt{n\gamma}\frac2{\alpha_0\gamma}
 \right)^{p_3}\\
&\text{, since }
 1-(1-\alpha_0\gamma)^{1/2}
 \ge \tfrac12\alpha_0\gamma,\\
&=
 \frac{3^{p_3}(2+\lambda^*/\lambda_*)^{p_3}}
 {\lambda_*^{p_3}(y-y_0)^{p_3}}
 \left(
 \frac{2p_3C_LC_{4,2p_3}}
 {\sqrt{p_3-1}}
 \right)^{p_3}
 \left(
 \frac{\rho_{2p_3}}{\alpha_0^{3/2}\sqrt\gamma}
 +\frac{2\sqrt{3(2p_3-1)}M_{2p_3}}
 {\alpha_0^{3/2}}\sqrt{n\gamma}
 \right)^{p_3} \\
& = \frac{C_{Rem}(n, \gamma) \cdot n^{p_3/2 - 1}}{(y - y_0)^{p_3}} \text{, defining } C_{Rem}(n,\gamma)
\end{aligned}$$

Finally, by \ref{kmt for the linear component}
$$\begin{aligned}
& \mathbb{P}\left[ \max_{1 \le m \le n} \left\| \sum_{k=1}^m (L_k - L_k^\diamond) \right\|_2 \ge \frac{y-y_0}{3} \right] \le K_{p_1,d,S}\, n\,\mathbb{E}\|Z_1\|_2^{p_1} \left(\frac{3(2+\lambda^*/\lambda_*)}{\lambda_* (y-y_0)}\right)^{p_1} \\
& = \frac{C_{Lin} \cdot n}{(y-y_0)^{p_1}} \text{, defining } C_{Lin}
\end{aligned}$$

\end{proof}
\section{Trajectory KMT}\label{app:trajectory-kmt}

Let $b$ be an even nonnegative integer,
$J=\lceil T/\gamma\rceil$, and $N=b+J$.
Define
\begin{align}
E_1
&=C_0\sqrt\gamma\left\{
q_A^{b/2}\sum_{j=1}^{b/2}q_A^{b/2-j}v_{j-1}
+\sum_{j=b/2+1}^{N}v_{j-1}\right\},
\label{eq:E1}\\
E_2
&=8L_2^2\gamma q_A^b
\sum_{j=1}^{b/2}q_A^{b-2j}v_{j-1}
+32L_2^2c_A^2\gamma
\sum_{j=b/2+1}^{N}v_{j-1}
\label{eq:E2}
\end{align}
where $v_j=v_{2,j}$ and $c_A=1+\lambda^*/\lambda_*$,
and for $0<\delta<1$, let
\begin{align*}
D^2
&=2\operatorname{tr}(S)\left\{
\frac{(\lambda^*)^2}{\lambda_*}
+\frac{2(\lambda^*)^4}{\lambda_*^3}\right\},\\
g_{\gamma,b,T}(\delta)
&=e^{-\lambda_*b\gamma}
\sqrt{2d\|\Gamma\|_2\log(2d/\delta)}\\
&\quad+\gamma\left(D+\lambda^*\sqrt{\|\Gamma\|_2}\right)
\sqrt{2d\log(2d(J+1)/\delta)}\\
&\quad+(1+\gamma\lambda^*)\|S^{1/2}\|_2
\sqrt{2d\gamma\log(4d(J+1)/\delta)}.
\end{align*}
\begin{theorem}[Quenched trajectory KMT approximation]
\label{thm:trajectory-kmt-new}
We use the assumptions from Theorem~\ref{thm:trajectory_fclt}, and
assume additionally $M_p<\infty$ for some $p>2$,
$\gamma\lambda^*\le1/2$ and $0<\alpha_0\gamma<1$.
For any even burn in $b\ge0$ and $T>0$, and
\[
Y_\gamma(t)=\frac{\theta_{b+\lfloor t/\gamma\rfloor}-\theta^*}
{\sqrt\gamma}
\]
we can construct a stationary OU process on the same space
\[
dX_t=-AX_t\,dt+S^{1/2}\,dB_t,   
X_0\sim N(0,\Gamma),\quad A\Gamma+\Gamma A=S,
\]
such that, for every $z,u,v>0$ and $0<\delta<1$,
\begin{align}
&\mathbb P\left\{\sup_{0\le t\le T}\|Y_\gamma(t)-X_t\|_2>
\frac{q_A^b\rho_2}{\sqrt\gamma}+c_A\sqrt\gamma z+u+v
+g_{\gamma,b,T}(\delta)\right\}\notag\\
& \le\frac{K_{p,d,S}NM_p^p}{z^p}
+\frac{E_1}{u}+\frac{E_2}{v^2}+4\delta.
\label{eq:trajectory-finite}
\end{align}
In particular, for fixed $T$ and
\[
b=2\left\lceil\frac{\nu}{2\gamma}\log(1/\gamma)\right\rceil,
 ,  \nu>3/\min(\lambda_*,\alpha_0),
\]
\begin{equation}\label{eq:trajectory-rate}
\sup_{t\le T}\|Y_\gamma(t)-X_t\|_2
=O_{\mathbb P}\!\left(
\gamma^{1/2-1/p}\{\log(1/\gamma)\}^{1/p}
+\sqrt\gamma\log(1/\gamma)\right)=o_{\mathbb P}(1).
\end{equation}
\end{theorem}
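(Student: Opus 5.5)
We start from the linearization \eqref{eq:decomp-main} evaluated at $k=b+\lfloor t/\gamma\rfloor$ and divided by $\sqrt\gamma$:
\[
Y_\gamma(t)=\gamma^{-1/2}Q^{k}\Delta_0-\gamma^{-1/2}L_k+\gamma^{-1/2}I_{1,k}+\gamma^{-1/2}I_{2,k}.
\]
Each piece is matched to one threshold term in \eqref{eq:trajectory-finite}. The initialization term is deterministic and bounded by $q_A^b\rho_2/\sqrt\gamma$, since $k\ge b$.

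For $I_1$ and $I_2$ we reuse the split at $b/2$ from the proof of Theorem~\ref{thm:trajectory_fclt}, but keep explicit constants instead of sending $\gamma\to0$.
\begin{itemize}
\item For $I_1$: use $\|R_j\|\le C_0\|\Delta_{j-1}\|^2$ and $\mathbb E\|\Delta_{j-1}\|^2\le v_{j-1}$. The head contributes $q_A^{b/2}$ times a geometric sum, and the tail is bounded by the plain sum. Dividing by $\sqrt\gamma$ gives an $L^1$ bound $\mathbb E\sup\le E_1$, and Markov's inequality gives $E_1/u$.
\item For $I_2$: the head $Q^{k-b/2}I_{2,b/2}$ has second moment at most $q_A^{b}\cdot 4L_2^2\gamma^2\sum q_A^{b-2j}v_{j-1}$. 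For the tail, summation by parts gives the factor $c_A\gamma\max_j\|\widetilde M_j\|$, and Doob's $L^2$ inequality bounds it by $4\cdot4L_2^2\sum v_{j-1}$. Combining head and tail with $(a+b)^2\le2a^2+2b^2$ produces $E_2$, and Chebyshev's inequality gives $E_2/v^2$.
\end{itemize}

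The linear part is where the coupling is built. Apply Lemma~\ref{lem:innovation-new} on $[1,N]$ to obtain i.i.d.\ $Z_j^\diamond\sim N(0,S)$ with $\mathbb E\max_{k\le N}\|\sum_{j\le k}(Z_j-Z_j^\diamond)\|^p\le K_{p,d,S}NM_p^p$. The same summation by parts as in Theorem~\ref{kmt for the linear component}, now for $L_k$ itself rather than its partial sums, gives
\[
\max_{k\le N}\|L_k-L_k^\diamond\|\le\gamma c_A\max_{k\le N}\Bigl\|\sum_{j\le k}(Z_j-Z_j^\diamond)\Bigr\|.
\]
After dividing by $\sqrt\gamma$, the event that this exceeds $c_A\sqrt\gamma z$ has probability at most $K_{p,d,S}NM_p^p/z^p$ by Markov's inequality.

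It remains to compare $W_k:=-L_k^\diamond/\sqrt\gamma$ with a stationary OU process $X$ on the grid and between grid points. Build $X$ from a Brownian motion $B$ such that $\sqrt\gamma\, Z^\diamond_{b+l}=-S^{1/2}$-scaled increments of $B$ over $[(l-1)\gamma,l\gamma]$. Take $X_0$ to be a $N(0,\Gamma)$ variable coupled to $W_b$, for example by setting $X_0=\Gamma^{1/2}\,\mathrm{Cov}(W_b)^{-1/2}W_b$ together with an independent correction. The pre-burn-in noise then enters only through $W_b$.

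\begin{itemize}
\item \textbf{Initial-state mismatch.} Since $\mathrm{Cov}(W_b)=\Gamma-Q^b\Gamma Q^b+O(\gamma^2)$ by Lemma~\ref{linear covariance expansion}, the difference propagates as $Q^l(W_b-X_0)$. A Gaussian union bound over the $d$ coordinates yields the term $e^{-\lambda_*b\gamma}\sqrt{2d\|\Gamma\|\log(2d/\delta)}$; the $O(\gamma^2)$ covariance error is absorbed into the next item.
\item \textbf{Euler-versus-exact discretization.} Writing $X_{l\gamma}=e^{-\gamma A}X_{(l-1)\gamma}+\int e^{-(l\gamma-s)A}S^{1/2}dB_s$ and comparing with $W_{b+l}=QW_{b+l-1}-\sqrt\gamma Z^\diamond$, the error recursion is driven by $(e^{-\gamma A}-Q)X=O(\gamma^2A^2)X$ and by stochastic-integral mismatches of variance $O(\gamma^3)$ per step. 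These are contracted by $Q$. The result is a Gaussian field with per-point standard deviation $\gamma D$ plus a $\gamma\lambda^*\sqrt{\|\Gamma\|}$ contribution, and a Gaussian maximal union bound over the $J+1$ grid points gives the second term of $g$.
\item \textbf{Off-grid oscillation.} For $t\in[l\gamma,(l+1)\gamma)$, the difference $X_t-X_{l\gamma}$ equals $-\int AX+S^{1/2}(B_t-B_{l\gamma})$. Bounding the Brownian sup over each cell by the reflection principle with a union bound over cells gives $(1+\gamma\lambda^*)\|S^{1/2}\|\sqrt{2d\gamma\log(4d(J+1)/\delta)}$.
\end{itemize}
Collecting the failure probabilities of these three Gaussian steps gives the total $4\delta$.

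The rate \eqref{eq:trajectory-rate} then follows by substituting $b=2\lceil\nu\log(1/\gamma)/(2\gamma)\rceil$. With $N\asymp\gamma^{-1}\log(1/\gamma)$, choose $z\asymp(N)^{1/p}$, which gives a contribution $\sqrt\gamma z=\gamma^{1/2-1/p}\log^{1/p}$. The conditions $\nu>3/\min(\lambda_*,\alpha_0)$ make the head terms and $q_A^b/\sqrt\gamma$ polynomially small, while $E_1\asymp\sqrt\gamma\cdot N\gamma\asymp\sqrt\gamma\log(1/\gamma)$ and $E_2\asymp\gamma\log(1/\gamma)$. The term $g$ is $O(\sqrt{\gamma\log(1/\gamma)})$.

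I expect the main obstacle to be the coupling of $X_0$ with $W_b$, and getting the discretization error in exactly the $D$ form. If an exact coupling of $X_0$ proves awkward, the fallback is to define $X$ by running the exact OU flow from a Gaussian $X_0$ obtained by rescaling $W_b$. The covariance mismatch $\|\Gamma^{1/2}\mathrm{Cov}(W_b)^{-1/2}-I\|=O(\gamma+q_A^{2b})$ would then be absorbed into $g$.
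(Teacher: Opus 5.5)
The initialization bound, the $b/2$ split giving exactly $E_1$ and $E_2$, the summation-by-parts reduction of $L_k-L_k^\diamond$ to Lemma~\ref{lem:innovation-new}, and the substitution for the rate all match the paper. The gap is the step you flag yourself: the coupling of $X$ with $W_k=-L_k^\diamond/\sqrt\gamma$. Lemma~\ref{linear covariance expansion} gives
\[
\mathrm{Cov}(W_b)=\Gamma-Q^b\Gamma Q^b+\gamma^2\sum_{i=0}^{b-1}Q^iA\Gamma AQ^i .
\]
The last term is bounded by $\gamma\|A\Gamma A\|_2/\lambda_*$ and is of that order once $b\gamma$ is bounded below. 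So it is $O(\gamma)$, not $O(\gamma^2)$; your fallback's $O(\gamma+q_A^{2b})$ already contradicts the earlier claim. Consequently $\mathrm{Cov}(W_b)+Q^b\Gamma Q^b\succeq\Gamma$ with a nonzero excess.

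This rules out both versions of your coupling.
\begin{itemize}
\item The coupling that would make $W_b-X_0$ of size $e^{-\lambda_*b\gamma}$ is $X_0=W_b+Q^b\xi'$ with independent $\xi'\sim N(0,\Gamma)$. Its covariance exceeds $\Gamma$, so it does not have law $N(0,\Gamma)$.
\item The rescaling $X_0=\Gamma^{1/2}\mathrm{Cov}(W_b)^{-1/2}W_b$ needs $\mathrm{Cov}(W_b)$ invertible. That fails at $b=0$ and whenever $\Gamma$ is singular, which the statement does not exclude. It also leaves an error $(I-\Gamma^{1/2}\mathrm{Cov}(W_b)^{-1/2})W_b$ of size $O(\gamma+q_A^{2b})$, with a constant depending on $\lambda_{\min}(\Gamma)$.
\end{itemize}
So the term $e^{-\lambda_*b\gamma}\sqrt{2d\|\Gamma\|_2\log(2d/\delta)}$ does not come out of your construction. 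As a result \eqref{eq:trajectory-finite}, with the stated $g_{\gamma,b,T}(\delta)$ and for every even $b\ge0$, is not proved. At best you recover the order in \eqref{eq:trajectory-rate} when $\Gamma\succ0$.

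The missing idea is to avoid matching covariances at time $b$ altogether.
\begin{itemize}
\item Embed all $N$ Gaussian innovations, pre-burn-in ones included, in one Brownian motion with $S^{1/2}\{\widetilde B(j\gamma)-\widetilde B((j-1)\gamma)\}=-\sqrt\gamma Z_j^\diamond$ for $j\le N$. Use Brownian bridges between grid points and independent coordinates on the null space of $S$.
\item Start the OU at time $0$ from an independent $\xi\sim N(0,\Gamma)$: $\overline X(s)=e^{-As}\xi+V(s)$ with $V(s)=\int_0^se^{-A(s-r)}S^{1/2}\,d\widetilde B(r)$. Set $X_t=\overline X(b\gamma+t)$; then $X$ is stationary.
\end{itemize}
With this construction,
\[
-\frac{L_k^\diamond}{\sqrt\gamma}-V(k\gamma)=\sum_{j=1}^k\int_{(j-1)\gamma}^{j\gamma}\{Q^{k-j}-e^{-A(k\gamma-u)}\}S^{1/2}\,d\widetilde B(u)
\]
has second moment at most $D^2\gamma^2$ uniformly in $k$. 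This follows from It\^o's isometry and the kernel bound $\|Q^r-e^{-A(r\gamma+s)}\|_2\le\gamma\{\lambda^*+r\gamma(\lambda^*)^2\}e^{-\lambda_*r\gamma}$. The only other piece of $X_{j\gamma}$ is $e^{-A(b+j)\gamma}\xi$, which is exactly where the $e^{-\lambda_*b\gamma}$ term comes from. This works for every $b$, including $b=0$, and for singular $S$, with no Euler error recursion and no $\Gamma^{-1/2}$.

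A smaller bookkeeping point: the $\gamma\lambda^*\sqrt{\|\Gamma\|_2}$ part of $g$ comes from the drift $(e^{-As}-I)X_{j\gamma}$ in the off-grid step, not from the grid discretization. That step needs its own bound on $\max_j\|X_{j\gamma}\|_2$. Together with the bound on $\|\xi\|_2$, this gives four Gaussian events, which is the source of $4\delta$.
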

\begin{proof}
Let $k(t)=b+\lfloor t/\gamma\rfloor$ and use the decomposition from the proof of Theorem~\ref{thm:trajectory_fclt} with burn in $b/2$.
By the same logic/intermediate result(for all results below) from Theorem~\ref{thm:trajectory_fclt}, we get
\begin{align*}
\sup_{t\le T}\frac{\|Q^{k(t)}\Delta_0\|_2}{\sqrt\gamma}
&\le\frac{q_A^b\rho_2}{\sqrt\gamma},
 \text{as in the initialization bound,}\\
\mathbb E\max_{b\le k\le N}\frac{\|I_{1,k}\|_2}{\sqrt\gamma}
&\le E_1,
 \text{as in the bound for $I_1$,}\\
\mathbb E\max_{b\le k\le N}\frac{\|I_{2,k}\|_2^2}{\gamma}
&\le E_2,
 \text{as in the Doob bound for $I_2$.}
\end{align*}

Let $U_k=\sum_{j=1}^k(Z_j-Z_j^\diamond)$.
By the same summation by parts used in
Theorem~\ref{kmt for the linear component},
\begin{align*}
L_k-L_k^\diamond
&=\gamma U_k-\gamma^2A\sum_{j=1}^{k-1}Q^{k-1-j}U_j,\\
\max_{k\le N}\frac{\|L_k-L_k^\diamond\|_2}{\sqrt\gamma}
&\le c_A\sqrt\gamma\max_{k\le N}\|U_k\|_2,
 \text{calculating the same geometric sum,}\\
\mathbb P\!\left\{
\max_{k\le N}\frac{\|L_k-L_k^\diamond\|_2}{\sqrt\gamma}
>c_A\sqrt\gamma z\right\}
&\le\frac{K_{p,d,S}NM_p^p}{z^p},
 \text{by \eqref{eq:innovation-new} and Markov's inequality.}
\end{align*}

Pick a Brownian motion $\widetilde B$ with
\[
S^{1/2}\{\widetilde B(j\gamma)-\widetilde B((j-1)\gamma)\}
=-\sqrt\gamma Z_j^\diamond.
\]
We can make this construction by using independent Brownian bridges between $ \{ j \gamma: j \in [N] \}$ points and independent
Brownian coordinates in the null space of $S$.
For an independent $\xi\sim N(0,\Gamma)$, let
\begin{align*}
V(s)&=\int_0^s e^{-A(s-r)}S^{1/2}\,d\widetilde B(r),\\
\overline X(s)&=e^{-As}\xi+V(s),    X_t=\overline X(b\gamma+t),
 \text{so $X$ is stationary OU.}
\end{align*}
At this point it only remains to bound in probability the difference between our linear Gaussian process and the OU process. We start by deriving a relevant bound for the eigenvalue difference of the linear operators we will be integrating over. First, for $0\le a\gamma\le1/2$,
\begin{align*}
&0\le-\log(1-a\gamma)-a\gamma
 =\int_0^{a\gamma}\frac{u}{1-u}\,du\\
&\le\int_0^{a\gamma}2u\,du
=a^2\gamma^2,
 \text{since $1-u\ge1/2$.}
\end{align*}
Thus, for an integer $r\ge0$,
\begin{align*}
&0\le e^{-ar\gamma}-(1-a\gamma)^r
=e^{-ar\gamma}
  \left[1-e^{-r\{-\log(1-a\gamma)-a\gamma\}}\right]\\
&\le r\{-\log(1-a\gamma)-a\gamma\}e^{-ar\gamma},
 \text{since $1-e^{-x}\le x$,}\\
&\le ra^2\gamma^2e^{-ar\gamma}.
\end{align*}
and
\begin{align*}
|e^{-ar\gamma}-e^{-a(r\gamma+s)}|
&=e^{-ar\gamma}(1-e^{-as})\\
&\le a\gamma e^{-ar\gamma}.
\end{align*}
so
\begin{align*}
|(1-a\gamma)^r-e^{-a(r\gamma+s)}|
&\le |(1-a\gamma)^r-e^{-ar\gamma}|
   +|e^{-ar\gamma}-e^{-a(r\gamma+s)}|\\
&\le\gamma(a+r\gamma a^2)e^{-ar\gamma}.
\end{align*}
Since $A$ is symmetric, $Q^r$ and $e^{-A(r\gamma+s)}$
are diagonalizable in a common orthonormal eigenbasis. Hence,
\begin{align*}
\|Q^r-e^{-A(r\gamma+s)}\|_2
&=\max_{a\in\operatorname{spectrum}(A)}
  |(1-a\gamma)^r-e^{-a(r\gamma+s)}|\\
&\le\gamma\{\lambda^*+r\gamma(\lambda^*)^2\}
 e^{-\lambda_*r\gamma},
 \text{since $\lambda_*\le a\le\lambda^*$.}
\end{align*}

We now want to rewrite both our OU process and linear Gaussian process using the same Brownian motion.
So, using our construction of the Brownian motion
\begin{align*}
-\frac{L_k^\diamond}{\sqrt\gamma}
&=-\sqrt\gamma\sum_{j=1}^kQ^{k-j}Z_j^\diamond\\
&=\sum_{j=1}^kQ^{k-j}S^{1/2}
 \{\widetilde B(j\gamma)-\widetilde B((j-1)\gamma)\}\\
&=\sum_{j=1}^k\int_{(j-1)\gamma}^{j\gamma}
 Q^{k-j}S^{1/2}\,d\widetilde B(u).
\end{align*}
Therefore, subtracting
$V(k\gamma)=\int_0^{k\gamma}
e^{-A(k\gamma-u)}S^{1/2}\,d\widetilde B(u)$ we get
\begin{align*}
-\frac{L_k^\diamond}{\sqrt\gamma}-V(k\gamma)
&=\sum_{j=1}^k\int_{(j-1)\gamma}^{j\gamma}
 \{Q^{k-j}-e^{-A(k\gamma-u)}\}
 S^{1/2}\,d\widetilde B(u).
\end{align*}
On the $j$th integration interval, let $r=k-j$ and $s=j\gamma-u$.
Then $k\gamma-u=r\gamma+s$ and $0\le s\le\gamma$.
Therefore,
\begin{align*}
&\mathbb E\|-L_k^\diamond/\sqrt\gamma-V(k\gamma)\|_2^2\\
&=\sum_{j=1}^k\int_{(j-1)\gamma}^{j\gamma}
 \|\{Q^{k-j}-e^{-A(k\gamma-u)}\}S^{1/2}\|_F^2\,du,
 \text{by It\^o's isometry,}\\
&\le\operatorname{tr}(S)
 \sum_{r=0}^{k-1}\int_0^\gamma
 \|Q^r-e^{-A(r\gamma+s)}\|_2^2\,ds,
 \text{since $\|CS^{1/2}\|_F^2\le\|C\|_2^2\operatorname{tr}(S)$,}\\
&\le\operatorname{tr}(S)\gamma^3
 \sum_{r=0}^{k-1}
 \{\lambda^*+r\gamma(\lambda^*)^2\}^2
 e^{-2\lambda_*r\gamma}\\
&\le\operatorname{tr}(S)\gamma^3
 \sum_{r\ge0}
 \{\lambda^*+r\gamma(\lambda^*)^2\}^2
 e^{-2\lambda_*r\gamma}.
\end{align*}

Using $(x+y)^2\le2x^2+2y^2$ and
$\sum_{r\ge0}r^2x^r=x(1+x)/(1-x)^3$, we obtain
\begin{align*}
&\mathbb E\|-L_k^\diamond/\sqrt\gamma-V(k\gamma)\|_2^2\\
&\le2\operatorname{tr}(S)\gamma^3
\left\{
\frac{(\lambda^*)^2}{1-e^{-2\lambda_*\gamma}}
+\frac{\gamma^2(\lambda^*)^4
 e^{-2\lambda_*\gamma}(1+e^{-2\lambda_*\gamma})}
 {(1-e^{-2\lambda_*\gamma})^3}
\right\}\\
&\le2\operatorname{tr}(S)\gamma^3
\left\{
\frac{(\lambda^*)^2}{\lambda_*\gamma}
+\frac{2\gamma^2(\lambda^*)^4}{(\lambda_*\gamma)^3}
\right\}\\
&=2\operatorname{tr}(S)
\left\{
\frac{(\lambda^*)^2}{\lambda_*}
+\frac{2(\lambda^*)^4}{\lambda_*^3}
\right\}\gamma^2
=D^2\gamma^2.
\end{align*}

Where in the penultimate step we use that $\lambda_*\gamma\le1/2$ implies
\begin{align*}
&1-e^{-2\lambda_*\gamma}
=\int_0^{2\lambda_*\gamma}e^{-u}\,du\\
&\ge 2\lambda_*\gamma-2(\lambda_*\gamma)^2
\ge\lambda_*\gamma,
 \text{since $e^{-u}\ge1-u$.}
\end{align*} Also, for
$j=0,\ldots,J-1$ and $0\le s\le\gamma$,
\begin{align*}
\|X_{j\gamma+s}-X_{j\gamma}\|_2
&\le\gamma\lambda^*\|X_{j\gamma}\|_2\\
&\quad +(1+\gamma\lambda^*)\|S^{1/2}\|_2
\sup_{0\le r\le\gamma}
\|\widetilde B((b+j)\gamma+r)-\widetilde B((b+j)\gamma)\|_2,
\quad\text{by integration by parts.}
\end{align*}
Now the
$-L_k^\diamond/\sqrt\gamma-V(k\gamma)$ are centered
Gaussians, and their coordinate variances are at most
$D^2\gamma^2$ by our preceding mean square error bound. Thus, using the well known theory for Gaussian tail bounds and a union bound we obtain
\begin{align*}
&\mathbb P\left\{
\max_{0\le j\le J}
\left\|-\frac{L_{b+j}^\diamond}{\sqrt\gamma}
      -V((b+j)\gamma)\right\|_2
>\gamma D\sqrt{2d\log(2d(J+1)/\delta)}
\right\}\\
&\le2d(J+1)
\exp\{-\log(2d(J+1)/\delta)\}
=\delta,
 \text{by a union bound.}
\end{align*}
Similarly, stationarity gives $X_{j\gamma}\sim N(0,\Gamma)$,
so
\begin{align*}
&\mathbb P\left\{
\max_{0\le j\le J}\|X_{j\gamma}\|_2
>\sqrt{2d\|\Gamma\|_2\log(2d(J+1)/\delta)}
\right\}\\
&\le2d(J+1)
\exp\{-\log(2d(J+1)/\delta)\}
=\delta.
\end{align*}

To bound the Brownian increments
\begin{align*}
&\mathbb P\left\{
\max_{0\le j<J}\sup_{0\le s\le\gamma}
\|\widetilde B((b+j)\gamma+s)
  -\widetilde B((b+j)\gamma)\|_2>x
\right\}\\
&\le
\sum_{j=0}^{J-1}\sum_{\ell=1}^d
\mathbb P\left\{
\sup_{0\le s\le\gamma}
|\widetilde B_\ell((b+j)\gamma+s)
 -\widetilde B_\ell((b+j)\gamma)|>x/\sqrt d
\right\}\\
&\le4dJ\exp\left(-\frac{x^2}{2d\gamma}\right),
 \text{by reflection and the Gaussian tail bound.}
\end{align*}
Taking $x=\sqrt{2d\gamma\log(4d(J+1)/\delta)}$ gives
\begin{align*}
&\mathbb P\left\{
\max_{0\le j<J}\sup_{0\le s\le\gamma}
\|\widetilde B((b+j)\gamma+s)
  -\widetilde B((b+j)\gamma)\|_2
>\sqrt{2d\gamma\log(4d(J+1)/\delta)}
\right\}\\
&\le\frac{J}{J+1}\delta\le\delta.
\end{align*}

So conditioning on these four events not happening and letting
$j=\lfloor t/\gamma\rfloor$ and recalling
$X_{j\gamma}=e^{-A(b+j)\gamma}\xi+V((b+j)\gamma)$,
\begin{align*}
\left\|-\frac{L_{k(t)}^\diamond}{\sqrt\gamma}-X_t\right\|_2
&\le
\left\|-\frac{L_{b+j}^\diamond}{\sqrt\gamma}
      -V((b+j)\gamma)\right\|_2
+\|e^{-A(b+j)\gamma}\xi\|_2
+\|X_{j\gamma}-X_t\|_2\\
&\le
\gamma D\sqrt{2d\log(2d(J+1)/\delta)}
+e^{-\lambda_*b\gamma}
 \sqrt{2d\|\Gamma\|_2\log(2d/\delta)}\\
&\quad+\gamma\lambda^*
 \sqrt{2d\|\Gamma\|_2\log(2d(J+1)/\delta)}\\
&\quad+(1+\gamma\lambda^*)\|S^{1/2}\|_2
 \sqrt{2d\gamma\log(4d(J+1)/\delta)},
 \text{by the preceding OU increment bound,}\\
&:=g_{\gamma,b,T}(\delta).
\end{align*}
Therefore,
\begin{align*}
&\mathbb P\left\{
\sup_{t\le T}
\|-L_{k(t)}^\diamond/\sqrt\gamma-X_t\|_2
>g_{\gamma,b,T}(\delta)
\right\}\\
&\le\delta+\delta+\delta+\delta
=4\delta,
 \text{by a union bound.}
\end{align*}
Triangle inequality, Markov's inequality for $I_1,I_2$
and a union bound then imply \eqref{eq:trajectory-finite}.

So for our stated burn in, the same geometric estimates as in
Theorem~\ref{thm:trajectory_fclt} give errors
\begin{align*}
N&=O(\gamma^{-1}\log(1/\gamma)),\\
E_1&=O(\sqrt\gamma\log(1/\gamma)),   
E_2=O(\gamma\log(1/\gamma)),\\
\frac{q_A^b\rho_2}{\sqrt\gamma}
&\le\rho_2\gamma^{\lambda_*\nu-1/2}=o(\sqrt\gamma),\\
g_{\gamma,b,T}(\delta)
&=O_\delta\!\left(\sqrt{\gamma\log(1/\gamma)}\right).
\end{align*}
Substituting this in \eqref{eq:trajectory-finite} proves \eqref{eq:trajectory-rate}.
\end{proof}

\section{Consequences for monitoring and accumulated loss}\label{app:applications}

\begin{proposition}[Transfer of boundary crossing probabilities]\label{prop:crossing}
Suppose $U,V$ are stochastic processes on $[0,T]$(or $[T]$) satisfying
$\mathbb P\{\|U-V\|_\infty>r\}\le\delta$.
Let $f:\mathbb R^d\to\mathbb R$ be $L_f$-Lipschitz and $h$ a deterministic boundary function. For
$M_f(U)=\sup_{t\le T}\{f(U_t)-h(t)\}$,
\begin{equation}\label{eq:crossing}
\mathbb P\{M_f(V)>L_fr\}-\delta
\le\mathbb P\{M_f(U)>0\}
\le\mathbb P\{M_f(V)>-L_fr\}+\delta.
\end{equation}
so
\[
\left|\mathbb P\{M_f(U)>0\}-\mathbb P\{M_f(V)>0\}\right|
\le\delta+\mathbb P\{|M_f(V)|\le L_fr\}.
\]

\end{proposition}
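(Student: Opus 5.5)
The plan is to work on the good event $E=\{\|U-V\|_\infty\le r\}$, whose complement has probability at most $\delta$. On $E$ the two suprema $M_f(U)$ and $M_f(V)$ are uniformly close, and both inequalities in \eqref{eq:crossing} follow from event inclusions plus a union bound. The norm in $\|U-V\|_\infty$ is the Euclidean norm of $\mathbb R^d$, taken uniformly over $t\le T$ (or over the discrete index set). Measurability of the suprema is taken for granted, as is implicit in the statement, for example for càdlàg paths or a countable index set.

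\textbf{Deterministic comparison.} On $E$, for every $t$ the Lipschitz property gives
\[
|f(U_t)-f(V_t)|\le L_f\|U_t-V_t\|_2\le L_fr.
\]
Subtracting the same $h(t)$ from both values and taking suprema over $t$ then yields
\[
M_f(V)-L_fr\le M_f(U)\le M_f(V)+L_fr .
\]
This uses the elementary fact that $\sup_t a_t\le\sup_t b_t+c$ whenever $a_t\le b_t+c$ for all $t$. The comparison remains valid when the suprema equal $+\infty$.

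\textbf{The two inequalities.} For the upper bound, on $E$ we have
\[
\{M_f(U)>0\}\subseteq\{M_f(V)>-L_fr\}.
\]
Hence
\[
\mathbb P\{M_f(U)>0\}\le\mathbb P\{M_f(V)>-L_fr\}+\mathbb P(E^c),
\]
and $\mathbb P(E^c)\le\delta$. For the lower bound, on $E$ we have
\[
\{M_f(V)>L_fr\}\subseteq\{M_f(U)>0\}.
\]
Hence
\[
\mathbb P\{M_f(V)>L_fr\}-\delta\le\mathbb P\{M_f(U)>0\}.
\]
Together these give \eqref{eq:crossing}.

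\textbf{The absolute-difference bound.} Subtract $\mathbb P\{M_f(V)>0\}$ from both sides of \eqref{eq:crossing}. The upper side gives
\[
\mathbb P\{M_f(V)>-L_fr\}-\mathbb P\{M_f(V)>0\}=\mathbb P\{-L_fr<M_f(V)\le0\}.
\]
The lower side gives
\[
\mathbb P\{M_f(V)>0\}-\mathbb P\{M_f(V)>L_fr\}=\mathbb P\{0<M_f(V)\le L_fr\}.
\]
Each of these probabilities is at most $\mathbb P\{|M_f(V)|\le L_fr\}$. Adding $\delta$ gives the final display. The case $M_f(V)=\pm\infty$ causes no problem, since such outcomes lie in neither interval.

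No step is a real obstacle. The only care needed is the sup comparison and tracking which of $\pm L_fr$ appears on each side, so the argument follows this order directly.
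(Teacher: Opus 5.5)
Your proof is correct and follows the paper's argument: on $\{\|U-V\|_\infty\le r\}$ you get $|M_f(U)-M_f(V)|\le L_fr$, and \eqref{eq:crossing} then follows from the inclusions $\{M_f(V)>L_fr\}\subseteq\{M_f(U)>0\}\subseteq\{M_f(V)>-L_fr\}$ together with the $\delta$ bound on the complement of that event. You also write out the supremum comparison and the subtraction that yields the absolute-difference display, both of which the paper leaves implicit.
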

\begin{proof}
On the event, $|M_f(U)-M_f(V)|\le L_fr$, it is clear
$\{M_f(V)>L_fr\}\subseteq\{M_f(U)>0\}\subseteq\{M_f(V)>-L_fr\}$. Then we can take expectations of the corresponding indicators to prove the theorem.
\end{proof}

\begin{corollary}\label{cor:monitor}
Suppose $\mathbb P\{\|U-V\|_\infty>r\}\le\delta$. If a deterministic boundary $h$ satisfies
$\mathbb P\{\exists t\le T:f(V_t)>h(t)\}\le\alpha$, then the rule
\[
\tau=\inf\{t\le T:f(U_t)>h(t)+L_fr\}
\]
satisfies $\mathbb P\{\tau\le T\}\le\alpha+\delta$.
For example taking, $f(x)=\|x\|_2$ and $U=Y_\gamma$,then with probability at least $1-\alpha-\delta$,
\[
\|\theta_{b+\lfloor t/\gamma\rfloor}-\theta^*\|_2
\le\sqrt\gamma\{h(t)+r\} , (0\le t\le T).
\]
\end{corollary}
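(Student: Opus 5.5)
The first claim follows from Proposition~\ref{prop:crossing}, applied with $U$ the observed process and $V$ the Gaussian reference. On the event $E=\{\|U-V\|_\infty\le r\}$, which has $\mathbb P(E^c)\le\delta$, the Lipschitz property gives $|f(U_t)-f(V_t)|\le L_f\|U_t-V_t\|_2\le L_f r$ for every $t\le T$. So on $E$, if $f(U_t)>h(t)+L_fr$ for some $t\le T$, then $f(V_t)>h(t)$ at that same $t$. This gives the inclusion
\[
\{\tau\le T\}\cap E\subseteq\{\exists t\le T: f(V_t)>h(t)\}.
\]
Splitting $\{\tau\le T\}$ over $E$ and $E^c$ and using the assumed level-$\alpha$ bound for $V$ then yields $\mathbb P\{\tau\le T\}\le\alpha+\delta$.

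One technicality is whether the infimum defining $\tau$ is attained. In continuous time $U$ is only c\`adl\`ag, so $\tau\le T$ need not mean the inequality holds at $\tau$ itself. To avoid this, I will work with the event $\{\exists t\le T:f(U_t)>h(t)+L_fr\}$, which contains $\{\tau<T\}$ and equals $\{\tau\le T\}$ under the usual convention that $\inf\emptyset=\infty$. If the inequality is non-strict at an accumulation point, I will absorb it by running the same argument with $r$ replaced by $r+\epsilon$ and letting $\epsilon\downarrow0$, using continuity of probability from above. In the discrete-time case $t\in[T]$ the infimum is a minimum and this issue does not arise.

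For the example I take $f(x)=\|x\|_2$, so $L_f=1$, with $U=Y_\gamma$ and $V=X$ the stationary OU process coupled to it by Theorem~\ref{thm:trajectory-kmt-new}. Here $r$ is the threshold appearing inside the probability in \eqref{eq:trajectory-finite}, and $\delta$ is the right-hand side of that bound. On the complement of $\{\tau\le T\}$, which has probability at least $1-\alpha-\delta$, we have $\|Y_\gamma(t)\|_2\le h(t)+r$ for all $t\le T$. Multiplying by $\sqrt\gamma$ and recalling $Y_\gamma(t)=(\theta_{b+\lfloor t/\gamma\rfloor}-\theta^*)/\sqrt\gamma$ gives the stated uniform confidence band.

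I expect no real obstacle here. The only points needing care are the measurability and attainment issue for $\tau$ in continuous time, handled above, and making sure the coupling event is the same for all $t$. The second point is exactly what the sup-norm KMT bound provides.
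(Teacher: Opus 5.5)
Your argument is correct and is essentially the paper's proof, which applies the set inclusions of Proposition~\ref{prop:crossing}: on $\{\|U-V\|_\infty\le r\}$ a strict exceedance $f(U_t)>h(t)+L_fr$ forces $f(V_t)>h(t)$, and a union bound finishes; your specialization to $f=\|\cdot\|_2$, $L_f=1$, $U=Y_\gamma$, $V=X$ also matches. The attainment discussion can be dropped, because with $\inf\emptyset=\infty$ the event $\{\tau\le T\}$ is exactly $\{\exists t\le T: f(U_t)>h(t)+L_fr\}$; in particular the $r+\epsilon$ limiting step is unnecessary, and if you did use it the events increase as $\epsilon\downarrow0$, so it would need continuity from below rather than from above.
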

\begin{proof}
Apply the set inequalities in Proposition~\ref{prop:crossing}.
\end{proof}

\begin{corollary}[CUSUM functional]\label{cor:cusum}
For a unit vector $a$, define
\[
\mathcal C(U)=\sup_{0\le s\le t\le T}a^\top(U_t-U_s).
\]
In the setting of Proposition~\ref{prop:crossing},
\[
\mathbb P\{\mathcal C(V)>h+2r\}-\delta
\le\mathbb P\{\mathcal C(U)>h\}
\le\mathbb P\{\mathcal C(V)>h-2r\}+\delta.
\]
\end{corollary}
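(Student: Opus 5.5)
The plan is to reduce the statement to Proposition~\ref{prop:crossing} applied to the increment process. Put $\Delta U_{s,t}=U_t-U_s$ for $s\le t$, and work on the event $E=\{\|U-V\|_\infty\le r\}$, whose complement has probability at most $\delta$. On $E$, for every pair $s\le t$ the triangle inequality and $\|a\|_2=1$ (Cauchy--Schwarz) give
\[
\bigl|a^\top(U_t-U_s)-a^\top(V_t-V_s)\bigr|\le\|U_t-V_t\|_2+\|U_s-V_s\|_2\le 2r .
\]
This bound is uniform over the index set $\{(s,t):0\le s\le t\le T\}$, so taking suprema gives $|\mathcal C(U)-\mathcal C(V)|\le 2r$ on $E$. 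The suprema may be $+\infty$; the comparison still holds in the extended sense, or one can restrict to paths with $\mathcal C$ finite, which is automatic for c\`adl\`ag paths on a compact interval.

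Next, deduce the set inclusions on $E$:
\[
\{\mathcal C(V)>h+2r\}\cap E\subseteq\{\mathcal C(U)>h\}, \qquad \{\mathcal C(U)>h\}\cap E\subseteq\{\mathcal C(V)>h-2r\}.
\]
Taking probabilities and using $\mathbb P(E^c)\le\delta$ then yields
\[
\mathbb P\{\mathcal C(V)>h+2r\}-\delta\le\mathbb P\{\mathcal C(U)>h\}\le\mathbb P\{\mathcal C(V)>h-2r\}+\delta,
\]
which is exactly the argument in Proposition~\ref{prop:crossing} with $L_f r$ replaced by $2r$.

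The same argument works verbatim in the discrete setting $[T]$. There is no real obstacle; the only care needed is measurability of the supremum. For c\`adl\`ag processes this is resolved by restricting the supremum to rational $s,t$ together with the endpoint $T$. If preferred, one could instead see the result as a direct instance of Proposition~\ref{prop:crossing}, applied to the two-parameter process $(s,t)\mapsto(U_s,U_t)$ with the $1$-Lipschitz-per-coordinate functional $a^\top(y-x)$, whose Lipschitz constant accounts for the factor $2$.
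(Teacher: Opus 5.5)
Your proposal is correct and follows the same route as the paper: you observe $|\mathcal C(U)-\mathcal C(V)|\le 2\|U-V\|_\infty$ and then reuse the set-inclusion argument of Proposition~\ref{prop:crossing}. The paper states only that one-line bound, and your version fills in the details.
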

\begin{proof}
$|\mathcal C(U)-\mathcal C(V)|\le2\|U-V\|_\infty$.
\end{proof}

\begin{theorem}[Distributional approximation of regret]\label{thm:loss}
Let $T=J\gamma$ and
\[
\mathcal L_\gamma(T)=\sum_{k=b}^{b+J-1}\{G(\theta_k)-G(\theta^*)\},
   \mathcal Q(T)=\frac12\int_0^T X_t^\top A X_t\,dt.
\]
Suppose $\mathbb P\{\|Y_\gamma-X\|_\infty>r\}\le\delta$ and
$\mathbb P\{\|X\|_\infty>M\}\le\beta$. If $\sqrt\gamma(M+r)\le\kappa$, then, with probability at least $1-\delta-\beta$,
\begin{equation}\label{eq:loss-transfer}
|\mathcal L_\gamma(T)-\mathcal Q(T)|
\le T\left\{\frac{\lambda^*}{2}r(2M+r)
+\frac{L}{6}\sqrt\gamma(M+r)^3\right\}=:e_\gamma.
\end{equation}
Consequently, for every $z$,
\[
\mathbb P\{\mathcal Q(T)>z+e_\gamma\}-\delta-\beta
\le\mathbb P\{\mathcal L_\gamma(T)>z\}
\le\mathbb P\{\mathcal Q(T)>z-e_\gamma\}+\delta+\beta.
\]
With the Burn in of Theorem~\ref{thm:trajectory-kmt-new}, for fixed $T$ and $J=\lfloor T/\gamma\rfloor$,
\[
\mathcal L_\gamma(J\gamma)\Rightarrow\frac12\int_0^T X_t^\top A X_t\,dt.
\]
\end{theorem}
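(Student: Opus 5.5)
The plan is to work pathwise on the good event
\[
\mathcal G=\{\|Y_\gamma-X\|_\infty\le r\}\cap\{\|X\|_\infty\le M\},
\]
whose probability is at least $1-\delta-\beta$ by a union bound, and to compare each summand $G(\theta_k)-G(\theta^*)$ with the Riemann piece $\frac12\int_{(k-b)\gamma}^{(k-b+1)\gamma}X_t^\top AX_t\,dt$.

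\textbf{Step 1: pathwise bound on $\mathcal G$.} For $k=b+j$ and $t\in[j\gamma,(j+1)\gamma)$ we have $Y_\gamma(t)=\Delta_k/\sqrt\gamma$. On $\mathcal G$ this gives $\|\Delta_k\|_2\le\sqrt\gamma(M+r)\le\kappa$, so Assumption~\ref{ass:smooth} applies along the whole segment from $\theta^*$ to $\theta_k$. Taylor expansion of $G$ at $\theta^*$ (using $\nabla G(\theta^*)=0$) with the Lipschitz Hessian remainder yields
\[
\Bigl|G(\theta_k)-G(\theta^*)-\tfrac12\Delta_k^\top A\Delta_k\Bigr|\le\tfrac L6\|\Delta_k\|_2^3\le\tfrac L6\gamma^{3/2}(M+r)^3 .
\]
Next write $\frac12\Delta_k^\top A\Delta_k=\frac{\gamma}{2}Y^\top AY$ with $Y=Y_\gamma(t)$. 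Using
\[
\bigl|Y^\top AY-X_t^\top AX_t\bigr|\le\lambda^*\|Y-X_t\|_2\,(\|Y\|_2+\|X_t\|_2)\le\lambda^* r(2M+r),
\]
and integrating over $[j\gamma,(j+1)\gamma)$, each summand differs from its Riemann piece by at most $\gamma\{\frac{\lambda^*}{2}r(2M+r)+\frac L6\sqrt\gamma(M+r)^3\}$. Summing over the $J=T/\gamma$ summands gives $e_\gamma$, which is \eqref{eq:loss-transfer}. The sandwich of tail probabilities then follows exactly as in Proposition~\ref{prop:crossing}: on $\mathcal G$ we have the inclusions $\{\mathcal Q>z+e_\gamma\}\subseteq\{\mathcal L_\gamma>z\}\subseteq\{\mathcal Q>z-e_\gamma\}$, and taking probabilities costs $\delta+\beta$.

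\textbf{Step 2: weak convergence.} This is the step needing care, because $r,M,\delta,\beta$ must be chosen as functions of $\gamma$.
\begin{itemize}
\item By \eqref{eq:trajectory-rate}, there are $r_\gamma\to0$ with $\mathbb P\{\|Y_\gamma-X\|_\infty>r_\gamma\}\to0$.
\item Since $X$ has continuous paths on $[0,T]$, $\|X\|_\infty$ is tight, so for any fixed $\beta$ there is a fixed $M$ with $\mathbb P\{\|X\|_\infty>M\}\le\beta$.
\item Then $\sqrt\gamma(M+r_\gamma)\le\kappa$ holds eventually, and $e_\gamma\to0$.
\end{itemize}
Moreover the law of $\mathcal Q(T)$ does not depend on $\gamma$, because $X$ is stationary OU. Hence, at continuity points $z$ of its distribution function,
\[
\limsup_{\gamma\to0}\bigl|\mathbb P\{\mathcal L_\gamma>z\}-\mathbb P\{\mathcal Q>z\}\bigr|\le\beta .
\]
Sending $\beta\to0$ gives the claim. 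The horizon used is $J\gamma=\lfloor T/\gamma\rfloor\gamma$ rather than $T$, but the sliver $\frac12\int_{J\gamma}^{T}X^\top AX$ is at most $\gamma\lambda^*M^2/2\to0$ on $\mathcal G$, so it can be absorbed into the error.
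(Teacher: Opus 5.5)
Your proof is correct and follows essentially the same route as the paper: the same good event $\{\|Y_\gamma-X\|_\infty\le r\}\cap\{\|X\|_\infty\le M\}$, the cubic Taylor remainder $\frac L6\|x\|_2^3$ (valid once $\sqrt\gamma(M+r)\le\kappa$), the bound $|Y^\top AY-X^\top AX|\le\lambda^*r(2M+r)$ integrated over the piecewise-constant path, and, for the weak limit, the trajectory KMT rate together with tightness of $\|X\|_\infty$ and the $\gamma\lambda^*M^2/2$ sliver bound. The only differences are cosmetic: you argue summand by summand rather than through the integral representation of $\mathcal L_\gamma(T)$, and the paper fixes $r$, shows $\mathcal L_\gamma(J\gamma)-\mathcal Q(T)\to0$ in probability and invokes Slutsky, whereas you let $r_\gamma\to0$ and compare distribution functions at continuity points of the $\gamma$-independent law of $\mathcal Q(T)$.
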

\begin{proof}
For $\|x\|_2\le\kappa$,
\begin{align*}
&\left|G(\theta^*+x)-G(\theta^*)-\frac12x^\top Ax\right|\\
&=\left|\int_0^1(1-s)x^\top
\{\nabla^2G(\theta^*+sx)-A\}x\,ds\right|\\
&\le L\|x\|_2^3\int_0^1s(1-s)\,ds
=\frac L6\|x\|_2^3.
\end{align*}
On $\{\|Y_\gamma-X\|_\infty\le r,\ \|X\|_\infty\le M\}$,
\begin{align*}
&\|Y_\gamma\|_\infty\le M+r,\\
&\implies |Y_\gamma(t)^\top AY_\gamma(t)-X_t^\top AX_t|\\
&=\left|(Y_\gamma(t)-X_t)^\top A(Y_\gamma(t)+X_t)\right|,
\quad\text{since $A$ is symmetric,}\\
&\le\lambda^*r(2M+r).
\end{align*}
Since $T=J\gamma$,
\begin{align*}
&\mathcal L_\gamma(T)
=\frac1\gamma\int_0^T
\{G(\theta^*+\sqrt\gamma Y_\gamma(t))-G(\theta^*)\}\,dt,
\quad\text{by the definition of $Y_\gamma$,}\\
&\implies |\mathcal L_\gamma(T)-\mathcal Q(T)|\\
&\le\frac L6\sqrt\gamma\int_0^T\|Y_\gamma(t)\|_2^3\,dt
+\frac12\int_0^T
|Y_\gamma(t)^\top AY_\gamma(t)-X_t^\top AX_t|\,dt\\
&\le T\left\{\frac{\lambda^*}{2}r(2M+r)
+\frac L6\sqrt\gamma(M+r)^3\right\}
=e_\gamma.
\end{align*}
Therefore,
\begin{align*}
&\mathbb P\{|\mathcal L_\gamma(T)-\mathcal Q(T)|>e_\gamma\}\\
&\le\mathbb P\{\|Y_\gamma-X\|_\infty>r\}
+\mathbb P\{\|X\|_\infty>M\}
\le\delta+\beta,
\quad\text{by a union bound,}\\
&\mathbb P\{\mathcal Q(T)>z+e_\gamma\}-\delta-\beta\\
&\le\mathbb P\{\mathcal L_\gamma(T)>z\}\\
&\le\mathbb P\{\mathcal Q(T)>z-e_\gamma\}+\delta+\beta,
\quad\text{by the preceding error bound.}
\end{align*}

For fixed $T$ and $J=\lfloor T/\gamma\rfloor$, on
$\{\|X\|_\infty\le M\}$,
\begin{align*}
&0\le\mathcal Q(T)-\mathcal Q(J\gamma)
=\frac12\int_{J\gamma}^T X_t^\top AX_t\,dt
\le\frac{\lambda^*\gamma M^2}{2}.
\end{align*}
Fix $\varepsilon>0$ and $M>0$, and choose $r>0$ so that
$T\lambda^*r(2M+r)/2<\varepsilon/2$.
For sufficiently small $\gamma$,
\begin{align*}
&\sqrt\gamma(M+r)\le\kappa,\qquad
\frac{TL}{6}\sqrt\gamma(M+r)^3
+\frac{\lambda^*\gamma M^2}{2}<\frac{\varepsilon}{2}\\
&\implies
\mathbb P\{|\mathcal L_\gamma(J\gamma)-\mathcal Q(T)|
>\varepsilon\}\\
&\le\mathbb P\{\|Y_\gamma-X\|_\infty>r\}
+\mathbb P\{\|X\|_\infty>M\}\\
&\implies
\limsup_{\gamma\to0}
\mathbb P\{|\mathcal L_\gamma(J\gamma)-\mathcal Q(T)|
>\varepsilon\}
\le\mathbb P\{\|X\|_\infty>M\},
\quad\text{by Theorem~\ref{thm:trajectory-kmt-new}.}
\end{align*}
The last probability tends to zero as $M\to\infty$,
since $X$ has continuous paths on $[0,T]$. Hence
\begin{align*}
&\mathcal L_\gamma(J\gamma)-\mathcal Q(T)
\xrightarrow{\mathbb P}0\\
&\implies\mathcal L_\gamma(J\gamma)\Rightarrow\mathcal Q(T),
\quad\text{by Slutsky's theorem.}
\end{align*}
\end{proof}

% BEGIN NEW APPENDIX SECTIONS
% Paste after the existing appendices. The covariance-estimation assumption
% ass:cov-new is stated once in the accompanying main body.
% Uses the existing theorem environments, natbib, and standard AMS packages.

\section{Covariance estimation and inference}
\label{app:feasible-new}

\subsection{Estimating the Gaussian comparison law}
We will denote in the section $H(\theta,X)=\nabla g(\theta,X)$ and
$J(\theta,X)=\nabla^2g(\theta,X)$.
Let $\Sigma=A^{-1}SA^{-1}$. For a deterministic Burn in $b_n\ge0$, define
\begin{align}
\widehat A_m&=\frac1m\sum_{k=1}^m J(\theta_{b_n+k-1},X_{b_n+k}),   
H_k=H(\theta_{b_n+k-1},X_{b_n+k}),\notag\\
\widehat S_m&=\frac1m\sum_{k=1}^mH_kH_k^\top,   
\widehat\Sigma_m=\widehat A_m^{-1}\widehat S_m\widehat A_m^{-1}.
\label{eq:estimators-new}
\end{align}
On the exceptional event that $\widehat A_m$ is not positive definite, use $I_d$ only for inversion and solving the Lyapunov equation. This event has probability tending to zero uniformly in the respective settings of each theorem. 

\begin{theorem}[Uniform covariance consistency]\label{thm:cov-new}
Assume the paper's second moment noise assumptions and Assumption~\ref{ass:cov-new}, and further assume $\gamma\to0$, $n\gamma\to\infty$, and the initialization is fixed. For each fixed $\tau\in(0,1)$ and any deterministic $b_n\ge0$,
\[
\max_{\lceil\tau n\rceil\le m\le n}
\big\{\|\widehat A_m-A\|_2+\|\widehat S_m-S\|_2
+\|\widehat\Sigma_m-\Sigma\|_2\big\}
\longrightarrow_{\mathbb P}0.
\]
Let $\widehat\Gamma_m$ solve
$\widehat A_m\widehat\Gamma_m+\widehat\Gamma_m\widehat A_m=\widehat S_m$.
Then $\widehat\Gamma_m$ converges uniformly to $\Gamma$ on $\lceil\tau n\rceil\le m\le n$ as well.
If additionally $M_4<\infty$ and $\mathbb E\|J(\theta^*,X)\|_2^2<\infty$, the maximum defined  above, including the $\Gamma$ error, is
\begin{equation}\label{eq:cov-rate-new}
O_{\mathbb P}\left(n^{-1/2}+\sqrt\gamma+
\frac{\rho_2(1-\alpha_0\gamma)^{b_n/2}
+\rho_2^2(1-\alpha_0\gamma)^{b_n}}{n\gamma}\right).
\end{equation}
The fourth moments are needed only for this stated rate, not for consistency.
\end{theorem}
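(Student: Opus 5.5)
The plan is to prove uniform consistency of $\widehat A_m$ and $\widehat S_m$ directly, and then obtain $\widehat\Sigma_m$ and $\widehat\Gamma_m$ from continuous-mapping arguments. For $\widehat A_m$, split the sum as
\[
\widehat A_m-A=\frac1m\sum_{k=1}^m\{J(\theta_{b_n+k-1},X_{b_n+k})-J(\theta^*,X_{b_n+k})\}+\frac1m\sum_{k=1}^m\{J(\theta^*,X_{b_n+k})-A\}.
\]
The second piece is an i.i.d. average. Uniformly over $m\ge\lceil\tau n\rceil$ it tends to zero in probability, because $\max_{m\ge\tau n}\|m^{-1}\sum_{k\le m}\xi_k\|\le\sup_{m\ge\tau n}\|m^{-1}\sum_{k\le m}\xi_k\|\to0$ a.s. by the strong law (only $\mathbb E\|J(\theta^*,X)\|_2<\infty$ is needed). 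For the first piece, $\theta_{b_n+k-1}$ is independent of $X_{b_n+k}$, so conditioning and Assumption~\ref{ass:cov-new} give
\[
\mathbb E\frac1m\sum_{k\le m}\|\cdot\|_2\le\frac{L_J}{\tau n}\sum_{k\le n}\mathbb E\|\Delta_{b_n+k-1}\|_2.
\]
By \eqref{eq:mom-main} this is at most $C\{\rho_2/(n\gamma)+\sqrt\gamma\}\to0$. Because the bound controls the sum of norms, it dominates the maximum over $m$, and Markov's inequality finishes this part.

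The estimator $\widehat S_m$ is handled the same way. Write $H_k=Z_{b_n+k}+E_k$ with $E_k=\nabla g(\theta_{b_n+k-1},X_{b_n+k})-Z_{b_n+k}$. Then
\[
H_kH_k^\top-Z Z^\top=E_kZ^\top+ZE_k^\top+E_kE_k^\top .
\]
The average of $Z_{b_n+k}Z_{b_n+k}^\top$ converges uniformly to $S$ by the SLLN, which requires only $M_2<\infty$. Assumption~\ref{ass:moments} gives $\mathbb E[\|E_k\|_2^2\mid\theta_{b_n+k-1}]\le L_2^2\|\Delta_{b_n+k-1}\|_2^2$. Hence
\[
\frac1m\sum\|E_k\|^2=O_{\mathbb P}\bigl(\rho_2^2/(n\gamma)+\gamma\bigr).
\]
By Cauchy--Schwarz, the cross terms are $O_{\mathbb P}(\sqrt{\cdot})$, using $m^{-1}\sum\|Z\|^2=O_{\mathbb P}(1)$. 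Next, $\widehat A_m\to A\succ0$ uniformly, so $\widehat A_m$ is positive definite on an event of probability tending to one. Inversion is locally Lipschitz near $A$, which gives $\widehat\Sigma_m\to\Sigma$ uniformly. The Lyapunov solution map $(A,S)\mapsto\Gamma=\int_0^\infty e^{-At}Se^{-At}dt$ is locally Lipschitz on positive definite $A$, via the perturbation identity $A(\widehat\Gamma-\Gamma)+(\widehat\Gamma-\Gamma)A=(\widehat S-S)-(\widehat A-A)\widehat\Gamma-\widehat\Gamma(\widehat A-A)$. This yields uniform convergence of $\widehat\Gamma_m$.

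For the rate, all the expectation bounds above are already quantitative, and the initialization part arises as follows. The sum $\sum_k\|\Delta_{b_n+k-1}\|_2$ contributes $\rho_2(1-\alpha_0\gamma)^{b_n/2}/(\alpha_0\gamma)$ plus $n\sqrt\gamma$. The sum of squares contributes $\rho_2^2(1-\alpha_0\gamma)^{b_n}/(\alpha_0\gamma)$ plus $n\gamma$. Dividing by $m\asymp n$ gives the stated terms. The i.i.d. averages need a maximal rate of $n^{-1/2}$ uniformly on $m\ge\tau n$, and Doob's $L^2$ inequality applied to the martingale $\sum_{k\le m}(\xi_k-\mathbb E\xi)$ gives it. That inequality needs second moments of $ZZ^\top$, i.e. $M_4$, and of $J(\theta^*,X)$, which are exactly the added assumptions.

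The main obstacle is the cross term in $\widehat S_m$ at the rate level. Cauchy--Schwarz only gives $\sqrt{\gamma}$ there, which is acceptable since it matches the stated $\sqrt\gamma$. The initialization part then appears as a square root, $\rho_2(1-\alpha_0\gamma)^{b_n/2}/\sqrt{n\gamma}$, which is larger than the stated $/(n\gamma)$ form. To recover the stated form, I would center: $E_k-\mathbb E[E_k\mid\mathcal F]$ is a martingale difference, so $m^{-1}\sum E_kZ_k^\top$ splits into a martingale part with second moment $n^{-2}\sum\mathbb E\|E_k\|^2\|Z_k\|^2$, controlled using $M_4$, and a conditional-mean part. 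If that still falls short, I would accept the weaker square-root form as a correction to the displayed rate.
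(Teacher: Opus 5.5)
Your consistency argument is correct and follows the paper's proof. You split $\widehat A_m$ and $\widehat S_m$ into a replacement error and an i.i.d.\ average. The replacement error is bounded in expectation through the independence of $\theta_{b_n+k-1}$ and $X_{b_n+k}$, the Lipschitz conditions and \eqref{eq:mom-main}. The i.i.d.\ average is handled by the strong law. You then transfer to $\widehat\Sigma_m$ and $\widehat\Gamma_m$ by perturbation bounds for inversion and for the Lyapunov integral. One small point: since $b_n$ may change with $n$, the almost-sure statement only applies to the unshifted sequence. You get convergence in probability for the block $V_{b_n+1},\dots,V_{b_n+n}$ because it has the same law as $V_1,\dots,V_n$, which is how the paper argues.

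The gap is in the rate, at the cross term of $\widehat S_m$. You apply Cauchy--Schwarz over the index $k$, which gives $\bigl(m^{-1}\sum_k\|E_k\|_2^2\bigr)^{1/2}$ and hence an initialization term of order $\rho_2(1-\alpha_0\gamma)^{b_n/2}/\sqrt{n\gamma}$. You leave open whether a martingale centering repairs this, and otherwise propose weakening the display. Neither is needed, because the displayed rate is correct.

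The fix is to apply Cauchy--Schwarz inside the expectation, one term at a time, as you already did for $\widehat A_m$. With $a_k=(\mathbb E\|\Delta_{b_n+k-1}\|_2^2)^{1/2}$,
\[
\mathbb E\|H_kH_k^\top-Z_{b_n+k}Z_{b_n+k}^\top\|_2\le(\mathbb E\|E_k\|_2^2)^{1/2}\bigl\{(\mathbb E\|H_k\|_2^2)^{1/2}+M_2\bigr\}\le L_2a_k(2M_2+L_2a_k).
\]
Hence the replacement error is at most $(\tau n)^{-1}\sum_{k\le n}\{L_Ja_k+L_2a_k(2M_2+L_2a_k)\}$ in expectation, uniformly in $m\ge\lceil\tau n\rceil$. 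The bound $\sum_ka_k\le2\rho_2(1-\alpha_0\gamma)^{b_n/2}/(\alpha_0\gamma)+\sqrt{3/\alpha_0}\,M_2n\sqrt\gamma$ then gives exactly the $\rho_2(1-\alpha_0\gamma)^{b_n/2}/(n\gamma)+\sqrt\gamma$ terms. This is the paper's route.

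Your sample-level inequality loses because $n^{-1}\sum_ka_k\le(n^{-1}\sum_ka_k^2)^{1/2}$, and the initialization mass sits in the first $O(1/\gamma)$ indices. Your martingale alternative also needs more assumptions. As sketched, it requires a bound on $\mathbb E\|E_k\|_2^2\|Z_{b_n+k}\|_2^2$. With only $L_2$ and $M_4$ available, that needs a joint fourth-order condition such as an order-$4$ Lipschitz constant, which the theorem does not assume.
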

\begin{proof}
Let $q=1-\alpha_0\gamma$ and
$a_k=(\mathbb E\|\Delta_{b_n+k-1}\|_2^2)^{1/2}$.
The existing moment bound gives
\begin{align*}
&a_k
\le\rho_2q^{(b_n+k-1)/2}
+\frac{\sqrt3M_2}{\sqrt{\alpha_0}}\sqrt\gamma,\\
&a_k^2
\le\rho_2^2q^{b_n+k-1}
+\frac{3M_2^2}{\alpha_0}\gamma,\\
&\sum_{k=1}^n a_k
\le\frac{2\rho_2q^{b_n/2}}{\alpha_0\gamma}
+\frac{\sqrt3nM_2}{\sqrt{\alpha_0}}\sqrt\gamma,
\quad\text{since $1-\sqrt q\ge\alpha_0\gamma/2$,}\\
&\sum_{k=1}^n a_k^2
\le\frac{\rho_2^2q^{b_n}}{\alpha_0\gamma}
+\frac{3nM_2^2}{\alpha_0}\gamma.
\end{align*}

Let $Z_k^*=H(\theta^*,X_{b_n+k})$ and
$J_k^*=J(\theta^*,X_{b_n+k})$.
So by the stochastic Lipschitz assumption we have
\begin{align*}
&\mathbb E\|H_k-Z_k^*\|_2^2\le L_2^2a_k^2,\\
&\mathbb E\|J(\theta_{b_n+k-1},X_{b_n+k})-J_k^*\|_2
\le L_J\mathbb E\|\Delta_{b_n+k-1}\|_2
\le L_Ja_k,\\
&\mathbb E\|H_kH_k^\top-Z_k^*Z_k^{*\top}\|_2\\
&=\mathbb E\|
(H_k-Z_k^*)H_k^\top+Z_k^*(H_k-Z_k^*)^\top
\|_2\\
&\le
(\mathbb E\|H_k-Z_k^*\|_2^2)^{1/2}
\left\{(\mathbb E\|H_k\|_2^2)^{1/2}+M_2\right\},
\quad\text{by Cauchy--Schwarz,}\\
&\le L_2a_k(2M_2+L_2a_k),
\quad\text{by Minkowski's inequality.}
\end{align*}
Consequently,
\begin{align*}
&\mathbb E\max_{\lceil\tau n\rceil\le m\le n}
\left\{
\left\|\widehat A_m-\frac1m\sum_{k=1}^mJ_k^*\right\|_2
+\left\|\widehat S_m-\frac1m\sum_{k=1}^m
Z_k^*Z_k^{*\top}\right\|_2
\right\}\\
&\le\frac1{\tau n}\sum_{k=1}^n
\left\{L_Ja_k+L_2a_k(2M_2+L_2a_k)\right\}\\
&\le
\frac{L_J+2L_2M_2}{\tau}
\left\{\frac{2\rho_2q^{b_n/2}}{\alpha_0n\gamma}
+\frac{\sqrt3M_2}{\sqrt{\alpha_0}}\sqrt\gamma\right\}\\
&\quad+\frac{L_2^2}{\tau}
\left\{\frac{\rho_2^2q^{b_n}}{\alpha_0n\gamma}
+\frac{3M_2^2}{\alpha_0}\gamma\right\}
\longrightarrow0.
\end{align*}
Then by Markov's inequality, both
errors converge uniformly to zero in probability.

For either $V_k=J(\theta^*,X_k)$ or
$V_k=H(\theta^*,X_k)H(\theta^*,X_k)^\top$,
the entries are integrable. The i.i.d. strong law of large numbers,
applied entrywise, gives
\begin{align*}
&\left\|\frac1m\sum_{k=1}^m(V_k-\mathbb EV_k)\right\|_2
\longrightarrow0
\quad\text{almost surely},\\
&\max_{\lceil\tau n\rceil\le m\le n}
\left\|\frac1m\sum_{k=1}^m(V_k-\mathbb EV_k)\right\|_2\\
&\le\sup_{m\ge\lceil\tau n\rceil}
\left\|\frac1m\sum_{k=1}^m(V_k-\mathbb EV_k)\right\|_2
\longrightarrow0
\quad\text{almost surely}.
\end{align*}
Since $b_n$ is deterministic,
\begin{align*}
&(V_{b_n+1},\ldots,V_{b_n+n})
\overset d=(V_1,\ldots,V_n)\\
&\implies
\max_{\lceil\tau n\rceil\le m\le n}
\left\|\frac1m\sum_{k=1}^m
(V_{b_n+k}-\mathbb EV_1)\right\|_2
\xrightarrow{\mathbb P}0.
\end{align*}
Combining these bounds proves uniform consistency
of $\widehat A_m$ and $\widehat S_m$.

Under the additional moment assumptions,
\begin{align*}
&\mathbb E\max_{\lceil\tau n\rceil\le m\le n}
\left\|\frac1m\sum_{k=1}^m(V_k-\mathbb EV_k)\right\|_2^2\\
&\le\frac1{\tau^2n^2}
\mathbb E\max_{m\le n}
\left\|\sum_{k=1}^m(V_k-\mathbb EV_k)\right\|_F^2\\
&\le\frac4{\tau^2n}
\mathbb E\|V_1-\mathbb EV_1\|_F^2,
\quad\text{by Doob and independence,}\\
&\mathbb E\|J(\theta^*,X)-A\|_F^2
\le d\,\mathbb E\|J(\theta^*,X)\|_2^2,\\
&\mathbb E\|H(\theta^*,X)H(\theta^*,X)^\top-S\|_F^2
\le M_4^4.
\end{align*}
These bounds and Markov's inequality applied to the errors give
\begin{align*}
&\max_{\lceil\tau n\rceil\le m\le n}
\left\{\|\widehat A_m-A\|_2+\|\widehat S_m-S\|_2\right\}\\
&=O_{\mathbb P}\left(
n^{-1/2}+\sqrt\gamma+
\frac{\rho_2q^{b_n/2}+\rho_2^2q^{b_n}}{n\gamma}
\right),
\quad q=1-\alpha_0\gamma.
\end{align*}.

For the remaining estimators, let
\[
e_A=\max_{\lceil\tau n\rceil\le m\le n}
\|\widehat A_m-A\|_2,\qquad
e_S=\max_{\lceil\tau n\rceil\le m\le n}
\|\widehat S_m-S\|_2.
\]
On $\{e_A\le\lambda_*/2\}$, uniformly over these $m$,
\begin{align*}
&\lambda_{\min}(\widehat A_m)
\ge\lambda_*-e_A\ge\lambda_*/2,\\
&\|\widehat A_m^{-1}\|_2\le2/\lambda_*,
\qquad \|A^{-1}\|_2\le1/\lambda_*,\\
&\|\widehat A_m^{-1}-A^{-1}\|_2
=\|\widehat A_m^{-1}(A-\widehat A_m)A^{-1}\|_2
\le\frac{2e_A}{\lambda_*^2},\\
&\|\widehat\Sigma_m-\Sigma\|_2\\
&\le
\|\widehat A_m^{-1}(\widehat S_m-S)\widehat A_m^{-1}\|_2
+\|(\widehat A_m^{-1}-A^{-1})S\widehat A_m^{-1}\|_2\\
&\quad+\|A^{-1}S(\widehat A_m^{-1}-A^{-1})\|_2\\
&\le\frac{4e_S}{\lambda_*^2}
+\frac{6\|S\|_2e_A}{\lambda_*^3}.
\end{align*}

On the event $e_A\le\lambda_*/2$, we have
$\lambda_{\min}(\widehat A_m)\ge\lambda_*/2$. Hence,
\begin{align*}
&\widehat\Gamma_m-\Gamma\\
&=\int_0^\infty e^{-\widehat A_mt}
\{\widehat S_m-S-(\widehat A_m-A)\Gamma
-\Gamma(\widehat A_m-A)\}
e^{-\widehat A_mt}\,dt,\\
&\|\widehat\Gamma_m-\Gamma\|_2
\le\int_0^\infty
\|e^{-\widehat A_mt}\|_2^2
\{e_S+2e_A\|\Gamma\|_2\}\,dt\\
&\le\int_0^\infty e^{-\lambda_*t}
\{e_S+2e_A\|\Gamma\|_2\}\,dt,
\quad\text{since $\lambda_{\min}(\widehat A_m)\ge\lambda_*/2$,}\\
&=\frac{e_S+2e_A\|\Gamma\|_2}{\lambda_*}.
\end{align*}
Likewise,
\begin{align*}
&\Gamma=\int_0^\infty e^{-At}Se^{-At}\,dt,\\
&\|\Gamma\|_2
\le\|S\|_2\int_0^\infty e^{-2\lambda_*t}\,dt
=\frac{\|S\|_2}{2\lambda_*},\\
&\implies
\|\widehat\Gamma_m-\Gamma\|_2
\le\frac{e_S}{\lambda_*}
+\frac{\|S\|_2e_A}{\lambda_*^2}.
\end{align*}
Since $\mathbb P\{e_A>\lambda_*/2\}\to0$, these
inequalities prove both uniform consistency and
the stated rate for $\widehat\Sigma_m,\widehat\Gamma_m$.
\end{proof}

 Suppose known bounds satisfy
\[
\mathbb E\|J(\theta^*,X)-A\|_F^2\le K_A^2,
\quad
\mathbb E\|ZZ^\top-S\|_F^2\le K_S^2,
\]
where $\|\cdot\|_F$ is the Frobenius norm($K_S=M_4^2$ is sufficient for these bounds).
Then using a known upper bound for $\rho_2$, define
\begin{align*}
&u_k=\rho_2q^{(b_n+k-1)/2}
+\sqrt{3/\alpha_0}\,M_2\sqrt\gamma,\\
&d_{A,n}=\frac{L_J}{\tau n}\sum_{k=1}^n u_k,
\quad
d_{S,n}=\frac1{\tau n}\sum_{k=1}^n
L_2u_k(2M_2+L_2u_k),\\
&r_{A,n}(\delta)=\frac{4K_A}{\tau\sqrt{n\delta}}
+\frac{4d_{A,n}}{\delta},
\quad
r_{S,n}(\delta)=\frac{4K_S}{\tau\sqrt{n\delta}}
+\frac{4d_{S,n}}{\delta}.
\end{align*}
Then, for $\delta\in(0,1)$,
\begin{equation}\label{eq:cov-confidence-new}
\mathbb P\left\{
\begin{aligned}
&\max_{\lceil\tau n\rceil\le m\le n}
\|\widehat A_m-A\|_2\le r_{A,n}(\delta),\\
&\max_{\lceil\tau n\rceil\le m\le n}
\|\widehat S_m-S\|_2\le r_{S,n}(\delta)
\end{aligned}
\right\}\ge1-\delta.
\end{equation}
This is because the same preceding Doob and replacement bounds(from the consistency proof) give
\begin{align*}
&\mathbb P\{\text{either bound fails}\}\\
&\le
\sum_{F\in\{A,S\}}
\left\{
\frac{4K_F^2/(\tau^2n)}
     {[4K_F/(\tau\sqrt{n\delta})]^2}
+\frac{d_{F,n}}{4d_{F,n}/\delta}
\right\}
=\delta,
\quad\text{by Markov's inequality and a union bound.}
\end{align*}
At $m=n$, intersect our above confidence set with $A\succ0$ and $S\succeq0$ to obtain a confidence set
for $(A,S)$.

\subsection{Simultaneous intervals}
Let $\widetilde\theta_k$  be the kept sequence from some centered FCLT already proved in the paper. Then define
\begin{equation}\label{eq:centered-FCLT-new}
U_n(t)=\frac1{\sqrt n}\sum_{k=1}^{\lfloor nt\rfloor}(\widetilde\theta_k-\theta^*)
\Rightarrow C B_d(t),    CC^\top=\Sigma\succ0,
\end{equation}
in $D([0,1],\mathbb R^d)$, and let $\bar\theta_m=m^{-1}\sum_{k=1}^m\widetilde\theta_k$.

\begin{corollary}[Feasible simultaneous intervals]\label{cor:bands-new}
Fix a nonzero deterministic $v\in\mathbb R^d$, $\tau\in(0,1)$ and $\delta\in(0,1)$. Let $q_{\tau,1-\delta}$ be the $(1-\delta)$-quantile of
$\sup_{\tau\le t\le1}|B_1(t)|/\sqrt t$. Under \eqref{eq:centered-FCLT-new} and Theorem~\ref{thm:cov-new}, the intervals
\[
v^\top\bar\theta_m\ \pm\
q_{\tau,1-\delta}\sqrt{\frac{v^\top\widehat\Sigma_m v}{m}},
 ,  \lceil\tau n\rceil\le m\le n,
\]
cover $v^\top\theta^*$ simultaneously with probability tending to $1-\delta$.
\end{corollary}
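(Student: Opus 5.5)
The strategy is to turn the simultaneous coverage event into a statement about one continuous functional of the rescaled partial-sum process $U_n$. That functional is then handled by the continuous mapping theorem, combined with Slutsky's theorem and the uniform consistency of $\widehat\Sigma_m$ from Theorem~\ref{thm:cov-new}.

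First, rewrite the event. For $\lceil\tau n\rceil\le m\le n$ put $t=m/n$. Then
\[
\sqrt m\,v^\top(\bar\theta_m-\theta^*)=\frac{\sqrt n}{\sqrt m}\,v^\top U_n(m/n)=\frac{v^\top U_n(t)}{\sqrt t}.
\]
So $v^\top\theta^*$ is covered by every interval exactly when
\[
\max_{\lceil\tau n\rceil\le m\le n}\frac{|v^\top U_n(m/n)|}{\sqrt{(m/n)\,v^\top\widehat\Sigma_m v}}\le q_{\tau,1-\delta}.
\]

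Second, remove the estimated variance. Since $\Sigma\succ0$, we have $\sigma_v^2:=v^\top\Sigma v>0$. Theorem~\ref{thm:cov-new} gives
\[
\max_{m}|v^\top\widehat\Sigma_m v-\sigma_v^2|\le\|v\|_2^2\max_m\|\widehat\Sigma_m-\Sigma\|_2\to_{\mathbb P}0.
\]
Hence the ratio $\sigma_v/(v^\top\widehat\Sigma_m v)^{1/2}$ converges to $1$ uniformly in $m$, in probability. On the exceptional event where $\widehat A_m$ is not positive definite, the $I_d$ convention applies; that event has vanishing probability. The statistic above therefore equals $(1+o_{\mathbb P}(1))\,T_n$, where
\[
T_n=\max_m \frac{|v^\top U_n(m/n)|}{\sigma_v\sqrt{m/n}}.
\]

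Third, obtain the limit law. Define $\phi(x)=\sup_{\tau\le t\le1}|v^\top x(t)|/(\sigma_v\sqrt t)$ on $D([0,1],\mathbb R^d)$. This map is continuous at every continuous path, because $t\ge\tau>0$ bounds the weight and $J_1$ convergence to a continuous limit is uniform. The limit $CB_d$ in \eqref{eq:centered-FCLT-new} has continuous paths, so the continuous mapping theorem gives $\phi(U_n)\Rightarrow\phi(CB_d)$. Moreover $v^\top CB_d(t)/\sigma_v$ is a standard one-dimensional Brownian motion, since its variance is $t\,v^\top CC^\top v/\sigma_v^2=t$. So $\phi(CB_d)\overset d=\sup_{\tau\le t\le1}|B_1(t)|/\sqrt t$.

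Fourth, compare $T_n$ with $\phi(U_n)$. The grid maximum runs over $t\in\{m/n\}\cap[\lceil\tau n\rceil/n,1]$, while $\phi$ takes the supremum over $[\tau,1]$. Because $U_n$ is a step process constant on $[m/n,(m+1)/n)$, the two differ only through the factor $\sqrt t$ versus $\sqrt{m/n}$ on each cell and the boundary cell near $\tau$. Both effects are $O(1/n)$ relative errors multiplying the tight quantity $\sup_t|U_n(t)|$, so $T_n-\phi(U_n)\to_{\mathbb P}0$. By Slutsky's theorem, the coverage statistic converges in law to $\sup_{\tau\le t\le1}|B_1(t)|/\sqrt t$.

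Finally, convert convergence in law into convergence of the coverage probability. This needs the limit distribution to be continuous at $q_{\tau,1-\delta}$, which is the one nonroutine point. I would argue it as follows: the supremum of the absolute value of a nondegenerate continuous Gaussian process (here $B_1(t)/\sqrt t$ on the compact interval $[\tau,1]$) has an atomless law, by Tsirelson's theorem on the absolute continuity of the supremum away from its left endpoint, and that endpoint is $0$ here. The portmanteau theorem then gives coverage probability tending to $1-\delta$.
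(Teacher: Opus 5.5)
Your proposal is correct and follows the same route as the paper. You rewrite the coverage event as a weighted supremum of $v^\top U_n$, remove $\widehat\Sigma_m$ using the uniform consistency of Theorem~\ref{thm:cov-new}, apply the centered FCLT with the continuous mapping theorem, and then use continuity of the law of $\sup_{\tau\le t\le1}|B_1(t)|/\sqrt t$; there you cite Tsirelson's theorem where the paper time-changes $t=e^s$ to a stationary OU process, and your grid-versus-supremum comparison is more explicit than the paper's bare $o_{\mathbb P}(1)$. One small correction in that comparison: on the boundary cell $[\tau,\lceil\tau n\rceil/n)$ the process $U_n$ equals $U_n(\lfloor\tau n\rfloor/n)$, which is off your grid. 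The discrepancy is therefore the single jump $n^{-1/2}v^\top(\widetilde\theta_{\lceil\tau n\rceil}-\theta^*)$, not an $O(1/n)$ relative error. It still vanishes in probability, because $J_1$ convergence to the continuous limit $CB_d$ forces the largest jump of $U_n$ to zero.
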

\begin{proof}
\begin{align*}
&\max_{\lceil\tau n\rceil\le m\le n}
\frac{\sqrt m\,|v^\top(\bar\theta_m-\theta^*)|}
{\sqrt{v^\top\widehat\Sigma_m v}}\\
&=\sup_{\tau\le t\le1}
\frac{|v^\top U_n(t)|}{\sqrt{t\,v^\top\Sigma v}}
+o_{\mathbb P}(1),
\quad\text{by uniform consistency and $\tau>0$,}\\
&\Rightarrow\sup_{\tau\le t\le1}\frac{|B_1(t)|}{\sqrt t},
\quad\text{by the FCLT and continuous mapping theorem.}
\end{align*}

The limiting supremum has a continuous distribution, since $B_1(t)/\sqrt t$ becomes a stationary OU process under the time change $t=e^s$ and the absolute value maximum of which over our domain is known to have a continuous distribution.

Hence,
\begin{align*}
&\mathbb P\left\{
\max_{\lceil\tau n\rceil\le m\le n}
\frac{\sqrt m\,|v^\top(\bar\theta_m-\theta^*)|}
{\sqrt{v^\top\widehat\Sigma_m v}}
\le q_{\tau,1-\delta}\right\}
\longrightarrow1-\delta.
\end{align*}
\end{proof}

\section{Estimating crossing and loss probabilities}
\label{app:risk-estimation-new}

\subsection{Bias correction and crossing probabilities}
Let $\beta=A^{-1}b$, $J_0=\int_0^\infty x(s)\,ds$,
$W(t)=-A^{-1}S^{1/2}B_d(t)$ and $f_c(t)=ct\beta+J_0/c$.
Fix $\epsilon\in(0,1)$, a direction $v$ and a continuous boundary $h$.
The biased FCLT, with $\sqrt n\gamma\to c\in(0,\infty)$, gives
\begin{align*}
&\max_{\lceil\epsilon n\rceil\le m\le n}
\left\{v^\top n^{-1/2}\sum_{k=1}^m\Delta_k-h(m/n)\right\}\\
&\Rightarrow\sup_{\epsilon\le t\le1}
\{v^\top[W(t)+f_c(t)]-h(t)\},
\quad\text{in Skorohod space by the continuous mapping theorem,}\\
&\implies
\mathbb P\left\{\max_{\lceil\epsilon n\rceil\le m\le n}
\left[v^\top n^{-1/2}\sum_{k=1}^m\Delta_k-h(m/n)\right]>0\right\}\\
&\longrightarrow
\mathbb P\left\{\sup_{\epsilon\le t\le1}
[v^\top\{W(t)+f_c(t)\}-h(t)]>0\right\},
\quad\text{if the limiting maximum has no point mass at zero.}
\end{align*}
To apply the continuous mapping theorem here we note that uniform path convergence
implies convergence of maxima, and $h$ is uniformly continuous.
Under the centered Burn in or Richardson-Romberg FCLT, replace $f_c$ by zero.
For the trajectory theorem we similarly have
\begin{align*}
&\left|\max_{0\le k\le N}v^\top\Delta_{b+k}/\sqrt\gamma
-\sup_{0\le t\le N\gamma}v^\top X_t\right|
\le\|v\|_2r,
\quad\text{on $\{\|Y_\gamma-X\|_\infty\le r\}$,}\\
&\implies
\mathbb P\left\{\sup_{t\le N\gamma}v^\top X_t>
\frac h{\sqrt\gamma}+\|v\|_2r\right\}-\delta\\
&\le\mathbb P\left\{\max_{0\le k\le N}v^\top\Delta_{b+k}>h\right\}\\
&\le\mathbb P\left\{\sup_{t\le N\gamma}v^\top X_t>
\frac h{\sqrt\gamma}-\|v\|_2r\right\}+\delta.
\end{align*}

\subsection{Gaussian risk probabilities}
For $\psi=(A,S)$, let $X_\psi$ be stationary OU with
$A\Gamma_\psi+\Gamma_\psi A=S$. Consider either
\begin{align*}
&F_\psi(X)=\sup_{0\le t\le T}\{v^\top X(t)-h(t)\},\\
&F_\psi(X)=\frac12\int_0^T X(t)^\top AX(t)\,dt,
\end{align*}
where $v,h,T$ are fixed, $h$ is continuous and $0<T<\infty$.
\begin{proposition}[Consistent estimated path risk]
\label{prop:risk-plugin-new}
Suppose $\widehat A$ is symmetric, $\widehat S\succeq0$,
$\widehat A\to_{\mathbb P}A\succ0$,
$\widehat S\to_{\mathbb P}S\succeq0$ and
$\mathbb P\{F_\psi(X_\psi)=z\}=0$. Then
\begin{align*}
&\widehat p(z)=\mathbb P_{\widehat\psi}
\{F_{\widehat\psi}(X_{\widehat\psi})>z\}
\longrightarrow_{\mathbb P}
 p(z)=\mathbb P_\psi\{F_\psi(X_\psi)>z\}.
\end{align*}
Define $\widehat p$ arbitrarily outside
$\{\widehat A\succ0,\widehat S\succeq0\}$(the probability of $\{\widehat A\succ0,\widehat S\succeq0\}$ tends to one).
If the corresponding SGD statistic converges to $F_\psi(X_\psi)$,
then $\widehat p(z)$ also consistently estimates  the equivalent SGD event probability.
\end{proposition}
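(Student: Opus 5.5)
The plan is to reduce the statement to a continuity property of the map $\psi\mapsto\mathrm{Law}(F_\psi(X_\psi))$ at the true $\psi=(A,S)$. After that, the continuous mapping theorem for convergence in probability does the rest.

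Since the OU law depends on $\psi$ only through deterministic quantities, I would realize all $X_\psi$ on one probability space. Take a fixed $d$-dimensional Brownian motion $B$ and an independent $\xi\sim N(0,I_d)$, and set
\[
X_\psi(t)=e^{-At}\Gamma_\psi^{1/2}\xi+\int_0^t e^{-A(t-r)}S^{1/2}\,dB(r).
\]
Define $\pi(\psi')=\mathbb P\{F_{\psi'}(X_{\psi'})>z\}$, computed under this fixed construction. Then $\widehat p(z)=\pi(\widehat\psi)$ on the event $\{\widehat A\succ0,\widehat S\succeq0\}$, whose probability tends to one because $\lambda_{\min}$ is continuous and $\widehat A\to A\succ0$; the condition $\widehat S\succeq0$ is assumed outright. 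It therefore suffices to show that $\pi$ is continuous at $\psi$ along sequences $\psi_k\to\psi$ with $A_k\succ0$. The deterministic continuity then transfers to $\widehat\psi\to_{\mathbb P}\psi$ by the usual subsequence argument: every subsequence has a further subsequence along which $\widehat\psi\to\psi$ almost surely, hence $\pi(\widehat\psi)\to\pi(\psi)$ almost surely.

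For the continuity, take $\psi_k\to\psi$ and first show $\sup_{t\le T}\|X_{\psi_k}(t)-X_\psi(t)\|_2\to0$ in probability, in three steps.
\begin{itemize}
\item The Lyapunov solution $\Gamma_{\psi_k}=\int_0^\infty e^{-A_kt}S_ke^{-A_kt}dt$ converges to $\Gamma_\psi$, using the same integral perturbation bound as in Theorem~\ref{thm:cov-new}, since $\lambda_{\min}(A_k)\ge\lambda_*/2$ eventually. The matrix square root is continuous on PSD matrices, so $\Gamma_{\psi_k}^{1/2}\to\Gamma_\psi^{1/2}$ and $S_k^{1/2}\to S^{1/2}$. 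Hence the initial term converges uniformly on $[0,T]$.
\item For the stochastic integral, integrate by parts to write it as $S^{1/2}B(t)-A\int_0^te^{-A(t-r)}S^{1/2}B(r)\,dr$. This is a continuous functional of $(A,S^{1/2})$ and the path $B$, uniformly in $t\le T$. The convergence is therefore pathwise, with a Lipschitz bound of the form $C_T\|B\|_\infty(\|A_k-A\|+\|S_k^{1/2}-S^{1/2}\|)$, in the same Gronwall spirit as the map $\Phi$ in Theorem~\ref{thm:trajectory_fclt}.
\item Then the functional itself converges. For the crossing functional, $|F_{\psi_k}(X_{\psi_k})-F_\psi(X_\psi)|\le\|v\|_2\sup_t\|X_{\psi_k}-X_\psi\|$. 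For the loss functional, which depends on $A$ explicitly, the difference is bounded by $T\{\|A_k-A\|\,\|X_{\psi_k}\|_\infty^2+\lambda^*\|X_{\psi_k}-X_\psi\|_\infty(\|X_{\psi_k}\|_\infty+\|X_\psi\|_\infty)\}$, which tends to $0$ by the previous steps and tightness of $\|X_\psi\|_\infty$.
\end{itemize}

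So $F_{\psi_k}(X_{\psi_k})\to F_\psi(X_\psi)$ in probability, hence in distribution. Since $z$ is a continuity point by the no-atom hypothesis, the portmanteau theorem gives $\pi(\psi_k)\to\pi(\psi)$.

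For the last claim, if the SGD statistic $F^{\rm SGD}_\gamma$ converges in distribution to $F_\psi(X_\psi)$, then again by portmanteau at the continuity point $z$, $\mathbb P\{F^{\rm SGD}_\gamma>z\}\to p(z)$. Combined with $\widehat p(z)-p(z)\to_{\mathbb P}0$, this gives $\widehat p(z)-\mathbb P\{F^{\rm SGD}_\gamma>z\}\to_{\mathbb P}0$. The required convergence of the SGD statistic comes from Theorem~\ref{thm:trajectory_fclt} with continuous mapping for the crossing functional, and from Theorem~\ref{thm:loss} for the regret.

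The main obstacle is the possibly singular $S$, and therefore possibly singular $\Gamma_\psi$. There the square root is continuous but not Lipschitz, and $F_\psi$ may have atoms. The no-atom assumption at $z$ covers the latter, and for the former I would rely only on plain continuity of the PSD square root, which suffices for the argument above.
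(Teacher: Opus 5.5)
Your proposal is correct and follows essentially the same route as the paper: couple all $X_\psi$ through a fixed $(\xi,B)$, integrate by parts to get sup-norm convergence on $[0,T]$, use Lyapunov and square-root continuity for the initial term, and use Lipschitz-type bounds for the two functionals. The no-atom hypothesis then carries this to $p_\psi(z)$; you use portmanteau where the paper uses dominated convergence on the indicators, which is the same step. Your passage from deterministic $\psi_k\to\psi$ to $\widehat\psi\to_{\mathbb P}\psi$ (the subsequence argument) and the final triangle inequality with the SGD statistic also match the paper's argument.
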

\begin{proof}
For deterministic $\psi_j\to\psi$, use the same independent
$\xi\sim N(0,I_d)$ and Brownian motion $B$ for all processes. Then
\begin{align*}
&X_\psi(t)=e^{-At}\Gamma_\psi^{1/2}\xi+S^{1/2}B(t)
-\int_0^t Ae^{-A(t-s)}S^{1/2}B(s)\,ds,
\quad\text{by integration by parts,}\\
&\implies \|X_{\psi_j}-X_\psi\|_\infty\\
&\le\sup_{t\le T}
\|e^{-A_jt}\Gamma_{\psi_j}^{1/2}-e^{-At}\Gamma_\psi^{1/2}\|_2\|\xi\|_2
+\|S_j^{1/2}-S^{1/2}\|_2\|B\|_\infty\\
&\quad+T\sup_{t\le T}
\|A_je^{-A_jt}S_j^{1/2}-Ae^{-At}S^{1/2}\|_2\|B\|_\infty
\longrightarrow0\quad\text{almost surely.}
\end{align*}
Here the Lyapunov bound in Theorem~\ref{thm:cov-new} gives
$\Gamma_{\psi_j}\to\Gamma_\psi$ as matrix square roots and exponentials
are continuous. For the two functionals, respectively,
\begin{align*}
&|F_{\psi_j}(X_{\psi_j})-F_\psi(X_\psi)|
\le\|v\|_2\|X_{\psi_j}-X_\psi\|_\infty\longrightarrow0,\\
&|F_{\psi_j}(X_{\psi_j})-F_\psi(X_\psi)|\\
&\le\frac T2\|A_j-A\|_2\|X_{\psi_j}\|_\infty^2
+\frac T2\|A\|_2\|X_{\psi_j}-X_\psi\|_\infty
(\|X_{\psi_j}\|_\infty+\|X_\psi\|_\infty)
\longrightarrow0.
\end{align*}
Consequently,
\begin{align*}
&|p_{\psi_j}(z)-p_\psi(z)|\\
&\le\mathbb E\left|
\mathbf1\{F_{\psi_j}(X_{\psi_j})>z\}
-\mathbf1\{F_\psi(X_\psi)>z\}\right|
\longrightarrow0,
\quad\text{by DCT and that there are no point masses,}\\
&\implies\widehat p(z)\longrightarrow_{\mathbb P}p(z),
\quad\text{by continuity of $\psi\mapsto p_\psi(z)$.}
\end{align*}
For the SGD statistic $T_n\Rightarrow F_\psi(X_\psi)$,
\begin{align*}
&|\widehat p(z)-\mathbb P\{T_n>z\}|\\
&\le|\widehat p(z)-p(z)|
+|p(z)-\mathbb P\{T_n>z\}|
\longrightarrow_{\mathbb P}0.
\end{align*}
\end{proof}

\begin{lemma}\label{lem:OU-continuity}
Let $X$ be stationary OU with symmetric $A\succ0$ and
$\Gamma\succ0$. For $T>0$, $v\ne0$ and continuous $h$, both
\begin{align*}
&\sup_{0\le t\le T}\{v^\top X_t-h(t)\},
\quad\frac12\int_0^T X_t^\top AX_t\,dt
\end{align*}
have no point mass so we can apply the previous theorems.
\end{lemma}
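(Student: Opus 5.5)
The plan is to show that both functionals are functionals of a nondegenerate Gaussian process, and that such functionals have no atoms. The second one additionally reduces to the positive definiteness of a quadratic form.

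\emph{The supremum functional.} Set $\xi_t=v^\top X_t$. This is a centered continuous Gaussian process on $[0,T]$ with $\operatorname{Var}(\xi_t)=v^\top\Gamma v>0$ for every $t$, because $\Gamma\succ0$ and $v\neq0$. The functional is $F=\sup_{t\le T}\{\xi_t-h(t)\}$, the supremum of a continuous Gaussian process with a continuous deterministic shift and pointwise positive variance. I will invoke Tsirelson's theorem (equivalently, the Ylvisaker result). It states that for a separable Gaussian process with bounded sample paths and $\inf_t\operatorname{Var}(\xi_t)>0$, the law of $\sup_t(\xi_t-h(t))$ is absolutely continuous on its support, except possibly for an atom at the left endpoint of the support. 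Here $\inf_t\operatorname{Var}(\xi_t)=v^\top\Gamma v>0$.

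It remains to exclude an atom at the left endpoint $z_0$ of the support. Since $F\ge \xi_0-h(0)$ and $\xi_0-h(0)\sim N(-h(0),v^\top\Gamma v)$ has support $\mathbb R$, we get $\mathbb P\{F\le z\}\le\mathbb P\{\xi_0\le z+h(0)\}\to0$ as $z\to-\infty$. So $F$ is unbounded below and $z_0=-\infty$, which means no finite atom exists. This step is the main obstacle if one insists on a self-contained proof, because Tsirelson's theorem is deep.

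A simpler alternative for the supremum, which I would try first, is to condition. Decompose $X_t=e^{-At}X_0+V(t)$, where $V$ is independent of $X_0\sim N(0,\Gamma)$. For fixed $z$, the event $\{F=z\}$ can be written as $\{\sup_t(a(t)^\top X_0+c(t))=z\}$, with $a(t)=e^{-At}v$ and $c(t)=v^\top V(t)-h(t)$. Conditionally on $V$, the map $x\mapsto \sup_t(a(t)^\top x+c(t))$ is convex and Lipschitz, and it is strictly increasing along the direction $u=\Gamma^{-1}\ldots$. More concretely, shift $x$ by $s w$ with $w$ satisfying $a(t)^\top w\ge\epsilon>0$ for all $t\le T$. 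Such a $w$ exists: take $w=v$, since $v^\top e^{-At}v\ge e^{-\lambda^*T}\|v\|^2>0$. Then the sup strictly increases in $s$ at rate at least $\epsilon$, so each level set meets every line in direction $w$ in at most one point. A Lebesgue-null set results by Fubini, and since $X_0$ has a density ($\Gamma\succ0$), the conditional probability is zero. Integrating over $V$ finishes the argument with no deep theorem, so I will use this route.

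\emph{The quadratic functional.} Let $Q=\frac12\int_0^TX_t^\top AX_t\,dt$. I will use the same conditioning. Write $X_t=e^{-At}X_0+V(t)$ with $V$ independent of $X_0$. Conditionally on $V$, $Q$ equals $\frac12 x^\top Mx+\ell^\top x+c$ evaluated at $x=X_0$, where $M=\int_0^Te^{-At}Ae^{-At}dt\succ0$. This is a nondegenerate quadratic polynomial in $x$, so its level sets $\{x:\frac12x^\top Mx+\ell^\top x+c=z\}$ are ellipsoids or points, and these are Lebesgue-null. Since $X_0$ has a density, $\mathbb P\{Q=z\mid V\}=0$, and taking expectations gives no atom. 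Both functionals therefore satisfy the no-point-mass hypotheses of Proposition~\ref{prop:risk-plugin-new} and the crossing results.
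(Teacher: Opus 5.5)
Your conditioning argument is correct and is essentially the paper's proof. Both proofs condition on a Gaussian component that is independent of the rest. Both then use two facts: the supremum increases at a uniformly positive rate along a fixed direction, and the loss is a quadratic with positive-definite leading part. Hence every level set is null for the remaining absolutely continuous variable.

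The only difference is which variable you keep. The paper isolates a scalar $Z=e^\top\Gamma^{-1}X_0/(e^\top\Gamma^{-1}e)$ along an eigenvector $e$ of $A$ with $v^\top e>0$ and argues on the real line. You instead keep all of $X_0$, condition on $V$, use the direction $v$ via $v^\top e^{-At}v\ge e^{-\lambda^*T}\|v\|_2^2$, and finish with a Fubini null-level-set argument, which spares you constructing the independent coordinate. The Tsirelson detour is not needed.
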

\begin{proof}
Choose a unit eigenvector $e$ with $Ae=\lambda e$ and $v^\top e>0$.
Let $Z=e^\top\Gamma^{-1}X_0/(e^\top\Gamma^{-1}e)$ and $R=X_0-eZ$.
Then
\begin{align*}
&\operatorname{var}(Z)=\frac1{e^\top\Gamma^{-1}e}>0,
\quad\operatorname{cov}(R,Z)
=\frac e{e^\top\Gamma^{-1}e}-e\operatorname{var}(Z)=0,\\
&\implies Z\text{ is independent of }R\text{ and }B,
\text{since they are jointly Gaussian,}\\
&X_t=e^{-\lambda t}eZ+Y_t,
\quad Y_t=e^{-At}R+\int_0^t e^{-A(t-s)}S^{1/2}\,dB_s,
\quad Z\text{ independent of }Y.
\end{align*}
Conditional on $Y$, write the maximum as $M(Z)$. For $z_2>z_1$,
\begin{align*}
&M(z_2)-M(z_1)
\ge(v^\top e)e^{-\lambda T}(z_2-z_1)>0,\\
&\implies\mathbb P\{M(Z)=a\mid Y\}=0,
\quad\text{since each level has at most one preimage.}
\end{align*}
For the loss functional,
\begin{align*}
&\frac12\int_0^T X_t^\top AX_t\,dt\\
&=\frac{1-e^{-2\lambda T}}4Z^2
+Z\int_0^T e^{-\lambda t}e^\top AY_t\,dt
+\frac12\int_0^T Y_t^\top AY_t\,dt,\\
&\frac{1-e^{-2\lambda T}}4>0
\implies
\mathbb P\left\{\frac12\int_0^T X_t^\top AX_t\,dt=a\mid Y\right\}=0,
\quad\text{since each level has at most two preimages.}
\end{align*}
Taking expectations proves both assertions.
\end{proof}

\subsection{Finite sample risk intervals}
Suppose $\mathbb P_\psi\{|T_\psi-T_\psi^\diamond|>e_\psi\}
\le\delta_\psi$. Let $\mathcal C$
be an initial confidence set satisfying
$\mathbb P_{\psi_0}\{\psi_0\in\mathcal C\}\ge1-\alpha_{\rm est}$,
and let  $p_\psi^\diamond(z)=\mathbb P_\psi\{T_\psi^\diamond>z\}$.
\begin{proposition}[Confidence interval for a risk probability]
\label{prop:risk-CI-new}
With probability at least $1-\alpha_{\rm est}$,
\begin{align*}
&\max\left\{0,\inf_{\psi\in\mathcal C}
[p_\psi^\diamond(z+e_\psi)-\delta_\psi]\right\}\\
&\le\mathbb P_{\psi_0}\{T_{\psi_0}>z\}\\
&\le\min\left\{1,\sup_{\psi\in\mathcal C}
[p_\psi^\diamond(z-e_\psi)+\delta_\psi]\right\}.
\end{align*}
\end{proposition}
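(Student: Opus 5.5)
The plan is to work on the event that the initial confidence set succeeds and then apply the transfer argument of Proposition~\ref{prop:crossing} at the true parameter. Let $E=\{\psi_0\in\mathcal C\}$, so that $\mathbb P_{\psi_0}(E)\ge1-\alpha_{\rm est}$. The target quantity $\mathbb P_{\psi_0}\{T_{\psi_0}>z\}$ is deterministic. The random objects are the endpoints, because they depend on the data through $\mathcal C$. It therefore suffices to show that on $E$ both inequalities hold deterministically.

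The first step is a pointwise sandwich at $\psi_0$. By hypothesis, $\mathbb P_{\psi_0}\{|T_{\psi_0}-T^\diamond_{\psi_0}|>e_{\psi_0}\}\le\delta_{\psi_0}$. On the complement of this event we have the inclusions
\[
\{T^\diamond_{\psi_0}>z+e_{\psi_0}\}\subseteq\{T_{\psi_0}>z\}\subseteq\{T^\diamond_{\psi_0}>z-e_{\psi_0}\}.
\]
Taking probabilities and adding the exceptional event gives
\[
p^\diamond_{\psi_0}(z+e_{\psi_0})-\delta_{\psi_0}\le\mathbb P_{\psi_0}\{T_{\psi_0}>z\}\le p^\diamond_{\psi_0}(z-e_{\psi_0})+\delta_{\psi_0},
\]
exactly as in the proof of Proposition~\ref{prop:crossing}.

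The second step is to pass to the extrema over $\mathcal C$. On $E$, $\psi_0$ is one of the elements of $\mathcal C$. Hence $\inf_{\psi\in\mathcal C}[p^\diamond_\psi(z+e_\psi)-\delta_\psi]$ is at most the left side of the sandwich, and $\sup_{\psi\in\mathcal C}[p^\diamond_\psi(z-e_\psi)+\delta_\psi]$ is at least the right side. Finally, since the target is a probability, it lies in $[0,1]$, so taking $\max\{0,\cdot\}$ of the lower endpoint and $\min\{1,\cdot\}$ of the upper endpoint preserves both inequalities. The whole statement therefore holds on $E$, whose probability is at least $1-\alpha_{\rm est}$.

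The only delicate point is measurability of the random endpoints $\inf_{\psi\in\mathcal C}$ and $\sup_{\psi\in\mathcal C}$. The statement can be read as an inclusion of events: the set where the inequality fails is contained in $E^c$. Under that reading no measurability of the endpoints is needed beyond that of $E$. If needed, one could use outer probability or assume $\mathcal C$ is a random closed set with $\psi\mapsto p^\diamond_\psi$ continuous, as in Proposition~\ref{prop:risk-plugin-new}; I do not expect this to require more than a remark.
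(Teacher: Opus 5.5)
Your proposal is correct and follows the paper's proof essentially step for step. You establish the coupling sandwich $p^\diamond_{\psi_0}(z+e_{\psi_0})-\delta_{\psi_0}\le\mathbb P_{\psi_0}\{T_{\psi_0}>z\}\le p^\diamond_{\psi_0}(z-e_{\psi_0})+\delta_{\psi_0}$, use $\psi_0\in\mathcal C$ to pass to the infimum and supremum over $\mathcal C$, clip to $[0,1]$, and finish with $\mathbb P_{\psi_0}\{\psi_0\in\mathcal C\}\ge1-\alpha_{\rm est}$; your only addition is a harmless measurability remark that the paper does not make.
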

\begin{proof}
On $\{|T_\psi-T_\psi^\diamond|\le e_\psi\}$,
\begin{align*}
&T_\psi^\diamond>z+e_\psi\implies T_\psi>z
\implies T_\psi^\diamond>z-e_\psi,\\
&\implies p_\psi^\diamond(z+e_\psi)-\delta_\psi
\le\mathbb P_\psi\{T_\psi>z\}
\le p_\psi^\diamond(z-e_\psi)+\delta_\psi,\\
&\psi_0\in\mathcal C\implies
\inf_{\psi\in\mathcal C}[p_\psi^\diamond(z+e_\psi)-\delta_\psi]
\le\mathbb P_{\psi_0}\{T_{\psi_0}>z\}\\
&\le\sup_{\psi\in\mathcal C}[p_\psi^\diamond(z-e_\psi)+\delta_\psi].
\end{align*}
We know it has to be in $[0,1]$ so we add these bounds and use
$\mathbb P_{\psi_0}\{\psi_0\in\mathcal C\}\ge1-\alpha_{\rm est}$.
\end{proof}
Equation~\eqref{eq:cov-confidence-new} supplies a confidence set
for $(A,S)$ which we can plug into the theorem. 

\subsection{Bankruptcy bounds with cash, borrowing, and short positions}
\label{app:portfolio-wealth}
Use the set up of Section~\ref{portfolio example}.
Let $R_j$ be i.i.d. with mean $\mu_R$, covariance $S$ and
$M_p=\|R_1-\mu_R\|_p<\infty$ for $p>2$. Fix symmetric $A\succ0$
and $\theta_0$. We will condition on past data if $A$ has already been independently estimated.
Let $b\ge0$ be an integer burn in (you can think of this as some training length),  $n\ge1$ be the number of trading iterations,
and $W_0>0$ the initial capital. Positions $\theta_k$ represent holdings in dollars (not necessarily proportions of assets). For $0\le r_{\rm lend}\le r_{\rm borrow}$, let
\begin{align*}
&f(c)=r_{\rm lend}(c)_+-r_{\rm borrow}(-c)_+,
\quad(c)_+=\max(c,0),\\
&|f(c)-f(c')|\le r_{\rm borrow}|c-c'|.
\end{align*}

\paragraph{Cash accounting.}
We follow \citet{Boyd2017Trading}, and define our SGD iterates, cash balance and wealth as
\begin{equation}\label{eq:portfolio-new}
\begin{aligned}
&\theta_j=Q\theta_{j-1}+\gamma R_j,
\quad Q=I_d-\gamma A,\\
&C_m=W_m-\mathbf1^\top\theta_{b+m},\\
&W_m=W_{m-1}+\theta_{b+m-1}^\top R_{b+m}+f(C_{m-1}),
\quad1\le m\le n.
\end{aligned}
\end{equation}
Letting
$H_m=\theta_{b+m-1}\odot(\mathbf1+R_{b+m})$ (pointwise product), we have
\begin{align*}
&C_m=C_{m-1}+f(C_{m-1})
-\mathbf1^\top(\theta_{b+m}-H_m),\\
&W_m=C_m+\mathbf1^\top\theta_{b+m}\\
&=C_{m-1}+f(C_{m-1})+\mathbf1^\top H_m\\
&=W_{m-1}+\theta_{b+m-1}^\top R_{b+m}+f(C_{m-1}).
\end{align*}
Under our model purchases reduce cash, short positions add cash and a
negative holding. $\theta_j$ above corresponds to using constant rate SGD to do portfolio selection.

Here $\theta^*=A^{-1}\mu_R$, $Z_j=\mu_R-R_j$ and
$\Delta_j=Q\Delta_{j-1}-\gamma Z_j$. 
Use Lemma~\ref{lem:innovation-new} and define
\begin{align*}
&\Delta_j^\diamond=Q^j\Delta_0
-\gamma\sum_{i=1}^jQ^{j-i}Z_i^\diamond,
\quad\theta_j^\diamond=\theta^*+\Delta_j^\diamond.
\end{align*}
With the same Gaussian vectors for returns and allocations, set
$W_0^\diamond=W_0$ and
\begin{equation}\label{eq:wealth-reference}
\begin{aligned}
&C_m^\diamond=W_m^\diamond-\mathbf1^\top\theta_{b+m}^\diamond,\\
&W_m^\diamond=W_{m-1}^\diamond
+(\theta^*)^\top\mu_R-(\theta^*)^\top Z_{b+m}^\diamond\\
&\quad+\mu_R^\top\Delta_{b+m-1}^\diamond+f(C_{m-1}^\diamond),
\quad1\le m\le n.
\end{aligned}
\end{equation}

\subsubsection{Bankruptcy Results}
\begin{proposition}[Wealth crossing (Bankruptcy) transfer]\label{prop:wealth-new}
Under these assumptions, suppose $0<\gamma\lambda^*\le1/2$.
Let  $q=1-\gamma\lambda_*$, $N=b+n$ and define
\begin{align*}
&C_W=2\left(\|\theta^*\|_2+\frac{\|\mu_R\|_2}{\lambda_*}\right),\\
&C_{W,n}=C_W+r_{\rm borrow}\sqrt d\,n\gamma
\left(1+\frac{\lambda^*}{\lambda_*}\right),\\
&B_{p,b,n}=\frac{pM_p}{\sqrt{p-1}\sqrt{1-q^2}}
\left(q^b\|\Delta_0\|_2+\gamma\sqrt{n(p-1)}M_p\right).
\end{align*}
For $0<\delta<1$, let
\begin{equation}\label{eq:wealth-radius}
\begin{aligned}
&e_\delta=(1+r_{\rm borrow})^n\left[
C_{W,n}\left(\frac{2K_{p,d,S}NM_p^p}{\delta}\right)^{1/p}
+\left(\frac2\delta\right)^{1/p}B_{p,b,n}\right].
\end{aligned}
\end{equation}
If $r_{\rm borrow}=O(1/n)$, the amplification factor $(1+r_{\rm borrow})^n$ remains bounded, so borrowing does not itself prevent informative bounds at long horizons.
Then we can construct the simulatable wealth process given at the start of this section and show
\begin{align*}
&\mathbb P\left\{\max_{0\le m\le n}|W_m-W_m^\diamond|>e_\delta\right\}
\le\delta.
\end{align*}
Consequently, for $h<W_0$,
\begin{align*}
&\mathbb P\{\min_{m\le n}W_m^\diamond\le h-e_\delta\}-\delta\\
&\le\mathbb P\{\min_{m\le n}W_m\le h\}\\
&\le\mathbb P\{\min_{m\le n}W_m^\diamond\le h+e_\delta\}+\delta.
\end{align*}
\end{proposition}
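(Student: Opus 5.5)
The plan is to write the wealth recursions for $W_m$ and $W_m^\diamond$ side by side, subtract them, and reduce everything to the allocation errors $\Delta_j-\Delta_j^\diamond$ and the noise errors $Z_j-Z_j^\diamond$. These are controlled by the Götze--Zaitsev coupling of Lemma~\ref{lem:innovation-new}. Since $R_j=\mu_R-Z_j$ and $\theta_{b+m-1}=\theta^*+\Delta_{b+m-1}$, the true increment is
\[
\theta_{b+m-1}^\top R_{b+m}=(\theta^*)^\top\mu_R-(\theta^*)^\top Z_{b+m}+\mu_R^\top\Delta_{b+m-1}-\Delta_{b+m-1}^\top Z_{b+m}.
\]
Subtracting \eqref{eq:wealth-reference} and letting $e_m=W_m-W_m^\diamond$ with $e_0=0$, we get
\[
e_m=e_{m-1}-(\theta^*)^\top(Z_{b+m}-Z_{b+m}^\diamond)+\mu_R^\top(\Delta_{b+m-1}-\Delta_{b+m-1}^\diamond)-\Delta_{b+m-1}^\top Z_{b+m}+f(C_{m-1})-f(C_{m-1}^\diamond).
\]
The $f$ term is bounded by $r_{\rm borrow}|C_{m-1}-C_{m-1}^\diamond|\le r_{\rm borrow}\{|e_{m-1}|+\sqrt d\|\Delta_{b+m-1}-\Delta^\diamond_{b+m-1}\|_2\}$. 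Summing and applying a discrete Gronwall argument yields
\[
\max_{m\le n}|e_m|\le(1+r_{\rm borrow})^n\bigl[\mathcal A+\mathcal M\bigr],
\]
where $\mathcal A$ collects the coupling terms and $\mathcal M=\max_{m\le n}|\sum_{i=1}^m\Delta_{b+i-1}^\top Z_{b+i}|$ is the martingale cross term.

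\textbf{Bounding $\mathcal A$.} Let $U_k=\sum_{j\le k}(Z_j-Z_j^\diamond)$. The noise term telescopes to $|(\theta^*)^\top(U_{b+m}-U_b)|\le2\|\theta^*\|_2\max_{k\le N}\|U_k\|_2$. For the linear recursion $\Delta_j-\Delta_j^\diamond=-\gamma\sum_{i\le j}Q^{j-i}(Z_i-Z_i^\diamond)$, the summation by parts from Theorem~\ref{kmt for the linear component} gives a pointwise bound of $\gamma c_A\max_{k\le N}\|U_k\|_2$. Lemma~\ref{lem:max_convolution_bound} applied to partial sums shifted by $b$ gives $\max_m\|\sum_{i\le m}(\Delta_{b+i-1}-\Delta_{b+i-1}^\diamond)\|_2\le(2/\lambda_*)\max_{k\le N}\|U_k\|_2$ up to constants. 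Summing the $f$ contributions of the pointwise bound over $n$ steps produces the extra $r_{\rm borrow}\sqrt d\,n\gamma c_A$ term in $C_{W,n}$. Together, $\mathcal A\le C_{W,n}\max_{k\le N}\|U_k\|_2$. Markov's inequality with \eqref{eq:innovation-new} at level $\delta/2$ then gives the first term of $e_\delta$.

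\textbf{Bounding $\mathcal M$.} This is a martingale in $m$, because $\Delta_{b+i-1}$ is $\mathcal F_{b+i-1}$-measurable and $Z_{b+i}$ is independent and centered. Doob's $L^p$ inequality and Rio's inequality give
\[
\|\mathcal M\|_p\le\frac{p}{\sqrt{p-1}}M_p\Bigl(\sum_{i=1}^n\|\Delta_{b+i-1}\|_p^2\Bigr)^{1/2}.
\]
For this linear model I would bound $\|\Delta_j\|_p$ directly: $\|Q^j\Delta_0\|_2\le q^j\|\Delta_0\|_2$, and the noise part has $L^p$ norm at most $\gamma\sqrt{p-1}M_p(1-q^2)^{-1/2}$ by Rio. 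Summing the squares over $j=b,\dots,b+n-1$ gives roughly $q^{2b}\|\Delta_0\|_2^2/(1-q^2)+n\gamma^2(p-1)M_p^2/(1-q^2)$, which matches the form of $B_{p,b,n}$. Markov's inequality at level $\delta/2$ then finishes this term. The crossing statement follows exactly as in Proposition~\ref{prop:crossing}: on the good event, $\min W$ and $\min W^\diamond$ differ by at most $e_\delta$.

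I expect the main obstacle to be bookkeeping in the Gronwall step. The $f$-Lipschitz term mixes $e_{m-1}$ with pointwise allocation errors, and it has to be organized so that the amplification appears only as $(1+r_{\rm borrow})^n$, multiplying a quantity that is itself a running maximum. My approach is to write $|e_m|\le(1+r)|e_{m-1}|+\xi_m$, where $\xi_m$ is the increment of a nondecreasing process, and iterate this inequality.
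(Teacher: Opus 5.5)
Apart from one step, this is the paper's proof. You use the same expansion of $W_m-W_m^\diamond$ into a noise-coupling term, an allocation-coupling term, the martingale $M_m=-\sum_{k\le m}\Delta_{b+k-1}^\top Z_{b+k}$, and the accumulated interest differences. You get the same bounds $2\|\theta^*\|_2E_N$, $2E_N/\lambda_*$ and $\gamma(1+\lambda^*/\lambda_*)E_N$, with $E_N=\max_{k\le N}\|U_k\|_2$. Doob and Rio, together with the direct $L^p$ bound on $\Delta_j$, give $B_{p,b,n}$. Markov at level $\delta/2$ handles each piece, and the set inclusions on the good event give the crossing bounds.

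The step to fix is the recursion in your last paragraph. Suppose $\xi_m$ are the increments of a nondecreasing process whose final value is of size $C_{W,n}E_N+\max_k|M_k|$. Then $|e_m|\le(1+r_{\rm borrow})|e_{m-1}|+\xi_m$ is false for the actual error.

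To see this, take $r_{\rm borrow}=0$. The inequality then forces $\xi_m\ge(|e_m|-|e_{m-1}|)_+$. The difference $e_m-e_{m-1}$ contains the signed increments $-(\theta^*)^\top(Z_{b+m}-Z_{b+m}^\diamond)$, $\mu_R^\top(\Delta_{b+m-1}-\Delta_{b+m-1}^\diamond)$ and $-\Delta_{b+m-1}^\top Z_{b+m}$. So $\sum_m\xi_m$ would have to dominate the positive variation of $|e_m|$. That variation is generally of order $n$, because the coupling controls partial sums of $Z_j-Z_j^\diamond$, not the individual terms. The running maxima, by contrast, are of order $N^{1/p}$. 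A partial sum oscillating between $\pm a$ has maximum $a$ but unbounded variation.

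The fix is what your earlier phrase ``summing and applying a discrete Gronwall argument'' points to, and it is what the paper does: sum first. The signed terms then enter only as partial sums, bounded uniformly in $m\le n$ by $C_WE_N+\max_{k\le n}|M_k|$. Only the interest differences are bounded termwise. With $D_m=\max_{k\le m}|W_k-W_k^\diamond|$ this gives $D_m\le C_{W,n}E_N+\max_{k\le n}|M_k|+r_{\rm borrow}\sum_{k=1}^mD_{k-1}$. Induction then yields the factor $(1+r_{\rm borrow})^m$.

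Your one-step recursion is still legitimate if you apply it to something other than $|e_m|$. One option is the accumulated interest error $\sum_{k\le m}\{f(C_{k-1})-f(C_{k-1}^\diamond)\}$ alone. Another is the majorant $C_{W,n}E_N+\max_k|M_k|+r_{\rm borrow}\sum_{k<m}D_k$.
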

\begin{proof}
Expanding the wealth recursion results in
\begin{align*}
&W_m=W_0+m \theta^{* \top}\mu_R
-(\theta^*)^\top\sum_{k=1}^mZ_{b+k}\\
&\quad+\mu_R^\top\sum_{k=1}^m\Delta_{b+k-1}+M_m
+\sum_{k=1}^mf(C_{k-1}),\\
&M_m=-\sum_{k=1}^m\Delta_{b+k-1}^\top Z_{b+k},\\
&\mathbb E[\Delta_{b+k-1}^\top Z_{b+k}\mid R_1,\ldots,R_{b+k-1}]=0,\\
&\|\Delta_{b+k-1}^\top Z_{b+k}\|_p
\le M_p\|\Delta_{b+k-1}\|_p,
\quad\text{by independence.}
\end{align*}
Applying Doob and Rio, as in Theorem~\ref{thm:sum-main}, gives
\begin{align*}
&\left\|\max_{m\le n}|M_m|\right\|_p
\le\frac p{p-1}\|M_n\|_p\\
&\le\frac{pM_p}{\sqrt{p-1}}
\left(\sum_{k=1}^n\|\Delta_{b+k-1}\|_p^2\right)^{1/2},\\
&\|\Delta_j\|_p
\le q^j\|\Delta_0\|_2
+\gamma\sqrt{p-1}M_p\left(\sum_{i=0}^{j-1}q^{2i}\right)^{1/2}\\
&\le q^j\|\Delta_0\|_2
+\frac{\gamma\sqrt{p-1}M_p}{\sqrt{1-q^2}},\\
&\implies \left\|\max_{m\le n}|M_m|\right\|_p\\
&\le\frac{pM_p}{\sqrt{p-1}}
\left\{q^b\|\Delta_0\|_2\left(\sum_{k=1}^nq^{2(k-1)}\right)^{1/2}
+\frac{\gamma\sqrt{n(p-1)}M_p}{\sqrt{1-q^2}}\right\}\\
&\le B_{p,b,n},
\quad\text{by Minkowski and the geometric sum formula.}
\end{align*}
Let $F_j=\sum_{i=1}^j(Z_i-Z_i^\diamond)$, $F_0=0$ and
$E_N=\max_{j\le N}\|F_j\|_2$. As in
Theorem~\ref{kmt for the linear component},
\begin{align*}
&\sum_{j=1}^{\ell}(\Delta_j-\Delta_j^\diamond)
=-\gamma\sum_{i=0}^{\ell-1}Q^iF_{\ell-i},\\
&\implies \left\|\sum_{j=1}^{\ell}(\Delta_j-\Delta_j^\diamond)\right\|_2
\le\gamma E_N\sum_{i=0}^{\ell-1}q^i
\le\frac{E_N}{\lambda_*},\\
&\implies \max_{m\le n}\left\|\sum_{k=1}^m
(\Delta_{b+k-1}-\Delta_{b+k-1}^\diamond)\right\|_2
\le\frac{2E_N}{\lambda_*}, \\
& \max_{m\le n}\left\|\sum_{k=1}^m
(Z_{b+k}-Z_{b+k}^\diamond)\right\|_2
=\max_{m\le n}\|F_{b+m}-F_b\|_2\le2E_N.
\end{align*}
and for single iterate errors
\begin{align*}
&\Delta_j-\Delta_j^\diamond
=-\gamma F_j+\gamma(I_d-Q)\sum_{i=1}^{j-1}Q^{j-1-i}F_i,
\quad\text{by summation by parts,}\\
&\implies \max_{j\le N}\|\Delta_j-\Delta_j^\diamond\|_2
\le\gamma E_N+\gamma^2\lambda^*E_N\sum_{i\ge0}q^i
=\gamma\left(1+\frac{\lambda^*}{\lambda_*}\right)E_N.
\end{align*}
Let $D_m=\max_{0\le k\le m}|W_k-W_k^\diamond|$. Then
\begin{align*}
&|C_{k-1}-C_{k-1}^\diamond|
\le D_{k-1}+\sqrt d\,
\|\Delta_{b+k-1}-\Delta_{b+k-1}^\diamond\|_2,\\
&D_m\le C_WE_N+\max_{k\le n}|M_k|
+r_{\rm borrow}\sum_{k=1}^m|C_{k-1}-C_{k-1}^\diamond|\\
&\le C_{W,n}E_N+\max_{k\le n}|M_k|
+r_{\rm borrow}\sum_{k=1}^mD_{k-1}\\
&\le\left(C_{W,n}E_N+\max_{k\le n}|M_k|\right)
\left[1+r_{\rm borrow}\sum_{k=1}^m(1+r_{\rm borrow})^{k-1}\right]\quad\text{by induction, }\\
&=(1+r_{\rm borrow})^m
\left(C_{W,n}E_N+\max_{k\le n}|M_k|\right)
\end{align*}
Thus, for $u,v>0$,
\begin{align*}
&\mathbb P\{D_n>(1+r_{\rm borrow})^n(C_{W,n}u+v)\}\\
&\le\mathbb P\{E_N>u\}+\mathbb P\{\max_{m\le n}|M_m|>v\}\\
&\le\frac{K_{p,d,S}NM_p^p}{u^p}+\frac{B_{p,b,n}^p}{v^p},
\quad\text{by Lemma~\ref{lem:innovation-new} and Markov's inequality applied with our prior bound,}\\
&u=\left(\frac{2K_{p,d,S}NM_p^p}{\delta}\right)^{1/p},
\quad v=\left(\frac2\delta\right)^{1/p}B_{p,b,n}
\implies\mathbb P\{D_n>e_\delta\}\le\delta.
\end{align*}
 Finally, on $\{D_n\le e_\delta\}$,
\begin{align*}
&\min_{m\le n}W_m^\diamond\le h-e_\delta
\implies\min_{m\le n}W_m\le h
\implies\min_{m\le n}W_m^\diamond\le h+e_\delta.
\end{align*}
Taking probabilities of these events yields our crossing probability bounds.
\end{proof}

Combining Proposition~\ref{prop:wealth-new},
\eqref{eq:cov-confidence-new}, and
Proposition~\ref{prop:risk-CI-new}, with the sample mean
$\bar R_N=N^{-1}\sum_{j=1}^N R_j$ satisfying
$\mathbb P\{\|\bar R_N-\mu_R\|_2>M_2/\sqrt{N\alpha_\mu}\}
\le\alpha_\mu$ by independence and Markov's inequality,
yields finite sample confidence intervals for bankruptcy probabilities. These confidence intervals additionally require the known moment bounds in \eqref{eq:cov-confidence-new}, including finite $K_A$ and $K_S$, and have coverage at least $1-\alpha_{\rm cov}-\alpha_\mu$ when that equation is evaluated at $\delta=\alpha_{\rm cov}$.

\section{Logistic regression experiment}
\label{app:logistic-experiment}

For our exact simulation we let $X\sim Rademacher(0.5)$, with $Y\mid X\sim\operatorname{Bernoulli}(\sigma(X\log3))$,
where $\sigma(u)=(1+e^{-u})^{-1}$ or
$\mathbb P(Y=1\mid X=1)=3/4$ and
$\mathbb P(Y=1\mid X=-1)=1/4$. Under this symmetric model we only need to estimate a regularized version of the slope $\log(3)$ by $\theta$ since the intercept will be zero. Also independent of the response and covariate, draw weight $W=(4/5)V$, where
$\mathbb P(V>v)=v^{-5}$ for $v\ge1$. We use this weight distribution to give heavy tails, have mean 1 and satisfy our moment conditions. Then the regularized weighted negative log likelihood objective we consider is
\[
g(\theta;(X,Y,W))
=W\{\log(1+e^{X\theta})-YX\theta\}
+\frac{\theta^2}{32}.
\]
 Moments of every order below five exist for our model.

By taking expectations and then setting the derivative to zero, the true optimum solves the equation
\[
\sigma(\theta^*)+\theta^*/16=3/4,
 ,  \theta^*\approx0.836.
\]
For $a=\sigma(\theta^*)$, the second derivative, noise covariance and subsequent bias are
\begin{align*}
A&=a(1-a)+1/16,\\
S&=\frac{16}{15}
\left\{\frac3{16}+(a-3/4)^2\right\}-(a-3/4)^2,\\
\Gamma&=\frac{S}{2A},   
A^{-1}b=-\frac{a(1-a)(1-2a)S}{4A^2}
=0.05584.
\end{align*}

We use $\gamma\in\{1/128,1/256,1/512\}$ for our Richardson-Romberg extrapolations
$2\theta_k(\gamma)-\theta_k(2\gamma)$. Initialization at $\theta^*$ removes the initialization effect. All learning rates, including $2\gamma$,
satisfy our step size restrictions for any
moments of order less than $9/2$.

For initialization at zero and $\gamma=1/128$, the
biased FCLT predicts the normalized partial-sum mean
\[
\frac I4+4tA^{-1}b,   
I=\int_{\theta^*}^{0}
\frac{u-\theta^*}{\sigma(u)-3/4+u/16}\,du
=-2.91.
\]

Burn in at $\gamma=1/128$ discards 51,105 updates.
The smaller rate experiment uses $\gamma_b=n^{-3/5}$ and
\[
n_0=\left\lceil
\frac{32}{\gamma_b}
\left|\log(\sqrt n\gamma_b)\right|
\right\rceil=71,183
\]
from the FCLT conditions. Both retain $n=262,144$ updates after Burn in.
We also run a Richardson-Romberg trajectory with
$\gamma=\tfrac12n^{-2/5}$, satisfying the constraints for the Richardson-Romberg FCLT.

For the trajectory comparison, set $T=32$ and
\[
n_0=2\left\lceil
\frac{64}{2\gamma}\log(1/\gamma)
\right\rceil.
\]
Here $\mu=\alpha_0=1/16$, so
$\nu=64>3/\min(A,\alpha_0)=48$.
We compare the maximum of
$|\theta_{n_0+j}-\theta^*|/\sqrt\gamma$ over
$j=0,\ldots,T/\gamma$ with an independent stationary
OU process with
\[
U_0\sim N(0,\Gamma),   
U_{j+1}=e^{-A\gamma}U_j+\xi_j,
\]
\[
\xi_j\overset{\mathrm{i.i.d.}}{\sim}
N\!\left(0,\Gamma(1-e^{-2A\gamma})\right).
\]
After burn in, we observe $T/\gamma$ SGD updates with
$T=32$, which is 4,096, 8,192, and 16,384 iterates for 
\(\gamma=1/128\),
\(1/256\), and
\(1/512\) respectively.
The respective empirical KS distances between
maximum distributions are $0.01342$, $0.00650$, and
$0.01125$. 
\section{Empirical Gaussian approximation}\label{app:empirical-coupling}
Our trajectory coupling experiment starts SGD at zero, whereas the trajectory distribution comparison in Appendix~\ref{app:logistic-experiment} starts at $\theta^*$.
\begingroup\small
Otherwise we continue to work with the weighted logistic model of Appendix~\ref{app:logistic-experiment}. For accumulated errors we look at
$E_n=\max_{m\le n}|\sum_{k\le m}(\Delta_k-\Delta_k^\diamond)|/\sqrt n$, for all types of accumulated error Gaussian processes (bias, burn in, Richardson-Romberg) discussed in the paper. For the trajectory, we evaluate $\sup_{t\le32}|(\theta_{b+\lfloor t/\gamma\rfloor}-\theta^*)/\sqrt\gamma-X_t|$. We construct the reference Gaussian process described in the main body as follows. We couple the SGD noise and Gaussian increments in blocks. Across simulated paths, we assign the smallest Gaussian block total to the path with the smallest SGD noise total, and continue in increasing order. Within each path, we place the smallest centered Gaussian increment at the position of the smallest SGD noise value, and continue in increasing order. We then add the assigned Gaussian total divided by the block length to each increment. This construction still retains the Gaussian path and SGD path marginal distributions.

The learning rates are $.5n^{-1/2}$ (bias), $2n^{-3/4}$ (burn in), $.5n^{-.49}$ (RR), and $32/n$ (trajectory). We use $b=\lceil40\gamma^{-1}|\log(\sqrt n\gamma)|\rceil$ for burn in and the prescribed trajectory burn in with $\nu=64$. Bias and RR start at $\theta^*$. Let $p_1=p_2=p=4$ and $p_3=2.2$, the accumulated KMT rates have orders $n^{-1/4}$, $n^{-1/4}$, and $n^{-.235}$, respectively in this case. The trajectory rate is $\gamma^{1/4}\log^{1/4}(1/\gamma)+\sqrt\gamma\log(1/\gamma)$.
The 95\% error bars use 1,500 bootstrap resamples of eight independent groups of simulated paths. We resample whole groups because paths within each group are dependent.
On 512 paths at each of $n=512$ and $n=16{,}384$, refining from 8 to 32 subdivisions per SGD interval changed the 95th percentile by at most 1.09\% and RMS by at most 1.50\%.

For wealth, $A=1$, $R_k\sim N(.02,.1^2)$, $\theta_0=.02$, $W_0=.006$, $\gamma=.02/n$, and both interest rates are zero. We compare the actual and Gaussian process bankruptcy probabilities using 32,768 independent Monte Carlo runs. Independently we ran 32,768 runs at $n=4096$, and the estimated bankruptcy probability for the actual wealth process was $0.000092$ lower than for the reference process. The conservative 95\% confidence interval for the difference was $[-0.000298,0.000120]$.
\begin{center}
{\scriptsize $\bullet$ 95th percentile\qquad\textcolor{blue!60!black}{$\blacksquare$ RMS maximum error}\qquad\textcolor{orange!80!black}{-- KMT rate guide}}\par
\begin{minipage}{.49\linewidth}\centering
\begin{tikzpicture}
\begin{axis}[width=\linewidth,height=1.65in,xmode=log,log basis x=2,ymode=log,xlabel={Retained iterations $n$},ylabel={Normalized maximum error},tick label style={font=\tiny},label style={font=\scriptsize},title style={font=\small},legend style={font=\tiny,draw=none,fill=none},legend pos=south west,grid=major,grid style={gray!15},scaled ticks=false,title={(a) Bias correction},xtick={1024,4096,16384,65536}]
\addplot+[black,mark=*,mark size=1pt,error bars/.cd,y dir=both,y explicit] coordinates {(1024,0.283789966) += (0,0.0214394524) -= (0,0.0220445149) (4096,0.225351801) += (0,0.00866141033) -= (0,0.0066135404) (16384,0.179123781) += (0,0.00299062353) -= (0,0.00398187525) (65536,0.135450333) += (0,0.00213327484) -= (0,0.00153823704)};
\addplot+[blue!60!black,mark=square*,mark size=1pt] coordinates {(1024,0.157082008) (4096,0.12686441) (16384,0.101956538) (65536,0.0780174437)};
\addplot+[orange!80!black,dashed] coordinates {(1024,0.283789966) (4096,0.194977599) (16384,0.135023877) (65536,0.0940532465)};
\end{axis}
\end{tikzpicture}
\end{minipage}
\hfill
\begin{minipage}{.49\linewidth}\centering
\begin{tikzpicture}
\begin{axis}[width=\linewidth,height=1.65in,xmode=log,log basis x=2,ymode=log,xlabel={Retained iterations $n$},ylabel={Normalized maximum error},tick label style={font=\tiny},label style={font=\scriptsize},title style={font=\small},legend style={font=\tiny,draw=none,fill=none},legend pos=south west,grid=major,grid style={gray!15},scaled ticks=false,title={(b) burn in},xtick={1024,4096,16384,65536}]
\addplot+[black,mark=*,mark size=1pt,error bars/.cd,y dir=both,y explicit] coordinates {(1024,0.280306584) += (0,0.00520522823) -= (0,0.00185798729) (4096,0.227399914) += (0,0.00814827554) -= (0,0.010104085) (16384,0.164572041) += (0,0.00280151353) -= (0,0.00166832944) (65536,0.124081524) += (0,0.00158009753) -= (0,0.00131521483)};
\addplot+[blue!60!black,mark=square*,mark size=1pt] coordinates {(1024,0.152372728) (4096,0.121391133) (16384,0.0909710257) (65536,0.0683968771)};
\addplot+[orange!80!black,dashed] coordinates {(1024,0.280306584) (4096,0.216886292) (16384,0.158814437) (65536,0.112244827)};
\end{axis}
\end{tikzpicture}
\end{minipage}
\par\vspace{2pt}
\begin{minipage}{.49\linewidth}\centering
\begin{tikzpicture}
\begin{axis}[width=\linewidth,height=1.65in,xmode=log,log basis x=2,ymode=log,xlabel={Retained iterations $n$},ylabel={Normalized maximum error},tick label style={font=\tiny},label style={font=\scriptsize},title style={font=\small},legend style={font=\tiny,draw=none,fill=none},legend pos=south west,grid=major,grid style={gray!15},scaled ticks=false,title={(c) Richardson--Romberg},xtick={4096,8192,16384,32768,65536}]
\addplot+[black,mark=*,mark size=1pt,error bars/.cd,y dir=both,y explicit] coordinates {(4096,0.22773575) += (0,0.00647386283) -= (0,0.00848291352) (8192,0.198614204) += (0,0.00740756082) -= (0,0.00666664442) (16384,0.178616403) += (0,0.00493330311) -= (0,0.00547944999) (32768,0.153348976) += (0,0.00180739553) -= (0,0.00186037477) (65536,0.133167338) += (0,0.00256141376) -= (0,0.00123952575)};
\addplot+[blue!60!black,mark=square*,mark size=1pt] coordinates {(4096,0.121824981) (8192,0.111227139) (16384,0.098998961) (32768,0.0867404731) (65536,0.0767108099)};
\addplot+[orange!80!black,dashed] coordinates {(4096,0.22773575) (8192,0.189954779) (16384,0.158712486) (32768,0.132803261) (65536,0.111263196)};
\end{axis}
\end{tikzpicture}
\end{minipage}
\hfill
\begin{minipage}{.49\linewidth}\centering
\begin{tikzpicture}
\begin{axis}[width=\linewidth,height=1.65in,xmode=log,log basis x=2,ymode=log,xlabel={Retained iterations $n$},ylabel={Normalized maximum error},tick label style={font=\tiny},label style={font=\scriptsize},title style={font=\small},legend style={font=\tiny,draw=none,fill=none},legend pos=south west,grid=major,grid style={gray!15},scaled ticks=false,title={(d) Trajectory / stationary OU},xtick={512,1024,2048,4096,8192,16384}]
\addplot+[black,mark=*,mark size=1pt,error bars/.cd,y dir=both,y explicit] coordinates {(512,0.781396418) += (0,0.0190742738) -= (0,0.0115086068) (1024,0.654346516) += (0,0.0127294786) -= (0,0.00938633808) (2048,0.562503899) += (0,0.00874777027) -= (0,0.0102035905) (4096,0.471011082) += (0,0.00508488096) -= (0,0.00894156039) (8192,0.395532194) += (0,0.00308655043) -= (0,0.00409591784) (16384,0.332567908) += (0,0.00302431879) -= (0,0.00245970835)};
\addplot+[blue!60!black,mark=square*,mark size=1pt] coordinates {(512,0.626591222) (1024,0.531778527) (2048,0.452995628) (4096,0.38110273) (8192,0.323984258) (16384,0.273399347)};
\addplot+[orange!80!black,dashed] coordinates {(512,0.781396418) (1024,0.692643403) (2048,0.598305989) (4096,0.508014938) (8192,0.426335354) (16384,0.354945567)};
\end{axis}
\end{tikzpicture}
\end{minipage}
\par\vspace{2pt}
\begin{tikzpicture}
\begin{axis}[width=.94\linewidth,height=1.50in,xmode=log,log basis x=2,xlabel={Trading iterations $n$},ylabel={Bankruptcy probability},tick label style={font=\scriptsize},label style={font=\scriptsize},title style={font=\small},legend style={font=\scriptsize,draw=none,fill=none},legend pos=south east,grid=major,grid style={gray!15},ymin=.13,ymax=.265,scaled ticks=false,title={(e) Bankruptcy with zero borrowing and lending rates},xtick={16,32,64,128,256,1024,4096}]
\addplot+[black,mark=*,mark size=1pt,error bars/.cd,y dir=both,y explicit] coordinates {(16,0.15411377) += (0,0.00395519926) -= (0,0.00389378376) (32,0.202423096) += (0,0.00439205178) -= (0,0.00433924158) (64,0.2265625) += (0,0.00457180282) -= (0,0.00452328556) (128,0.239776611) += (0,0.0046609203) -= (0,0.00461475168) (256,0.241485596) += (0,0.00467199489) -= (0,0.00462612996) (1024,0.233764648) += (0,0.00462114935) -= (0,0.00457391229) (4096,0.239715576) += (0,0.00466052291) -= (0,0.00461434344)};
\addlegendentry{Simulated bankruptcy}
\addplot+[orange!80!black,dashed,mark=x,mark size=2pt] coordinates {(16,0.157501221) (32,0.205505371) (64,0.228546143) (128,0.241516113) (256,0.242126465) (1024,0.234100342) (4096,0.239776611)};
\addlegendentry{Gaussian-reference approximation}
\end{axis}
\end{tikzpicture}
\end{center}
{\small The dashed error curves show KMT rate shapes, RMS and 95th-percentile errors are normalized as in their respective FCLTs.}
\endgroup
\end{document}